\definecolor{shadecolor}{gray}{0.90}
\declaretheoremstyle[
headfont=\normalfont\bfseries,
notefont=\mdseries, notebraces={(}{)},
bodyfont=\normalfont,
postheadspace=0.5em,
spaceabove=1pt,
mdframed={
  skipabove=8pt,
  skipbelow=8pt,
  hidealllines=true,
  backgroundcolor={shadecolor},
  innerleftmargin=4pt,
  innerrightmargin=4pt}
]{shaded}
\declaretheorem[style=shaded,within=section]{definition}
\declaretheorem[style=shaded,sibling=definition]{theorem}
\declaretheorem[style=shaded,sibling=definition]{assumption}
\declaretheorem[style=shaded,sibling=definition]{corollary}
\declaretheorem[style=shaded,sibling=definition]{lemma}
\declaretheorem[style=shaded,sibling=definition]{remark}
\declaretheorem[style=shaded,sibling=definition]{example}
\newcommand{\R}{\mathbb{R}} 
\newcommand{\cL}{{\cal L}}
\newcommand{\cO}{{\cal O}}
\newcommand{\cW}{{\cal W}}
\newcommand{\mD}{{\bf D}}
\newcommand{\mH}{{\bf H}}
\newcommand{\mI}{{\bf I}}
\newcommand{\mM}{{\bf M}}
\newcommand{\ones}{{\bf 1}}
\newcommand{\eqdef}{\coloneqq} 
\newcommand{\dotprod}[1]{\left< #1\right>} 
\newcommand{\norm}[1]{ \left\| #1 \right\|}      
\DeclareMathOperator{\Range}{Range}     
\DeclareMathOperator{\argmin}{argmin}        
\newcommand{\diag}[1]{\mathbf{diag}\left( #1\right)}
\newcommand{\E}[1]{\mathbb{E}\left[#1\right] } 
\newcommand{\EE}[2]{\mathbb{E}_{#1}\left[#2\right] }
\newcommand{\MOTAPS}{{\tt MOTAPS}\xspace}
\newcommand{\TAPS}{{\tt TAPS}\xspace}
\newcommand{\SP}{{\tt SP}\xspace}
\newcommand{\disteq}{ }
\title{Stochastic Polyak Stepsize with a Moving Target}
\author{%
Robert M. Gower\thanks{Contact: gowerrobert@gmail.com} \\ LTCI, T\'el\'ecom Paris \\ Institut Polytechnique de Paris 
\And
Aaron Defazio\\ Facebook AI Research 
\And
  Michael Rabbat \\
  Facebook AI Research
}
\begin{document}

\maketitle

\begin{abstract}
We propose a new stochastic gradient method called \MOTAPS (Moving Targetted Polyak Stepsize) that uses recorded past loss values to compute adaptive stepsizes. \MOTAPS can be seen as a variant of the Stochastic Polyak~(\SP) which is also a method that also uses loss values to adjust the stepsize. The downside to the \SP method is that it only converges when the interpolation condition holds. \MOTAPS is an extension of \SP that does not rely on the interpolation condition.  The \MOTAPS method uses $n$ auxiliary variables, one for each data point, that track the loss value for each data point.    We provide a global convergence theory for \SP, an intermediary method \TAPS, and \MOTAPS  by showing that they all can be interpreted as a special variant of online SGD. We also perform several numerical experiments on convex learning problems, and deep learning models for image classification and language translation. In all of our tasks we show that  \MOTAPS is competitive with the relevant baseline method.
  
%
%
%
\end{abstract}



\section{Introduction}
\label{sec:intro}
Consider the problem \disteq
\begin{equation} \label{eq:main}
w^* \in \underset{{w \in \R^d}}{\argmin}  \; f(w) \eqdef \frac{1}{n}\sum_{i=1}^n f_i(w),
\end{equation}
where each $f_i(w)$ represents the loss of a model parametrized by $w\in \R^d$ over a given $i$th \emph{data point}.
We assume that there exists a  solution $w^* \in \R^d.$ Let $\cW^*$ denote the set of minimizers of~\eqref{eq:main}.

 An ideal method for solving~\eqref{eq:main} is one that exploits the sum of terms structure, has easy-to-tune hyper-parameters, and is guaranteed to converge. 
\emph{Stochastic gradient descent} (SGD) exploits this sum of terms structure by only using a single stochastic gradient (or a batch) $\nabla f_i(w)$  per iteration. 
Because of this, SGD is efficient when the \emph{number of data points} $n$ is large, and can even be applied when $n$ is infinite and~\eqref{eq:main} is an expectation over a continuous random variable.
   
  The issue with SGD is that it is difficult to use because it requires tuning a sequence of step sizes, otherwise known as a learning rate schedule.  Indeed,  SGD only converges when using a sequence of step sizes that converges to zero at just the right rate.  Here we develop methods with adaptive step sizes that use the loss values to set the stepsize. 
  
 We derive our new adaptive methods by first exploiting  the \emph{interpolation equations} given by
\begin{equation}\label{eq:interzero}
f_i(w) \; = \; 0, \quad \mbox{for }i=1,\ldots, n.
\end{equation}
We say that the \emph{interpolation assumption}  holds if there exists $w^* \in \cW^*$ that solves~\eqref{eq:interzero}. 
Two well-known settings where the interpolation assumption holds are 1) for binary classification with a linear model where the data can be separated by a hyperplane~\cite{Crammer06} or 2) when we know that each $f_i(w)$ is non-negative, and we have enough parameters in our model so there exists a solution that fits all data points. This second setting is often referred to as the overparametrized regime~\cite{vaswani2018fast}, and it is becoming a common occurrence in several sufficiently overparametrized deep neural networks~\cite{ZhangBHRV17,belkin2019reconciling}.

Our starting point is to observe that the Stochastic Polyak (\SP) method~\cite{ALI-G,SPS} directly exploits and solves the interpolation equations~\eqref{eq:interzero}.
 Indeed, the \SP method is a subsampled Newton Raphson method~\cite{SNR} as we show next.

The subsampled Newton Raphson method at each iteration samples a single index $i\in \{1,\ldots, n\}$ and focuses on solving the single equation $f_i(w) =0.$ This single equation can still be difficult to solve since $f_i(w)$ can be highly nonlinear. So instead, we linearize $f_i$ around a given  $w^t \in \R^d$, and set the linearization of $f_i(w)$ to zero, that is 
\[ f_i(w^t) + \dotprod{\nabla f_i(w^t), w-w^t} =0.\]
This is now a linear equation in $w\in\R^d$ that has $d$ unknowns and thus has infinite solutions. To pick just one solution we use a projection step as follows
\begin{eqnarray}
w^{t+1} = \argmin_{w\in\R^d} \norm{w-w^t}^2
\quad \mbox{subject to }f_i(w^t) + \dotprod{\nabla f_i(w^t), w-w^t} =0. \label{eq:newtraphintro}
\end{eqnarray}
 The solution  to this projection step (See Lemma~\ref{lem:proj1eq} for details) is given by
\begin{equation}\label{eq:SPSintro}
w^{t+1} = w^t - \frac{f_i(w^t)}{\norm{\nabla f_i(w^t)}^2} \nabla f_i(w^t).
\end{equation}

This method~\eqref{eq:SPSintro} is known as the  Stochastic Polyak method~\cite{SPS}\footnote{In~\cite{ALI-G} the authors also observed that the full batch Polyak stepsize in 1D is a Newton Raphson method.}. The \SP has many desirable properties: It is incremental, it adapts it step size according to the current loss function, and it enjoy several invariance properties as we point out later in Remark~\ref{rem:invariant}. Thus in many senses the \SP is an ideal stochastic  method. The downside to \SP is that to arrive at~\eqref{eq:SPSintro} we have to assume that the interpolation assumption holds. 
The main objective of this paper is design methods akin to the \SP method that do not rely on the interpolation assumption.

Next we highlight some of our contributions.
%

%
%

\subsection{Contributions}

\paragraph{New perspectives and analysis of Stochastic Polyak.}
We provide three viewpoints of the \SP method: 1) as a subsampled Newton method in Section~\ref{sec:SPSNR},  2) as a type of online SGD method in Section~\ref{sec:SPSonlineSGD} and finally 3) motivated through star-convexity in Section~\ref{sec:SPSalternativegamma1theory} which is closely related to Polyak's original motivation~\cite{Polyak87}.


\paragraph{Moving Targeted Stochastic Polyak.}

 By leveraging the subsampled Newton viewpoint, we develop a new variant of the \SP method that does not rely on the interpolation assumption. 
 As an initial step in this direction, we first assume that $f(w^*)$ is known, and we introduce the \emph{TArgeted stochastic Polyak Stepsize} (\TAPS) method.
\TAPS uses $n$ auxiliary scalar variables that track the evolution of the individual function values $f_i(w)$. Of course, $f(w^*)$ is not known in general. Using the SGD viewpoint of \SP, we propose the \emph{Moving} Targeted Stochastic Polyak (\MOTAPS), that does not require knowledge of $f(w^*)$. Rather, \MOTAPS has the same $n$ auxiliary scalars as \TAPS plus one additional variable that tracks the global loss $f(w)$.

\paragraph{Unifying Convergence Theory.}
We prove that all three of the methods \SP, \TAPS and \MOTAPS can be interpreted as variants of online SGD, and we use this to establish a unifying convergence theorem for all three of these methods.
Furthermore, we show how theses variants of online SGD enjoy a remarkable growth  property that greatly facilitates a proof of convergence. Indeed, we present Theorems~\ref{theo:onlinesgdstar} and~\ref{theo:onlinesgdstrongstar} that hold for these three methods by using this online SGD viewpoint. Theorem~\ref{theo:onlinesgdstar} uses a star-convexity assumption~\cite{LeeV16,hinder2019near}, which are a class of non-convex functions that includes convex functions,  loss functions of some neural networks along the path of SGD~\cite{zhou2018sgd,kleinberg2018alternative}, several non-convex  generalized linear models~\cite{LeeV16}, and learning linear dynamical systems~\cite{hardt2018dynam}.\footnote{To be precise, the proof in~\cite{hardt2018dynam} relies on a quasi-convex assumption that is $$h_t(z^*) \; \geq \; h_t(z^t) +\gamma \dotprod{\nabla h_t(z^t), z^* -z^t}, $$ 
where $\gamma >0$ is relaxation parameter. For $\gamma =1$  quasi-convex are star-convex functions. 
}
Using the SGD viewpoint of \SP, we derive an explicit convergence theory of \SP for smooth and star convex loss functions in Sections~\ref{sec:SPStheorystar} and~\ref{theo:onlinesgdsmoothSPS}.

\subsection{Related Work}


Developing methods that adapt the stepsize using  information collected during the iterative process is now a very active area of research. 
%
Adaptive methods such as AdaGrad~\cite{ADAGRAD} and Adam~\cite{ADAM} have a step size that adapts to the scaling of the gradient, and thus are generally easier to tune than SGD, and have now become staples in training deep neural networks (DNNs). 
While the practical success of Adam is undeniable, a fundamental understanding of why these methods work so well remains elusive, particularly on models that interpolate data such as DNNs.

 Recently a new family of adaptive methods based on the Polyak step size~\cite{Polyak87} has emerged, including the \emph{stochastic Polyak step size} (\SP) method~\cite{oberman2019stochastic,SPS} and ALI-G~\cite{ALI-G}.
 \SP is also an adaptive method, since it adjusts its step size depending on both the current loss value and magnitude of the stochastic gradient.
Under the interpolation condition, the \SP method converges sublinearly under convexity~\cite{SPS} and star-convexity~\cite{SGDstruct}, and linearly under strong convexity and the PL condition~\cite{SPS,SGDstruct}.  Recently in~\cite{SPS} the authors proposed the SPS$_{\max}$ method, which is a variant of \SP that caps large stepsizes which greatly helps to stablize the numeric convergence of \SP. 
 Prior to this, the ALI-G method~\cite{ALI-G} can be interpreted as dampened version of  SPS$_{\max}$ method  with follow-up work highlighting the importance of momentum in accelerating these methods in practice~\cite{ALIG2}.

 Our derivation of \SP as a projection in~\eqref{eq:newtraphintro} shows that \SP can be interpreted as a extension of the passive-aggressive methods to nonlinear models~\cite{Crammer06}. Indeed, the passive aggressive methods apply the same projection in~\eqref{eq:newtraphintro} but with the constraint  $f_i(w) =0$. This projection has a closed form solution when $f_i$ is a hinge loss over a linear model, which was the setting where passive-aggressive models were first developed and most applied.

 Another related set of methods are the model based methods in~\cite{AsiD19}, where each new iteration is the result of minimizing the sum of a model of $f_i(w)$ and the norm squared distance to a prior point, that is
\begin{equation}\label{eq:model}
w^{t+1} \; = \argmin_{w\in \R^d}\left\{ f_{t,i}(w) + \frac{1}{2} \norm{w-w^t}^2\right\},
\end{equation} 
 where $f_{t,i}(w)$ is some \emph{local model} of $f_i(w)$ such that $f_{t,i}(w^t) = f_i(w^t).$ 
  This model includes linearizations of $f_i(w)$ as a special case. The SPS$_{\max}$ method~\cite{SPS} is in fact a 
special case of the model based methods~\eqref{eq:model}, where-in  the model is given by the positive part of a local linearization, that is

  \begin{equation}
  f_{t,i}(w) \; = \; \max\{f_{i}(w^t) + \dotprod{\nabla f_i(w^t), w-w^t}, \; 0\},
  \end{equation}
  as observed in ~\cite{ALI-G}.
  Using the positive part is justified for non-negative loss functions.

 
Line search methods that work with stochastic gradients~\cite{vaswani2019painless,zhang2020statistical} are another promising and related direction for automatically adapting step-sizes.

\section{The Stochastic Polyak Method}

We start by
presenting the \SP (Stochastic Polyak method) through two  different viewpoints. First, we show that  \SP is  a special case of the subsampled Newton-Raphson method~\cite{SNR}. Using this first viewpoint, and leveraging results from~\cite{SNR}, we then go on to show that \SP can also be viewed as a type of \emph{online SGD method}, which greatly facilitates the analysis of \SP.

\subsection{The Newton-Raphson viewpoint}
\label{sec:SPSNR}

As observed in the introduction in Section~\ref{sec:intro}, the \SP method is designed for solving interpolation equations. Here we formalize and extend this observation before moving on to our new methods. 

We can derive an extended form of the \SP method that does not rely on the interpolation assumption. Instead of the interpolation assumptions, let us assume for now that we have access to the loss values $f_i(w^*)$ for each $i=1,\ldots, n$, where $w^* \in \cW$. If we knew the $f_i(w^*)$'s then we can solve the optimization problem~\eqref{eq:main} by solving instead the nonlinear equations\vspace{-0.1cm}
\begin{equation}\label{eq:fizero}
f_i(w) \;= \;f_i(w^*), \quad \mbox{for }i=1,\ldots, n.
\end{equation}

Using the same reasoning in Section~\ref{sec:intro}, we can design an adaptive method for solving~\eqref{eq:fizero} by sampling the $i$th loss, linearizing and projecting as follows
\begin{eqnarray}
w^{t+1} = \argmin_{w\in\R^d} \norm{w-w^t}^2
\quad \mbox{subject to }f_i(w^t) + \dotprod{\nabla f_i(w^t), w-w^t} =f_i(w^*). \label{eq:newtraph}
\end{eqnarray}
The solution  to this projection step (see Lemma~\ref{lem:proj1eq} for details) is given by
\begin{equation}\label{eq:SPS}
w^{t+1} = w^t - \frac{f_i(w^t)-f_i(w^*)}{\norm{\nabla f_i(w^t)}^2} \nabla f_i(w^t).
\end{equation}
This method is a minor extension of~\eqref{eq:SPSintro} wherein we now allow $f_i(w^*) \neq 0.$ Despite this minor change, we will also refer to~\eqref{eq:SPS} as the Stochastic Polyak method.\footnote{Using $f_i(w^*)$ in the numerator is apparently new, and what we call the Stochastic Polyak method here is not the same as the Stochastic Polyak method proposed in ~\cite{SPS}. 
 In Section~\ref{sec:SPtoSPS} we detail these differences. In the more common case where $f_i(w^*) =0$, there is consensus that~\eqref{eq:SPS} is called the Stochastic Polyak method .}

The issue with the \SP method is that we often do not know $f_i(w^*)$, except in the case of interpolation where $f_i(w^*) =0.$  Outside of this setting, it is unlikely that we would have access to each $f_i(w^*).$ Thus, we relax this requirement in Sections~\ref{sec:TAPS} and~\ref{sec:MOTAPS}. But first, we present yet another viewpoint of \SP as a type of online SGD method.

\subsection{The SGD viewpoint}
\label{sec:SPSonlineSGD}

 Fix a given $w^t\in \R^d$ and consider the following \emph{auxiliary} objective function 
\begin{equation} \label{eq:proxyobj}
 \underset{{w \in \R^d}}{\min} h_t(w)   \;  \eqdef  \; \frac{1}{n}\sum_{i=1}^n \frac{1}{2}\frac{(f_i(w) -f_i(w^*))^2}{\norm{\nabla f_i(w^t)}^2}.
\end{equation}
Here we use the pseudoinverse convention
that if $\norm{\nabla f_i(w^t)}=0\in \R$ then $\norm{\nabla f_i(w^t)}^{-1}=0$.
Clearly $w =w^*$ is a minimizer of~\eqref{eq:proxyobj}. 
This suggests that we could try to minimize~\eqref{eq:proxyobj} as a proxy for solving the equations~\eqref{eq:main}. 
Since~\eqref{eq:proxyobj} is a sum of terms that depends on $t$, we can use online SGD to minimize~\eqref{eq:proxyobj}. To describe this online SGD method let

\begin{equation}\label{eq:proxyloss}
h_{i,t}(w) \; \eqdef \; \frac{1}{2} \frac{(f_i(w)-f_i(w^*))^2}{\norm{\nabla f_i(w^t)}^2} \quad \mbox{and thus} \quad \nabla h_{i,t}(w) \; = \; \frac{f_i(w)-f_i(w^*)}{\norm{\nabla f_i(w^t)}^2} \nabla f_i(w).
\end{equation}
The online SGD method is given by sampling $i_t \in \{1,\ldots, n\}$ and then iterating
\begin{align}
w^{t+1} & =\; w^t -\gamma \nabla h_{i_t,t}(w^t)\; \overset{\eqref{eq:proxyloss}}{=}\; w^t -\gamma\frac{f_{i_t}(w^t)-f_{i_t}(w^*)}{\norm{\nabla f_{i_t}(w^t)}^2} \nabla f_{i_t}(w^t), \label{eq:SPS-SGDview}
\end{align}
which is equivalent to the \SP method~\eqref{eq:SPS} but with the addition of a stepsize $\gamma>0.$
This online SGD viewpoint of \SP is very useful for proving convergence of \SP. Indeed, there exist many convergence results in the literature on online SGD for convex, non-convex, smooth and non-smooth functions that we can now import  to analyzing \SP. Furthermore, it turns out that~\eqref{eq:SPS-SGDview} enjoys a remarkable growth property that facilitates many SGD proof techniques, as we show in the next lemma.

\begin{lemma}[Growth] \label{lem:weakgrowthsps}
The functions $h_{i,t}(w)$ defined in \eqref{eq:proxyloss} satisfy \disteq
\begin{equation} \label{eq:subsmooth1}
\norm{\nabla h_{i,t}(w^t) }^2 \; = \; 2 h_{i,t}(w^t).
\end{equation}
Consequently due to~\eqref{eq:proxyobj} we have that
\begin{equation} \label{eq:subsumsmooth1}
\frac{1}{n}\sum_{i=1}^n\norm{\nabla h_{i,t}(w^t) }^2 \; = \; 2 h_{t}(w^t).
\end{equation}
\end{lemma}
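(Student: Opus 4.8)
The plan is to compute $\nabla h_{i,t}(w^t)$ directly from the formula already given in~\eqref{eq:proxyloss} and observe that the squared norm collapses to twice the function value because the scalar factor $\norm{\nabla f_i(w^t)}^{-2}$ appears to the first power in $h_{i,t}$ but, when squaring the gradient, multiplies the extra factor $\norm{\nabla f_i(w^t)}^2$ coming from $\norm{\nabla f_i(w^t)}^2$ in the numerator's derivative. Concretely, from~\eqref{eq:proxyloss} we have
\begin{equation*}
\nabla h_{i,t}(w^t) \; = \; \frac{f_i(w^t)-f_i(w^*)}{\norm{\nabla f_i(w^t)}^2}\, \nabla f_i(w^t),
\end{equation*}
so that
\begin{equation*}
\norm{\nabla h_{i,t}(w^t)}^2 \; = \; \frac{(f_i(w^t)-f_i(w^*))^2}{\norm{\nabla f_i(w^t)}^4}\, \norm{\nabla f_i(w^t)}^2 \; = \; \frac{(f_i(w^t)-f_i(w^*))^2}{\norm{\nabla f_i(w^t)}^2} \; = \; 2 h_{i,t}(w^t),
\end{equation*}
where the last equality is just the definition of $h_{i,t}$ in~\eqref{eq:proxyloss}. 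This establishes~\eqref{eq:subsmooth1}.

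For~\eqref{eq:subsumsmooth1}, I would simply average~\eqref{eq:subsmooth1} over $i = 1,\ldots,n$ and recognize the right-hand side: $\frac{1}{n}\sum_{i=1}^n 2 h_{i,t}(w^t) = 2 h_t(w^t)$ by the definition of $h_t$ in~\eqref{eq:proxyobj} (note $h_t(w) = \frac1n\sum_i h_{i,t}(w)$). No nontrivial inequality or estimate is needed here.

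The one point requiring care is the pseudoinverse convention: when $\norm{\nabla f_i(w^t)} = 0$, the paper stipulates $\norm{\nabla f_i(w^t)}^{-1} = 0$, so both $\nabla h_{i,t}(w^t)$ and $h_{i,t}(w^t)$ are defined to be $0$ in that case, and the identity $\norm{\nabla h_{i,t}(w^t)}^2 = 2 h_{i,t}(w^t)$ reads $0 = 0$, which still holds. I would note this explicitly so the division-by-zero case is covered. I expect this to be the only "obstacle," and it is a minor bookkeeping remark rather than a genuine difficulty; the whole proof is a two-line algebraic identity.
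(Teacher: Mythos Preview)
Your proof is correct and is essentially identical to the paper's own argument: both simply plug the gradient formula from~\eqref{eq:proxyloss} into $\norm{\nabla h_{i,t}(w^t)}^2$, cancel one factor of $\norm{\nabla f_i(w^t)}^2$, and recognize $2h_{i,t}(w^t)$. Your added remark about the pseudoinverse convention (the $\norm{\nabla f_i(w^t)}=0$ case) is a nice touch the paper omits.
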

 \noindent \emph{Proof.} The proof follows immediately from~\eqref{eq:proxyloss} and~\eqref{eq:proxyobj} since \disteq
\begin{eqnarray*}
\norm{\nabla h_{i,t}(w^t) }^2  & \overset{ \eqref{eq:proxyloss}}{ =}&  \frac{(f_i(w)-f_i(w^t))^2}{\norm{\nabla f_i(w^t)}^4}  \norm{\nabla f_i(w^t)}^2 
 =  \frac{(f_i(w^t)-f_i(w^*))^2}{\norm{\nabla f_i(w^t)}^2}  \; \overset{ \eqref{eq:proxyloss}}{ =}\; 2 h_{i,t}(w^t). \qed
\end{eqnarray*} 

In Section~\ref{sec:theory} we will exploit this SGD viewpoint and the growth property in Lemma~\ref{lem:weakgrowthsps} to prove the convergence of \SP. But first we develop new variants of \SP that do not require knowing the $f_i(w^*)$'s.

\begin{remark}[Invariances] \label{rem:invariant}
This SGD viewpoint also hints as to why \SP is \emph{invariant} to several transformations of the problem~\eqref{eq:main}.
Indeed, the reformulation given in~\eqref{eq:proxyobj} is itself invariant to any re-scaling or translations of the loss functions. That is,  replacing each $f_i$ by $c_i f_i$ or $f_i +c_i$ where $c_i \neq 0$  has no effect on~\eqref{eq:proxyobj}. Furthermore, the \SP method is invariant to taking powers of the loss functions.\footnote{Observation thanks to Konstantin Mischenko.} We can again see this through the reformulation~\eqref{eq:proxyobj}.
 Indeed, if we assume interpolation with  $f_i(w^*) =0$ and replace each $f_i(w)$ by $f_i(w)^{p}$ where $p>0$, then 
$$\norm{\nabla (f_i(w)^{p})}^2 =  \norm{p f_i(w)^{p-1} \nabla f_i(w)}^2 = p^2 f_i(w)^{2p-2} \norm{ \nabla f_i(w)}^2 .$$
Thus $f_i(w)^{2p}/\norm{\nabla (f_i(w)^{p})}^2  = \tfrac{1}{p^2} f_i(w)^2/ \norm{ \nabla (f_i(w)}^2$ and raising each $f_i(w)$ to the power $p$ results in the same optimization problem, albeit scaled by $p^2$.
\end{remark}

\section{Targeted Stochastic Polyak Steps}
\label{sec:TAPS}

Now suppose that we do not know $f_i(w^*)$ for $i=1,\ldots, n$. Instead, we only have a target value for which we would like the \emph{total loss} to reach.

\begin{assumption}[Target] \label{ass:target}
There exists a \emph{target value} $\tau \ge 0$ such that every $w^* \in \cW^*$ 
 is a solution to the nonlinear equation
\begin{equation} \label{eq:tau}
f(w) \;= \;  \frac{1}{n}\sum_{i=1}^n f_i(w) \; = \; \tau.
\end{equation}
\end{assumption}
Using Assumption~\ref{ass:target} we develop new variants of the \SP method as follows.
First we re-write~\eqref{eq:tau} by  introducing auxiliary variables $\alpha_i \in \R$ for $i=1,\ldots, n$ such that \disteq
\begin{align}
 \frac{1}{n}\sum_{i=1}^n \alpha_i & \; = \; \tau, \label{eq:averagealphastau}\\
 f_i(w) & \;= \; \alpha_i, \quad \mbox{for }i=1,\ldots, n.\label{eq:fizeroalphas}
\end{align}
This reformulation \emph{exposes} the $i$th loss function (and thus the $i$th data point) as a separate equation. 
Because each loss (and associated data point) is on a separate row, applying a subsampled Newton-Raphson method results in an incremental method, as we show next.
   
Let $w^t \in \R^d$ and $\alpha^t = (\alpha^t_1, \ldots, \alpha^t_n) \in \R^n$ be the current iterates.
At each iteration we can either sample~\eqref{eq:averagealphastau} or one of the equations~\eqref{eq:fizeroalphas}. We then apply a Newton-Raphson step using just this sampled equation. For instance, if we sample one of the equations in~\eqref{eq:fizeroalphas},  we first linearize in $w$ and $\alpha_i$ around the current iterate and set this linearization to zero, which gives \vspace{-0.1cm}
\begin{equation}
f_i(w^t) + \dotprod{\nabla f_i(w^t), w-w^t} = \alpha_i. \end{equation}
Projecting the previous iterates onto this linear equation gives
\begin{align}
w^{t+1}, \alpha_i^{t+1} & =  \underset{w\in\R^d, \alpha_i \in \R}{\argmin} \norm{w-w^t}^2+\norm{\alpha_i-\alpha_i^t}^2 
\; \mbox{subject to }f_i(w^t) + \dotprod{\nabla f_i(w^t), w-w^t} =\alpha_i. \nonumber 
\end{align}
The solution\footnote{Proven in Lemma~\ref{lem:updateTasps}.} to the above
is given by the updates in lines~\ref{ln:alphaistep} and~\ref{ln:wstep} in Algorithm~\ref{alg:TAPS} when $\gamma=1$.

Alternatively, if we sample~\eqref{eq:averagealphastau}, projecting the current iterates onto this constraint gives \vspace{-0.1cm}
\begin{eqnarray}
\alpha^{t+1} & = & \argmin_{\alpha\in\R^n} \norm{\alpha-\alpha^t}^2 \quad \mbox{subject to } \quad \frac{1}{n}\sum_{i=1}^n \alpha_i  \; = \; \tau. \label{eq:newtraphalphaonly}
\end{eqnarray}
The closed form soution to~\eqref{eq:newtraphalphaonly} is given in line~\ref{ln:alphaggregate} in Algorithm~\ref{alg:TAPS} when $\gamma=1$. In Algorithm~\ref{alg:TAPS} we give the complete pseudocode of the subsampled Newton-Raphson method applied to~\eqref{eq:fizeroalphas}. We refer to this algorithm as the 
the \emph{Target Stochastic Polyak method}, or \TAPS for short.

\begin{algorithm}[H]
\begin{algorithmic}[1]
\State {\bf Inputs:}  target $\tau \geq 0$ and  stepsize $\gamma >0$
\State {\bf Initialize:} $w^0=0, \alpha_i^0 = \overline{\alpha}^0=0$ for $i=1,\ldots, n$ 
\For{$t =0,\ldots, T-1$}
\State  Sample $i \in \{1,\ldots, n+1\}$ according to some law
\If{$i = n+1$}
\State $\alpha^{t+1}_j  =  \alpha^t_j +\gamma (\tau -\overline{\alpha}^t)$, for $j =1,\ldots, n$ where $\overline{\alpha}^t = \frac{1}{n}\sum_{i=1}^n \alpha_i^t$.
\label{ln:alphaggregate}
\Else
\State $\displaystyle \alpha^{t+1}_i  = \alpha^t_i +\gamma \frac{f_i(w^t) -\alpha^t_i}{\norm{\nabla f_i(w^t)}^2+1}  $ \label{ln:alphaistep}
\State $ \displaystyle
w^{t+1}  = w^t -\gamma \frac{f_i(w^t) -\alpha^t_i}{\norm{\nabla f_i(w^t)}^2+1} \nabla f_i(w^t).$\label{ln:wstep}
\EndIf
\EndFor  
\State {\bf Output:} $w^{T}$
\end{algorithmic}
\caption{\TAPS: {\tt TA}rgetted {\tt S}tochastic {\tt P}olyak {\tt S}tep}
\label{alg:TAPS}
\end{algorithm}

%
%

\begin{remark}[\TAPS stops at the solution] Algorithm~\ref{alg:MOTAPS} stops when it reaches the solution. That is, if $w^t = w^*$ and $\alpha_i^t = f_i(w^*)$ for all $i$, then both lines~\ref{ln:alphaistep} and~\ref{ln:wstep} have no affect on $w$ or the $\alpha_i$'s.
 Furthermore $\tau = \overline{\alpha}^t \eqdef \frac{1}{n} \sum_{i=1}^n \alpha_i^t$ and consequently the $\alpha_i$'s are no longer updated in line~\ref{ln:alphaggregate}.   This natural stopping is a sanity check that  SGD does not satisfy.
\end{remark}

\subsection{The SGD viewpoint}
\label{sec:SGDviewpoint}

The \texttt{TAPS} method in Algorithm~\ref{alg:TAPS} can also be cast as an online SGD method.
To see this, first we re-write~\eqref{eq:proxyobj} as  the minimization of an auxiliary function
\begin{equation} \label{eq:proxyobjalpha}
 \underset{{w \in \R^d, \alpha \in \R^n}}{\min}  h_t(w,\alpha) \; \eqdef \; \frac{1}{n+1} \left(\sum_{i=1}^n \frac{1}{2}\frac{(f_i(w) -\alpha_i)^2}{\norm{\nabla f_i(w^t)}^2+1}  + \frac{n}{2}(\overline{\alpha} - \tau)^2 \right).
\end{equation}
In the following lemma we show that minimizing~\eqref{eq:proxyobjalpha} is equivalent to minimizing~\eqref{eq:main}.
\begin{lemma}\label{lem:reformsgdtasps}
Let Assumption~\ref{ass:target} hold.
Every stationary point of~\eqref{eq:proxyobjalpha} is a stationary point of~\eqref{eq:main}. Furthermore, every minimizer $(w^*,\alpha^*) \in \R^{d+n}$ of~\eqref{eq:proxyobjalpha} is such that $w^*$ is a minima of~\eqref{eq:main} and
\begin{equation}
\alpha^*_i \; = \; f_i(w^*).
\end{equation}
Consequently we have that $h_t(w^*,\alpha^*) = 0.$
\end{lemma}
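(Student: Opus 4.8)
The plan is to reduce everything to the first‑order conditions of $h_t$ together with the elementary fact that $h_t$ is a sum of squares whose value $0$ is attainable. The one structural observation I would extract from Assumption~\ref{ass:target} at the outset is that $\tau$ must equal the optimal value $f^* \eqdef \min_{w\in\R^d} f(w)$: since $\cW^*\neq\emptyset$, pick any $w^*\in\cW^*$; by the assumption $f(w^*)=\tau$, and by optimality $f(w^*)=f^*$, so $\tau=f^*$. In particular $f(w)=\tau$ is \emph{equivalent} to $w\in\cW^*$, and this equivalence is exactly what makes the stationarity statement true.

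First I would compute the partial gradients of $h_t$ from \eqref{eq:proxyobjalpha}:
\begin{gather*}
\nabla_w h_t(w,\alpha) \;=\; \frac{1}{n+1}\sum_{i=1}^n \frac{f_i(w)-\alpha_i}{\norm{\nabla f_i(w^t)}^2+1}\,\nabla f_i(w),\\
\frac{\partial h_t}{\partial \alpha_j}(w,\alpha)\;=\;\frac{1}{n+1}\left(\overline{\alpha}-\tau-\frac{f_j(w)-\alpha_j}{\norm{\nabla f_j(w^t)}^2+1}\right).
\end{gather*}
At a stationary point the $\alpha$‑equations force all ratios $r_j\eqdef (f_j(w)-\alpha_j)/(\norm{\nabla f_j(w^t)}^2+1)$ to share a common value $c\eqdef\overline{\alpha}-\tau$; substituting $r_i\equiv c$ into $\nabla_w h_t=0$ leaves $\tfrac{cn}{n+1}\nabla f(w)=0$, so either $\nabla f(w)=0$ (and we are done) or $c=0$. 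In the latter case $f_j(w)=\alpha_j$ for every $j$, hence $\overline{\alpha}=f(w)$, while $c=0$ also gives $\overline{\alpha}=\tau$; thus $f(w)=\tau=f^*$, so $w\in\cW^*$ and again $\nabla f(w)=0$. Either way $w$ is a stationary point of \eqref{eq:main}.

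For the minimizer claim I would first note $h_t\ge 0$ and exhibit a point where it vanishes: for any $\bar w\in\cW^*$ take $\alpha_i=f_i(\bar w)$; all squared residuals vanish and $\overline{\alpha}=f(\bar w)=f^*=\tau$, so $h_t(\bar w,\alpha)=0$. This proves $\min h_t=0$ and, at the same time, the final sentence of the lemma. Consequently every minimizer $(w^*,\alpha^*)$ satisfies $h_t(w^*,\alpha^*)=0$, and since $h_t$ is a sum of nonnegative terms each term must vanish: the residual terms (whose denominators are $\ge 1$) give $\alpha_i^*=f_i(w^*)$ for all $i$, and the last term gives $\overline{\alpha^*}=\tau$. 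Averaging the identities $\alpha_i^*=f_i(w^*)$ then yields $f(w^*)=\overline{\alpha^*}=\tau=f^*$, so $w^*$ minimizes \eqref{eq:main}.

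The only genuinely load‑bearing step — the one I would flag as the place the argument breaks if stated carelessly — is the reduction $\tau=f^*$; without it, the branch $c=0$ in the stationarity analysis would give only $f(w)=\tau$ rather than $w\in\cW^*$. Everything else is a routine gradient computation plus the sum‑of‑squares argument, and no assumption on the $f_i$ beyond differentiability is used.
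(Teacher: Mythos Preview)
Your proof is correct and follows essentially the same route as the paper's: compute the partial gradients, observe that the $\alpha$-equations force the ratios $r_j$ to equal the common value $\overline{\alpha}-\tau$, case-split on whether this value is zero, and for the minimizer claim use that $h_t\ge 0$ attains zero. Your explicit extraction of $\tau=f^*$ at the outset is a clean way to justify the step $f(w)=\tau\Rightarrow w\in\cW^*$, which the paper invokes more tersely via Assumption~\ref{ass:target}; otherwise the arguments coincide.
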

The proof of this lemma, and all subsequent lemmas are in the appendix in Section~\ref{sec:missingproof}.
Due to Lemma~\ref{lem:reformsgdtasps} we can focus on minimizing~\eqref{eq:proxyobjalpha}.
Furthermore, note from Lemma~\ref{lem:reformsgdtasps} we have that the minimizer of~\eqref{eq:proxyobjalpha} does not depend on $t$, despite the dependence of the objective $h_t(w,\alpha) $ on $t.$

Since~\eqref{eq:proxyobjalpha} is an average of $(n+1)$ terms we can apply an online SGD method.
To simplify notation, for $i=1,\dots,n$ let 
\begin{eqnarray}
h_{i,t}(w,\alpha) \; \eqdef \; \frac{1}{2}\frac{(f_i(w) -\alpha_i)^2}{\norm{\nabla f_i(w^t)}^2+1} \quad \mbox{and} \quad
h_{n+1,t}(w,\alpha) \; \eqdef \; \frac{n}{2}(\overline{\alpha} - \tau)^2 . \label{eq:fi}
\end{eqnarray}
Note that despite our notation $h_{n+1,t}(w,\alpha)$ does not in fact depend on $t$ or $w$. We do this for notational consistency.

Let $\gamma >0$ be the learning rate. Starting from any $w^0$ and with $\alpha_i^0 = 0$ for all $i$, at each iteration $t$ we sample an index $i_t \in \{1,\ldots, n+1\}$. If $i_t \neq n+1$ then we sample $h_{i_t,t}$ and update
\begin{eqnarray}
w^{t+1} & =& w^t -\gamma \nabla_w h_{i_t,t}(w^t,\alpha^t) 
\;\overset{\eqref{eq:fi}}{=}\; w^t -\gamma \frac{f_{i_t}(w^t) -\alpha_i^t}{\norm{\nabla f_i(w^t)}^2+1} \nabla f_{i_t}(w^t) \label{eq:updatesgd1}\\
\alpha^{t+1}_i & =& \alpha^{t}_i - \gamma\nabla_{\alpha_i} h_{i_t,t}(w^t,\alpha^t)
\; \overset{\eqref{eq:fi}}{=} \; \alpha^{t}_i + \gamma \frac{f_{i_t}(w^t) -\alpha_i^t}{\norm{\nabla f_{i_t}(w^t)}^2+1}. \label{eq:updatesgd}
\end{eqnarray}
Thus  
we have that~\eqref{eq:updatesgd1} and ~\eqref{eq:updatesgd} are equal to lines~\ref{ln:wstep} and~\ref{ln:alphaistep} in Algorithm~\ref{alg:TAPS}, respectively.
Alternatively if $i_t = n+1$ then we sample $h_{n+1,t}$ and our SGD step is given by
\begin{equation}
\alpha^{t+1} \; = \; \alpha^t - \gamma \nabla_{\alpha}h_{n+1,t} (w^t, \alpha^t) \; \overset{\eqref{eq:fi}}{=}  \; \alpha^t - \gamma(\overline{\alpha}^t - \tau)\label{eq:updatessgdavalpha}
\end{equation}
which is equal to line \ref{ln:alphaggregate} in Algorithm~\ref{alg:TAPS}.

We rely on this SGD interpretation of the \TAPS~method to provide a convergence analysis in Section~\ref{sec:theory} (specialized to \TAPS in Section~\ref{sec:convTAPS}). Key to this forthcoming analysis, is the following property. 
%
\begin{lemma}[Growth]\label{lem:TAPSweakgrowth}
The functions $h_{i,t}(w)$ defined in \eqref{eq:fi} satisfy
\begin{eqnarray} 
\norm{\nabla h_{i,t}(w^t,\alpha) }^2 & = & 2 h_{i,t}(w^t,\alpha), \quad \mbox{for }i=1,\ldots, n+1.  \label{eq:smooth1}
\end{eqnarray}
Consequently the function $h_t(w,\alpha)$ in~\eqref{eq:proxyobjalpha} satisfies
\begin{equation} \label{eq:hzsmooth}
\frac{1}{n+1}\sum_{i=1}^n\norm{\nabla h_{i,t}(w^t,\alpha) }^2 \; = \; 2h_t(w^t,\alpha) .
\end{equation}
\end{lemma}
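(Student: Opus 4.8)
The plan is to mirror the proof of Lemma~\ref{lem:weakgrowthsps}: for each index $i$, compute the gradient of $h_{i,t}$ with respect to the \emph{joint} variable $(w,\alpha)\in\R^{d+n}$, evaluate it at $w=w^t$, and observe that the squared norm collapses to $2h_{i,t}(w^t,\alpha)$. The one point to be careful about is that the gradient includes the $\alpha$-partials, and it is precisely the extra $+1$ in the denominator of~\eqref{eq:fi} that is tailored to absorb the $\alpha_i$-derivative; this is not an obstacle so much as the reason the identity is exact.

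First I would handle $i\in\{1,\dots,n\}$. Since $h_{i,t}(w,\alpha)$ depends only on $w$ and on the single coordinate $\alpha_i$, its gradient has the block form
\[
\nabla_w h_{i,t}(w,\alpha)=\frac{f_i(w)-\alpha_i}{\norm{\nabla f_i(w^t)}^2+1}\,\nabla f_i(w),\qquad \nabla_{\alpha_i} h_{i,t}(w,\alpha)=-\frac{f_i(w)-\alpha_i}{\norm{\nabla f_i(w^t)}^2+1},
\]
with $\nabla_{\alpha_j}h_{i,t}=0$ for $j\neq i$. Evaluating at $w=w^t$ and adding the squared norms of the two nonzero blocks, the numerator $(f_i(w^t)-\alpha_i)^2$ factors out and the remaining bracket equals $\norm{\nabla f_i(w^t)}^2+1$, cancelling one power of the denominator and leaving $\tfrac{(f_i(w^t)-\alpha_i)^2}{\norm{\nabla f_i(w^t)}^2+1}=2h_{i,t}(w^t,\alpha)$. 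Here the evaluation at $w=w^t$ is what matters: it matches the $\norm{\nabla f_i(w)}^2$ produced by $\nabla_w h_{i,t}$ with the $\norm{\nabla f_i(w^t)}^2$ frozen in the denominator.

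For $i=n+1$, the function $h_{n+1,t}(w,\alpha)=\tfrac n2(\overline{\alpha}-\tau)^2$ does not depend on $w$, and $\partial_{\alpha_j}h_{n+1,t}=\tfrac n2\cdot 2(\overline{\alpha}-\tau)\cdot\tfrac1n=\overline{\alpha}-\tau$ for every $j$, so $\norm{\nabla h_{n+1,t}}^2=\sum_{j=1}^n(\overline{\alpha}-\tau)^2=n(\overline{\alpha}-\tau)^2=2h_{n+1,t}$; no special choice of $w$ is needed in this case. This establishes~\eqref{eq:smooth1} for all $i=1,\dots,n+1$. Finally,~\eqref{eq:hzsmooth} follows by summing~\eqref{eq:smooth1} over all $n+1$ indices and dividing by $n+1$, using that $h_t=\tfrac1{n+1}\sum_{i=1}^{n+1}h_{i,t}$ from~\eqref{eq:proxyobjalpha}. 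I do not anticipate any real difficulty; the only thing to get right is the bookkeeping of the block $(w,\alpha)$ gradient and the cancellation enabled by the $+1$ term.
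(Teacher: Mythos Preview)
Your proposal is correct and follows essentially the same approach as the paper: compute the $w$- and $\alpha$-partials of each $h_{i,t}$, evaluate at $w=w^t$ so that the $\norm{\nabla f_i(w)}^2$ matches the frozen denominator, observe that the $+1$ absorbs the $\alpha_i$-derivative, handle $i=n+1$ via $\partial_{\alpha_j}h_{n+1,t}=\overline\alpha-\tau$, and then average over all $n+1$ indices. There is nothing to add.
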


%


In the next section we completely remove Assumption~\ref{ass:target} to develop a  stochastic method that records only function values and needs no prior information on $f_i(w^*)$ or $f(w^*).$

\section{Moving Targeted Stochastic Polyak Steps}
\label{sec:MOTAPS}

Here we dispense of Assumption~\ref{ass:target} and instead introduce $\tau$ as a variable. Our objective is to design a \emph{moving target} variant of the \TAPS ~method that updates the target $\tau$ in a such a way that guarantees convergence. To design this moving target variant, we rely on the SGD online viewpoint.
Consider the auxiliary objective function \vspace{-0.2cm}
\begin{equation} \label{eq:proxyobjalphatau}
 \underset{{w \in \R^d, \alpha \in \R^n, \tau \in \R}}{\min}  h_t(w,\alpha,\tau) \; \eqdef\;  \; \frac{1}{n+1} \left(\sum_{i=1}^n \frac{1-\lambda}{2}\frac{(f_i(w) -\alpha_i)^2}{\norm{\nabla f_i(w^t)}^2+1}  + \frac{1-\lambda}{2}n(\overline{\alpha} - \tau)^2+\frac{\lambda}{2} \tau^2 \right),
\end{equation}
where $\lambda >0$ is a dampening parameter. Note that for $\lambda=0$ we recover the same auxiliary function of the \TAPS method in~\eqref{eq:proxyobjalpha}. 
\begin{lemma} \label{lem:movtargequiv}
Let
\begin{equation} \label{eq:alphastarandtaustar}
\alpha_i^* \eqdef f_i(w^*) \quad \mbox{and} \quad \tau^* = f(w^*), \quad \mbox{for }i=1,\ldots, n.
\end{equation}
It follows that
\begin{align}
h_t(w^*,\alpha^*,\tau^*) 
& =\; \frac{\lambda f(w^*)^2}{2(n+1)}. \label{eq:htstarmain}
\end{align}
Furthermore, 
every stationary point of~\eqref{eq:proxyobjalphatau} is a stationary point of~\eqref{eq:main}. Finally if $f(w) \geq 0$ and $(w^*,\hat{\alpha},\hat{\tau})$ is a minima  of~\eqref{eq:proxyobjalphatau} then $w^*$ is a minima of ~\eqref{eq:main}.
\end{lemma}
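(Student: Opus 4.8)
The plan is to prove the three assertions of Lemma~\ref{lem:movtargequiv} in order, working directly from the definition of $h_t$ in~\eqref{eq:proxyobjalphatau}. First, for the identity~\eqref{eq:htstarmain}, I would substitute $\alpha_i = \alpha_i^* = f_i(w^*)$ and $\tau = \tau^* = f(w^*)$ into each of the three sums. The first sum vanishes term by term since $f_i(w^*) - \alpha_i^* = 0$. For the second term, $\overline{\alpha^*} = \frac1n\sum_i f_i(w^*) = f(w^*) = \tau^*$, so $(\overline{\alpha^*} - \tau^*)^2 = 0$ as well. Only the last term $\frac{\lambda}{2}\tau^{*2} = \frac{\lambda}{2} f(w^*)^2$ survives, and dividing by $n+1$ gives the claimed $\frac{\lambda f(w^*)^2}{2(n+1)}$. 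This step is routine arithmetic.

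Second, for the stationary-point claim, I would compute $\nabla h_t$ in the block variables $(w, \alpha, \tau)$ and show any zero of it yields $\nabla f(w) = 0$. Setting $\partial h_t / \partial \tau = 0$ gives $-(1-\lambda) n (\overline{\alpha} - \tau) + \lambda \tau = 0$, i.e. a linear relation between $\tau$ and $\overline{\alpha}$. Setting $\partial h_t / \partial \alpha_j = 0$ gives $-(1-\lambda)\frac{f_j(w) - \alpha_j}{\norm{\nabla f_j(w^t)}^2 + 1} + (1-\lambda)(\overline{\alpha}-\tau) = 0$ for each $j$ (the $\frac1n$ from $\overline\alpha$ cancels against the $n$ multiplier). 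Summing these $n$ equations and combining with the $\tau$-equation should force $\overline\alpha = \tau$ and hence $f_j(w) = \alpha_j$ for all $j$. Then $\partial h_t/\partial w = \frac{1-\lambda}{n+1}\sum_j \frac{f_j(w) - \alpha_j}{\norm{\nabla f_j(w^t)}^2+1}\nabla f_j(w) = 0$ holds automatically, but I also need it to imply $\nabla f(w) = 0$; since each $f_j(w) - \alpha_j = 0$, I must instead extract the optimality of $w$ from the fact that at a stationary point all residuals vanish and $\tau$ is pinned — here I would argue that a stationary point with all residuals zero and $\overline\alpha = \tau = 0$ (forced by the $\tau$-equation once $\overline\alpha = \tau$, since then $\lambda\tau = 0$) gives $f(w) = \overline\alpha = 0$, and combined with the assumption that a minimizer exists and (in the convex/structured setting invoked) $f \ge 0$, this makes $w$ a global minimizer. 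I expect this to be the main obstacle: carefully tracking which equations pin $\tau = 0$ versus merely $\overline\alpha = \tau$, and making the logical leap from "all linearized residuals vanish" to "$w$ minimizes $f$" rigorous (this presumably mirrors the argument already used in Lemma~\ref{lem:reformsgdtasps}).

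Third, for the final claim, assume $f(w) \ge 0$ everywhere and let $(w^\star, \hat\alpha, \hat\tau)$ be a global minimizer of~\eqref{eq:proxyobjalphatau}. Since $h_t \ge 0$ (it is a sum of squares) and, by part one applied with any true minimizer $w^*$ of~\eqref{eq:main}, we have $\min h_t \le h_t(w^*, \alpha^*, \tau^*) = \frac{\lambda f(w^*)^2}{2(n+1)}$. But if $f(w^*) = 0$ in the $f \ge 0$ regime — which holds precisely because $w^*$ minimizes a nonnegative $f$ only when... — actually here I would be careful: $f(w^*)$ need not be $0$ in general. Instead, the cleanest route is: the minimum value of $h_t$ over all $(w,\alpha,\tau)$ is $\frac{\lambda}{2(n+1)}(\min_w f(w))^2$ when $f \ge 0$, because one can always choose $\alpha_i = f_i(w)$ and $\tau = \overline\alpha = f(w)$ to kill the first two groups of terms, reducing $h_t$ to $\frac{\lambda}{2(n+1)}f(w)^2$, which is minimized (given $f\ge 0$) exactly at $\arg\min f$. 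Hence at the global minimizer $(w^\star,\hat\alpha,\hat\tau)$ we must have $\hat\alpha_i = f_i(w^\star)$, $\hat\tau = f(w^\star)$, and $f(w^\star) = \min_w f(w)$, so $w^\star$ is a minimizer of~\eqref{eq:main}. I would present this monotone-reduction argument as the core of part three, noting the $f \ge 0$ hypothesis is exactly what makes $t \mapsto t^2$ monotone on the range of $f$.
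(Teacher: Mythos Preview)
Your Part~1 is correct and matches the paper's argument.

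Your Part~2 contains a genuine error. You claim that summing the $\alpha_j$-equations and combining with the $\tau$-equation forces $\overline{\alpha}=\tau$ and hence $f_j(w)=\alpha_j$ for every $j$. This is false. From $\nabla_\tau h_t=0$ one gets $\overline{\alpha}-\tau=\tfrac{\lambda}{(1-\lambda)n}\tau$, which is nonzero whenever $\tau\neq 0$. What the $\alpha_j$-equations actually say is that the \emph{scaled residuals}
\[
\frac{f_j(w)-\alpha_j}{\norm{\nabla f_j(w^t)}^2+1}=\overline{\alpha}-\tau\eqqcolon c
\]
are all equal to the \emph{same} constant $c$ (not necessarily zero). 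Substituting this common value into $\nabla_w h_t=0$ factors $c$ out and leaves $c\sum_j\nabla f_j(w)=0$, i.e.\ $c\,\nabla f(w)=0$. This is the key step in the paper's proof; your route of trying to kill the residuals individually cannot work, and the downstream reasoning you sketch (forcing $\tau=0$, then $f(w)=0$) is built on the incorrect premise.

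Your Part~3 is also broken, not merely incomplete. The choice $\alpha_i=f_i(w)$, $\tau=f(w)$ gives only an \emph{upper bound} $h_t\le \tfrac{\lambda}{2(n+1)}f(w)^2$; it is not the minimizer in $(\alpha,\tau)$ for fixed $w$. In fact, for $\lambda\in(0,1)$ the true partial minimum over $(\alpha,\tau)$ is strictly smaller (the paper computes it explicitly and obtains a value proportional to $f(w)^2$ with a strictly smaller constant). Consequently your conclusion that a global minimizer must satisfy $\hat\alpha_i=f_i(w^\star)$ and $\hat\tau=f(w^\star)$ is false, and the monotone-reduction argument as written does not establish that $w^\star$ minimizes $f$. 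The paper instead solves the stationarity system in $(\alpha,\tau)$ exactly (via a Woodbury-type inversion), evaluates $h_t$ at that solution, observes the result is a positive multiple of $f(w)^2$, and then invokes $f\ge 0$ to equate $\arg\min f(w)^2$ with $\arg\min f(w)$. If you want a shorter argument than the paper's, you would need a genuine lower bound of the form $h_t(w,\alpha,\tau)\ge C(w^t)\,f(w)^2$ valid for all $(\alpha,\tau)$, not just an upper bound.
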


Since minimizing~\eqref{eq:proxyobjalphatau} is equivalent to minimizing~\eqref{eq:main}, we can focus on solving~\eqref{eq:proxyobjalphatau}. Following the same pattern from the previous sections, 
we will minimize the sum of $(n+1)$ terms in~\eqref{eq:proxyobjalphatau} using SGD. 
In applying SGD, we partition the function~\eqref{eq:proxyobjalphatau} into $n+1$ terms, where the first $n$ terms are given by 
\[h_{i,t}(w,\alpha,\tau) = \frac{1-\lambda}{2}\frac{(f_i(w) -\alpha_i)^2}{\norm{\nabla f_i(w^t)}^2+1}\quad \mbox{for }i=1,\ldots, n.\]
The $(n+1)$th term is given by \vspace{-0.1cm}
\begin{align}
h_{n+1,t}(w,\alpha,\tau) & \eqdef  \frac{(1-\lambda)n}{2}(\overline{\alpha} - \tau)^2 + \frac{\lambda}{2} \tau^2 \quad \mbox{thus} \nonumber \\
  \nabla _{\tau}h_{n+1,t}(w,\alpha,\tau) & = \;(1-\lambda)n( \tau-\overline{\alpha} ) + \lambda \tau.   \label{eq:fn1tau} 
\end{align} 
Sampling the  $(n+1)$th term and taking a gradient step to update $\tau^t\in\R$ gives the following update
\begin{align}
 \tau^{t+1} &= \tau^{t} - \gamma\nabla_\tau h_{n+1,t}(w,\alpha,\tau)|_{(w,\alpha,\tau) =(w^t,\alpha^t,\tau^t)  } \nonumber\\
 & =\big(1-\underbrace{ \gamma \left( \lambda+ (1-\lambda)n\right)}_{\eqdef \gamma_\tau}\big)\tau^{t}+\gamma (1-\lambda)n \overline{\alpha}^{t} \label{eq:tausgdupdate1}\\
 & = (1-\gamma_{\tau}) \tau^{t}+\gamma_{\tau} \frac{(1-\lambda)n}{\lambda+(1-\lambda)n} \overline{\alpha}^{t} \label{eq:tausgdupdate2}
\end{align}
where we have introduced a separate learning rate $\gamma_{\tau}$ for updating $\tau$. We find that a separate learning rate $\gamma_{\tau}$ is needed for updating $\tau$, otherwise to keep $\tau$ from being negative in~\eqref{eq:tausgdupdate1} we would need to restrict $\gamma$ to be less than $\frac{1}{\lambda+ (1-\lambda)n}$ which can be small when $\lambda$ is close to zero.
See Algorithm~\ref{alg:MOTAPS} for the resulting method.  We refer to this method as the \emph{Moving Target Stochastic Polyak Stepsize} or \MOTAPS for short. 


\begin{algorithm}
\begin{algorithmic}[1]
\State {\bf Inputs:} Dampening  $\lambda  \in [0, \;1]$ and  learning rates $\gamma, \gamma_{\tau}\in [0,\;1 ]$ 
\State {\bf Default:}  $\gamma =  0.9$, $\gamma_{\tau} = \lambda = 0.1$, $w^0=0, \alpha_i^0 = \overline{\alpha}^0 = 0 = \tau$ for $i=1,\ldots, n$ 
\For{$t =0,\ldots, T-1$}
\State  Sample $i \in \{1,\ldots, n+1\}$ according to some law
\If{$i = n+1$}
\State $\alpha^{t+1}_j  =  \alpha^t_j +\gamma (\tau -\overline{\alpha}^t), $ for $j =1,\ldots, n.$ \Comment \texttt{Updating all $\alpha$'s}
\State 
$\tau = (1-\gamma_{\tau}) \tau+\gamma_{\tau} \frac{(1-\lambda)n}{\lambda+(1-\lambda)n} \overline{\alpha}$
 \Comment \texttt{Updating target $\tau$}
\label{ln:alphaggregatet}
\State   $\overline{\alpha}^{t+1}  =  \overline{\alpha}^t +\gamma (\tau -\overline{\alpha}^t) $
\Else
\State $\displaystyle \alpha^{t+1}_i  = \alpha^t_i +\gamma \frac{f_i(w^t) -\alpha^t_i}{\norm{\nabla f_i(w^t)}^2+1}  $ \label{ln:alphaistept}  \Comment \texttt{Updating $\alpha_i$}
\State $ \displaystyle
w^{t+1}  = w^t -\gamma \frac{f_i(w^t) -\alpha^t_i}{\norm{\nabla f_i(w^t)}^2+1} \nabla f_i(w^t).$\label{ln:wstept}  \Comment \texttt{Updating $w$}
\State $ \displaystyle  \overline{\alpha}^{t+1} = \overline{\alpha}^t +\frac{1}{n}(\alpha^{t+1}_i  - \alpha^{t}_i ) $
\EndIf
\EndFor  
\State {\bf Output:} $w^{T}$
\end{algorithmic}
\caption{MO\TAPS: Moving {\tt TA}rgetted {\tt S}tochastic {\tt P}olyak {\tt S}tep}
\label{alg:MOTAPS}
\end{algorithm}

The dampening parameter $\lambda$ controls how fast the stochastic gradients of $h_t(w,\alpha,\tau)$ can grow, as we show next. As a consequence, later on we will see that the $\lambda$ will later control the rate of convergence of \MOTAPS. 
%

\begin{lemma}\label{lem:MOSTAPSweakgrowthn1}
Consider $h_{t}(w,\alpha,\tau) $ given in~\eqref{eq:proxyobjalphatau}.
If
\begin{equation}
\lambda \leq \frac{2n+1}{2n+3} < 1
\end{equation}
 then
\begin{equation} \label{eq:MOSTAPSweakgrowthn1}
\frac{1}{n+1}\sum_{i=1}^{n+1}\norm{\nabla h_{i,t}(w^t,\alpha,\tau)}^2 \; \leq\;  2(1-\lambda)(2n+1) h_{t}(w^t,\alpha,\tau).
\end{equation}
\end{lemma}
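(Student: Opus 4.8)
The plan is to split the sum over the $n{+}1$ gradient terms into the "data" terms $i=1,\dots,n$ and the "aggregate" term $i=n{+}1$, bound each piece against the corresponding $h_{i,t}$, and then reassemble using the definition of $h_t$ in~\eqref{eq:proxyobjalphatau}. For $i\le n$, evaluated at $w=w^t$ the only nonzero partials of $h_{i,t}$ are $\nabla_w h_{i,t}(w^t,\alpha,\tau)=\frac{(1-\lambda)(f_i(w^t)-\alpha_i)}{\norm{\nabla f_i(w^t)}^2+1}\nabla f_i(w^t)$ and $\nabla_{\alpha_i}h_{i,t}(w^t,\alpha,\tau)=-\frac{(1-\lambda)(f_i(w^t)-\alpha_i)}{\norm{\nabla f_i(w^t)}^2+1}$. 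Squaring and summing, exactly as in the proof of Lemma~\ref{lem:TAPSweakgrowth} (with the pseudoinverse convention handling $\norm{\nabla f_i(w^t)}=0$), gives $\norm{\nabla h_{i,t}(w^t,\alpha,\tau)}^2=\frac{(1-\lambda)^2(f_i(w^t)-\alpha_i)^2}{\norm{\nabla f_i(w^t)}^2+1}=2(1-\lambda)\,h_{i,t}(w^t,\alpha,\tau)$.

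For the aggregate term I would write $a\eqdef\overline{\alpha}-\tau$ and $b\eqdef\tau$. Since $\nabla_{\alpha_j}h_{n+1,t}=(1-\lambda)a$ for each $j=1,\dots,n$ and $\nabla_\tau h_{n+1,t}=-(1-\lambda)na+\lambda b$ by~\eqref{eq:fn1tau}, one gets $\norm{\nabla h_{n+1,t}}^2=n(1-\lambda)^2a^2+\big((1-\lambda)na-\lambda b\big)^2$, while $h_{n+1,t}=\tfrac{(1-\lambda)n}{2}a^2+\tfrac{\lambda}{2}b^2$. Expanding, the quantity $2(1-\lambda)(2n+1)h_{n+1,t}-\norm{\nabla h_{n+1,t}}^2$ equals the quadratic form $(1-\lambda)^2n^2a^2+2\lambda(1-\lambda)n\,ab+\lambda\big(2n+1-(2n+2)\lambda\big)b^2$ in $(a,b)$; its symmetric $2\times2$ Gram matrix has positive $(1,1)$ entry $(1-\lambda)^2n^2$ (note $\lambda<1$) and determinant $\lambda(1-\lambda)^2n^2\big(2n+1-(2n+3)\lambda\big)$, which is $\ge 0$ precisely when $\lambda\le\frac{2n+1}{2n+3}$. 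Hence under the hypothesis the form is positive semidefinite, i.e. $\norm{\nabla h_{n+1,t}(w^t,\alpha,\tau)}^2\le 2(1-\lambda)(2n+1)\,h_{n+1,t}(w^t,\alpha,\tau)$.

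Finally I would combine the two estimates: since each $h_{i,t}(w^t,\alpha,\tau)\ge 0$ and $2n+1\ge 1$, the first-$n$ identity upgrades to $\sum_{i=1}^n\norm{\nabla h_{i,t}}^2=2(1-\lambda)\sum_{i=1}^n h_{i,t}\le 2(1-\lambda)(2n+1)\sum_{i=1}^n h_{i,t}$; adding the $(n{+}1)$th inequality, dividing by $n+1$, and recognizing $\frac{1}{n+1}\big(\sum_{i=1}^n h_{i,t}+h_{n+1,t}\big)=h_t(w^t,\alpha,\tau)$ yields~\eqref{eq:MOSTAPSweakgrowthn1}. The only genuine work is the discriminant computation in the middle step — checking that the positive-semidefiniteness threshold of the quadratic form in $(\overline{\alpha}-\tau,\tau)$ is exactly $\lambda\le\frac{2n+1}{2n+3}$; everything else is bookkeeping inherited from the growth lemmas for \SP and \TAPS.
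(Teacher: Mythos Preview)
Your proof is correct and follows the same overall split as the paper (data terms $i\le n$ versus the aggregate term $i=n{+}1$), but your treatment of the $(n{+}1)$th term is genuinely different. The paper bounds $\norm{\nabla h_{n+1,t}}^2$ by applying the crude inequality $(x+y)^2\le 2x^2+2y^2$ to the $\tau$-gradient $(1-\lambda)n(\tau-\overline{\alpha})+\lambda\tau$, which yields $\norm{\nabla h_{n+1,t}}^2\le 2\max\{(1-\lambda)(2n+1),\,2\lambda\}\,h_{n+1,t}$; the threshold $\lambda\le\frac{2n+1}{2n+3}$ is then exactly the condition that the first argument of the $\max$ dominates. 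You instead keep the cross term and write the difference $2(1-\lambda)(2n+1)h_{n+1,t}-\norm{\nabla h_{n+1,t}}^2$ as an explicit $2\times 2$ quadratic form in $(\overline{\alpha}-\tau,\tau)$, reading off the threshold from the determinant. Your route is tighter (no inequality is thrown away, and it shows the constant is sharp for this term at the threshold) at the cost of a short determinant computation; the paper's route is a one-line inequality that happens to land on the same threshold. For $i\le n$ you are actually slightly more precise than the paper: you correctly get $\norm{\nabla h_{i,t}(w^t,\alpha,\tau)}^2=2(1-\lambda)h_{i,t}(w^t,\alpha,\tau)$, whereas the paper's appendix states the equality with constant $2$ (a harmless slip, since $2(1-\lambda)\le 2$). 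One minor remark: the pseudoinverse convention is irrelevant here because the denominator is $\norm{\nabla f_i(w^t)}^2+1\ge 1$, so you can drop that aside.
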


Next we establish a general convergence theory through which we will analyse \SP, \TAPS and \MOTAPS.

\section{Convergence Theory}
\label{sec:theory}
All of our methods presented thus far can be cast as a particular variant of online SGD.
Indeed,  \SP,  \texttt{TAPS} and \texttt{MOTAPS}  given in~\eqref{eq:SPS}, Algorithms~\ref{alg:TAPS} and~\ref{alg:MOTAPS}, respectively,   are equivalent to applying SGD to~\eqref{eq:proxyobj},~\eqref{eq:proxyobjalpha} and~\eqref{eq:proxyobjalphatau}, respectively.
  We will leverage this connection to provide a  convergence theorem for these three methods. Throughout our proofs we use \vspace{-0.2cm}
\begin{equation}\label{eq:proxyobjgen}
 \min_z h_t(z) := \frac{1}{n} \sum_{i=1}^n h_{i,t}(z) 
\end{equation}
as the auxiliary function in consideration. Here $z$ represents the variables of the problem. For instance, for the \SP method~\eqref{eq:SPS} we have that $z = w\in \R^d$, for \TAPS in Algorithm~\ref{alg:TAPS} we have that $z = (w,\alpha) \in \R^{n+d}$ and finally for \texttt{MOTAPS} in Algorithm~\ref{alg:MOTAPS} we have that $z = (w,\alpha,\tau) \in \R^{n+d+1}.$ 

%

Consider the online SGD method applied to minimizing~\eqref{eq:proxyobjgen} given by
\begin{equation}\label{eq:ztupdateht}
z^{t+1} = z^t - \gamma  \nabla h_{i_t,t}(z^t),
\end{equation}
where $i_t \in \{1,\ldots, n\}$ is sampled uniformly and i.i.d at every iteration and $\gamma >0$ is a step size.
For each method we also proved a  \emph{growth condition} (see Lemmas~\ref{lem:weakgrowthsps},~\ref{lem:TAPSweakgrowth}, ~\ref{lem:MOSTAPSweakgrowthn1}) that we now state as an assumption.
\begin{assumption}\label{lem:growthgen}
 There exists $G\geq 0$ such that 
\begin{eqnarray}
\E{\norm{\nabla h_{i_t,t}(z^t)}^2 } &\leq & 2G\, h_t (z^t). \label{eq:hweakgrow}
\end{eqnarray}
\end{assumption}

\subsection{General Convergence Theory}
 
 Here we present two general convergence theorems that will then be applied to our three algorithms.
The first theorem relies on a weak form of convexity known as star convexity.
\begin{theorem} [Sublinear]\label{theo:onlinesgdstar}
Suppose Assumption~\ref{lem:growthgen} holds with $G>0.$
  Let $\gamma < 1/G$ 
and  suppose there exists  $z^*$ such that 
  $h_t$  is star-convex at $z^t$ and around $z^*$, that is
\begin{equation}\label{eq:hstar}
h_t(z^*) \; \geq \; h_t(z^t) + \dotprod{\nabla h_t(z^t), z^* -z^t}. 
\end{equation}
 Then we have that
\begin{align}
\min_{t=1,\ldots, k} \E{h_t(z^t) -h_t(z^*)}
& \leq \; \frac{1}{k} \frac{1}{2\gamma(1-G\gamma) }\E{\norm{z^{0} -z^*}^2} +\frac{G\gamma}{1-G\gamma}\frac{1}{k} \sum_{t=1}^k h_t(z^*).\label{eq:hconv}
\end{align}
\end{theorem}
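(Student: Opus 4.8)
The plan is to run the standard one-step SGD descent analysis on the iterates $z^{t+1} = z^t - \gamma \nabla h_{i_t,t}(z^t)$, but carefully tracking the time-dependence of the objective $h_t$. First I would expand the squared distance to $z^*$: conditioning on $z^t$ (and writing $\E{\cdot}$ for the conditional expectation over $i_t$),
\begin{equation*}
\E{\norm{z^{t+1}-z^*}^2} = \norm{z^t-z^*}^2 - 2\gamma \dotprod{\nabla h_t(z^t), z^t - z^*} + \gamma^2 \E{\norm{\nabla h_{i_t,t}(z^t)}^2},
\end{equation*}
using that $\E{\nabla h_{i_t,t}(z^t)} = \nabla h_t(z^t)$ by uniform i.i.d.\ sampling. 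Then I would bound the last term using Assumption~\ref{lem:growthgen}, giving $\gamma^2 \E{\norm{\nabla h_{i_t,t}(z^t)}^2} \leq 2G\gamma^2 h_t(z^t)$, and bound the cross term using star-convexity~\eqref{eq:hstar}, which rearranges to $-\dotprod{\nabla h_t(z^t), z^t - z^*} = \dotprod{\nabla h_t(z^t), z^* - z^t} \leq h_t(z^*) - h_t(z^t)$.

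Combining these yields
\begin{equation*}
\E{\norm{z^{t+1}-z^*}^2} \leq \norm{z^t-z^*}^2 + 2\gamma\big(h_t(z^*) - h_t(z^t)\big) + 2G\gamma^2 h_t(z^t),
\end{equation*}
which I would rewrite as
\begin{equation*}
2\gamma(1 - G\gamma)\big(h_t(z^t) - h_t(z^*)\big) \leq \norm{z^t-z^*}^2 - \E{\norm{z^{t+1}-z^*}^2} + 2G\gamma^2 h_t(z^*).
\end{equation*}
Since $\gamma < 1/G$, the coefficient $1 - G\gamma$ is positive, so dividing through is legitimate. Next I would take total expectations, sum over $t = 1, \ldots, k$ (the left side telescopes on the right), drop the nonnegative trailing term $-\E{\norm{z^{k+1}-z^*}^2}$, and divide by $k$ to get
\begin{equation*}
\frac{1}{k}\sum_{t=1}^k \E{h_t(z^t) - h_t(z^*)} \leq \frac{1}{2\gamma(1-G\gamma)}\frac{\E{\norm{z^1 - z^*}^2}}{k} + \frac{G\gamma}{1-G\gamma}\frac{1}{k}\sum_{t=1}^k h_t(z^*).
\end{equation*}
Finally, I would lower-bound the average by the minimum, $\min_{t=1,\ldots,k}\E{h_t(z^t)-h_t(z^*)} \leq \frac{1}{k}\sum_{t=1}^k \E{h_t(z^t)-h_t(z^*)}$, and note that replacing $\E{\norm{z^1-z^*}^2}$ by $\E{\norm{z^0-z^*}^2}$ only loosens the bound (or alternatively start the sum at $t=0$; since $h_t$ depends on $w^t$ and the indexing in the statement uses $z^0$, I would reconcile this by noting one extra descent step from $z^0$ only helps), which gives exactly~\eqref{eq:hconv}.

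I do not expect a serious obstacle here — this is a textbook SGD argument — but the one point requiring care is the bookkeeping of the time-varying objective: the telescoping works because the $\norm{z^t - z^*}^2$ terms are the \emph{same} quantity across consecutive steps (the target $z^*$ is fixed, even though $h_t$ changes), so only the index of $h_t$ moves. A second minor subtlety is ensuring the growth bound in Assumption~\ref{lem:growthgen} is applied at the correct point $z^t$ with the correct $t$ (it is stated exactly that way), and that star-convexity~\eqref{eq:hstar} is assumed to hold at each $z^t$ with the same $z^*$, which the hypothesis grants. The mild index mismatch between $z^0$ in the statement and the natural $z^1$ produced by summing from $t=1$ is the only cosmetic wrinkle to address.
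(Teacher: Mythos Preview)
Your proposal is correct and follows essentially the same route as the paper's proof: expand the squared distance, apply the growth bound~\eqref{eq:hweakgrow} and star-convexity~\eqref{eq:hstar}, rearrange into a one-step inequality, take full expectation, telescope, and divide by $k$. One small caveat: your claim that ``replacing $\E{\norm{z^1-z^*}^2}$ by $\E{\norm{z^0-z^*}^2}$ only loosens the bound'' is not justified in general (there is no monotonicity of the iterate distances here), but your parenthetical alternative --- summing from $t=0$ --- is exactly what the paper does and resolves the index issue cleanly.
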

 Our second theorem relies on a weakened form of strong convexity known as strong star-convexity.
\begin{theorem} [Linear Convergence]\label{theo:onlinesgdstrongstar}
Suppose Assumption~\ref{lem:growthgen} holds with $G>0.$
  Let $\gamma \leq 1/G$.
If there exists  $\mu>0$ and $z^*$ such that $h_t$ is $\mu$--strongly star--convex along $z^t$ and around $z^*$, that is
\begin{equation}\label{eq:hquasistrongmain}
h_t(z^*) \; \geq \; h_t(z^t) + \dotprod{\nabla h_t(z^t), z^* -z^t} + \frac{\mu}{2}\norm{z^*-z^t}, 
\end{equation}
then 
\begin{eqnarray}\label{eq:hconvsstrongmain}
\E{\norm{z^{t+1} -z^*}^2} & \leq &   (1-\gamma \mu)^{t+1} \norm{z^{0} -z^*}^2 +2 G\gamma^2 \sum_{i=0}^{t}(1-\gamma \mu)^i \E{ h_i(z^*)}.
\end{eqnarray}
\end{theorem}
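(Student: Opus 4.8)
The plan is to follow the standard recursive (telescoping) argument for SGD under a growth condition, which is especially clean here because Assumption~\ref{lem:growthgen} bounds the second moment of the stochastic gradient directly by the function value (no additive noise floor beyond $h_t(z^*)$). First I would expand the squared distance to $z^*$ after one step of~\eqref{eq:ztupdateht}:
\begin{equation*}
\norm{z^{t+1}-z^*}^2 = \norm{z^t-z^*}^2 - 2\gamma\dotprod{\nabla h_{i_t,t}(z^t), z^t - z^*} + \gamma^2\norm{\nabla h_{i_t,t}(z^t)}^2.
\end{equation*}
Taking expectation conditioned on $z^t$, the cross term becomes $-2\gamma\dotprod{\nabla h_t(z^t), z^t-z^*}$ since $i_t$ is sampled uniformly i.i.d.\ and $\nabla h_t = \frac1n\sum_i \nabla h_{i,t}$; the last term is controlled by Assumption~\ref{lem:growthgen}, giving $\E{\norm{\nabla h_{i_t,t}(z^t)}^2}\le 2G\,h_t(z^t)$.

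Next I would apply the $\mu$--strong star-convexity inequality~\eqref{eq:hquasistrongmain} in the form
\begin{equation*}
\dotprod{\nabla h_t(z^t), z^t-z^*} \;\ge\; h_t(z^t) - h_t(z^*) + \frac{\mu}{2}\norm{z^*-z^t},
\end{equation*}
(noting the statement writes $\norm{z^*-z^t}$, presumably meaning the squared norm, as in a standard strong-convexity bound; I would treat it as $\frac\mu2\norm{z^*-z^t}^2$ to make the recursion close). Substituting, and using $h_t(z^t)\ge h_t(z^*)\ge 0$ together with $\gamma\le 1/G$ so that the coefficient $2\gamma(1-G\gamma)\ge 0$ multiplying $h_t(z^t)-h_t(z^*)$ can simply be dropped (discarded as nonnegative), I get
\begin{equation*}
\E{\norm{z^{t+1}-z^*}^2 \mid z^t} \;\le\; (1-\gamma\mu)\norm{z^t-z^*}^2 + 2G\gamma^2\, h_t(z^*).
\end{equation*}
Then I would take total expectations and unroll this one-step contraction from $0$ to $t$, which yields exactly~\eqref{eq:hconvsstrongmain} with the geometric sum $\sum_{i=0}^t (1-\gamma\mu)^i \E{h_i(z^*)}$.

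The only genuinely delicate point — and the one I would be most careful about — is the bookkeeping of the nonnegative term $-2\gamma(1-G\gamma)(h_t(z^t)-h_t(z^*)) + \text{(contribution of } h_t(z^t)\text{ from the growth bound)}$: one must verify that after combining the cross-term lower bound with the growth-condition upper bound, the net coefficient in front of $h_t(z^t)$ is indeed nonpositive under $\gamma\le 1/G$, so it can be discarded without harming the inequality. Concretely, the growth term contributes $+\gamma^2 \cdot 2G\, h_t(z^t)$ and the convexity term contributes $-2\gamma h_t(z^t)$, summing to $2\gamma(G\gamma - 1) h_t(z^t)\le 0$; meanwhile the $h_t(z^*)$ pieces combine into $2\gamma h_t(z^*) - 2G\gamma^2 h_t(z^*) + 2G\gamma^2 h_t(z^*)$... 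I would double-check the exact allocation so that the surviving $h_t(z^*)$ coefficient is precisely $2G\gamma^2$ as claimed (this works because the $+2\gamma h_t(z^*)$ from convexity is also multiplied into the discarded nonnegative block). Everything else is routine: linearity of conditional expectation, the unbiasedness of the sampled gradient, and summing a geometric-type recursion.
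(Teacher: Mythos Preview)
Your proposal is correct and follows essentially the same route as the paper's proof: expand the squared distance, take conditional expectation using unbiasedness, apply the growth bound~\eqref{eq:hweakgrow}, apply strong star-convexity, regroup as $(1-\gamma\mu)\norm{z^t-z^*}^2 - 2\gamma(1-G\gamma)(h_t(z^t)-h_t(z^*)) + 2G\gamma^2 h_t(z^*)$, drop the middle block (using both $\gamma\le 1/G$ and $h_t(z^t)\ge h_t(z^*)$, as you note), and unroll. Your caution about the $h_t(z^*)$ bookkeeping is well placed---getting the surviving coefficient to be exactly $2G\gamma^2$ rather than $2\gamma$ requires grouping $h_t(z^t)-h_t(z^*)$ together before discarding, which is precisely what the paper does; and your reading of $\tfrac{\mu}{2}\norm{z^*-z^t}$ as a squared norm is the intended one.
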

In the next three sections we  specialize these theorems, and their assumptions, to the \SP, \TAPS and \MOTAPS methods, respectively. In particular,
in Section~\ref{sec:SPStheorymain} we show how two previously known convergence results for \SP are special cases of Theorem~\ref{theo:onlinesgdstar} and~\ref{theo:onlinesgdstrongstar}.
In Section~\ref{sec:TAPStheorymain}  we show that the auxiliary functions of \TAPS and \MOTAPS in~\eqref{eq:proxyobjalpha} and~\eqref{eq:proxyobjalphatau} are locally strictly convex under a small technical assumption. In Section~\ref{sec:MOTAPStheorymain} we finally prove convergence of \MOTAPS.
%

\subsection{Convergence of SPS}
\label{sec:SPStheorymain}

Before establishing the convergence of \SP, we start by stating a slightly more general interpolation assumption as follows.
\begin{assumption}[Interpolation]\label{ass:interpolate}
We say that the interpolation assumption holds when\disteq
\begin{equation}\label{eq:interpolate}
\exists w^* \in \cW^* \quad \mbox{such that} \quad  f_i(w^*) = \min_{w \in \R^d} f_i(w), \quad  \mbox{for }i=1,\ldots, n.
\end{equation}
\end{assumption}
%
Here we specialize Theorems~\ref{theo:onlinesgdstar} and~\ref{theo:onlinesgdstrongstar} to the SP method~\eqref{eq:SPS}. Both of these theorems rely on assuming that the proxy function $h_t$ is star-convex or strongly star-convex. Thus first we establish sufficient conditions for this to hold.

\begin{lemma}\label{lem:SPShsconvex}
Let the interpolation Assumption~\ref{ass:interpolate} hold.
If every $f_i$ is star convex   along the iterates $w^t$ given by~\eqref{eq:SPS}, that is, 
\begin{equation}\label{eq:starcvxfi}
f_i(w^*) \geq f_i(w) +
 \dotprod{\nabla f_{i}(w), w^* -w }
\end{equation} 
then $h_{i,t}(w)$ is star convex along the iterates $w^t$ and around $w^*$, that is.
\begin{equation} \label{eq:fiwwstarconvexmain}
h_{i,t}(w^*) \geq h_{i,t}(w^t) +
 \dotprod{\nabla_w  h_{i,t}(w^t), w^* -w }.
\end{equation}

Furthermore if $f_i$ is $\mu_i$-strongly convex and $L_i$--smooth  then 
 $h_t(w)$ is $\frac{1}{2n}\sum_{i=1} \frac{\mu_i}{L_i}$--strongly star-convex, that is
\begin{equation}\label{eq:spshtstrongconvexmain}
 h_{t}(w^*) \geq h_{t}(w^t) +
 \dotprod{\nabla_w  h_{t}(w^t), w^* -w } +\frac{1}{4n}\sum_{i=1}^n\frac{\mu}{L_i} \norm{w^t-w^*}^2.
\end{equation}
\end{lemma}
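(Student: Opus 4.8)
The plan is to prove the two claims separately, both by reducing the star-convexity of $h_{i,t}$ to properties of $f_i$, and then averaging over $i$. Recall from \eqref{eq:proxyloss} that $h_{i,t}(w) = \tfrac12 (f_i(w)-f_i(w^*))^2 / \norm{\nabla f_i(w^t)}^2$, so $h_{i,t}(w^*)=0$ and, by the growth Lemma~\ref{lem:weakgrowthsps}, $h_{i,t}(w^t) = \tfrac12 \norm{\nabla h_{i,t}(w^t)}^2 \ge 0$. Hence the claimed inequality \eqref{eq:fiwwstarconvexmain} is equivalent to
\[
0 \;\ge\; \tfrac12\norm{\nabla h_{i,t}(w^t)}^2 + \dotprod{\nabla_w h_{i,t}(w^t),\, w^*-w^t}.
\]
Using $\nabla_w h_{i,t}(w^t) = c_{i}\,(f_i(w^t)-f_i(w^*))\,\nabla f_i(w^t)$ with $c_i = \norm{\nabla f_i(w^t)}^{-2}\ge 0$, I would factor out $c_i(f_i(w^t)-f_i(w^*))$ and reduce the inequality to
\[
\bigl(f_i(w^t)-f_i(w^*)\bigr)\Bigl( \tfrac12\bigl(f_i(w^t)-f_i(w^*)\bigr) + \dotprod{\nabla f_i(w^t),\,w^*-w^t}\Bigr)\;\le\;0.
\]
Interpolation (Assumption~\ref{ass:interpolate}) gives $f_i(w^t)-f_i(w^*)\ge 0$, and star-convexity \eqref{eq:starcvxfi} of $f_i$ (applied with $w=w^t$) gives $\dotprod{\nabla f_i(w^t), w^*-w^t} \le f_i(w^*)-f_i(w^t) = -(f_i(w^t)-f_i(w^*))$; substituting, the second factor is at most $-\tfrac12(f_i(w^t)-f_i(w^*))\le 0$, so the product is $\le 0$. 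Averaging \eqref{eq:fiwwstarconvexmain} over $i=1,\dots,n$ gives star-convexity of $h_t$.

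For the strongly-star-convex claim, the extra ingredient is a quadratic lower bound on $f_i(w^t)-f_i(w^*)$ in terms of $\norm{w^t-w^*}^2$, and an upper bound on $\norm{\nabla f_i(w^t)}^2$. If $f_i$ is $\mu_i$-strongly convex then $f_i(w^t)-f_i(w^*)\ge \tfrac{\mu_i}{2}\norm{w^t-w^*}^2$ (using $\nabla f_i(w^*)=0$, which holds under interpolation since $w^*$ minimizes $f_i$); if $f_i$ is $L_i$-smooth then $\norm{\nabla f_i(w^t)}^2 = \norm{\nabla f_i(w^t)-\nabla f_i(w^*)}^2 \le 2L_i(f_i(w^t)-f_i(w^*))$. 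Therefore
\[
h_{i,t}(w^t) \;=\; \frac{(f_i(w^t)-f_i(w^*))^2}{2\norm{\nabla f_i(w^t)}^2} \;\ge\; \frac{f_i(w^t)-f_i(w^*)}{4L_i}\;\ge\;\frac{\mu_i}{8L_i}\norm{w^t-w^*}^2.
\]
Combining this with the star-convexity estimate already derived — where I showed $h_{i,t}(w^*)\ge h_{i,t}(w^t) + \dotprod{\nabla_w h_{i,t}(w^t),w^*-w^t} + \tfrac12 h_{i,t}(w^t)$ (the slack being the $-\tfrac12(f_i(w^t)-f_i(w^*))^2 c_i = -\tfrac12 h_{i,t}(w^t)\cdot 2$ term, i.e.\ an extra $h_{i,t}(w^t)$ worth of margin) — I get $h_{i,t}(w^*)\ge h_{i,t}(w^t)+\dotprod{\nabla_w h_{i,t}(w^t),w^*-w^t} + \tfrac{\mu_i}{8L_i}\norm{w^t-w^*}^2$. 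Averaging over $i$ and noting the stated bound $\tfrac{1}{4n}\sum_i \tfrac{\mu}{L_i}$ (reading $\mu=\mu_i$ in the $i$th term, consistent with the factor from $2n$ in the sentence and $4n$ in the display, where the factor-of-2 discrepancy between ``$\tfrac{1}{2n}\sum \mu_i/L_i$-strongly star-convex'' and the displayed $\tfrac{1}{4n}$ comes precisely from the $\tfrac{\mu}{2}$ in the definition \eqref{eq:hquasistrongmain} versus the raw quadratic margin) yields \eqref{eq:spshtstrongconvexmain}.

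I expect the main obstacle to be bookkeeping rather than conceptual: carefully tracking the pseudoinverse convention when $\norm{\nabla f_i(w^t)}=0$ (in that degenerate case $h_{i,t}\equiv 0$ and all inequalities hold trivially, so it should be dispatched as a separate case up front), and getting the numerical constant in the strong-convexity margin to match the $\tfrac{1}{4n}\sum \mu/L_i$ claimed in the statement — reconciling it with the ``$\tfrac{1}{2n}\sum \mu_i/L_i$-strongly star-convex'' phrasing requires being precise about whether the $\tfrac{\mu}{2}$ factor in Definition~\eqref{eq:hquasistrongmain} is folded in. A secondary subtlety is that star-convexity of $f_i$ is assumed only ``along the iterates $w^t$,'' so I must apply \eqref{eq:starcvxfi} only at the points $w=w^t$ actually visited, which is exactly what the argument above does.
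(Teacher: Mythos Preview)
Your argument for the star-convexity part is correct and essentially identical to the paper's: both reduce \eqref{eq:fiwwstarconvexmain} to showing that
\[
c_i\bigl(f_i(w^t)-f_i(w^*)\bigr)\Bigl(\tfrac12(f_i(w^t)-f_i(w^*)) + \dotprod{\nabla f_i(w^t), w^*-w^t}\Bigr)\le 0,
\]
and conclude from interpolation ($f_i(w^t)\ge f_i(w^*)$) plus star-convexity of $f_i$.

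For the strongly star-convex part your route is sound but loses a factor of $2$ in the constant, so as written it does \emph{not} reach the claimed $\tfrac{1}{4n}\sum_i \mu_i/L_i$ in \eqref{eq:spshtstrongconvexmain}. Your plan uses only star-convexity in the bracket step, extracting a margin of exactly $h_{i,t}(w^t)$, and then lower-bounds $h_{i,t}(w^t)\ge \tfrac{\mu_i}{8L_i}\norm{w^t-w^*}^2$ via smoothness plus the quadratic growth bound $f_i(w^t)-f_i(w^*)\ge \tfrac{\mu_i}{2}\norm{w^t-w^*}^2$. Averaging gives $\tfrac{1}{8n}\sum_i \mu_i/L_i$, not $\tfrac{1}{4n}$. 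The paper instead feeds the full $\mu_i$-strong-convexity inequality directly into the bracket:
\[
\tfrac12(f_i(w^t)-f_i(w^*)) + \dotprod{\nabla f_i(w^t), w^*-w^t}\;\le\; -\tfrac12(f_i(w^t)-f_i(w^*)) - \tfrac{\mu_i}{2}\norm{w^t-w^*}^2,
\]
and then multiplies by $c_i(f_i(w^t)-f_i(w^*))$; the resulting extra term $c_i(f_i(w^t)-f_i(w^*))\tfrac{\mu_i}{2}\norm{w^t-w^*}^2$ is bounded below by $\tfrac{\mu_i}{4L_i}\norm{w^t-w^*}^2$ using only the smoothness estimate $\norm{\nabla f_i(w^t)}^2\le 2L_i(f_i(w^t)-f_i(w^*))$. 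This recovers the stated constant without ever invoking the quadratic growth bound on $f_i(w^t)-f_i(w^*)$, and that is precisely where your extra $\tfrac12$ creeps in. Your diagnosis of the obstacle is off: the ``$\tfrac{1}{2n}$'' in the prose and the ``$\tfrac{1}{4n}$'' in the display are consistent (the definition \eqref{eq:hquasistrongmain} carries a $\tfrac{\mu}{2}$), so the discrepancy is genuinely in your argument, not in the statement.
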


Using Lemma~\ref{lem:SPShsconvex}, the convergence of \SP is now  a corollary of Theorems~\ref{theo:onlinesgdstar} and~\ref{theo:onlinesgdstrongstar}.
\begin{corollary}[Convergence of \SP]\label{cor:SPSconvexconv}
If  $\gamma <1$ and every $f_i(w)$ is star-convex along the iterates $w^t$ given by~\eqref{eq:SPS} then
\begin{equation} \label{eq:interstarcvxsgdview}
 \frac{1}{k}\sum_{t=0}^k \frac{1}{2n}\sum_{i=1}^n \E{\left(\frac{f_i(w^t)-f_i(w^*)}{\norm{\nabla f_i(w^t)}}\right)^2} \; \leq \; \frac{1}{k}\frac{1}{2\gamma(1-\gamma) }\E{\norm{w^{0} -w^*}^2}.
\end{equation} 
Furthermore if  the interpolation Assumption~\ref{ass:interpolate} holds and if each $f_i(w)$ is $L_i$--smooth 
then 
\begin{equation}\label{eq:spsstarconvfinalresultmain}
 \min_{t=0 , \ldots, k} \E{f(w^t)-f^*}  \; \leq \; \frac{1}{k}\frac{L_{\max}}{2\gamma(1-\gamma) }\E{\norm{w^{0} -w^*}^2},
\end{equation} 
where  $L_{\max} \; \eqdef\; \max_{i=1,\ldots, n} L_i$.
\end{corollary}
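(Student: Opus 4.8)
The plan is to obtain both statements as direct specializations of Theorem~\ref{theo:onlinesgdstar}, using the SGD viewpoint of \SP from Section~\ref{sec:SPSonlineSGD} together with the growth property from Lemma~\ref{lem:weakgrowthsps} and the star-convexity transfer from Lemma~\ref{lem:SPShsconvex}. First I would identify $z = w$ and $h_t(w)$ with the auxiliary function $\frac{1}{n}\sum_i \frac12 (f_i(w)-f_i(w^*))^2/\norm{\nabla f_i(w^t)}^2$ in~\eqref{eq:proxyobj}, so that the online SGD iteration~\eqref{eq:ztupdateht} coincides with~\eqref{eq:SPS-SGDview}, i.e.\ with \SP. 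Lemma~\ref{lem:weakgrowthsps} gives~\eqref{eq:subsumsmooth1}, which is exactly Assumption~\ref{lem:growthgen} with $G=1$ (in fact with equality, and no expectation needed). Since the hypothesis assumes each $f_i$ is star-convex along the \SP iterates, Lemma~\ref{lem:SPShsconvex} yields that each $h_{i,t}$, hence $h_t$, is star-convex along $w^t$ around $w^*$, which is~\eqref{eq:hstar}. With $\gamma < 1 = 1/G$, Theorem~\ref{theo:onlinesgdstar} applies.

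For the first inequality~\eqref{eq:interstarcvxsgdview}, I would plug $G=1$ into~\eqref{eq:hconv}. The key simplification is that $h_t(w^*) = 0$: by definition of $h_t$ in~\eqref{eq:proxyobj}, each summand is $\frac12 (f_i(w^*)-f_i(w^*))^2/\norm{\nabla f_i(w^t)}^2 = 0$. Hence the second term on the right of~\eqref{eq:hconv} vanishes, the factor $1/(1-G\gamma)$ becomes $1/(1-\gamma)$, and the left side $\min_{t} \E{h_t(z^t)-h_t(z^*)}$ becomes $\min_t \E{h_t(w^t)}$. I would then note that the displayed left-hand side of~\eqref{eq:interstarcvxsgdview} is $\frac1k\sum_{t} \E{h_t(w^t)}$, which is at least $\min_t \E{h_t(w^t)}$ — actually, re-examining, the cleanest route is to observe $h_t(w^t) = \frac1n\sum_i \frac12 (f_i(w^t)-f_i(w^*))^2/\norm{\nabla f_i(w^t)}^2$, so the averaged-over-$t$ bound on $\frac1k\sum_t \E{h_t(w^t)}$ from summing the per-step descent inequality (before taking the min) gives~\eqref{eq:interstarcvxsgdview} directly with the stated constant $\frac{1}{2\gamma(1-\gamma)}$; I would use the summed form of Theorem~\ref{theo:onlinesgdstar}'s proof rather than only its min-form conclusion, or equivalently note $\min \le$ average and absorb.

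For the second inequality~\eqref{eq:spsstarconvfinalresultmain}, I would use the interpolation Assumption~\ref{ass:interpolate} and $L_i$-smoothness to lower-bound $h_t(w^t)$ by a multiple of the optimality gap. Under interpolation, $f_i(w^*) = \min_w f_i(w)$, so $\nabla f_i$ and the standard smoothness inequality $\norm{\nabla f_i(w^t)}^2 \le 2L_i (f_i(w^t) - f_i(w^*))$ hold; therefore $\frac12 (f_i(w^t)-f_i(w^*))^2/\norm{\nabla f_i(w^t)}^2 \ge \frac{f_i(w^t)-f_i(w^*)}{4L_i} \ge \frac{f_i(w^t)-f_i(w^*)}{4L_{\max}}$. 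Summing over $i$ gives $h_t(w^t) \ge \frac{1}{4L_{\max}}(f(w^t)-f(w^*))$, i.e.\ $f(w^t) - f^* \le 4 L_{\max}\, h_t(w^t)$. Combining with the min-form conclusion of Theorem~\ref{theo:onlinesgdstar} (with $G=1$, $h_t(w^*)=0$) gives $\min_t \E{f(w^t)-f^*} \le 4L_{\max} \cdot \frac{1}{k}\frac{1}{2\gamma(1-\gamma)}\E{\norm{w^0-w^*}^2}$; matching the stated constant $\frac{L_{\max}}{2\gamma(1-\gamma)}$ then just requires being careful about whether one bounds $h_t(w^t)$ by the full sum or by a single term, or using the sharper constant $\frac{1}{2L_i}$ versus $\frac{1}{4L_i}$ in the Polyak–Łojasiewicz-type step — I would reconcile this factor of $2$ bookkeeping against Lemma~\ref{lem:SPShsconvex}'s normalization.

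The main obstacle I anticipate is precisely this constant-tracking at the end: getting the factor in front of $L_{\max}$ to come out as claimed requires using the tight form of the smoothness/co-coercivity bound relating $\norm{\nabla f_i(w^t)}^2$ to $f_i(w^t)-f_i(w^*)$ under interpolation, and being consistent about whether the min over $t$ is taken before or after lower-bounding $h_t(w^t)$ by the suboptimality. The structural content — \SP is online SGD, growth gives $G=1$, star-convexity transfers, and $h_t(w^*)=0$ kills the noise term — is routine once Lemmas~\ref{lem:weakgrowthsps} and~\ref{lem:SPShsconvex} and Theorem~\ref{theo:onlinesgdstar} are in hand.
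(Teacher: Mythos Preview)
Your approach is exactly the paper's: identify $z=w$, $h_t$ as in~\eqref{eq:proxyobj}, invoke Lemma~\ref{lem:weakgrowthsps} for $G=1$, Lemma~\ref{lem:SPShsconvex} for star-convexity of $h_t$, note $h_t(w^*)=0$, and apply Theorem~\ref{theo:onlinesgdstar}; for the second part, use the descent inequality $\norm{\nabla f_i(w)}^2\le 2L_i(f_i(w)-f_i(w^*))$ from interpolation plus $L_i$-smoothness to lower-bound $h_t(w^t)$ by a multiple of $f(w^t)-f^*$. Your worry about the averaged-versus-min form is unnecessary: the proof of Theorem~\ref{theo:onlinesgdstar} already yields the averaged bound $\frac{1}{k}\sum_t \E{h_t(z^t)-h_t(z^*)}$ before passing to the min, so~\eqref{eq:interstarcvxsgdview} follows directly.

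Your constant-tracking concern at the end is not a gap in your argument but rather a slip in the paper's own appendix proof. From $\frac{1}{\norm{\nabla f_i(w)}^2}\ge \frac{1}{2L_i(f_i(w)-f_i(w^*))}$ one gets, after multiplying by $(f_i(w)-f_i(w^*))^2$, the bound $\frac{(f_i(w)-f_i(w^*))^2}{\norm{\nabla f_i(w)}^2}\ge \frac{f_i(w)-f_i(w^*)}{2L_i}$; the paper's displayed step writes a factor $2$ where $\tfrac{1}{2}$ is correct. Your computation $\tfrac12\frac{(f_i(w^t)-f_i(w^*))^2}{\norm{\nabla f_i(w^t)}^2}\ge \frac{f_i(w^t)-f_i(w^*)}{4L_i}$ is right, and the resulting constant in~\eqref{eq:spsstarconvfinalresultmain} should carry an extra factor of $4$. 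There is nothing further to reconcile.
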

The resulting convergence in~\eqref{eq:spsstarconvfinalresultmain} has already appeared in Theorem 4.4 in~\cite{SGDstruct}. In~\cite{SGDstruct}, the authors use a proof technique that relies on a new notion of smoothness (Lemma 4.3 in~\cite{SGDstruct}). Here we have that~\ref{cor:SPSconvexconv} is rather a direct consequence of interpreting \SP as a type of SGD method. 

\begin{corollary} \label{cor:SPSstrongconvexconv}
If $\gamma \leq 1$, the interpolation Assumption~\ref{ass:interpolate} holds, and every $f_i$ is $L_i$--smooth and $\mu$--strongly star-convex then the iterates $w^t$ given by~\eqref{eq:SPS} converge linearly according to
 \begin{eqnarray}\label{eq:SPSlincomnvergemain}
\E{\norm{w^{t+1} -w^*}^2} & \leq &   \left(1-\gamma \frac{1}{2n}\sum_{i=1}^n \frac{\mu_i}{L_i}\right)^{t+1} \norm{w^{0} -w^*}^2 .
\end{eqnarray}
\end{corollary}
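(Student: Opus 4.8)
The plan is to obtain this as a direct corollary of Theorem~\ref{theo:onlinesgdstrongstar} applied to the auxiliary function $h_t(w)$ in~\eqref{eq:proxyobj} that corresponds to \SP. Recall from Section~\ref{sec:SPSonlineSGD} that \SP with stepsize $\gamma$ is exactly online SGD~\eqref{eq:ztupdateht} applied to this $h_t$, with $z = w$. So I need to verify the two hypotheses of Theorem~\ref{theo:onlinesgdstrongstar}: the growth condition (Assumption~\ref{lem:growthgen}) and strong star-convexity of $h_t$, and then read off the conclusion, noting that under interpolation the residual term $\E{h_i(z^*)}$ vanishes.

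First I would record the growth constant. Lemma~\ref{lem:weakgrowthsps} gives $\frac{1}{n}\sum_{i=1}^n \norm{\nabla h_{i,t}(w^t)}^2 = 2 h_t(w^t)$, which is precisely Assumption~\ref{lem:growthgen} with $G = 1$ (since $i_t$ is sampled uniformly, the expectation over $i_t$ of $\norm{\nabla h_{i_t,t}(w^t)}^2$ equals the average). Hence the stepsize restriction $\gamma \le 1/G$ in Theorem~\ref{theo:onlinesgdstrongstar} becomes exactly $\gamma \le 1$, matching the hypothesis of the corollary. Next, the strong star-convexity: Lemma~\ref{lem:SPShsconvex} states that if each $f_i$ is $\mu_i$-strongly (star-)convex and $L_i$-smooth, then $h_t$ is $\mu$-strongly star-convex around $w^*$ with modulus $\mu = \frac{1}{2n}\sum_{i=1}^n \frac{\mu_i}{L_i}$, i.e.~\eqref{eq:spshtstrongconvexmain} holds. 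This supplies hypothesis~\eqref{eq:hquasistrongmain} of Theorem~\ref{theo:onlinesgdstrongstar} with that value of $\mu$.

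Finally I would invoke Theorem~\ref{theo:onlinesgdstrongstar} with $z^t = w^t$, $G = 1$, and $\mu = \frac{1}{2n}\sum_{i=1}^n \frac{\mu_i}{L_i}$, which yields
\begin{equation*}
\E{\norm{w^{t+1}-w^*}^2} \;\le\; (1-\gamma\mu)^{t+1}\norm{w^0-w^*}^2 + 2\gamma^2\sum_{i=0}^{t}(1-\gamma\mu)^i \E{h_i(w^*)}.
\end{equation*}
To finish, I would show the sum on the right is zero. Under the interpolation Assumption~\ref{ass:interpolate} we have $f_i(w^*) = \min_w f_i(w)$ for every $i$; but from the definition~\eqref{eq:proxyobj}, $h_i(w^*) = \frac{1}{n}\sum_{j=1}^n \frac{(f_j(w^*)-f_j(w^*))^2}{2\norm{\nabla f_j(w^i)}^2} = 0$. (More directly, Lemma~\ref{lem:reformsgdtasps}-style reasoning, or just plugging $w = w^*$ into~\eqref{eq:proxyobj}, gives $h_t(w^*) = 0$ for all $t$ since each numerator $(f_i(w^*)-f_i(w^*))^2$ vanishes.) Dropping the zero term gives~\eqref{eq:SPSlincomnvergemain}.

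The only subtle point — and the part worth spelling out rather than waving through — is the bookkeeping around which notion of strong convexity Lemma~\ref{lem:SPShsconvex} actually delivers and with what constant: the corollary's hypothesis is stated as "$\mu$-strongly star-convex" on the $f_i$, whereas Lemma~\ref{lem:SPShsconvex} is phrased with $\mu_i$-strong convexity and $L_i$-smoothness, so I would make explicit that strong convexity implies strong star-convexity and that the resulting modulus for $h_t$ is $\frac{1}{2n}\sum_i \mu_i/L_i$, exactly the rate appearing in~\eqref{eq:SPSlincomnvergemain}. Everything else is a mechanical substitution into the already-proved general theorem.
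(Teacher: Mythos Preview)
Your proposal is correct and matches the paper's own proof essentially line for line: apply Theorem~\ref{theo:onlinesgdstrongstar} with $G=1$ from Lemma~\ref{lem:weakgrowthsps}, invoke Lemma~\ref{lem:SPShsconvex} to obtain $\mu$--strong star-convexity of $h_t$ with modulus $\tfrac{1}{2n}\sum_i \mu_i/L_i$, and observe $h_t(w^*)=0$ so the residual sum vanishes. One minor clarification: $h_t(w^*)=0$ follows directly from the definition~\eqref{eq:proxyobj} (each numerator is $(f_i(w^*)-f_i(w^*))^2$), independently of interpolation; the interpolation assumption is used upstream in Lemma~\ref{lem:SPShsconvex} to derive the strong star-convexity of $h_t$, not to zero out the residual.
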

This corollary shows that Theorem D.3 in~\cite{SGDstruct} is a  special case of Theorem~\ref{theo:onlinesgdstrongstar}, and again a direct result of interpreting \SP as a type of SGD method. The rate of convergence in ~\eqref{eq:SPSlincomnvergemain} is also tighter than the analysis given in Theorem 3.1 in~\cite{SPS} where the rate is $1-\frac{\gamma}{2} \frac{\frac{1}{n}\sum_{i=1}^n \mu_i}{L_{\max}}.$

\subsection{Convergence of \TAPS}
\label{sec:TAPStheorymain}
Here we explore the consequences of Theorems~\ref{theo:onlinesgdstar} and~\ref{theo:onlinesgdstrongstar} to the \TAPS method.
To this end,  let 
$z \eqdef (w, \alpha)$ and let
\begin{equation}\label{eq:hz}
 h_t(z) =  h_t(w,\alpha) \; \eqdef \; \frac{1}{n+1}\left(\sum_{i=1}^n \frac{1}{2}\frac{(f_i(w) -\alpha_i)^2}{\norm{\nabla f_i(w^t)}^2+1}  + \frac{n}{2}(\overline{\alpha} - \tau)^2 \right). 
\end{equation}
 As a first step, we need to determine sufficient conditions for this auxiliary function $h_t$ of \TAPS
 to be star-convex. 
The relationship between the convexity or star-convexity of $f_i$ and $h_t$ is highly nontrivial.
 This is because the star-convexity $h_t$ is not a consequence of $f_i$ being star-convex, nor the converse. Instead, star-convexity of $h_t$ translates to new nameless assumptions on the $f_i$ functions.
 As an insight into this difficulty, supposing that each $f_i$ is convex is not enough to guarantee that~$h_t$ is convex.
As a simple  counterexample, let $n =1$ and $f_1(w) = w^2$. Thus $\tau =0$ and from~\eqref{eq:hz} we have
\[h_t(z) = \frac{1}{2} \left(\frac{1}{2} \frac{(w^2 -\alpha_1)^2}{\norm{w^t}^2 +1} + \frac{1}{2} \alpha_1 ^2 \right).\]
 It is easy to show that for $\alpha_1$ large enough, the Hessian of $h_t$ has a negative eigenvalue, and thus $h_t$ is non-convex. Conversely, $h_t$ can have local convexity even when the underlying loss function is arbitrarily non-convex, as we show in the next lemma and corollary.
%

\begin{lemma} [Locally Convex]\label{lem:convexTAPS}
Consider the iterates of Algorithm~\ref{alg:MOTAPS}.
Let $(w,\alpha) \in \R^{d+n}$ and consider $h_t(w,\alpha)$ defined in~\eqref{eq:hz} .
 Assume that the  gradients at $w$ span the entire space, that is
\begin{equation}\label{eq:gradspant}
\mbox{span}\left\{\nabla f_1(w), \ldots, \nabla f_n(w)\right\} \; = \; \R^{d}, \quad \forall w.
\end{equation}
If Assumption~\ref{ass:target} holds,  every $f_i(w)$ for $i=1,\ldots, n$ is twice continuously differentiable  and
\begin{equation}\label{eq:hessifiminusalph}
\frac{1}{n+1}\sum_{i=1}^n\nabla^2 f_i(w^t) \frac{f_i(w^t)-\alpha_i^t}{\norm{\nabla f_i(w^t)}^2 +1}  \succeq 0,
\end{equation}
then $h_t$ is strictly convex at $(w^t,\alpha^t)$ with 
\[ \nabla^2  h_t(w^t,\alpha^t) \; \succ \; 0.\]
\end{lemma}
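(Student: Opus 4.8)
The plan is to compute the Hessian of $h_t(w,\alpha)$ at the point $(w^t,\alpha^t)$ directly, and then show it is positive definite by bounding its quadratic form below. Write $g_i \eqdef \nabla f_i(w^t)$, $r_i \eqdef f_i(w^t)-\alpha_i^t$, and $c_i \eqdef \norm{g_i}^2+1 > 0$. Since $h_t$ is a sum of the smooth terms $\tfrac{1}{2(n+1)}\tfrac{(f_i(w)-\alpha_i)^2}{c_i}$ plus $\tfrac{n}{2(n+1)}(\overline\alpha-\tau)^2$, I would first differentiate term by term. For the $i$th term, the gradient in $(w,\alpha_i)$ is $\tfrac{1}{n+1}\tfrac{f_i(w)-\alpha_i}{c_i}(\nabla f_i(w),-e_i)$, and the Hessian at $(w^t,\alpha^t)$ is
\[
\frac{1}{n+1}\left[ \frac{1}{c_i}\colvec{g_i \\ -e_i}\colvec{g_i \\ -e_i}^\top + \frac{r_i}{c_i}\colvec{\nabla^2 f_i(w^t) & 0 \\ 0 & 0}\right],
\]
using that $\partial^2(f_i(w)-\alpha_i)^2$ picks up the outer product of the first derivative plus $(f_i-\alpha_i)$ times the second derivative of $(f_i-\alpha_i)$, which in the $w$-block is $\nabla^2 f_i(w^t)$ and vanishes in the $\alpha$-block. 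The $(n+1)$th term contributes $\tfrac{n}{n+1}\, v v^\top$ in the $\alpha$-block only, where $v = \tfrac{1}{n}\ones \in \R^n$ (the gradient of $\overline\alpha$), so its Hessian is $\tfrac{1}{n(n+1)}\ones\ones^\top$ restricted to $\R^n$.

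Next I would assemble $\nabla^2 h_t(w^t,\alpha^t)$ and test it against an arbitrary nonzero direction $(u,\beta)\in\R^{d}\times\R^n$. The quadratic form is
\[
\frac{1}{n+1}\sum_{i=1}^n \frac{1}{c_i}\big(\dotprod{g_i,u}-\beta_i\big)^2 \;+\; \underbrace{\frac{1}{n+1}\sum_{i=1}^n \frac{r_i}{c_i}\, u^\top\nabla^2 f_i(w^t)\, u}_{\ge 0 \text{ by } \eqref{eq:hessifiminusalph}} \;+\; \frac{1}{n(n+1)}\Big(\sum_{i=1}^n\beta_i\Big)^2.
\]
The middle sum is nonnegative by hypothesis~\eqref{eq:hessifiminusalph}; I would then argue the remaining two sums can vanish simultaneously only at $(u,\beta)=0$. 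If the whole form is zero, then each $\dotprod{g_i,u}=\beta_i$ and $\sum_i\beta_i=0$. Combining gives $\sum_i \dotprod{g_i,u} = 0$ — but more usefully, $\dotprod{g_i,u}=\beta_i$ for all $i$ together with needing $\beta$ free is not yet a contradiction, so the key step is the span condition~\eqref{eq:gradspant}: I would instead note that if the form vanishes then in particular $\dotprod{g_i,u}=\beta_i$ for each $i$, and I want to force $u=0$. Here is the cleaner route: vanishing of the first sum forces $\beta_i=\dotprod{g_i,u}$ for every $i$; substitute into the last term to get $\big(\sum_i \dotprod{g_i,u}\big)^2=0$; this alone does not kill $u$. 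So instead I pick the direction more carefully — the correct argument is that the first sum being zero means the vector $(\dotprod{g_1,u},\ldots,\dotprod{g_n,u})$ equals $\beta$, and then strict positivity must come from the fact that $h_t$ as a function of $(w,\alpha)$ is, restricted to directions with $u\neq 0$, strictly increasing because by~\eqref{eq:gradspant} there is no $u\neq 0$ with $\dotprod{g_i,u}=0$ for all $i$; hence if $u\neq0$ then some $\dotprod{g_i,u}\neq0$, and choosing $\beta$ to match still leaves the second (Hessian) term or we must recheck — this indicates the outer-product block $\sum_i c_i^{-1}(g_i u - \beta_i)(\cdot)$ alone is only positive semidefinite, and strict positivity genuinely needs~\eqref{eq:hessifiminusalph} to be strict on the orthogonal complement, or~\eqref{eq:gradspant}.

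The honest accounting: the matrix $M \eqdef \sum_i c_i^{-1}\colvec{g_i\\-e_i}\colvec{g_i\\-e_i}^\top$ plus $\tfrac1n\ones\ones^\top$ (in the $\alpha$-block) has kernel exactly $\{(u,\beta): \dotprod{g_i,u}=\beta_i\ \forall i,\ \sum_i\beta_i=0\}$, which under~\eqref{eq:gradspant} is $\{(u, (\dotprod{g_i,u})_i): \sum_i\dotprod{g_i,u}=0\}$, a subspace that is nontrivial when $d\ge 1$. On this kernel the quadratic form reduces to the middle term $\tfrac{1}{n+1}\sum_i \tfrac{r_i}{c_i} u^\top\nabla^2 f_i(w^t) u$, so to conclude $\nabla^2 h_t \succ 0$ I need~\eqref{eq:hessifiminusalph} to hold with \emph{strict} inequality, i.e. $\sum_i \tfrac{r_i}{c_i}\nabla^2 f_i(w^t)\succ 0$; I will therefore either sharpen the statement to assume strictness in~\eqref{eq:hessifiminusalph}, or invoke that the problem's convention treats~\eqref{eq:hessifiminusalph} as $\succ 0$.

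\textbf{Main obstacle.} The delicate point is precisely matching the kernel of the positive-semidefinite outer-product part with the span hypothesis~\eqref{eq:gradspant}, and recognizing that strict convexity then rests entirely on~\eqref{eq:hessifiminusalph} being strict on that kernel (equivalently, on all of $\R^d$). I expect the bulk of the writing to be the careful kernel computation for $M$ and the clean verification that $(u,\beta)\in\ker M$ with the middle term zero forces $(u,\beta)=0$; everything else is routine second-derivative bookkeeping.
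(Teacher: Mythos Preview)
Your Hessian computation and kernel analysis are correct. The quadratic form of $\nabla^2 h_t(w^t,\alpha^t)$ along $(u,\beta)$ is
\[
\frac{1}{n+1}\sum_{i=1}^n \frac{1}{c_i}\bigl(\dotprod{g_i,u}-\beta_i\bigr)^2 \;+\; \frac{1}{n+1}\, u^\top\!\Bigl(\sum_{i=1}^n \tfrac{r_i}{c_i}\nabla^2 f_i(w^t)\Bigr)u \;+\; \frac{1}{n(n+1)}\Bigl(\sum_{i=1}^n\beta_i\Bigr)^2,
\]
and the first and third terms together have kernel $\{(u,(\dotprod{g_i,u})_i):\dotprod{\sum_i g_i,\,u}=0\}$, which is nontrivial whenever $d\ge2$ (and at a minimizer $w^*$ even for $d=1$, since $\sum_i g_i=n\nabla f(w^*)=0$). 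On that kernel the form reduces to the middle term, so with only $\succeq 0$ in~\eqref{eq:hessifiminusalph} the Hessian need not be positive definite. A concrete witness: $n=2$, $d=1$, $f_1(w)=w^2$, $f_2(w)=(w-1)^2$, $w^t=w^*=\tfrac12$, $\alpha^t=\alpha^*=(\tfrac14,\tfrac14)$; all hypotheses hold (the span condition is met, and $r_i=0$ so~\eqref{eq:hessifiminusalph} reads $0\succeq0$), yet the direction $(u,\beta)=(1,1,-1)$ lies in the kernel of $\nabla^2 h_t$.

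The paper's proof follows exactly your route but contains an arithmetic slip: it asserts $\nabla_\alpha^2\bigl[\tfrac{n}{2}(\overline{\alpha}-\tau)^2\bigr]=\tfrac{1}{n}\mI_n$, whereas the correct Hessian is the rank-one matrix $\tfrac{1}{n}\ones\ones^\top$. With the paper's incorrect block the $\alpha\alpha$-part of the outer-product matrix becomes $(1+\tfrac{1}{n})\mI_n$, and the quadratic form then acquires an extra term $\norm{a}^2_{(1+1/n)\mI_n-\mD_t}$ with a positive-definite weight; that extra term is precisely what makes their positivity argument go through. With the correct block one recovers your expression, and the stated conclusion $\nabla^2 h_t\succ0$ does not follow from the stated hypothesis. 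Your proposed fix---strengthening~\eqref{eq:hessifiminusalph} to a strict inequality---does repair the lemma, but note it cannot hold at $(w^*,\alpha^*)$ where $r_i\equiv0$, so Corollary~\ref{cor:localconvexTAPS} is not recoverable by this route either.
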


 The condition on the span of the gradients~\eqref{eq:gradspant} typically holds in the setting where we have more data then dimensions (features). Fortunately this occurs in precisely the setting where \TAPS makes most sense since it makes sense to apply \TAPS when $f^* = \tau >0$. This can only occur in the \emph{underparametrized} setting, where we have more data than features.  

 The condition in Lemma~\ref{lem:convexTAPS} that is difficult to verify is~\eqref{eq:hessifiminusalph}.
 A sufficient condition for~\eqref{eq:hessifiminusalph} to hold is  
 \begin{equation}\label{eq:anop8j3oa3}
\nabla^2 f_i(w^t) (f_i(w^t)-\alpha_i^t) \; \succeq \;  0.
\end{equation}
Since $\alpha_i^t$ are essentially tracking $f_i(w^t)$ (see line~\ref{ln:alphaistep} in Algorithm~\ref{alg:TAPS} ),  we can state~\eqref{eq:anop8j3oa3}  in words as: if $\alpha_i^t$ is underestimating $f_i(w^t)$ then $f_i$ should be convex at $w^t$, and conversely if  $\alpha_i^t$ is overestimating $f_i(w^t)$ then  $f_i$ should be concave at $w^t$. 

There is one point where~\eqref{eq:hessifiminusalph} holds trivially, and that is at every point such that  $\alpha_i = f_i(w).$ This includes every  minimizer $(w^*,\alpha^*)$ since by Lemma~\ref{lem:reformsgdtasps} we have that $\alpha_i^* = f_i(w^*).$
 Consequently, as we state in the following corollary, under minor technical assumptions,
we have that $h_t(w,\alpha)$ has no \emph{degenerate} local minimas. This shows that $h_t$ has some local convexity.

\begin{corollary} [ Locally Strictly Convex \TAPS]\label{cor:localconvexTAPS}
Consider the iterates of the \TAPS method given in Algorithm~\ref{alg:TAPS}.
Let  $w^*$ be a minimizer of~\eqref{eq:main} and let $\alpha_i^* = f_i(w^*)$ for $i=1,\ldots, n.$
Assume that the gradients at $w^*$ span the entire space, that is
\begin{equation}\label{eq:gradspan}
\mbox{span}\left\{\nabla f_1(w^*), \ldots, \nabla f_n(w^*)\right\} \; = \; \R^{d}.
\end{equation}
If Assumption~\ref{ass:target} holds and if every $f_i(w)$ for $i=1,\ldots, n$ is twice continuously differentiable 
then
$\nabla^2 h_t(w,\alpha) \succ 0$ and thus $h_t(w,\alpha)$ is \emph{strictly convex} at $(w^*,\alpha^*).$
\end{corollary}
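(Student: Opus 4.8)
The plan is to obtain Corollary~\ref{cor:localconvexTAPS} as a direct specialization of Lemma~\ref{lem:convexTAPS}, taking the generic anchor/evaluation point $(w^t,\alpha^t)$ in that lemma to be the solution pair $(w^*,\alpha^*)$. Concretely, I would read the auxiliary function $h_t$ in~\eqref{eq:hz} with its normalizing denominators $\norm{\nabla f_i(w^*)}^2+1$ anchored at $w^*$, and argue that all hypotheses of Lemma~\ref{lem:convexTAPS} are met at $(w^*,\alpha^*)$. Three of them are immediate: twice continuous differentiability of each $f_i$ and Assumption~\ref{ass:target} are assumed verbatim in the corollary, and the spanning condition is assumed at $w^*$ in~\eqref{eq:gradspan}. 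The only mild point here is that Lemma~\ref{lem:convexTAPS} states its spanning hypothesis~\eqref{eq:gradspant} for \emph{all} $w$, whereas the corollary only assumes it at $w^*$; I would note that the proof of Lemma~\ref{lem:convexTAPS} invokes the spanning property solely at the point where the Hessian is formed, so the pointwise hypothesis at $w^*$ is enough.

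The substantive verification is the curvature condition~\eqref{eq:hessifiminusalph}, and here the choice $\alpha_i^*=f_i(w^*)$ trivializes it: for every $i$ we have $f_i(w^*)-\alpha_i^*=0$, so each summand $\nabla^2 f_i(w^*)\,\big(f_i(w^*)-\alpha_i^*\big)/\big(\norm{\nabla f_i(w^*)}^2+1\big)$ is the zero matrix and hence the sum is $\succeq 0$. This is exactly the remark made just before the corollary that~\eqref{eq:hessifiminusalph} holds at every point with $\alpha_i=f_i(w)$, in particular at $(w^*,\alpha^*)$. With all hypotheses in force, Lemma~\ref{lem:convexTAPS} then delivers $\nabla^2 h_t(w^*,\alpha^*)\succ 0$, i.e. strict convexity of $h_t$ at $(w^*,\alpha^*)$, which is the claim.

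I expect the main work to be hidden one level down, inside Lemma~\ref{lem:convexTAPS}, and it is worth flagging why: once~\eqref{eq:hessifiminusalph} contributes nothing, \emph{all} second-order information of the $f_i$'s drops out of the $(w,w)$-block of $\nabla^2 h_t(w^*,\alpha^*)$, leaving a block matrix whose $(w,w)$-block is the Gram matrix $\sum_i \nabla f_i(w^*)\nabla f_i(w^*)^\top/(\norm{\nabla f_i(w^*)}^2+1)$, whose $(\alpha,\alpha)$-block is $\diag{1/(\norm{\nabla f_i(w^*)}^2+1)}+\tfrac1n\ones\ones^\top$, and whose cross block couples them through $-\nabla f_i(w^*)/(\norm{\nabla f_i(w^*)}^2+1)$. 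Strict positive definiteness then has to be read off from a Schur-complement computation: the Gram $(w,w)$-block is invertible precisely because~\eqref{eq:gradspan} makes it full rank, and one must check that the Schur complement onto the $\alpha$-coordinates remains strictly positive, i.e. that the rank-one gradient terms together with the averaging term $\tfrac1n\ones\ones^\top$ compensate for the directions vacated by $\nabla^2 f_i$. That Schur-complement positivity is the real technical obstacle; in the corollary's own proof, however, the burden reduces to the one-line verification that $f_i(w^*)-\alpha_i^*=0$ forces~\eqref{eq:hessifiminusalph}, plus the observation that the spanning hypothesis is only needed locally at $w^*$.
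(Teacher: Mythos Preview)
Your proposal is correct and matches the paper's approach exactly: the paper's proof is the one-line observation that plugging $\alpha_i^*=f_i(w^*)$ into~\eqref{eq:hessifiminusalph} (equivalently into $\mH_t$ in the proof of Lemma~\ref{lem:convexTAPS}) zeroes that term, after which Lemma~\ref{lem:convexTAPS} applies verbatim. Your remark that the spanning hypothesis is only used at the evaluation point is also accurate and worth keeping.
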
 

Next we specialize Theorem~\ref{theo:onlinesgdstar} to the \TAPS method in the following corollary.
\begin{corollary}[Sublinear Convergence of \TAPS]\label{theo:TAPSconvsum}
%
%

\label{cor:TAPSconvsum}
Let $h_t(z)$ in~\eqref{eq:proxyobjalpha} be star--convex~\eqref{eq:hstar} around $z^* = (w^*, \alpha^*)$ and along the  iterates $z^t = (w^t, \alpha^t)$ of Algorithm~\ref{alg:TAPS}.
If $\gamma < 1$ and $f_i(w)$ is $L_{\max}$--Lipschitz then
\begin{equation} \label{eq:TAPSconvL}
\min_{t=1,\ldots, k} \frac{1}{n+1}  \left(\sum_{i=1}^n\frac{\E{f_i(w^t) -\alpha_i^t}^2}{L_{\max}+1}  + \E{\overline{\alpha}^t - \tau}^2 \right)\; \leq \; \frac{1}{k}\frac{1}{\gamma(1-\gamma) }\E{\norm{w^{0} -w^*}^2}.
\end{equation} 

Alternatively, if  $h_t(z)$ is  $\mu$--strongly star--convex~\eqref{eq:hquasistrongmain}  then 
\begin{equation}\label{eq:convTAPSstrconv}
\E{\norm{w^t-w^*}^2 + \sum_{i=1}^n\norm{\alpha^t_i -f_i(w^*)}^2 } \leq (1- \gamma \mu)^t \left(\norm{w^0-w^*}^2 + \sum_{i=1}^n\norm{\alpha^0_i -f_i(w^0)}^2\right).
\end{equation}

\end{corollary}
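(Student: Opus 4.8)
The plan is to obtain this corollary as a direct specialization of Theorems~\ref{theo:onlinesgdstar} and~\ref{theo:onlinesgdstrongstar} to the auxiliary function $h_t$ of \TAPS, using the SGD equivalence established in Section~\ref{sec:SGDviewpoint} and the growth property of Lemma~\ref{lem:TAPSweakgrowth}. First I would record that the \TAPS iterates $z^t = (w^t,\alpha^t)$ are exactly the online-SGD iterates~\eqref{eq:ztupdateht} applied to $h_t$ in~\eqref{eq:proxyobjalpha}, now with $n+1$ component functions $h_{i,t}$ and $i_t$ drawn uniformly from $\{1,\ldots,n+1\}$ (the ``$n$'' in the general theory of Section~\ref{sec:theory} being a placeholder for the number of components). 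By Lemma~\ref{lem:TAPSweakgrowth}, $\frac{1}{n+1}\sum_{i=1}^{n+1}\norm{\nabla h_{i,t}(z^t)}^2 = 2h_t(z^t)$, which under uniform sampling is precisely $\E{\norm{\nabla h_{i_t,t}(z^t)}^2} = 2h_t(z^t)$; hence Assumption~\ref{lem:growthgen} holds with $G=1$. Consequently the hypotheses of the two general theorems reduce to $\gamma<1$ (resp.\ $\gamma\le 1$) together with star-convexity (resp.\ strong star-convexity) of $h_t$ along $z^t$ and around $z^*=(w^*,\alpha^*)$, which are exactly the assumptions of the corollary.

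Next I would invoke Lemma~\ref{lem:reformsgdtasps}: with $\alpha^*_i = f_i(w^*)$ the point $z^*$ minimizes $h_t$ and $h_t(z^*)=0$ for \emph{every} $t$. Substituting $G=1$ and $h_t(z^*)\equiv 0$ into~\eqref{eq:hconv} annihilates the additive term $\frac{G\gamma}{1-G\gamma}\frac1k\sum_t h_t(z^*)$ and collapses the left-hand side to $\min_t\E{h_t(z^t)}$, leaving $\min_t\E{h_t(z^t)}\le \frac1k\frac{1}{2\gamma(1-\gamma)}\E{\norm{z^0-z^*}^2}$. To reach~\eqref{eq:TAPSconvL} it then remains to lower-bound $h_t(z^t)$: since each $f_i$ is $L_{\max}$-Lipschitz we have $\norm{\nabla f_i(w^t)}^2+1\le L_{\max}+1$ uniformly in $t$, so from~\eqref{eq:hz}, $h_t(z^t)\ge \tfrac12\cdot\frac{1}{n+1}\big(\sum_{i=1}^n \frac{(f_i(w^t)-\alpha_i^t)^2}{L_{\max}+1} + n(\overline{\alpha}^t-\tau)^2\big)$; dropping the factor $\tfrac12$ and using $n\ge1$ on the last term gives exactly the left-hand side of~\eqref{eq:TAPSconvL} up to the factor $2$, which is what turns the constant $\frac{1}{2\gamma(1-\gamma)}$ into $\frac{1}{\gamma(1-\gamma)}$. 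Finally one expands $\norm{z^0-z^*}^2=\norm{w^0-w^*}^2+\norm{\alpha^0-\alpha^*}^2$.

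For the strongly star-convex case I would apply Theorem~\ref{theo:onlinesgdstrongstar} with $G=1$; once more $h_i(z^*)\equiv 0$ (Lemma~\ref{lem:reformsgdtasps}) kills the geometric sum $2G\gamma^2\sum_i (1-\gamma\mu)^i\E{h_i(z^*)}$, leaving $\E{\norm{z^{t}-z^*}^2}\le (1-\gamma\mu)^{t}\norm{z^0-z^*}^2$ after reindexing, and expanding $\norm{z^t-z^*}^2=\norm{w^t-w^*}^2+\sum_{i=1}^n\norm{\alpha_i^t-f_i(w^*)}^2$ on both sides yields~\eqref{eq:convTAPSstrconv}.

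The individual steps are routine; the real content is packaged in the two general theorems, which we may cite. The main points requiring care are: (i) confirming $h_t(z^*)=0$ holds uniformly in $t$ — this is precisely Lemma~\ref{lem:reformsgdtasps}, and it is exactly what lets the argument go through \emph{without} the interpolation assumption, since unlike for \SP the ``noise'' term $h_t(z^*)$ now genuinely vanishes; and (ii) the uniform bound $\norm{\nabla f_i(w^t)}^2+1\le L_{\max}+1$, which is where the Lipschitz hypothesis enters and which cleanly explains the constant on the right-hand side. A minor bookkeeping caveat is that $\norm{z^0-z^*}^2$ carries the extra summand $\norm{\alpha^0-\alpha^*}^2$ beyond the $\norm{w^0-w^*}^2$ displayed in~\eqref{eq:TAPSconvL}–\eqref{eq:convTAPSstrconv}, which should be made explicit (or absorbed into the stated initialization term) in the final write-up.
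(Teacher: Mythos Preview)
Your proposal is correct and follows essentially the same route as the paper's own proof: both specialize Theorems~\ref{theo:onlinesgdstar} and~\ref{theo:onlinesgdstrongstar} to the \TAPS auxiliary function via the SGD viewpoint, invoke Lemma~\ref{lem:TAPSweakgrowth} to get $G=1$, use $h_t(z^*)=0$ (from Lemma~\ref{lem:reformsgdtasps}) to kill the residual terms, and then apply the Lipschitz bound $\norm{\nabla f_i(w^t)}^2+1\le L_{\max}+1$ to extract~\eqref{eq:TAPSconvL}. Your bookkeeping remark about $\norm{z^0-z^*}^2$ carrying the extra $\norm{\alpha^0-\alpha^*}^2$ summand is apt; the paper's write-up silently drops this term on the right-hand side as well.
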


\subsection{Convergence of \MOTAPS}
\label{sec:MOTAPStheorymain}

Here we explore the consequences of Theorems~\ref{theo:onlinesgdstar} and~\ref{theo:onlinesgdstrongstar} specialized to Algorithm~\ref{alg:MOTAPS}.   In this case, the proxy function $h_t(z) = h_t(w,\alpha,\tau)$ is
\begin{equation}\label{eq:hztaumain}
 h_t(z) \; \eqdef \; \frac{1}{n+1}\left(\sum_{i=1}^n \frac{1-\lambda}{2}\frac{(f_i(w) -\alpha_i)^2}{\norm{\nabla f_i(w^t)}^2+1}  + \frac{n(1-\lambda)}{2}(\overline{\alpha} - \tau)^2+\frac{\lambda}{2}\tau^2 \right).
\end{equation}

Before applying Theorems~\ref{theo:onlinesgdstar} and~\ref{theo:onlinesgdstrongstar} we should verify when $h_t(z)$ is star-convex or convex. This turns out to be much the same task as verifying that the auxiliary function for \TAPS given in~\eqref{eq:hz} is star-convex. This is because the only difference between the two functions is that~\eqref{eq:hztaumain}  has an additional $\frac{\lambda}{2}\tau^2$ which adds strong convexity in the new $\tau$ dimension. Thus the discussion and results around Lemma~\ref{lem:convexTAPS} and Corollary~\ref{cor:localconvexTAPS} remain largely true for~\eqref{eq:hztaumain}. That is, we are only able to establish when $h_t$ is locally convex.

For the remainder of this section we impose that the dampening parameter satisfies
 \begin{equation} \label{eq:lambdabound}
\lambda \leq \frac{2n+1}{2n+3} < 1,
\end{equation}	
so that we can apply Lemma~\ref{lem:MOSTAPSweakgrowthn1}. 
In our forthcoming corollaries we will prove convergence of \MOTAPS to the point $z^* = (w^*,\alpha^*, \tau^*)$ where
 $w^*$ is a minimizer of~\eqref{eq:main} and 
\begin{equation} \label{eq:alphastarandtaustarapp}
\alpha_i^* \eqdef f_i(w^*) \quad \mbox{and} \quad \tau^* = f(w^*) , \quad \mbox{for }i=1,\ldots, n.
\end{equation}

First we develop a corollary based on Theorem~\ref{theo:onlinesgdstar}.
\begin{corollary}\label{cor:MTAPSconvsum}
Let $\lambda\in [0,\;1]$ satisfy~\eqref{eq:lambdabound} and  
let $z^t \eqdef (w^t, \alpha^t, \tau_t)$ be the iterates of Algorithm~\ref{alg:MOTAPS} when using a stepsize $\gamma  =  \frac{1}{2(1-\lambda)(2n+1)}$ and $ \gamma_{\tau} = \gamma \left( \lambda+ (1-\lambda)n\right).$
If  $h_t(z)$ is star convex along the iterates $z^t $ and around $z^* \eqdef (w^*, \alpha^*, \tau^*) $ then
\begin{equation}\label{eq:MTAPSconvh}
\min_{t=0,\ldots, k} \E{h_{t}(z^t)-h_t(z^*)} \; \leq \; \frac{2(1-\lambda)(2n+1)}{k} \norm{z^{0} -z^*}^2+\frac{\lambda f(w^*)^2}{2(n+1)}.
\end{equation} 
Furthermore, if $f_i$ is $L_{\max}$--Lipschitz then
\begin{align}
 \frac{1}{n+1}\E{\sum_{i=1}^n \frac{1}{2}\frac{(f_i(w^t) -\alpha_i^t)^2}{L_{\max}+1}  + \frac{n}{2}(\overline{\alpha}^t - \tau^t)^2+\frac{\lambda}{2}\left( (\tau^t)^2  - f(w^*)^2\right)} \nonumber \\
 \quad \quad  \; \leq \;\frac{2(1-\lambda)(2n+1)}{k} \norm{z^{0} -z^*}^2+\frac{\lambda f(w^*)^2}{2(n+1)}. \label{eq:hztaufilipschitz}
 \end{align}
\end{corollary}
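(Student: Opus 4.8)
The plan is to recognize Algorithm~\ref{alg:MOTAPS} as an instance of the online SGD recursion~\eqref{eq:ztupdateht} applied to $h_t$ in~\eqref{eq:hztaumain}, and then invoke Theorem~\ref{theo:onlinesgdstar}. First I would check the identification: with $z = (w,\alpha,\tau)$ and the partition into $h_{1,t},\dots,h_{n+1,t}$ from Section~\ref{sec:MOTAPS}, sampling $i_t \in \{1,\dots,n\}$ reproduces lines~\ref{ln:alphaistept}--\ref{ln:wstept}, while sampling $i_t = n+1$ reproduces the $\alpha$-aggregation of line~\ref{ln:alphaggregatet} together with the $\tau$-step~\eqref{eq:tausgdupdate1}; the latter coincides with the plain gradient step $\tau^{t+1} = \tau^t - \gamma\nabla_\tau h_{n+1,t}$ precisely because we set $\gamma_\tau = \gamma(\lambda + (1-\lambda)n)$, as prescribed in the statement. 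Under the standing bound~\eqref{eq:lambdabound}, Lemma~\ref{lem:MOSTAPSweakgrowthn1} then supplies Assumption~\ref{lem:growthgen} with $G = (1-\lambda)(2n+1) > 0$.

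Next I would feed this $G$, the step size $\gamma = \tfrac{1}{2(1-\lambda)(2n+1)} = \tfrac{1}{2G}$, and the star-convexity hypothesis into Theorem~\ref{theo:onlinesgdstar}. Since $G\gamma = \tfrac12 < 1$, the requirement $\gamma < 1/G$ holds and the constants collapse: $\tfrac{1}{2\gamma(1-G\gamma)} = 2G = 2(1-\lambda)(2n+1)$ and $\tfrac{G\gamma}{1-G\gamma} = 1$. By Lemma~\ref{lem:movtargequiv}, $h_t(z^*) = h_t(w^*,\alpha^*,\tau^*) = \tfrac{\lambda f(w^*)^2}{2(n+1)}$ for every $t$ (it does not depend on $t$), so the averaged term $\tfrac1k\sum_{t=1}^k h_t(z^*)$ equals that same constant. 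Substituting into~\eqref{eq:hconv}, and using that $z^0$ is deterministic and that $\min_{t=0,\dots,k}\le \min_{t=1,\dots,k}$, yields~\eqref{eq:MTAPSconvh}.

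For the second display~\eqref{eq:hztaufilipschitz} I would expand the left-hand side of~\eqref{eq:MTAPSconvh}. Writing $h_t(z^t) - h_t(z^*)$ out from~\eqref{eq:hztaumain} and using $\alpha_i^* = f_i(w^*)$ and $\overline{\alpha}^* = \tau^* = f(w^*)$ from~\eqref{eq:alphastarandtaustarapp}, the residual terms evaluated at $z^*$ vanish, so only $\tfrac{\lambda}{2}\big((\tau^t)^2 - f(w^*)^2\big)$ survives beyond the squared residuals $(f_i(w^t)-\alpha_i^t)^2$ and $(\overline{\alpha}^t - \tau^t)^2$ at the current iterate. Then, invoking that each $f_i$ is $L_{\max}$-Lipschitz, I would replace each denominator $\norm{\nabla f_i(w^t)}^2 + 1$ by its upper bound in terms of $L_{\max}$, which only decreases the first $n$ terms; taking expectations, then the minimum over $t$, and combining with~\eqref{eq:MTAPSconvh} gives~\eqref{eq:hztaufilipschitz}.

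The corollary is thus essentially bookkeeping; the substantive work sits in the ingredients already in hand — the growth estimate of Lemma~\ref{lem:MOSTAPSweakgrowthn1}, whose $\lambda$-dependent constant is exactly what forces~\eqref{eq:lambdabound} and fixes the admissible $\gamma$, and, more delicately, the star-convexity of $h_t$ along the iterates, which is hypothesized here but (cf. the discussion around Lemma~\ref{lem:convexTAPS} and Corollary~\ref{cor:localconvexTAPS}) can in general only be guaranteed locally. Within the present argument the only step needing mild care is tracking the constants through the choice $\gamma = 1/(2G)$ and making sure the Lipschitz bound on the denominator is applied in the lower-bounding direction.
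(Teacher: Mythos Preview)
Your proposal is correct and follows essentially the same route as the paper: identify \MOTAPS as online SGD on~\eqref{eq:hztaumain}, invoke Lemma~\ref{lem:MOSTAPSweakgrowthn1} to get $G=(1-\lambda)(2n+1)$, plug $\gamma=1/(2G)$ and $h_t(z^*)=\tfrac{\lambda f(w^*)^2}{2(n+1)}$ from Lemma~\ref{lem:movtargequiv} into Theorem~\ref{theo:onlinesgdstar}, and then lower-bound the denominators via the Lipschitz assumption for the second display. Your explicit verification that the choice $\gamma_\tau=\gamma(\lambda+(1-\lambda)n)$ makes the $\tau$-update coincide with a plain gradient step is a detail the paper leaves implicit, and your tracking of the constants $\tfrac{1}{2\gamma(1-G\gamma)}=2G$, $\tfrac{G\gamma}{1-G\gamma}=1$ is cleaner than the appendix writeup.
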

This Corollary~\ref{cor:MTAPSconvsum} shows that $(f_i(w^t),\overline{\alpha}^t,\tau^t)$ converges to $(\alpha_i^t, \tau^t, f(w^*))$ sublinearly up to an additive error $\frac{\lambda f(w^*)^2}{2(n+1)}$ which is controlled by $\lambda$: When $\lambda$ is very small, this additive error is very small. But $\lambda$ also controls the \emph{speed} of convergence. Indeed for $\lambda$ close to $1$ the method converges faster up to this additive error. Thus $\lambda$ controls a trade-off between speed of convergence and radius of convergence.

The next corollary is based on Theorem~\ref{theo:onlinesgdstrongstar}.
\begin{corollary}\label{cor:MTAPSconvhstrongmain} 
Let $\lambda\in [0,\;1]$ satisfy~\eqref{eq:lambdabound} and  
let $z^t \eqdef (w^t, \alpha^t, \tau_t)$ be the iterates of Algorithm~\ref{alg:MOTAPS} when using a stepsize $\gamma  =  \frac{1}{2(1-\lambda)(2n+1)}$ and $ \gamma_{\tau} = \gamma \left( \lambda+ (1-\lambda)n\right).$
If $h_t(z)$ is $\mu$--strongly star--convex along the iterates $z^t $ and around $z^* \eqdef (w^*, \alpha^*, \tau^*) $ then
\begin{eqnarray}\label{eq:MTAPSconvhstrongmain}
\E{\norm{z^{t+1} -z^*}^2} & \leq &  \big(1- \frac{\mu}{(1-\lambda)(2n+1)} \big)^{t+1} \norm{z^{0} -z^*}^2  +  \frac{\lambda f(w^*)^2}{\mu(n+1)}.
\end{eqnarray}
\end{corollary}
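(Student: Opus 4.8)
The plan is to obtain Corollary~\ref{cor:MTAPSconvhstrongmain} as an immediate specialization of Theorem~\ref{theo:onlinesgdstrongstar} to the \MOTAPS iteration, exactly in the spirit of how Corollary~\ref{cor:SPSstrongconvexconv} specializes it to \SP and~\eqref{eq:convTAPSstrconv} to \TAPS. Concretely there are three things to line up before invoking the theorem: (a) that Algorithm~\ref{alg:MOTAPS} run with the stated $\gamma$ and $\gamma_\tau$ is \emph{literally} the online SGD recursion~\eqref{eq:ztupdateht} applied to the proxy $h_t$ of~\eqref{eq:hztaumain}; (b) that Assumption~\ref{lem:growthgen} holds for this proxy with an explicit $G$ for which the stated $\gamma$ satisfies $\gamma\le 1/G$; and (c) the exact value of $h_t$ at the target point $z^*\eqdef(w^*,\alpha^*,\tau^*)$.

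For (a) I would reuse the derivation of Section~\ref{sec:MOTAPS} (following the same pattern as Sections~\ref{sec:SPSonlineSGD} and~\ref{sec:SGDviewpoint}): sampling $i_t\le n$ and taking the gradient step of $h_{i_t,t}$ reproduces lines~\ref{ln:alphaistept}--\ref{ln:wstept}, while sampling $i_t=n+1$ and taking a gradient step of $h_{n+1,t}$ reproduces the $\overline{\alpha}$-update together with the $\tau$-update; the one point to verify carefully is that the $\tau$-coordinate carries its own learning rate, and that the choice $\gamma_\tau=\gamma(\lambda+(1-\lambda)n)$ is exactly what turns line~\ref{ln:alphaggregatet} into the plain gradient step $\tau^{t+1}=\tau^t-\gamma\nabla_\tau h_{n+1,t}$, as in~\eqref{eq:tausgdupdate1}--\eqref{eq:tausgdupdate2}. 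Thus \MOTAPS is an instance of~\eqref{eq:ztupdateht} with a single step $\gamma$ and $i_t$ sampled uniformly and i.i.d.\ on $\{1,\dots,n+1\}$. For (b), Lemma~\ref{lem:MOSTAPSweakgrowthn1} gives, under the dampening bound~\eqref{eq:lambdabound}, that $\tfrac{1}{n+1}\sum_{i=1}^{n+1}\norm{\nabla h_{i,t}(z^t)}^2\le 2(1-\lambda)(2n+1)\,h_t(z^t)$; since $i_t$ is uniform, the left side equals $\E{\norm{\nabla h_{i_t,t}(z^t)}^2}$, so Assumption~\ref{lem:growthgen} holds with $G=(1-\lambda)(2n+1)$, and the stated $\gamma=\tfrac{1}{2(1-\lambda)(2n+1)}$ satisfies $\gamma\le 1/G$. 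The strong-star-convexity hypothesis of the corollary is precisely~\eqref{eq:hquasistrongmain} for $z^t$ the \MOTAPS iterates and $z^*$ as above, so Theorem~\ref{theo:onlinesgdstrongstar} applies verbatim.

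Plugging in gives $\E{\norm{z^{t+1}-z^*}^2}\le(1-\gamma\mu)^{t+1}\norm{z^0-z^*}^2+2G\gamma^2\sum_{i=0}^{t}(1-\gamma\mu)^i\,\E{h_i(z^*)}$, and the step that actually closes the estimate is (c): by Lemma~\ref{lem:movtargequiv}, $h_i(z^*)=\tfrac{\lambda f(w^*)^2}{2(n+1)}$ is \emph{constant in $i$} — the one structural difference from \SP and \TAPS, whose proxies vanish at the optimum — so $\sum_{i=0}^{t}(1-\gamma\mu)^i\,\E{h_i(z^*)}\le \tfrac{1}{\gamma\mu}\cdot\tfrac{\lambda f(w^*)^2}{2(n+1)}$ and the additive term collapses to at most $2G\gamma^2\cdot\tfrac{1}{\gamma\mu}\cdot\tfrac{\lambda f(w^*)^2}{2(n+1)}=\tfrac{G\gamma\,\lambda f(w^*)^2}{\mu(n+1)}\le\tfrac{\lambda f(w^*)^2}{\mu(n+1)}$ since $G\gamma\le 1$. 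Substituting the value of $\gamma$ into the contraction factor $1-\gamma\mu$ then yields the stated geometric rate. I expect the only genuinely fiddly part to be this last piece of constant-chasing — reconciling the algorithm's $\gamma$ and $\gamma_\tau$, the $(1-\lambda)$-weights inside $h_t$, and the constant $G$ from Lemma~\ref{lem:MOSTAPSweakgrowthn1} — so that the contraction factor comes out exactly as $1-\tfrac{\mu}{(1-\lambda)(2n+1)}$ and the additive error as $\tfrac{\lambda f(w^*)^2}{\mu(n+1)}$; everything else is a direct citation of the preceding lemmas and of Theorem~\ref{theo:onlinesgdstrongstar}.
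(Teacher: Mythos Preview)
Your proposal is correct and follows exactly the paper's own proof: invoke Theorem~\ref{theo:onlinesgdstrongstar} with $G=(1-\lambda)(2n+1)$ from Lemma~\ref{lem:MOSTAPSweakgrowthn1}, plug in the constant value $h_i(z^*)=\tfrac{\lambda f(w^*)^2}{2(n+1)}$ from Lemma~\ref{lem:movtargequiv}, and sum the geometric series.

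One caveat on the ``fiddly constant-chasing'' you flag but do not carry out: with the stepsize $\gamma=\tfrac{1}{2(1-\lambda)(2n+1)}$ stated in the main-text corollary, the contraction factor $1-\gamma\mu$ actually comes out to $1-\tfrac{\mu}{2(1-\lambda)(2n+1)}$, not $1-\tfrac{\mu}{(1-\lambda)(2n+1)}$. The paper's appendix proof (Corollary~\ref{cor:MTAPSconvhstrong}) in fact uses $\gamma=\tfrac{1}{(1-\lambda)(2n+1)}=1/G$, which is still admissible in Theorem~\ref{theo:onlinesgdstrongstar} and yields the stated rate with equality in the additive term. So the discrepancy you sensed is a typo in the main-text statement of $\gamma$, not a gap in your argument.
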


In both Corollary~\ref{cor:MTAPSconvsum} and~\ref{cor:MTAPSconvhstrongmain}  the $\lambda$ parameter controls a trade-off between speed of convergence and an additive error term. For example, for the largest value $\lambda =  \frac{2n+1}{2n+3} $ (due to~\eqref{eq:lambdabound}) we have that ~\eqref{eq:MTAPSconvhstrongmain}, after some simplifications, gives
\[
\E{\norm{z^{t+1} -z^*}^2} \; \leq \;  \big(1- \mu/2 \big)^{t+1} \norm{z^{0} -z^*}^2  + \frac{ f(w^*)^2}{\mu(n+1)}.\]
Thus the convergence rates is now $1-\mu/2$ and independent of $n$. But the additive error term $ \frac{ f(w^*)^2}{\mu(n+1)}$ is now larger. On the other end, as $\lambda \rightarrow 0$  the rate of convergence tends to $ (1- \frac{\mu}{2n+1} )$, which now depends on $n$, and the additive error term tends to zero.

By controlling this trade-off, next we use Corollary~\ref{cor:MTAPSconvhstrongmain} to establish a total complexity of Algorithm~\ref{alg:MOTAPS}.

\begin{theorem} \label{theo:complexMOTAPS}
Consider the setting of Corollary~\ref{cor:MTAPSconvhstrongmain}.
For a given $\epsilon >0$ it follows that
\begin{equation}
t \; \geq\;  \frac{(1-\lambda)(2n+1)}{\mu}\log\left(\frac{2\norm{z^0-z^*}^2}{\epsilon} \right) \;  \implies \; \E{\norm{z^{t+1} -z^*}^2} < \frac{\epsilon}{2} +    \frac{\lambda f(w^*)^2}{\mu(n+1)}.
\end{equation}
Consequently if we could choose 
\begin{equation}\label{eq:lambdaboundcomplexmain}
\lambda \;< \;  \min \left\{ \frac{ \mu(n+1)}{f(w^*)^2} \frac{\epsilon}{2} , \; \frac{2n+1}{2n+3} \right\}
\end{equation}
then 
\begin{align}
t & \geq\;  \frac{2n+1}{\mu}\log\left(\frac{2\norm{z^0-z^*}^2}{\epsilon} \right)\quad  \implies \quad \E{\norm{z^{t+1} -z^*}^2} < \epsilon.
\end{align}
\end{theorem}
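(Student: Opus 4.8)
The plan is to unwind Corollary~\ref{cor:MTAPSconvhstrongmain} directly. Under the stated choices $\gamma=\tfrac{1}{2(1-\lambda)(2n+1)}$, $\gamma_{\tau}=\gamma(\lambda+(1-\lambda)n)$ and the bound~\eqref{eq:lambdabound} on $\lambda$ (which is precisely~\eqref{eq:lambdaboundcomplexmain} with the first term of the $\min$ dropped), that corollary gives
\[
\E{\norm{z^{t+1}-z^*}^2}\;\leq\;\Big(1-\tfrac{\mu}{(1-\lambda)(2n+1)}\Big)^{t+1}\norm{z^{0}-z^*}^2\;+\;\frac{\lambda f(w^*)^2}{\mu(n+1)}.
\]
First I would bound the geometric factor using $1-x\leq e^{-x}$, so that $\big(1-\tfrac{\mu}{(1-\lambda)(2n+1)}\big)^{t+1}\leq\exp\!\big(-\tfrac{\mu(t+1)}{(1-\lambda)(2n+1)}\big)$. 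Hence it suffices to make the exponent at least $\log\!\big(\tfrac{2\norm{z^{0}-z^*}^2}{\epsilon}\big)$, i.e.\ $t+1\geq\tfrac{(1-\lambda)(2n+1)}{\mu}\log\!\big(\tfrac{2\norm{z^{0}-z^*}^2}{\epsilon}\big)$, in order to force the first term below $\epsilon/2$. Since $t+1>t$, the hypothesis $t\geq\tfrac{(1-\lambda)(2n+1)}{\mu}\log\!\big(\tfrac{2\norm{z^{0}-z^*}^2}{\epsilon}\big)$ implies this, and therefore $\E{\norm{z^{t+1}-z^*}^2}<\tfrac{\epsilon}{2}+\tfrac{\lambda f(w^*)^2}{\mu(n+1)}$, which is the first displayed implication. (If $2\norm{z^{0}-z^*}^2<\epsilon$ the logarithm is negative, the condition on $t$ holds for every $t\geq 0$, and the bound is immediate since the geometric factor is at most $1$.)

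For the second implication I would choose $\lambda$ small enough to push the bias term itself below $\epsilon/2$: if $\lambda<\tfrac{\mu(n+1)}{f(w^*)^2}\cdot\tfrac{\epsilon}{2}$ then $\tfrac{\lambda f(w^*)^2}{\mu(n+1)}<\tfrac{\epsilon}{2}$, so the right-hand side of the first implication is strictly less than $\epsilon$; the $\min$ with $\tfrac{2n+1}{2n+3}$ in~\eqref{eq:lambdaboundcomplexmain} is retained only so that~\eqref{eq:lambdabound} still holds and Corollary~\ref{cor:MTAPSconvhstrongmain} remains applicable. Finally, since $\lambda\geq 0$ gives $1-\lambda\leq 1$, and $\log\!\big(\tfrac{2\norm{z^{0}-z^*}^2}{\epsilon}\big)\geq 0$ away from the trivial case, the cleaner requirement $t\geq\tfrac{2n+1}{\mu}\log\!\big(\tfrac{2\norm{z^{0}-z^*}^2}{\epsilon}\big)$ implies $t\geq\tfrac{(1-\lambda)(2n+1)}{\mu}\log\!\big(\tfrac{2\norm{z^{0}-z^*}^2}{\epsilon}\big)$, and combining this with the choice of $\lambda$ yields $\E{\norm{z^{t+1}-z^*}^2}<\epsilon$.

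I do not expect a genuine obstacle: the statement is a routine consequence of the linear‑rate‑plus‑bias estimate of Corollary~\ref{cor:MTAPSconvhstrongmain}. The only points needing a sentence of care are the harmless passage from $t+1$ to $t$ in the exponent, the invocation of $1-x\leq e^{-x}$, and the degenerate case $2\norm{z^{0}-z^*}^2<\epsilon$ (where the claim holds at every iteration because the geometric factor never exceeds $1$ and the bias term is below $\epsilon/2$ by the choice of $\lambda$).
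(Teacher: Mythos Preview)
Your proposal is correct and follows essentially the same route as the paper: start from the linear-rate-plus-bias bound of Corollary~\ref{cor:MTAPSconvhstrongmain}, convert the contraction factor to an exponential via $1-x\le e^{-x}$, and then choose $\lambda$ so the bias term is below $\epsilon/2$ while using $1-\lambda\le 1$ to drop $\lambda$ from the iteration bound. The paper's own proof is terser (it simply cites ``standard arguments using properties of the logarithm'' and a reference for the first implication), so your version is in fact more explicit, including your handling of the $t$ versus $t+1$ exponent and the degenerate case $2\norm{z^0-z^*}^2<\epsilon$.
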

\begin{proof}
By standard arguments using properties of the logarithm, we have that
\[  t \geq  \frac{(1-\lambda)(2n+1)}{\mu}\log\left(\frac{2}{\epsilon} \right)  \; \implies   \big(1- \frac{\mu}{(1-\lambda)(2n+1)} \big)^{t+1}  < \frac{\epsilon}{2}.  \]
See for instance Lemma 11 in~\cite{GowerThesis}. 
Furthermore, by using~\eqref{eq:lambdaboundcomplexmain} we have that
\begin{align}
t & \geq\;   \left( 1-\frac{ \mu(n+1)}{f(w^*)^2} \frac{\epsilon}{2} \right) \frac{2n+1}{\mu}\log\left(\frac{2\norm{z^0-z^*}^2}{\epsilon} \right) \; \geq \;  \frac{2n+1}{\mu}\log\left(\frac{2\norm{z^0-z^*}^2}{\epsilon} \right) \nonumber \\
& \quad \qquad  \implies \quad \E{\norm{z^{t+1} -z^*}^2} < \epsilon.
\end{align}
\end{proof}

Thus by choosing $\lambda$ small enough, we can show that the \MOTAPS method converges linearly. This is in stark contrast to SGD where, despite the presence of an additive error when using a constant step size (See Theorem 1 in~\cite{moulines2011non}), this additive term only vanishes by setting the stepsize to zero. In contrast for \MOTAPS we can set $\lambda$ arbitrarily small without halting the method.

In practice, we would not know how to set $\lambda$ using~\eqref{eq:lambdaboundcomplexmain} since we would not know $f(w^*).$ Furthermore, we may not have a particular $\epsilon$ in mind, and instead, prefer to monitor the error and stop when resources are exhausted. To address both of these concerns, the next theorem offers another 
way to deal with the additive error by eventually decreasing the step size.

\begin{theorem} \label{theo:complexdecreaseMOTAPSmain}
Consider the setting of Corollary~\ref{cor:MTAPSconvhstrongmain}. For a given $\epsilon >0$ 
if we use an iteration dependent stepsize in Algorithm~\ref{alg:MOTAPS} given by
\begin{align}\label{eq:gammatmotaps}
\gamma_t   =  
\begin{cases} \displaystyle
 \frac{1}{(1-\lambda)(2n+1)} & \mbox{if } t \leq 2  (2n+1)\left \lceil \frac{1-\lambda}{\mu} \right\rceil \\[0.5cm]
  \displaystyle  \frac{(t+1)^2 -t^2}{\mu (t+1)^2} & \mbox{if } t \geq 2  (2n+1)\left \lceil \frac{1-\lambda}{\mu} \right\rceil
\end{cases}
\end{align}
and if
\[\lambda \leq  \min \left\{ 1- \frac{2\mu}{2n+1}, \; \frac{2n+1}{2n+3} \right\} .\]
then
\begin{equation}\label{eq:complexdecreaseMOTAPS}
\EE{}{\norm{z^{t} -z^*}^2}  \leq  \frac{(1-\lambda)\lambda f(w^*)^2}{\mu^2} \frac{16}{t} + \frac{4(2n+1)^2}{e^2 t^2} \left \lceil \frac{1-\lambda}{\mu} \right\rceil^2 \norm{z^{0} -z^*}^2.
\end{equation}
\end{theorem}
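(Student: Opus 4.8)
The plan is to reduce everything to the one-step contraction inequality underlying Corollary~\ref{cor:MTAPSconvhstrongmain}, but now allowing the stepsize to depend on $t$. Write $z^*=(w^*,\alpha^*,\tau^*)$, $r_t\eqdef\E{\norm{z^t-z^*}^2}$, $G\eqdef(1-\lambda)(2n+1)$, and recall from~\eqref{eq:htstarmain} that $h_t(z^*)=\frac{\lambda f(w^*)^2}{2(n+1)}\eqdef C$ for every $t$. First I would re-run the proof of Theorem~\ref{theo:onlinesgdstrongstar} with a per-iteration stepsize $\gamma_t\le 1/G$ to obtain
\[
r_{t+1}\;\le\;(1-\gamma_t\mu)\,r_t\;+\;2G\gamma_t^2\,C .
\]
The ingredients making this legitimate are already available: $\mu$--strong star-convexity of $h_t$ along the iterates, and the growth bound of Lemma~\ref{lem:MOSTAPSweakgrowthn1} (which requires $\lambda\le\frac{2n+1}{2n+3}$, the second term in the stated bound on $\lambda$). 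One also observes that the $\gamma_\tau$ used in Algorithm~\ref{alg:MOTAPS} is coupled to $\gamma_t$ via $\gamma_\tau=\gamma_t(\lambda+(1-\lambda)n)$ exactly as in~\eqref{eq:tausgdupdate1}, so the update really is plain SGD on~\eqref{eq:hztaumain} with stepsize $\gamma_t$.

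Next I would treat the two phases of the schedule~\eqref{eq:gammatmotaps} separately. In the burn-in phase $t\le T_0\eqdef 2(2n+1)\lceil\frac{1-\lambda}{\mu}\rceil$ the stepsize is the constant $\gamma_t=1/G$; here $1-\gamma_t\mu=1-\mu/G\in[\tfrac12,1)$ by the assumption $\lambda\le 1-\frac{2\mu}{2n+1}$, and unrolling the recursion gives
\[
r_{T_0}\;\le\;(1-\tfrac{\mu}{G})^{T_0}r_0+\tfrac{2C}{G}\sum_{j\ge0}(1-\tfrac{\mu}{G})^j\;\le\; e^{-2}r_0+\tfrac{2C}{\mu},
\]
using $(1-x)^{1/x}\le e^{-1}$ together with $T_0\ge 2G/\mu$. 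In the decreasing phase $t\ge T_0$ the stepsize $\gamma_t=\frac{(t+1)^2-t^2}{\mu(t+1)^2}$ is engineered so that $1-\gamma_t\mu=\frac{t^2}{(t+1)^2}$ and $(t+1)^2\gamma_t^2\le 4/\mu^2$ (and $\gamma_t\le1/G$ since $t\ge T_0\ge 2G/\mu$). Multiplying the one-step recursion by $(t+1)^2$ therefore telescopes, $(t+1)^2 r_{t+1}\le t^2 r_t+8GC/\mu^2$, which summed from $T_0$ to $t-1$ yields
\[
r_t\;\le\;\frac{T_0^2}{t^2}\,r_{T_0}\;+\;\frac{8GC}{\mu^2}\,\frac{1}{t}.
\]

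Finally I would substitute the phase-one bound on $r_{T_0}$ together with $T_0^2=4(2n+1)^2\lceil\frac{1-\lambda}{\mu}\rceil^2$. The $e^{-2}r_0$ part produces exactly the second term of~\eqref{eq:complexdecreaseMOTAPS}; the terms $\frac{8GC}{\mu^2 t}$ and $\frac{T_0^2}{t^2}\cdot\frac{2C}{\mu}$ are each shown to collapse to multiples of $\frac{(1-\lambda)\lambda f(w^*)^2}{\mu^2 t}$ by using $t\ge T_0$ (so $T_0^2/t^2\le T_0/t$), the inequality $\frac{2n+1}{n+1}\le2$, and the constraint on $\lambda$ to control $\lceil\frac{1-\lambda}{\mu}\rceil$ against $\frac{1-\lambda}{\mu}$; bounding their sum by the claimed $16/t$ coefficient is the bookkeeping step. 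I expect the main obstacle to be precisely this last step — keeping all constants aligned, in particular handling the ceiling and making sure the phase-one additive error $\frac{2C}{\mu}$, once amplified by $T_0^2/t^2$, still decays like $1/t$ rather than like a constant — whereas the structural ingredients (the variable-stepsize one-step recursion and the $(t+1)^2$-weighted telescoping) are routine.
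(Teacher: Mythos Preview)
Your proposal is essentially the paper's proof, unpacked. The paper derives the same one-step recursion you write (with $\sigma^2\eqdef 2(1-\lambda)\lambda f(w^*)^2$, obtained after bounding $\tfrac{2n+1}{n+1}\le 2$) and then simply invokes Theorem~3.2 of~\cite{gower2019sgd} as a black box for the switching-stepsize analysis, noting only that the condition $\mathcal K\eqdef G/(2\mu)\ge 1$ required there is exactly $\lambda\le 1-\tfrac{2\mu}{2n+1}$. Your two-phase argument (constant-stepsize burn-in followed by the $(t+1)^2$-weighted telescoping) is precisely the content of that cited theorem, so you are re-deriving it inline rather than citing it; the structural steps you list are correct and match what that theorem does.

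The one place to be careful is the bookkeeping you already flag. In the cited result the switching time is $4\lceil\mathcal K\rceil$ with $\mathcal K=G/(2\mu)$, and the final bound carries $\lceil\mathcal K\rceil^2$ rather than $\lceil(1-\lambda)/\mu\rceil^2$; the paper then (somewhat loosely) identifies $\lceil\mathcal K\rceil$ with $\tfrac{1}{2}(2n+1)\lceil(1-\lambda)/\mu\rceil$. Your attempt to push the phase-one additive error $\tfrac{2C}{\mu}$ into the $1/t$ term via $T_0^2/t^2\le T_0/t$ and then control $\lceil(1-\lambda)/\mu\rceil$ by $(1-\lambda)/\mu$ will not close with an $n$-free constant when $(1-\lambda)/\mu<1$; the constraint $\lambda\le 1-\tfrac{2\mu}{2n+1}$ only gives $(1-\lambda)/\mu\ge \tfrac{2}{2n+1}$. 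The cleaner route is to keep the switching time as $4\lceil\mathcal K\rceil=4\lceil G/(2\mu)\rceil$ (so that $T_0\le 4G/\mu$ up to the single ceiling on $\mathcal K$), which makes $T_0\cdot\tfrac{2C}{\mu}\le \tfrac{8GC}{\mu^2}$ and the two $1/t$ contributions combine to the stated $16/t$ without needing to compare $\lceil(1-\lambda)/\mu\rceil$ to $(1-\lambda)/\mu$. That is how the constants line up in the cited theorem; apart from this bookkeeping detail your plan is the paper's proof.
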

This Theorem~\ref{theo:complexdecreaseMOTAPSmain} relies on knowing $\mu$ to set a \emph{switching point} and the step size in~\eqref{eq:gammatmotaps}. In practice it can also be difficult to estimate $\mu$, but this theorem is still useful in that, it suggests that at some point in the execution we should decrease the stepsize 
\[\gamma_t = \cO\left(\frac{(t+1)^2 -t^2}{(t+1)^2} \right) =  \cO\left(\frac{2t +1}{(t+1)^2} \right)  = \cO\left(\frac{1}{t+1}\right),\]
much in the same way that SGD is used in practice.
\section{Experiments}


\begin{figure}
     \centering
     \begin{subfigure}[b]{0.48\textwidth}
         \centering
         \includegraphics[width=0.45\textwidth]{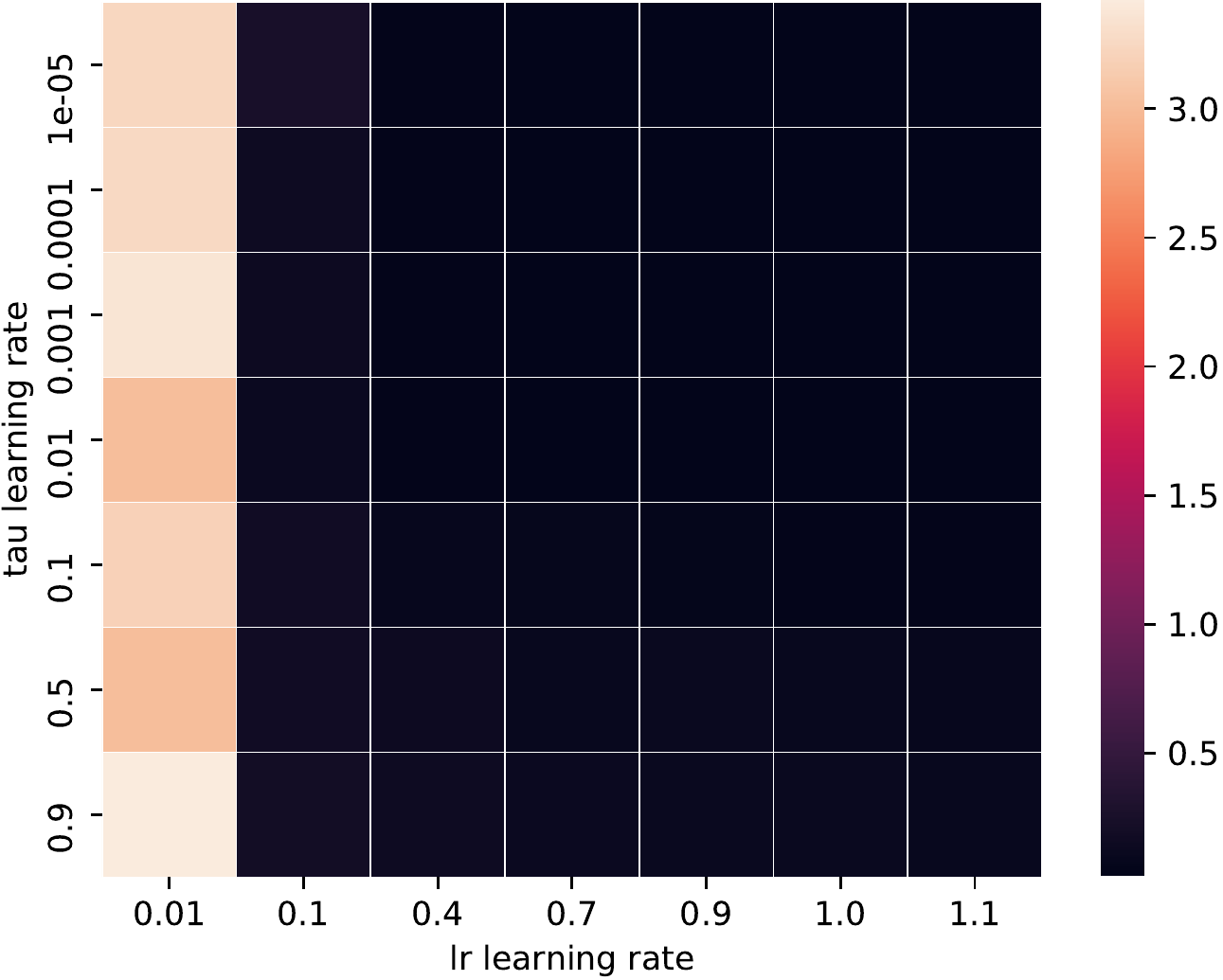}
		\includegraphics[width=0.45\textwidth]{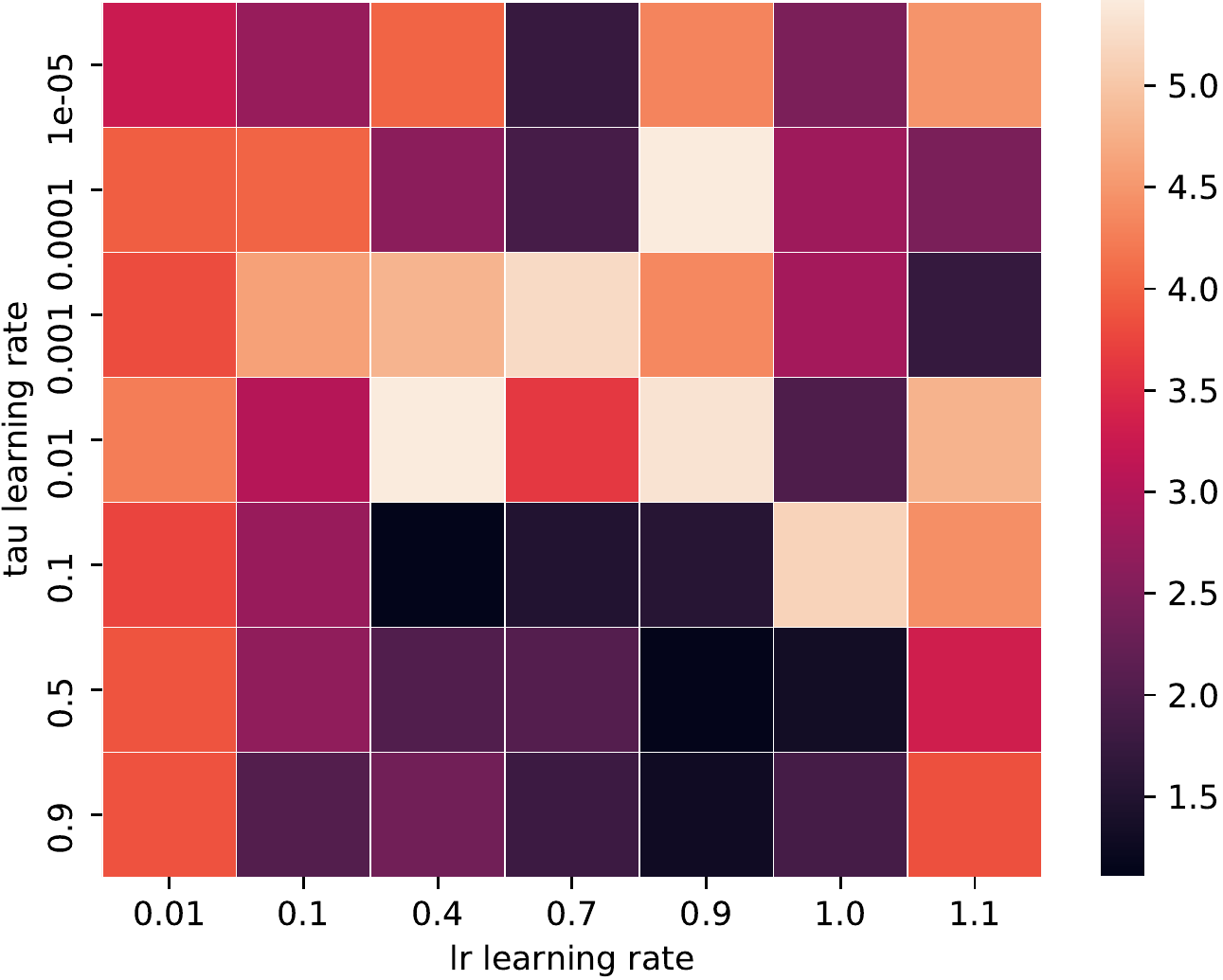}
        \caption{\texttt{duke} $(n,d) =( 7130, 44)$ Left:  $\sigma =0.0$. Right: $\sigma =\min_{i=1,\ldots, n} \norm{x_i}^2/n = 5.06$ }
         \label{fig:lrduke}
     \end{subfigure}
     \hfill
     \begin{subfigure}[b]{0.48\textwidth}
         \centering
		\includegraphics[width=0.45\textwidth]{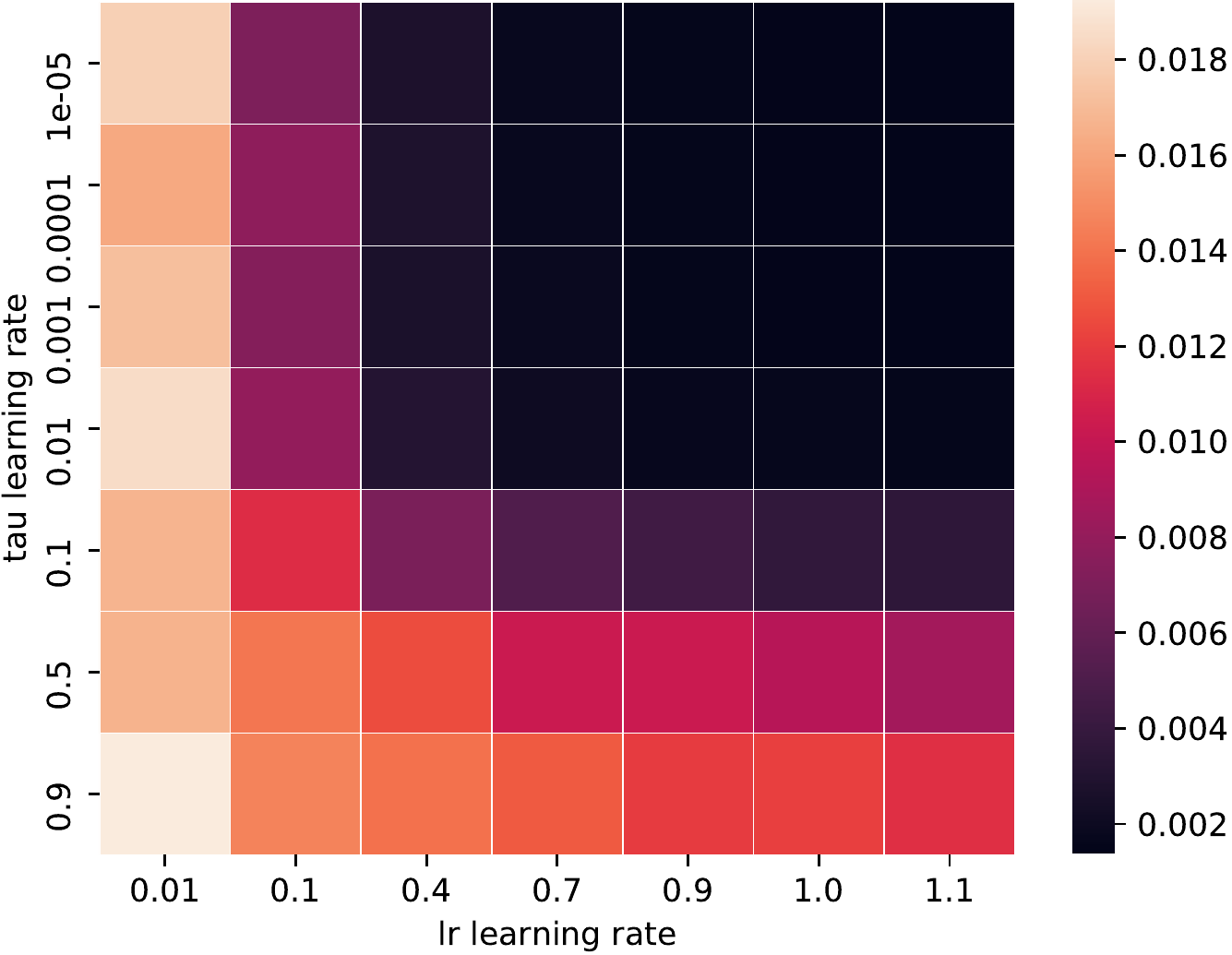}
		\includegraphics[width=0.45\textwidth]{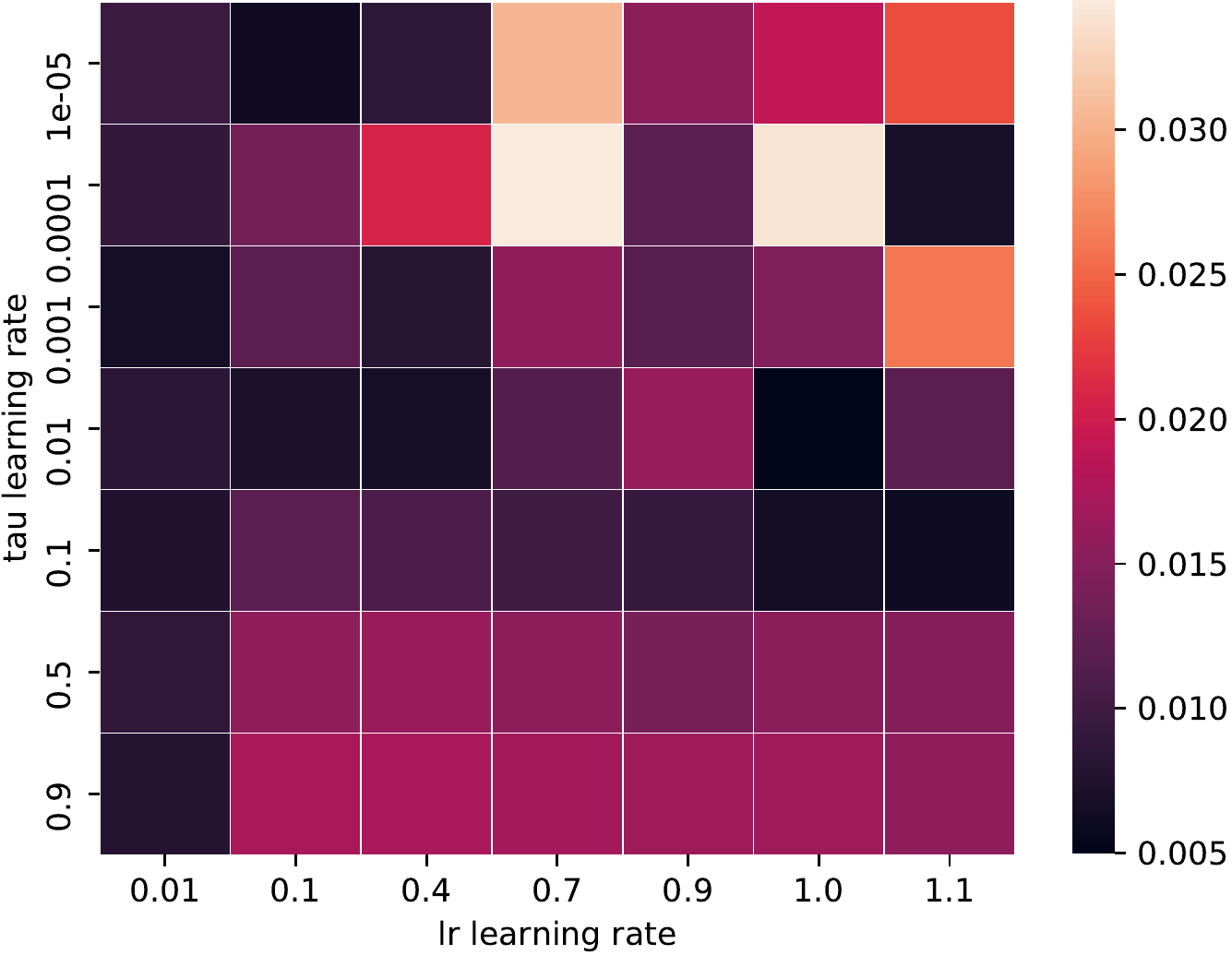}
         \caption{\texttt{mushrooms} $(n,d) = (62, 2001)$ Left:  $\sigma =0.0$. Right: $\sigma =\min_{i=1,\ldots, n} \norm{x_i}^2/n = 2.66$ }
         \label{fig:lrmushrooms}
     \end{subfigure}
        \caption{The resulting gradient norm of \MOTAPS after running 50 epochs on a logistic regression}
        \label{fig:grid}
\end{figure}

\subsection{Convex Classification}
\label{sec:convex}

We first experiment with a classification task using logistic regression. 

For our experiments on convex classification tasks, we focused on logistic regression. That is
\begin{equation} \label{eq:fgenlin}
f(w) = \frac{1}{n} \sum \limits_{i=1}^n \phi (x_i^\top w) +\frac{\sigma}{2}\norm{w}_2^2 \;
\end{equation}
where 
 $
 \phi_i(t) \, =\, \ln\left(1+ e^{-y_i t} \right),
$ 
$(x_i,y_i)\in \R^{d+1}$ are the features and labels for $i=1, \ldots, n$, and $\sigma >0$ is the regularization parameter.
We experimented with the five diverse data sets:  leu~\cite{leu-dataset} , duke~\cite{duke-dataset}, colon-cancer~\cite{coloncancer}, mushrooms~\cite{uci} and phishing~\cite{uci}. Details of these datasets and their properties can be found in Table~\ref{tab:datasetsconv}.
\begin{table}[!h]
\centering
\begin{tabular}{ c|ccc|ccc|cccc }
   \toprule
 &  & & &  \multicolumn{3}{c|}{$\sigma = 0$} &  \multicolumn{4}{c}{$\sigma = \min_{i=1,\ldots, n} \norm{x_i}^2/n$} \\
 dataset     & $d$  & $n$ & $L_{\max}$  & $\gamma^*$ & $\gamma_{\tau}^*$ & $f^*$&  $\gamma^*$ & $\gamma_{\tau}^*$      &  $f^*$& $\sigma$           \\
 \midrule
 leu &  $7130$ &  $38$ &  $824.6$& $1.1$ & $10^{-5}$ & 0.0 & $0.01$ & $0.4$ &0.449& $11.74$  \\
duke & $7130$ & $ 44$ &$683.2$ & $1.1$ & $10^{-3}$ & 0.0 & $0.1$ & $0.4$   &0.4495&  $5.06$ \\
 colon-cancer &   $2001$ &  $62$ & $137.8$  &  $1.1$ & $10^{-5}$    &0.0  &  $0.1$ &$0.9$  & 0.453& $2.66$  \\
 mushrooms   & $112$            &  $8124$ &    $5.5$    &  $1.1$  & $10^{-4}$  &0.0&      $1.0$ & $0.01$      &0.083  &  0.0027 \\ 
  phishing    & $68$             & $11055$   &   $7.75$  &   $0.01 $ & $0.5$   & 0.142& $0.01$ & $0.9$ & 0.188& 0.0028 \\ 
 \bottomrule
\end{tabular}
\caption{Binary datasets used in the logistic regression experiments together with the best parameters settings for $\gamma$ and $\gamma_{\tau}$ for two different regularization settings.}
\label{tab:datasetsconv}
\end{table}

For the sake of simplicity, here we test the \MOTAPS method in Algorithm~\ref{alg:MOTAPS} with $\lambda = 0.5$.
To determine a reasonable parameter setting for  the \MOTAPS methods we performed 
 a grid search over the two parameters $\gamma$ and $\gamma_\tau$. See Figure~\ref{fig:grid}  for the  results of the grid search for an over-parametrized problem \texttt{colon-cancer} and an under-parametrized problem \texttt{mushrooms}.
  Through these grid searches we found that 
the determining factor for setting the best stepsize was the magnitude of the regularization parameter $\sigma>0$.
If $\sigma$ was small or zero then
 $\gamma =1$ and $\gamma_{\tau} =0.001$ resulted in a good performance. On the other hand, if $\sigma$ is large then $\gamma =0.01$ and $\gamma_{\tau} =0.9$ resulted in the best performance.  This is most likely due to the effect that $\sigma$ has on the optimal value $f(w^*),$ as is also clear in Table~\ref{tab:datasetsconv}.

\subsection{Comparison to Variance Reduced Methods}

We compare our methods against SGD, and two variance reduced gradient methods SAG~\cite{SAG,SAGA_Nips} and SVRG~\cite{Johnson2013} which are among the state-of-the-art methods for minimizing logistic regression. For setting the parameters  for SGD, based on~\cite{gower2019sgd} we used the learning rate schedule $\gamma_t = L_{\max}/t$ where $L_{\max}$ is the smoothness constant.  For SVRG and SAG we used $\gamma =1/2L_{\max}$. For \SP and \TAPS we used $\gamma =1 $ and approximated $f_i(w^*) =0$. Because of this the \SP is equivalent to the SPS method given in~\cite{SPS}. Following~\cite{SPS} experimental results, we also  implemented \SP with a max stepsize rule\footnote{In~\cite{SPS} the authors also recommend the use of a further \emph{smoothing} trick, but we opted for simplicity and chose not to use this smoothing. }.  For   \MOTAPS, based on our observations in the grid search,  we used
the rule of thumb  $\gamma = 1.0/(1+ 0.25 \sigma e^{\sigma})$ and  $\gamma_{\tau} = 1-\gamma$.
 We compare all the algorithms in terms of epochs (effective passes over the data) in Figure~\ref{fig:VRcompare}. 
 We found that in under-parametrized problem such as the mushrooms data set in  Figure~\ref{fig:VRcompare}, and problems with a large regularization, SAG and SVRG were often the most efficient methods.  For over-parametrized problems such as \texttt{duke}, with moderate regularization, the \MOTAPS methods was the most efficient. Finally, for over-parametrized problems with very small regularization the \SP method was the most efficient, see Section~\ref{asec:exp-convex}.

\begin{figure}
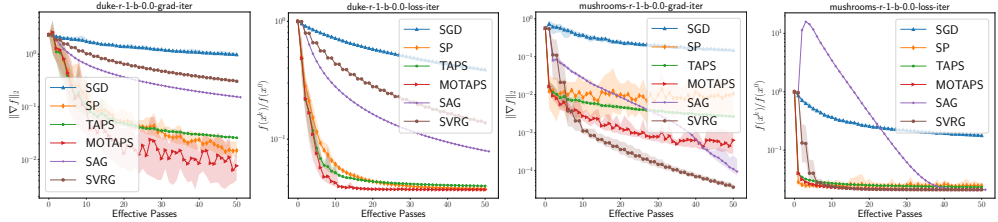

\centering
\includegraphics[width=0.23\textwidth]{./figures/convex/duke-r-1-b-0-grad-iter}
\includegraphics[width =0.23\textwidth]{./figures/convex/duke-r-1-b-0-loss-iter}
\includegraphics[width=0.23\textwidth]{./figures/convex/mushrooms-r-1-b-0-grad-iter}
\includegraphics[width =0.23\textwidth]{./figures/convex/mushrooms-r-1-b-0-loss-iter}
\caption{Logistic Regression with data set  Left: duke $(n,d) = (44, 7130)$ and Right:   mushrooms $(n,d) = (8124, 113)$
with  regularization $\sigma =1/n$.
}
\label{fig:VRcompare}
\end{figure}

Furthermore  \MOTAPS has two additional advantages over SAG and SVRG 1) setting the stepsize does not 
 require computing the smoothness constant and 2) does not require storing a $n\times d$ table of gradient (like SAG) or doing an occasional full pass over the date (like SVRG). 
  We also found that adding  momentum to  \SP and \MOTAPS could speed up the methods. See Section~\ref{asec:mom} for details on how we added momentum and additional experiments.

\subsection{Deep learning tasks}

We preformed a series of experiments on three benchmark problems commonly used for testing optimization methods for deep learning. CIFAR10 \cite{cifar} is a computer vision classification problem and perhaps the most ubiquitous benchmark in the deep learning.
 We used a large and over-parameterized network for this task, the 152 layer version of the pre-activation ResNet architecture~\cite{preact}, which has over 58 million parameters.

For our second problem, we choose an
 under-parameterized computer vision task. The street-view house numbers dataset \cite{svhn} is similar to the CIFAR10 dataset, consisting of the same number of classes, but with a much larger data volume of over 600k training images compared to 50k. To ensure the network can not completely interpolate the data, we used a much smaller ResNet network with 1 block per layer and 16 planes at the first layer, so that there are fewer parameters than data-points.

For our final comparison we choose one of the most popular NLP benchmarks, the IWSLT14 english-german translation task \cite{iwslt14}, consisting of approximately 170k sentence pairs. This task is relatively small scale and so overfitting is a concern on this problem. We applied a modern Transformer network with embedding size of 512, 8 heads and 3/3 encoding/decoding layers.

In each case the minimum loss is unknown so for the \TAPS method we assume it is 0. Due to a combination of factors including the use of data-augmentation and L2 regularization, this is only an approximation. The learning rate for each method was swept on a power-of-2 grid on a single training seed, and the best value was used for the final comparison, shown over an average of 10 seeds. Error bars indicate 2 standard errors. L2 regularization was used for each task, and tuned for each problem and method separately also on a power-of-2 grid. We found that the optimal amount of regularization was not sensitive to the optimization method used. Results on held-out test data are shown in Figure~\ref{fig:deep_learning}; training loss plots can be found in the appendix in Figure~\ref{fig:deep_learning_train}.
 
Both \TAPS and \MOTAPS show favorable results compared to \SP on all three problems. On the computer vision datasets, neither method quite reaches the generalization performance of SGD with a highly tuned step-wise learning rate schedule (95.2\% for CIFAR10, 95.9\% on SVHN). On the IWSLT14 problem, both \TAPS and \MOTAPS out-perform Adam~\cite{kingma2014adam} which achieved a 
$2.69$ test loss and is the gold-standard for this task. 

\begin{figure}
\centering
\includegraphics[width=4.0cm]{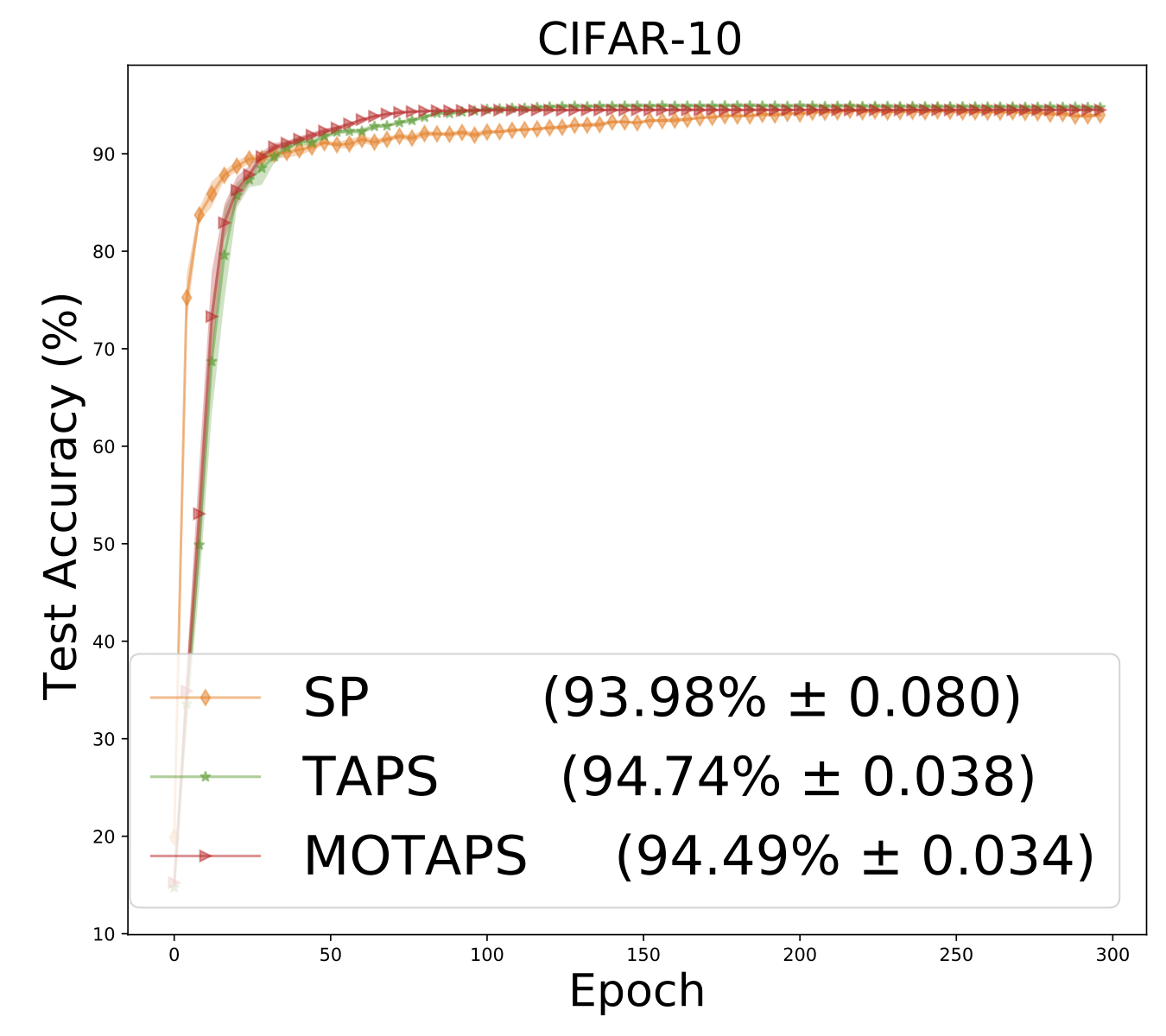}\hspace{0.5cm}
\includegraphics[width=4.0cm]{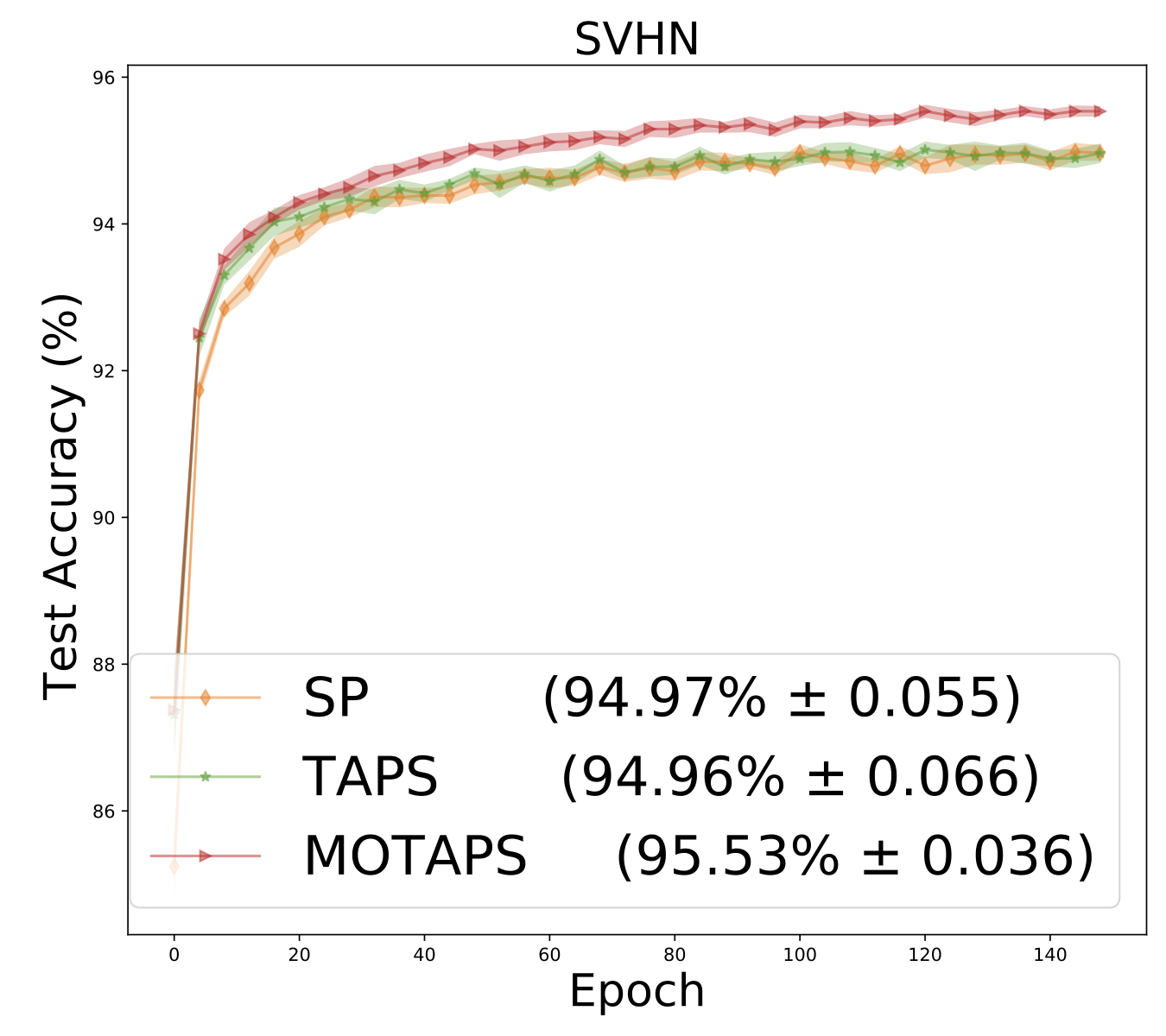}\hspace{0.5cm}
\includegraphics[width=4.0cm]{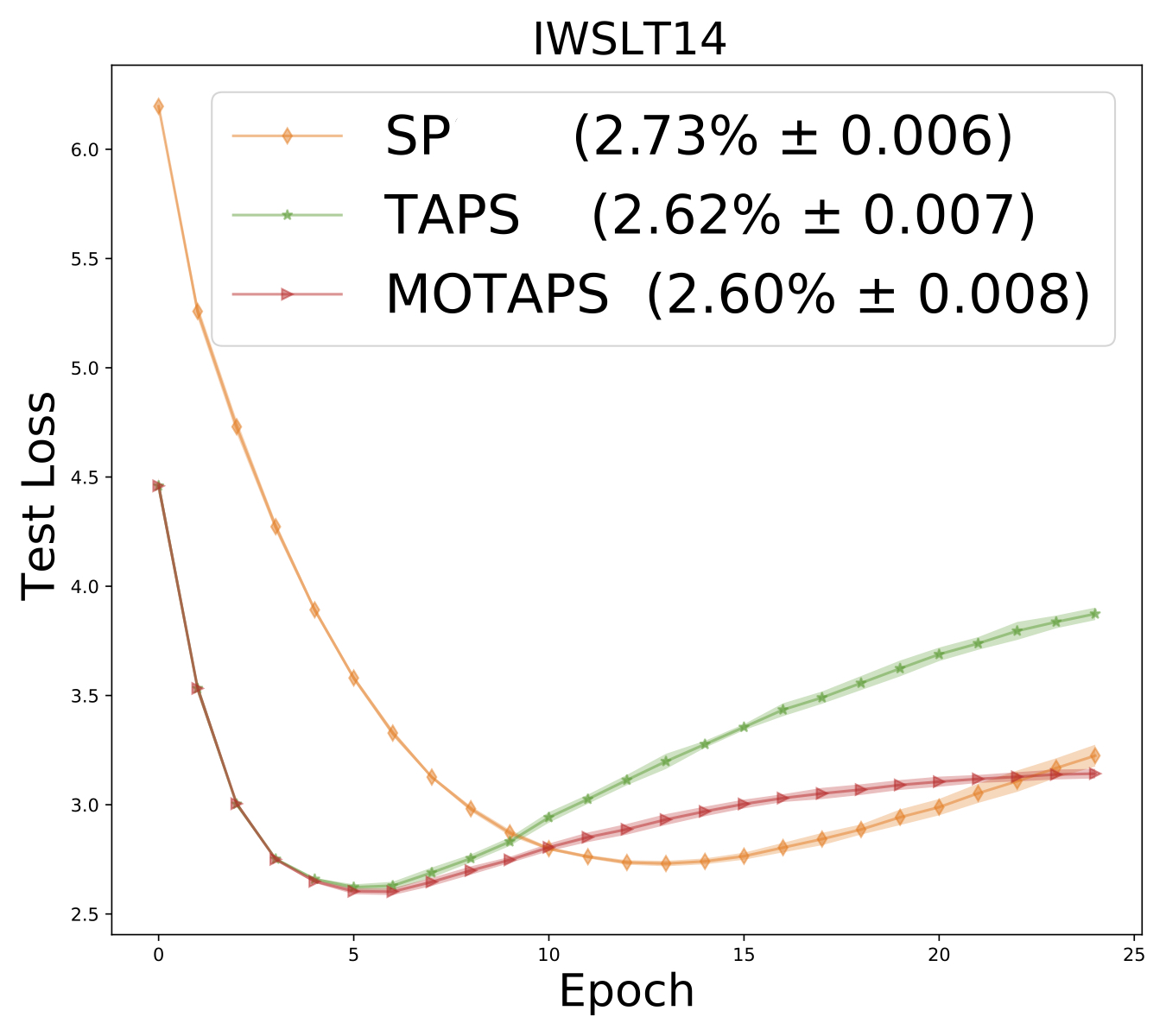}
\caption{Deep learning experiments}
\label{fig:deep_learning} 
\end{figure}

\subsection*{Acknowledgments}
Omitted for review.

\renewcommand*{\bibfont}{}
{ 
\printbibliography
}

\appendix 

\part*{Appendix}
The Appendix is organized as follows: In Section~\ref{sec:auxlemma}, we give some additional lemmas used to establish the closed form update of the methods. In Section~\ref{sec:missingproof} we present the proofs of the lemmas and  theorem  presented in the main paper. 
In Sections~\ref{sec:SPStheory},~\ref{sec:convTAPS} and~\ref{sec:convMOTAPS} we discuss the consequences of Theorem~\ref{theo:onlinesgdstar} to the \SP, \TAPS and \MOTAPS method, respectively.  
In Section~\ref{sec:SPSalternativegamma1theory} we provide another convergence theorem for \SP that shows that 
for smooth, star-convex loss functions the \SP method convergence with a step size $\gamma= 1.$
In Section~\ref{sec:theoryt-smooth} we present another general convergence theorem based on a time dependent smoothness assumption. 
Finally, in Section~\ref{asec:convex} and~\ref{asec:deep} we give further details on our implementations of the methods and the numerical experiments.

\tableofcontents

\section{Comparing \SP to the method given in~\cite{oberman2019stochastic,SPS}}
\label{sec:SPtoSPS}
The \SP method given in~\eqref{eq:SPS} is closely related to the method given in~\cite{oberman2019stochastic,SPS} which is
\begin{equation}\label{eq:SPSoptval}
w^{t+1} = w^t - \frac{f_i(w^t)-f_i^*}{\norm{\nabla f_i(w^t)}^2} \nabla f_i(w^t),
\end{equation}
where  $f_i^* \eqdef \inf_{w} f_i(w)$ for $i=1, \ldots, n.$ Note that the only difference between~\eqref{eq:SPSoptval} and the \SP method~\eqref{eq:SPS} is that $f_i(w^*)$ has been replaced by $f_i^*.$ If the Interpolation Assumption~\ref{ass:interpolate} holds then $f_i^* = f_i(w^*)$ and the two methods are equal. Outside of the interpolation regime, these two methods are not necessarily the same.

In terms of convergence theory, the difference is only cosmetic, since the method~\eqref{eq:SPSoptval}  only converges when $f_i^* = f_i(w^*)$, 
 that is, when the two methods are equal. Indeed, let 
\[\sigma \eqdef \frac{1}{n} \sum_{i=1}^n (f_i(w^*) - f_i^*).\]
Note that $\sigma \geq 0$ by the definition of $f_i^*.$
According to Theorems 3.1 and 3.4 in~\cite{SPS} the method~\eqref{eq:SPSoptval} converges to 
a neighborhood of the solution with a diameter that depends on $\sigma.$ Thus~\eqref{eq:SPSoptval} converges to the solution when $\sigma =0.$ This  only happens when the interpolation Assumption~\ref{ass:interpolate} holds.
Putting convergence aside, the  method~\eqref{eq:SPSoptval} has the advantage that for many machine learning  $f_i^*$ is known. 
   This is in contrast to the \SP method~\eqref{eq:SPS}, where $f_i(w^*)$ is not known for most applications. In our experiments, we set $f_i(w^*) =0$, and thus the two methods are equivalent.


\section{Auxiliary Lemmas}
\label{sec:auxlemma}
\begin{lemma} \label{lem:updateTasps}
The solution to
\begin{eqnarray}
w^{t+1}, \alpha_i^{t+1} & = & \underset{w\in\R^d, \alpha_i \in \R}{\argmin} \norm{w-w^t}^2+\norm{\alpha_i-\alpha_i^t}^2 \nonumber \\
&&\mbox{subject to }f_i(w^t) + \dotprod{\nabla f_i(w^t), w-w^t} =\alpha_i \label{eq:newtraphalphaapp}
\end{eqnarray}
is given by
\begin{eqnarray}
\alpha^{t+1}_i & =& \alpha^t_i +\frac{f_i(w^t) -\alpha^t_i}{\norm{\nabla f_i(w^t)}^2+1}  \nonumber \nonumber \\
w^{t+1} &= & w^t -\frac{f_i(w^t) -\alpha^t_i}{\norm{\nabla f_i(w^t)}^2+1} \nabla f_i(w^t).
 \label{eq:TaSPS1app}
\end{eqnarray}
\end{lemma}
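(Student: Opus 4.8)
The plan is to solve the equality-constrained quadratic minimization directly using a Lagrange multiplier (or, equivalently, the orthogonal-projection formula onto an affine hyperplane). First I would introduce the stacked variable $u = (w,\alpha_i) \in \R^{d+1}$ and the stacked reference $u^t = (w^t,\alpha_i^t)$, so that the objective is simply $\norm{u-u^t}^2$, and the single linear constraint reads $\dotprod{a, u} = b$ where $a = (\nabla f_i(w^t), -1) \in \R^{d+1}$ and $b = \dotprod{\nabla f_i(w^t), w^t} - f_i(w^t)$. The problem is then the projection of $u^t$ onto the hyperplane $\{u : \dotprod{a,u} = b\}$, whose well-known closed form is
\begin{equation*}
u^{t+1} = u^t - \frac{\dotprod{a,u^t} - b}{\norm{a}^2}\, a.
\end{equation*}

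Next I would simply substitute back. We have $\norm{a}^2 = \norm{\nabla f_i(w^t)}^2 + 1$, and the residual $\dotprod{a,u^t} - b = \dotprod{\nabla f_i(w^t),w^t} - \alpha_i^t - \dotprod{\nabla f_i(w^t),w^t} + f_i(w^t) = f_i(w^t) - \alpha_i^t$. Plugging these in and reading off the two blocks of $a = (\nabla f_i(w^t), -1)$ gives
\begin{align*}
w^{t+1} &= w^t - \frac{f_i(w^t) - \alpha_i^t}{\norm{\nabla f_i(w^t)}^2 + 1}\,\nabla f_i(w^t),\\
\alpha_i^{t+1} &= \alpha_i^t + \frac{f_i(w^t) - \alpha_i^t}{\norm{\nabla f_i(w^t)}^2 + 1},
\end{align*}
which is exactly~\eqref{eq:TaSPS1app}. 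Alternatively, the same conclusion follows by writing the Lagrangian $\norm{w-w^t}^2 + \norm{\alpha_i-\alpha_i^t}^2 + \nu(f_i(w^t) + \dotprod{\nabla f_i(w^t),w-w^t} - \alpha_i)$, setting the gradients in $w$ and $\alpha_i$ to zero to get $w = w^t - \tfrac{\nu}{2}\nabla f_i(w^t)$ and $\alpha_i = \alpha_i^t + \tfrac{\nu}{2}$, and then solving the scalar equation obtained by inserting these into the constraint for $\nu$.

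There is essentially no hard step here: the only thing to be careful about is the degenerate case $\nabla f_i(w^t) = 0$, but then $\norm{a}^2 = 1 \neq 0$ so the formula is still well defined (and reduces to $w^{t+1} = w^t$, $\alpha_i^{t+1} = f_i(w^t)$), so no pseudoinverse convention is needed for this lemma. I would also note in passing that the objective is strictly convex and the feasible set is a nonempty affine subspace, so the minimizer exists and is unique, which justifies that the stationary point found is the solution.
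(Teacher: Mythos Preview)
Your proposal is correct and follows essentially the same approach as the paper: stack $(w,\alpha_i)$ into a single vector, recognize the problem as projection onto an affine hyperplane, apply the closed-form projection formula, and unpack. The paper isolates the projection formula as a separate auxiliary lemma (Lemma~\ref{lem:proj1eq}, justified there via the pseudoinverse) rather than citing it as well known or deriving it by Lagrange multipliers, but the structure of the argument is otherwise identical.
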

\begin{proof}
Introducing the variable $x = [w,\, \alpha_i] \in \R^{d+1}$ we can re-write~\eqref{eq:newtraphalphaapp} as
\begin{eqnarray}
x^{t+1} & = &\argmin_x \norm{x-x^t}^2 \nonumber \\
&&\mbox{subject to }
 \begin{bmatrix}
\nabla f_i(w^t) \\ -1
\end{bmatrix}^\top
x
= -f_i(w^t)+\dotprod{\nabla f_i(w^t),w^t}.
 \label{eq:newtraphalphaapp}
\end{eqnarray}
Using Lemma~\ref{lem:proj1eq} (just below) we have that the solution to the above is given by
\begin{eqnarray*}
x^{t+1} & =& x^t + \begin{bmatrix}
\nabla f_i(w^t) \\ -1
\end{bmatrix}\frac{1}{\norm{\nabla f_i(w^t) }^2+1}(-f_i(w^t)+\dotprod{\nabla f_i(w^t),w^t}-  
(\dotprod{\nabla f_i(w^t),w^t} - \alpha_i^t)
) 
 \end{eqnarray*} 
Substituting out $x = [w,\, \alpha_i]$ and simplifying we have 
\begin{eqnarray*}
\begin{bmatrix}
w^{t+1} \\
\alpha_i^{t+1}
\end{bmatrix} & =& 
\begin{bmatrix}
w^{t} \\
\alpha_i^{t}
\end{bmatrix} 
+\begin{bmatrix}
\nabla f_i(w^t) \\ -1
\end{bmatrix}\frac{\alpha_i^t-f_i(w^t)}{\norm{\nabla f_i(w^t) }^2+1} ,
\end{eqnarray*} 
which is equal to~\eqref{eq:TaSPS1app}.
\end{proof}

\begin{lemma}\label{lem:proj1eq}
The solution to
\begin{eqnarray}
x^+ & = & \argmin_{x\in\R^d} \norm{x-x^0}^2 \nonumber \\
&&\mbox{subject to }a^\top x = b \label{eq:lemmapseudo}
\end{eqnarray}
is given by
\begin{equation}\label{eq:no9z8h4sdsd}
x^+ \; =\; x^0 + \frac{a}{\norm{a}^2}(b- a^\top x^0) 
\end{equation}
\end{lemma}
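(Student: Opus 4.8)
The plan is to verify directly that the proposed point $x^+$ is feasible and then invoke a Pythagorean (orthogonal decomposition) argument to conclude that it is the unique minimizer. This route avoids any appeal to first-order optimality conditions and, as a bonus, deals transparently with the degenerate case $a=0$ under the pseudoinverse convention used elsewhere in the paper.

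First I would check feasibility of $x^+$ as given in~\eqref{eq:no9z8h4sdsd}. Substituting into the constraint,
\[ a^\top x^+ \;=\; a^\top x^0 + \frac{a^\top a}{\norm{a}^2}\,(b - a^\top x^0) \;=\; b, \]
using $a^\top a = \norm{a}^2$. When $\norm{a} = 0$ the pseudoinverse convention gives $\norm{a}^{-2} = 0$, so $x^+ = x^0$; feasibility of the problem then already requires $b = a^\top x^0 = 0$, and conversely if $a = 0$ but $b \neq 0$ the constraint set is empty and there is nothing to prove. So from here on we may assume $x^+$ is feasible.

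Next, for an arbitrary feasible $x$ (i.e. $a^\top x = b$) I would write the orthogonal split $x - x^0 = (x - x^+) + (x^+ - x^0)$. The second term $x^+ - x^0 = \tfrac{b - a^\top x^0}{\norm{a}^2}\,a$ is a scalar multiple of $a$, while the first term satisfies $a^\top (x - x^+) = b - b = 0$; hence $\dotprod{x - x^+,\, x^+ - x^0} = 0$. The Pythagorean identity then gives
\[ \norm{x - x^0}^2 \;=\; \norm{x - x^+}^2 + \norm{x^+ - x^0}^2 \;\ge\; \norm{x^+ - x^0}^2, \]
with equality if and only if $x = x^+$. This exhibits $x^+$ as the unique minimizer and finishes the proof.

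Honestly there is no real obstacle here: the computation is routine and the only point requiring a moment's care is the bookkeeping for the $a = 0$ case under the pseudoinverse convention. For completeness I note the equally short Lagrangian alternative: stationarity of $\norm{x - x^0}^2 + \mu(a^\top x - b)$ yields $x = x^0 - \tfrac{\mu}{2} a$, substituting into the constraint pins $\tfrac{\mu}{2} = (a^\top x^0 - b)/\norm{a}^2$, and strict convexity of the objective over the affine feasible set certifies that this stationary point is the global minimum, recovering~\eqref{eq:no9z8h4sdsd}.
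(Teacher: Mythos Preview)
Your proof is correct. The approach, however, differs from the paper's: the paper substitutes $z = x - x^0$ to reduce to a least-norm problem $\min \norm{z}^2$ subject to $a^\top z = b - a^\top x^0$, invokes the fact that the least-norm solution is $z^+ = (a^\top)^+ (b - a^\top x^0)$, and then verifies directly from the Moore--Penrose axioms that $(a^\top)^+ = a/\norm{a}^2$. Your route is more self-contained and geometric: you check feasibility of the candidate and then use the Pythagorean decomposition along and orthogonal to $a$, which avoids any appeal to pseudoinverse machinery and makes the degenerate case $a=0$ explicit. The paper's argument is terser if one takes the pseudoinverse least-norm property as known and scales more transparently to matrix constraints; yours is arguably cleaner for this scalar-constraint setting and requires no background facts.
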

\begin{proof}
Substitute $z = x-x^0$ and consider the resulting problem
\begin{eqnarray}
z^+ & = &\argmin_{z\in\R^d} \norm{z}^2 \nonumber \\
&&\mbox{subject to }a^\top z = b -a^\top x^0\label{eq:lemmapseudoz}
\end{eqnarray}
One of the properties of the pseudo-inverse is that the least norm solution to the linear equation in~\eqref{eq:lemmapseudoz} is given by
\begin{equation}\label{eq:no9z8h4}
z^+  = a^{+\top}  (b -a^\top x^0),
\end{equation}
where $a^{+\top}$ is the pseudo-inverse of $a^\top$. It is now easy to show that $a^{+\top}  = \frac{a}{\norm{a}^2}$ is the pseudo-inverse \footnote{This follows by the definition of pseudo-inverse since $a^{+\top} a^{\top} a^{+\top} = a^{+\top}$, $a^\top a^{+\top} a^\top = a^\top$ and both $a^\top a^{\top +} $  and $ a^{\top +} a^\top$  are symmetric. } of $a$. Substituting back $x$ and the definition of $a^{+\top}$ in ~\eqref{eq:no9z8h4} gives~\eqref{eq:no9z8h4sdsd}.
\end{proof}

\subsection{Linear Algebra}

\begin{lemma}\label{lem:matrixblockbnd}
For any matrices $A,B, $ and $C$ of appropriate dimensions we have that
\begin{equation}
\norm{\begin{bmatrix}
A & C \\
C^\top & B
\end{bmatrix} } \; \leq \; \norm{A} + 2\norm{C} + \norm{D}
\end{equation}
\end{lemma}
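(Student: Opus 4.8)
The plan is to bound the operator norm of the symmetric block matrix
$M = \begin{bmatrix} A & C \\ C^\top & B \end{bmatrix}$
by splitting it into three pieces whose norms we can control individually, then invoking the triangle inequality for the operator norm. Specifically, I would write
\[
M \; = \; \begin{bmatrix} A & 0 \\ 0 & 0 \end{bmatrix} + \begin{bmatrix} 0 & C \\ C^\top & 0 \end{bmatrix} + \begin{bmatrix} 0 & 0 \\ 0 & B \end{bmatrix},
\]
so that $\norm{M} \le \norm{\begin{bmatrix} A & 0 \\ 0 & 0 \end{bmatrix}} + \norm{\begin{bmatrix} 0 & C \\ C^\top & 0 \end{bmatrix}} + \norm{\begin{bmatrix} 0 & 0 \\ 0 & B \end{bmatrix}}$.

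The first step is to observe that embedding a matrix into a larger block-diagonal matrix padded with zeros does not change its operator norm: for any vector $x = (x_1, x_2)$ partitioned conformally, $\norm{\begin{bmatrix} A & 0 \\ 0 & 0\end{bmatrix} x}^2 = \norm{A x_1}^2 \le \norm{A}^2 \norm{x_1}^2 \le \norm{A}^2 \norm{x}^2$, with equality approached by taking $x_2 = 0$ and $x_1$ a maximizing vector for $A$. Hence $\norm{\begin{bmatrix} A & 0 \\ 0 & 0\end{bmatrix}} = \norm{A}$, and similarly $\norm{\begin{bmatrix} 0 & 0 \\ 0 & B\end{bmatrix}} = \norm{B}$. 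The second step is the genuinely substantive one: bounding the anti-diagonal block $N = \begin{bmatrix} 0 & C \\ C^\top & 0 \end{bmatrix}$. Here I would compute $N^2 = \begin{bmatrix} C C^\top & 0 \\ 0 & C^\top C \end{bmatrix}$, which is block-diagonal, so $\norm{N^2} = \max\{\norm{C C^\top}, \norm{C^\top C}\} = \norm{C}^2$; since $N$ is symmetric, $\norm{N} = \norm{N^2}^{1/2} = \norm{C}$. Putting the three bounds together gives $\norm{M} \le \norm{A} + \norm{C} + \norm{B}$, which is in fact slightly stronger than the claimed inequality (which has $2\norm{C}$); the stated bound follows a fortiori. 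Note that the statement as written also contains a harmless typo — the $\norm{D}$ on the right should be $\norm{B}$ — but the argument clearly yields the intended bound.

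The main obstacle, such as it is, is handling the anti-diagonal block $N$ cleanly; the slickest route is the $N^2$ computation together with the fact that for a symmetric (or more generally normal) matrix the operator norm equals the square root of the operator norm of its square. An alternative, more elementary route avoids this by bounding directly: for $x = (x_1,x_2)$, $\dotprod{N x, x} = 2\,\dotprod{C x_2, x_1} \le 2\norm{C}\norm{x_1}\norm{x_2} \le \norm{C}(\norm{x_1}^2 + \norm{x_2}^2) = \norm{C}\norm{x}^2$ by AM–GM, and since $N$ is symmetric its operator norm equals $\sup_{\norm{x}=1}|\dotprod{Nx,x}| \le \norm{C}$; this elementary version already explains where the harmless factor of $2$ in the statement comes from. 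Either way the proof is short. I would likely present the direct block-decomposition plus the elementary quadratic-form bound, since it uses nothing beyond the triangle inequality and Cauchy–Schwarz and makes the constant transparent.
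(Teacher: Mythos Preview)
Your proof is correct and in fact yields the sharper bound $\norm{A}+\norm{C}+\norm{B}$, from which the stated inequality follows. The approach, however, differs from the paper's. You decompose the block matrix additively into three pieces and apply the triangle inequality at the matrix level, then compute the norm of each block separately (the anti-diagonal block via $N^2=\diag{CC^\top,\,C^\top C}$ or, alternatively, via the Rayleigh quotient). The paper instead works pointwise: it fixes a unit vector $[v;\,w]$, expands $\norm{M[v;\,w]}$, uses $\sqrt{a+b}\le\sqrt{a}+\sqrt{b}$ and the vector triangle inequality to peel off each term, and bounds $\norm{v},\norm{w}\le 1$; this is where the extra factor of $2$ on $\norm{C}$ appears, since $Cw$ and $C^\top v$ are bounded separately. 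Your route is a bit more structural and gives the tight constant, while the paper's is slightly more elementary (no appeal to $\norm{N}^2=\norm{N^2}$ for symmetric $N$) at the cost of a looser constant. Your observation about the typo ($\norm{D}$ should read $\norm{B}$) is also correct.
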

\begin{proof}
Let $[v w]$ we a vector of unit norm. It follows that
\begin{eqnarray*}
\norm{\begin{bmatrix}
A & C \\
C^\top & B
\end{bmatrix}
\begin{bmatrix}
v \\w
\end{bmatrix}
 }
 & =&\sqrt{ \norm{Av + Cw}^2 + \norm{C^\top v+ Bw}^2} \\
 & \leq & \norm{Av + Cw} +  \norm{C^\top v+ Bw} \\
 & \leq & \norm{Av} + \norm{Cw}+ \norm{C^\top v} + \norm{Bw} \\
 & \leq &  \norm{A}\norm{v} + \norm{C}\norm{w}+ \norm{C}\norm{ v} + \norm{B}\norm{w} \\
 & \leq &  \norm{A} + 2\norm{C} + \norm{D},
\end{eqnarray*}
where in the first inequality we used that, for any $a,b>0$ we have that $\sqrt{a+b} \leq \sqrt{a} + \sqrt{b}$, and in the last inequality we used that $\norm{w},\norm{v} \; \leq  \;\norm{[w \; v]} = 1.$
\end{proof}

\section{Missing Proofs}
\label{sec:missingproof}

Here we present the missing proofs from the main text.

\subsection{Proof of Lemma~\ref{lem:reformsgdtasps} }

First note that for the function in~\eqref{eq:fi} we have that
\begin{align}
&\nabla_w h_{i,t}(w, \alpha)    \;=\; \frac{f_i(w) -\alpha_i}{\norm{\nabla f_i(w^t)}^2+1} \nabla f_i(w),\quad \quad 
\nabla_{\alpha_i} h_{i,t}(w, \alpha)   \; =\;  - \frac{f_i(w) -\alpha_i}{\norm{\nabla f_i(w^t)}^2+1}, \label{eq:gradsTAPS1}\\
&\mbox{and} \quad \nabla_{\alpha_i}h_{n+1,t}(w,\alpha)   \; =\;  (\overline{\alpha} - \tau) .\label{eq:gradsTAPS}
\end{align}
\begin{proof}
The stationarity conditions of~\eqref{eq:proxyobjalpha} are given by setting the gradients to zero, which from~\eqref{eq:gradsTAPS} we have that
\begin{eqnarray}
\nabla_w h_t(w,\alpha) & =&  0  \nonumber\\
\nabla_{\alpha_i} h_t(w,\alpha) & =&  0  , \quad \mbox{for }i=1,\ldots, n \nonumber\\
& \Updownarrow & \nonumber \\
\frac{1}{n+1} \sum_{i=1}^n \frac{f_i(w) -\alpha_i}{\norm{\nabla f_i(w^t)}^2+1} \nabla f_i(w) & =& 0\label{eq:tempoi8j48sj4} \\
 \frac{f_i(w) -\alpha_i}{\norm{\nabla f_i(w^t)}^2+1} &= &  (\overline{\alpha} - \tau) , \quad \mbox{for }i=1,\ldots, n.\label{eq:tempoi8j48sj423}
\end{eqnarray}
 If $\overline{\alpha} = \tau$ then from~\eqref{eq:tempoi8j48sj423}  we have that $f_i(w) =\alpha_i$ for all $i$, and thus from Assumption~\ref{ass:target} we have that $w$ must be a minimizer of~\eqref{eq:main}, and thus a stationary point.

  On the other hand, if $\overline{\alpha} \neq \tau,$ then by substituting~\eqref{eq:tempoi8j48sj423} into~\eqref{eq:tempoi8j48sj4} gives
\begin{equation}
 \frac{1}{n+1} \sum_{i=1}^n (\overline{\alpha} - \tau)\nabla f_i(w)  = \frac{n(\overline{\alpha} - \tau)}{n+1} \left(\frac{1}{n} \sum_{i=1}^n \nabla f_i(w) \right) =0.
\end{equation}
Consequently since $\overline{\alpha} \neq \tau,$ we have  $\frac{1}{n} \sum_{i=1}^n \nabla f_i(w)  =0$ and thus $w$ is a stationary point of~\eqref{eq:main}.

Finally,  if $(w^*,\alpha^*)$ is a minimizer of~\eqref{eq:proxyobjalpha} then by Assumption~\ref{ass:target}  necessarily $h_t(w^*,\alpha^*) =0.$ Thus  $f_i(w^*) =\alpha_i^*$ and  $\overline{\alpha}^* = \tau.$ Thus again by Assumption~\ref{ass:target} we have that  $w^*$ must be a minimizer of~\eqref{eq:main}.
\end{proof}

\subsection{Proof of Lemma~\ref{lem:TAPSweakgrowth}}

\begin{proof}
First note that
\begin{eqnarray}
\norm{\nabla_w h_{i,t}(w^t,\alpha) }^2
& \overset{\eqref{eq:gradsTAPS}}{=}&
\left(\frac{f_{i}(w^t)-\alpha_i}{\norm{\nabla f_{i}(w^t)}^2 +1 }\right)^2\norm{\nabla f_{i}(w^t)}^2 . \label{eq:so8jos9j94}
\end{eqnarray}
Furthermore
\begin{eqnarray}
\norm{\nabla_{\alpha_i} h_{i,t}(w^t,\alpha) }^2
 & \overset{\eqref{eq:gradsTAPS}}{=}&
\left(\frac{f_{i}(w^t)-\alpha_i}{\norm{\nabla f_{i}(w^t)}^2 +1 }\right)^2. \label{eq:sjo40s9j4s4}
\end{eqnarray}
Consequently adding~\eqref{eq:so8jos9j94} and~\eqref{eq:sjo40s9j4s4} gives 
\begin{eqnarray}
\norm{\nabla h_{i,t}(w^t,\alpha) }^2 & =& \norm{\nabla_w h_{i,t}(w^t,\alpha) }^2 +\norm{\nabla_{\alpha_i} h_{i,t}(w^t,\alpha) }^2 \nonumber\\ 
 & \overset{\eqref{eq:so8jos9j94}+\eqref{eq:sjo40s9j4s4}}{=}&
\left(\frac{f_{i}(w^t)-\alpha_i}{\norm{\nabla f_{i}(w^t)}^2 +1 }\right)^2\left(\norm{\nabla f_{i}(w^t)}^2 +1\right)\nonumber \\
& =& \frac{(f_{i}(w^t)-\alpha_i)^2}{\norm{\nabla f_{i}(w^t)}^2 +1 } \; \overset{\eqref{eq:fi}}{=} 2h_{i,t}(w^t,\alpha). \label{eq:tempalooo8õao}
\end{eqnarray}
Furthermore
\begin{eqnarray}
\norm{\nabla h_{n+1,t}(w,\alpha) }^2
& \overset{\eqref{eq:gradsTAPS}}{=}&
\sum_{i=1}^n\left(\overline{\alpha} - \tau\right)^2 \; \overset{\eqref{eq:fi}}{=} \; 2h_{n+1,t}(w,\alpha).  \label{eq:tempanoena9a}
\end{eqnarray}
Consequently
\begin{eqnarray}
\frac{1}{n+1}\sum_{i=1}^{n+1}\norm{\nabla h_{i,t}(w^t,\alpha) }^2
& \overset{\eqref{eq:tempanoena9a}+\eqref{eq:tempalooo8õao} }{=}&  \frac{1}{n+1}\sum_{i=1}^{n+1} 2h_{i,t}(w^t,\alpha)  \; =
\; 2h_{t}(w^t,\alpha).  \nonumber
\end{eqnarray}
%
\end{proof}

\subsection{Proof of Lemma~\ref{lem:movtargequiv}}
\begin{lemma} \label{lem:movtargequivapp}
Let
\begin{equation} \label{eq:alphastarandtaustar}
\alpha_i^* \eqdef f_i(w^*) \quad \mbox{and} \quad \tau^* = f(w^*), \quad \mbox{for }i=1,\ldots, n.
\end{equation}
It follows that
\begin{align}
h_t(w^*,\alpha^*,\tau^*) 
& =\; \frac{\lambda f(w^*)^2}{2(n+1)}. \label{eq:htstarapp}
\end{align}
Furthermore, 
every stationary point of~\eqref{eq:proxyobjalphatau} is a stationary point of~\eqref{eq:main}. Finally if $f(w) \geq 0$ and $(w^*,\hat{\alpha},\hat{\tau})$ is a minima  of~\eqref{eq:proxyobjalphatau} then $w^*$ is a minima of ~\eqref{eq:main}.
\end{lemma}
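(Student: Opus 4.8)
The plan is to proceed in three stages, one for each of the three claims. First I would verify the value of $h_t$ at $(w^*,\alpha^*,\tau^*)$ by direct substitution: with $\alpha_i^* = f_i(w^*)$ each of the first $n$ terms of~\eqref{eq:proxyobjalphatau} vanishes since $f_i(w^*)-\alpha_i^* = 0$, and with $\overline{\alpha}^* = \frac{1}{n}\sum_i f_i(w^*) = f(w^*) = \tau^*$ the middle term $\frac{1-\lambda}{2}n(\overline{\alpha}^*-\tau^*)^2$ also vanishes, leaving only $\frac{1}{n+1}\cdot\frac{\lambda}{2}(\tau^*)^2 = \frac{\lambda f(w^*)^2}{2(n+1)}$, which is~\eqref{eq:htstarmain}. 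This step is routine.

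Second, for the stationarity claim I would compute the three blocks of the gradient of $h_t$: $\nabla_w h_t$, $\nabla_{\alpha_i} h_t$ (for $i=1,\dots,n$), and $\nabla_\tau h_t$, reusing the computations in~\eqref{eq:gradsTAPS1}--\eqref{eq:gradsTAPS} and~\eqref{eq:fn1tau}. Setting $\nabla_\tau h_t = (1-\lambda)n(\tau-\overline{\alpha}) + \lambda\tau = 0$ and setting each $\nabla_{\alpha_i}h_t = -(1-\lambda)\frac{f_i(w)-\alpha_i}{\norm{\nabla f_i(w^t)}^2+1} + (1-\lambda)(\overline{\alpha}-\tau) = 0$, I would mimic the argument in the proof of Lemma~\ref{lem:reformsgdtasps}: either $\overline{\alpha} = \tau$, which forces $f_i(w) = \alpha_i$ for all $i$ and (via the $\tau$-equation) $\lambda\tau = 0$, hence $\tau = 0$ and thus $\overline{\alpha} = \tau = 0$; or $\overline{\alpha}\neq\tau$, in which case substituting the common value $(\overline{\alpha}-\tau)$ into $\nabla_w h_t = \frac{1-\lambda}{n+1}\sum_i \frac{f_i(w)-\alpha_i}{\norm{\nabla f_i(w^t)}^2+1}\nabla f_i(w) = 0$ yields $\frac{n(\overline{\alpha}-\tau)}{n+1}\cdot\frac{1}{n}\sum_i\nabla f_i(w) = 0$, so $\frac{1}{n}\sum_i\nabla f_i(w) = \nabla f(w) = 0$. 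In both cases $\nabla f(w) = 0$ (in the first case because $f_i(w)=\alpha_i$ and $\overline{\alpha}=\tau$ reduce $\nabla_w h_t=0$ to $\nabla f(w)=0$ only if we additionally know something — actually in the first case $\nabla_w h_t = 0$ is automatic, so I would instead invoke that $f_i(w)=\alpha_i$ with $\overline{\alpha}=\tau=0$ does not by itself give $\nabla f(w)=0$; here one must be careful).

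This last point is where I expect the main obstacle: unlike Lemma~\ref{lem:reformsgdtasps}, we no longer have Assumption~\ref{ass:target} telling us that $\overline{\alpha}=\tau$ forces $w$ to minimize $f$. So in the branch $\overline{\alpha}=\tau$ I would need to argue separately. One route: the constraint $f_i(w)=\alpha_i$ for all $i$ together with $\tau=0$ and $\overline{\alpha}=\tau$ gives $\frac{1}{n}\sum_i f_i(w) = 0$, i.e. $f(w)=0$; if $f\ge 0$ this makes $w$ a global minimizer and hence stationary. For the general (not-necessarily-nonnegative) stationarity statement I would instead note that $\nabla_w h_t = \frac{1-\lambda}{n+1}\sum_i\frac{f_i(w)-\alpha_i}{\norm{\nabla f_i(w^t)}^2+1}\nabla f_i(w)$, and in the branch $f_i(w)=\alpha_i$ this is trivially zero — so stationarity of $h_t$ gives no information on $w$ in that branch. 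The cleanest fix is to observe that the two branches are: (a) $f_i(w)=\alpha_i\ \forall i$ and $\tau = 0$, forcing $f(w)=\overline{\alpha}=\tau=0$; (b) $\nabla f(w)=0$. Branch (a) with $f(w)=0$ makes $w$ a stationary point provided $0$ is the infimum, which under $f\ge 0$ it is; without $f\ge 0$ one restricts to the second conclusion. I would therefore state the stationarity conclusion exactly as in the lemma (stationary point of~\eqref{eq:main}, meaning $\nabla f(w)=0$) and check that branch (a) forces $f(w)=0$, and that a point with $f(w)=0$ in the nonnegative case has $\nabla f(w)=0$; in the general case I would handle branch (a) by noting $\sum_i\frac{f_i(w)-\alpha_i}{\cdot}\nabla f_i(w)=0$ is vacuous and instead derive $\nabla f(w)=0$ from the $\tau$-stationarity plus $\alpha$-stationarity chain as above. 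Finally, for the third claim, if $f\ge 0$ and $(w^*,\hat\alpha,\hat\tau)$ minimizes~\eqref{eq:proxyobjalphatau}, then comparing with the value $\frac{\lambda f(w^*)^2}{2(n+1)}$ attained at $(w^\circ,\alpha^\circ,\tau^\circ)$ for any true minimizer $w^\circ$ of $f$ — wait, that value depends on $w^\circ$; instead I would use that $h_t\ge \frac{1}{n+1}\cdot\frac{\lambda}{2}\tau^2\ge 0$ and that $h_t$ evaluated with $\tau$ chosen freely can be driven down, so at a minimizer every squared term is balanced; the honest argument is: at a minimizer the partial in $\tau$ vanishes, giving $\hat\tau = \frac{(1-\lambda)n}{\lambda+(1-\lambda)n}\overline{\hat\alpha}$, and similarly the $\alpha$-partials give $f_i(w^*)-\hat\alpha_i = (\norm{\nabla f_i(w^t)}^2+1)(\overline{\hat\alpha}-\hat\tau)$; combined with $\nabla_w h_t = 0$ as before one gets either $f(w^*)=0$ (done, since $f\ge0$) or $\nabla f(w^*)=0$, and minimality upgrades the latter stationary point to a minimizer because $h_t(w^*,\cdot,\cdot)\le h_t(w^\circ,\alpha^\circ,\tau^\circ)$ for a global minimizer $w^\circ$ and the right side is $\le$ the left side's lower bound only when $f(w^*)^2\le f(w^\circ)^2$, i.e. $w^*$ is also a minimizer of $f$. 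I would write up this last comparison carefully; it is short but the bookkeeping between the $\tau$-dependence of the optimal value and minimality of $w^*$ is the subtle part.
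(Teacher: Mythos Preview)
Your plan for the first two claims is essentially the paper's argument. For~\eqref{eq:htstarapp} the paper does the same direct substitution. For stationarity, the paper also writes out $\nabla_w h_t=0$, $\nabla_{\alpha_i}h_t=0$, $\nabla_\tau h_t=0$ and chains the substitutions; it does not split on $\overline\alpha=\tau$ versus $\overline\alpha\neq\tau$ but rather uses $\nabla_\tau h_t=0$ to write $\overline\alpha-\tau=\tfrac{\lambda}{(1-\lambda)n}\tau$, plugs this into the $\alpha_i$-equation to get $\tfrac{\lambda}{n}\tau=(1-\lambda)\tfrac{f_i(w)-\alpha_i}{\|\nabla f_i(w^t)\|^2+1}$, and then into $\nabla_w h_t=0$ to obtain $\tfrac{\lambda\tau}{n+1}\nabla f(w)=0$. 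Your case split is equivalent (your branch $\overline\alpha=\tau$ is exactly $\tau=0$), and you are right to flag the $\tau=0$ corner case; the paper glosses over it.

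For the third claim your comparison argument has a genuine gap. You propose bounding
\[
h_t(w^*,\hat\alpha,\hat\tau)\;\le\;h_t\bigl(w^\circ,\,f_i(w^\circ),\,f(w^\circ)\bigr)=\frac{\lambda f(w^\circ)^2}{2(n+1)}
\]
and then lower-bounding the left side by something proportional to $f(w^*)^2$. The problem is that $(f_i(w^\circ),f(w^\circ))$ is \emph{not} the partial minimizer of $h_t(w^\circ,\cdot,\cdot)$ in $(\alpha,\tau)$, so the right side is strictly larger than $\min_{\alpha,\tau}h_t(w^\circ,\alpha,\tau)$; the two sides of your inequality are proportional to $f(\cdot)^2$ with \emph{different} constants, and the inequality you get does not force $f(w^*)\le f(w^\circ)$. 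What the paper does instead is compute the partial minimum explicitly: using the $\tau$- and $\alpha$-stationarity conditions, it solves the resulting linear system in $\alpha$ via the Sherman--Morrison/Woodbury identity, substitutes back, and shows that for every fixed $w$,
\[
\min_{\alpha,\tau} h_t(w,\alpha,\tau)\;=\;C(w^t)\,f(w)^2,
\]
with a single constant $C(w^t)>0$ depending only on $w^t$ (and $n,\lambda$). Once this is established, minimizing $h_t$ over $(w,\alpha,\tau)$ is the same as minimizing $f(w)^2$ over $w$, which under $f\ge0$ is the same as minimizing $f$. This computation is the substantive step you are missing; your ``short bookkeeping'' would not close the argument without it.
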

\begin{proof}
Substituting~\eqref{eq:alphastarandtaustar} into~\eqref{eq:proxyobjalphatau} gives
\[h_t(w^*,\alpha^*,\tau^*) \; \eqdef\;  \; \frac{1}{n+1}\frac{\lambda}{2} (\tau^*)^2 \;= \; \frac{\lambda f(w^*)^2}{2(n+1)}  .\]

Each stationary point of~\eqref{eq:proxyobjalphatau} satisfies
\begin{align}
\nabla_w h_t(w,\alpha,\tau)  & \; =\; \frac{1-\lambda}{n+1}  \sum_{i=1}^n\frac{f_i(w) -\alpha_i}{\norm{\nabla f_i(w^t)}^2+1} \nabla f_i(w) =0,\label{eq:son9ojo9j9oxx} \\
\nabla_{\alpha_i} h_t(w,\alpha,\tau)  & \; =\;  \frac{1-\lambda}{n+1} \frac{\alpha_i-f_i(w) }{\norm{\nabla f_i(w^t)}^2+1} +\frac{1-\lambda}{n+1}(\overline{\alpha} - \tau)=0,\label{eq:so9ms9smsss} \\
\nabla_{\tau} h_t(w,\alpha,\tau) & \; =\;(1-\lambda) n(\tau - \overline{\alpha}) +\lambda \tau=0. \label{eq:dertaulast}
\end{align}
 From the last equation we have that 
 \begin{equation}\label{eq:tauopt} 
   \overline{\alpha}-\tau = \frac{\lambda}{(1-\lambda)n} \tau,
\end{equation}
 and consequently substituting out $\overline{\alpha}-\tau$  in~\eqref{eq:so9ms9smsss} by using~\eqref{eq:tauopt}  gives
\begin{align}
\nabla_{\alpha_i} h_t(w,\alpha,\tau)  & \; =\;  \frac{1-\lambda}{n+1} \frac{\alpha_i-f_i(w) }{\norm{\nabla f_i(w^t)}^2+1} +\frac{1}{n+1}\frac{\lambda}{n} \tau=0. \label{eq:isolatetaueq}
\end{align}
Passing the $\tau$ term to the other side gives
\begin{equation}\label{eq:alphabetaopt}
  \frac{\lambda}{n} \tau=(1-\lambda) \frac{f_i(w) -\alpha_i }{\norm{\nabla f_i(w^t)}^2+1}, \quad \mbox{for }i=1, \ldots, n. 
\end{equation}
This allows us to substitute in~\eqref{eq:son9ojo9j9oxx} giving
\begin{eqnarray}
\nabla_wh_t(w,\alpha,\tau)  &  = &\frac{\lambda \tau }{n+1} \left(\frac{1}{n}  \sum_{i=1}^n \nabla f_i(w)\right) =0.
\end{eqnarray}

From this we can conclude that if $(w, \alpha, \tau)$ is a stationary point of~\eqref{eq:proxyobjalphatau}, then $w$ is a stationary point of our original objective function.
Let $(w, \alpha, \tau)$ be a stationary point. It follows from~\eqref{eq:tauopt} that $\tau  =\frac{(1-\lambda)n}{(1-\lambda)n+\lambda} \overline{\alpha}$,  and thus after substituting into~\eqref{eq:proxyobjalphatau} gives
\[h_t(w,\alpha,\tau) \; \eqdef\;  \; \frac{1}{n+1} \left(\sum_{i=1}^n \frac{1-\lambda}{2}\frac{(f_i(w) -\alpha_i)^2}{\norm{\nabla f_i(w^t)}^2+1}  + \frac{n(1-\lambda)}{2}(\overline{\alpha} - \tau)^2+\frac{\lambda}{2} \tau^2 \right).\]

\begin{align}
  h_t(w,\alpha,\tau) &=   \; \frac{1}{n+1} \left(\sum_{i=1}^n \frac{1-\lambda}{2}\frac{(f_i(w) -\alpha_i)^2}{\norm{\nabla f_i(w^t)}^2+1}  + \frac{n(1-\lambda)}{2}\left( \frac{\lambda}{n(1-\lambda)+\lambda} \overline{\alpha} \right)^2+\frac{\lambda}{2} \frac{(1-\lambda)^2n^2}{(n(1-\lambda)+\lambda)^2} \overline{\alpha}^2 \right)\nonumber \\
  &=  \; \frac{1-\lambda}{n+1} \left(\sum_{i=1}^n \frac{1}{2}\frac{(f_i(w) -\alpha_i)^2}{\norm{\nabla f_i(w^t)}^2+1}  + \frac{1}{2}\frac{n\lambda}{n(1-\lambda)+\lambda}\overline{\alpha}^2 \right) \label{eq:htstationtemp}
\end{align}

Furthermore, $\tau  =\frac{(1-\lambda)n}{(1-\lambda)n+\lambda} \overline{\alpha}$ substituting into~\eqref{eq:isolatetaueq} and multiplying the result by $(n+1)$ gives
\begin{align*}
 \frac{\alpha_i-f_i(w) }{\norm{\nabla f_i(w^t)}^2+1} +\frac{\lambda}{n(1-\lambda)+\lambda} \overline{\alpha}=0, \quad \mbox{for }i=1,\ldots, n.
\end{align*}

This  can be re-arranged and  written more compactly as the linear system
\begin{equation}
\left(\mD^{-1}  +\lambda \frac{\ones \ones^\top}{n(n(1-\lambda)+\lambda)} \right)\alpha = \mD^{-1}F,
\end{equation}
where \begin{align*}
\mD  &\eqdef \diag{\norm{\nabla f_1(w^t)}^2+1, \ldots, \norm{\nabla f_n(w^t)}^2+1} \quad \mbox{and}\quad \\
F &= \left(f_1(w), \ldots, f_n(w)\right).
\end{align*} 
Using the Woodbury identity, the solution to the above is given by
\begin{align}
\alpha & = \left(\mD^{-1}  + \lambda\frac{\ones \ones^\top}{n(n(1-\lambda)+\lambda)} \right)^{-1} \mD^{-1}F, \\
& = \left(\mI -\mD \ones  \left(\frac{n(n(1-\lambda)+\lambda)}{\lambda} +  \ones^\top\mD\ones\right)^{-1}  \ones^{\top} \right)F\\
&=  \left(\mI - \lambda \frac{\mD \ones \ones^{\top}  }{n(n(1-\lambda)+2\lambda) +  \lambda\sum_{i=1}^n \norm{\nabla f_i(w^t)}^2} \right)F.
\end{align}
Which reading line by line gives
\begin{align}
\alpha _i & = f_i(w)-\lambda  \frac{\mD e_i \sum_{j=1} f_j(w)}{n(n(1-\lambda)+2\lambda ) +   \lambda\sum_{j=1}^n \norm{\nabla f_j(w^t)}^2} \nonumber \\
&= f_i(w)-\lambda   \frac{\left(  \norm{\nabla f_i(w^t)}^2 + 1\right) \sum_{j=1} f_j(w)}{n(n(1-\lambda)+2\lambda ) +   \lambda\sum_{j=1}^n \norm{\nabla f_j(w^t)}^2} .
\label{eq:alphaqefistationarymotaps}
\end{align}

Taking the average over $i$ in the above gives
\begin{align}
\overline{\alpha}
&=  f(w)-  \lambda  f(w) \frac{n+\sum_{j=1}^n \norm{\nabla f_j(w^t)}^2 }{n(n(1-\lambda)+2\lambda) + \lambda \sum_{j=1}^n \norm{\nabla f_j(w^t)}^2}  \nonumber \\
&=  f(w)\left( 1-  \lambda\frac{n+\sum_{j=1}^n \norm{\nabla f_j(w^t)}^2 }{n(n(1-\lambda)+2\lambda) + \lambda \sum_{j=1}^n \norm{\nabla f_j(w^t)}^2} \right)  \nonumber \\
&=   f(w) \frac{n(n(1-\lambda)+\lambda) }{n(n(1-\lambda)+2\lambda) + \lambda \sum_{j=1}^n \norm{\nabla f_j(w^t)}^2}  \label{eq:tesdfspmsop4so}
\end{align}

Substituting~\eqref{eq:alphaqefistationarymotaps} and~\eqref{eq:tesdfspmsop4so} into~\eqref{eq:htstationtemp} gives
\begin{align*}
  h_t(w,\alpha,\tau) \frac{n+1}{1-\lambda}
  &=  \; \sum_{i=1}^n \frac{\lambda^2}{2}\frac{ \left( \frac{\left(  \norm{\nabla f_i(w^t)}^2 + 1\right) \sum_{j=1} f_j(w)}{n(n(1-\lambda)+2\lambda ) + \lambda \sum_{j=1}^n \norm{\nabla f_j(w^t)}^2} \right)^2}{\norm{\nabla f_i(w^t)}^2+1}  + \frac{1}{2}\frac{n\lambda}{n(1-\lambda)+\lambda}\overline{\alpha}^2   \\
  & =  \; \sum_{i=1}^n \frac{\lambda^2n^2}{2} f(w)^2\frac{\norm{\nabla f_i(w^t)}^2 + 1}{\left(n(n(1-\lambda)+2\lambda) +  \lambda\sum_{j=1}^n \norm{\nabla f_j(w^t)}^2\right)^2}   + \frac{1}{2}\frac{n\lambda}{n(1-\lambda)+\lambda}\overline{\alpha}^2  \\
    & =  \; \sum_{i=1}^n \frac{\lambda^2n^2}{2}f(w)^2\frac{\norm{\nabla f_i(w^t)}^2 + 1}{\left(n(n(1-\lambda)+2\lambda) +  \lambda\sum_{j=1}^n \norm{\nabla f_j(w^t)}^2\right)^2}  \\
    & \qquad  + \frac{1}{2}\frac{n\lambda}{n(1-\lambda)+\lambda}\left( f(w) \frac{n(n(1-\lambda)+\lambda) }{n(n(1-\lambda)+2\lambda) + \lambda \sum_{j=1}^n \norm{\nabla f_j(w^t)}^2} \right)^2  \\
  & = \frac{\lambda}{2} f(w)^2\frac{ n^2 }{n(n(1-\lambda)+2\lambda)+ \lambda \sum_{j=1}^n \norm{\nabla f_j(w^t)}^2}  ,
\end{align*}
where in  first equality we used~\eqref{eq:alphaqefistationarymotaps} and in the third equality we used~\eqref{eq:tesdfspmsop4so}.
Since $w^t$ is fixed, and every minima of~\eqref{eq:proxyobjalphatau} is a stationary point, we have that the minima in $w$ of the above is given by 
\[w^{*} \in \arg\min f(w)^2 = \arg\min f(w) , \]
where we used the positivity of $f(w).$

%

\end{proof}

\subsection{Proof of Lemma~\ref{lem:MOSTAPSweakgrowthn1} }

Here we prove an extended version of Lemma~\ref{lem:MOSTAPSweakgrowthn1} with some additional intermediary results that make the lemma easier to follow.
\begin{lemma}\label{lem:MOSTAPSweakgrowthn1app}
Consider the functions  
\begin{equation}
h_{i,t}(w,\alpha, \tau) \eqdef \frac{1-\lambda}{2}\frac{(f_i(w) -\alpha_i)^2}{\norm{\nabla f_i(w^t)}^2+1},\quad \mbox{for }i=1,\ldots, n,
\end{equation}
and $h_{n+1,t}(w,\alpha,\tau)$ given in~\eqref{eq:fn1tau}.
It follows that $h_t(w,\alpha,\tau)$ defined in~\eqref{eq:proxyobjalphatau} is equivalent to
\begin{equation}\label{eq:hmotapsdecomp}
 h_{t}(w,\alpha,\tau)  =\frac{1}{n+1} \sum_{i=1}^n  h_{i,t}(w,\alpha,\tau)
\end{equation}
Furthermore,  if 
\begin{equation} \label{eq:lambdarestapp}
\lambda \leq \frac{2n+1}{2n+3} < 1
\end{equation}
 then
\begin{align}
\norm{\nabla h_{i,t}(w^t,\alpha,\tau)}^2 & = 2 h_{i,t}(w^t,\alpha,\tau), \label{eq:zdimziizdap} \\\
\norm{ \nabla h_{n+1,t}(w,\alpha,\tau)}^2 &\leq 2(1-\lambda)(2n+1)h_{n+1,t}(w,\alpha,\tau),  \label{eq:zdimziizd2ap} 
\end{align}
and consequently 
\begin{equation} \label{eq:MOSTAPSweakgrowthn1ap}
\frac{1}{n+1} \sum_{i=1}^{n+1}\norm{\nabla h_{i,t}(w^t,\alpha,\tau)}^2 \; \leq\;  2(1-\lambda)(2n+1) h_{t}(w^t,\alpha,\tau).
\end{equation}
\end{lemma}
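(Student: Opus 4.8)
The plan is to verify the three parts of the lemma in order. The decomposition \eqref{eq:hmotapsdecomp} is immediate: summing $h_{i,t}(w,\alpha,\tau)=\frac{1-\lambda}{2}\frac{(f_i(w)-\alpha_i)^2}{\norm{\nabla f_i(w^t)}^2+1}$ over $i=1,\dots,n$ and adding $h_{n+1,t}(w,\alpha,\tau)=\frac{(1-\lambda)n}{2}(\overline{\alpha}-\tau)^2+\frac{\lambda}{2}\tau^2$ reproduces, up to the prefactor $\frac1{n+1}$, exactly the bracketed expression in the definition \eqref{eq:proxyobjalphatau} of $h_t$. The bound \eqref{eq:zdimziizdap} for $i\le n$ is a one-line differentiation, just as in the proof of Lemma~\ref{lem:TAPSweakgrowth}: only the $w$- and $\alpha_i$-blocks of $\nabla h_{i,t}$ are nonzero, with $\nabla_w h_{i,t}=(1-\lambda)\frac{f_i(w)-\alpha_i}{\norm{\nabla f_i(w^t)}^2+1}\nabla f_i(w)$ and $\nabla_{\alpha_i}h_{i,t}=-(1-\lambda)\frac{f_i(w)-\alpha_i}{\norm{\nabla f_i(w^t)}^2+1}$; evaluating at $w=w^t$, summing squared norms, and noting that the numerator factor $\norm{\nabla f_i(w^t)}^2+1$ cancels one power of the (now constant) denominator, one gets $\norm{\nabla h_{i,t}(w^t,\alpha,\tau)}^2=2(1-\lambda)h_{i,t}(w^t,\alpha,\tau)\le 2h_{i,t}(w^t,\alpha,\tau)$, which since $2n+1\ge1$ is in particular $\le 2(1-\lambda)(2n+1)h_{i,t}(w^t,\alpha,\tau)$.

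The substantive step is \eqref{eq:zdimziizd2ap}. Since $h_{n+1,t}$ does not depend on $w$, writing $u\eqdef\overline{\alpha}-\tau$ we have $\nabla_{\alpha_j}h_{n+1,t}=(1-\lambda)u$ for each of the $n$ indices $j$ (because $\partial\overline{\alpha}/\partial\alpha_j=1/n$) and $\nabla_\tau h_{n+1,t}=-(1-\lambda)n\,u+\lambda\tau$, which is \eqref{eq:fn1tau}. Therefore $\norm{\nabla h_{n+1,t}}^2=n(1-\lambda)^2u^2+\bigl((1-\lambda)n\,u-\lambda\tau\bigr)^2$, whereas $2(1-\lambda)(2n+1)h_{n+1,t}=(1-\lambda)(2n+1)\bigl((1-\lambda)n\,u^2+\lambda\tau^2\bigr)$; subtracting the former from the latter and simplifying, the inequality to be shown is equivalent to
\[
0\;\le\;(1-\lambda)^2n^2u^2+2\lambda(1-\lambda)n\,u\tau+\lambda\bigl(2n+1-\lambda(2n+2)\bigr)\tau^2 .
\]
This quadratic form in $(u,\tau)$ has nonnegative $u^2$-coefficient, so it is positive semidefinite iff its $\tau^2$-coefficient is nonnegative and $\bigl(2\lambda(1-\lambda)n\bigr)^2\le 4(1-\lambda)^2n^2\cdot\lambda\bigl(2n+1-\lambda(2n+2)\bigr)$; dividing the latter by the positive quantity $4\lambda(1-\lambda)^2n^2$ reduces it to $\lambda\le 2n+1-\lambda(2n+2)$, i.e.\ $\lambda(2n+3)\le 2n+1$ — precisely hypothesis \eqref{eq:lambdarestapp}, under which the $\tau^2$-coefficient is also automatically nonnegative because $\frac{2n+1}{2n+3}<\frac{2n+1}{2n+2}$. (The endpoint cases $\lambda=0$ and $\lambda=1$ are immediate.) This gives \eqref{eq:zdimziizd2ap}.

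Finally, every one of the $n+1$ terms satisfies $\norm{\nabla h_{i,t}(w^t,\alpha,\tau)}^2\le 2(1-\lambda)(2n+1)h_{i,t}(w^t,\alpha,\tau)$, so averaging over $i=1,\dots,n+1$ and invoking \eqref{eq:hmotapsdecomp} yields \eqref{eq:MOSTAPSweakgrowthn1ap}. The only delicate point is the $(n+1)$-th term: assembling its gradient correctly (the $n$ identical $\alpha$-components together with the $\tau$-component, which produces the cross term $u\tau$) and recognizing that nonnegativity of the resulting quadratic form is equivalent to the stated bound on $\lambda$ with no slack, so that the threshold $\tfrac{2n+1}{2n+3}$ is sharp for this argument.
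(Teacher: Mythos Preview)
Your proof is correct. The decomposition and the $i\le n$ case are handled exactly as in the paper (the paper simply cites Lemma~\ref{lem:TAPSweakgrowth}); you are right that the precise equality for $i\le n$ is $\norm{\nabla h_{i,t}(w^t,\alpha,\tau)}^2=2(1-\lambda)h_{i,t}(w^t,\alpha,\tau)$ rather than $2h_{i,t}$, and this is harmless for the final bound.

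For the substantive step~\eqref{eq:zdimziizd2ap} you take a different route from the paper. The paper bounds the $\tau$-component via $(a+b)^2\le 2a^2+2b^2$, obtaining
\[
\norm{\nabla h_{n+1,t}}^2 \;\le\; 2(1-\lambda)(2n+1)\cdot\tfrac{(1-\lambda)n}{2}(\overline{\alpha}-\tau)^2 + 4\lambda\cdot\tfrac{\lambda}{2}\tau^2 \;\le\; 2\max\{(1-\lambda)(2n+1),\,2\lambda\}\,h_{n+1,t},
\]
and then observes that the condition $\lambda\le\tfrac{2n+1}{2n+3}$ is exactly what makes $(1-\lambda)(2n+1)\ge 2\lambda$. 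You instead keep the cross term, reduce the inequality to positive semidefiniteness of an explicit quadratic form in $(u,\tau)$, and recover the same threshold from the discriminant condition. Your argument is a bit longer but has the advantage of showing that $\tfrac{2n+1}{2n+3}$ is sharp for \eqref{eq:zdimziizd2ap}; the paper's use of $(a+b)^2\le 2a^2+2b^2$ is lossy in general, so sharpness is not evident there, though it happens to land on the same threshold.
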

\begin{proof}
Using the definitions of $h_t(w^t,\alpha,\tau)$ in~\eqref{eq:proxyobjalphatau} we have that~\eqref{eq:hmotapsdecomp} holds.

Furthermore~\eqref{eq:zdimziizdap} follows from Lemma~\ref{lem:TAPSweakgrowth}. As for  $h_{n+1,t}(w,\alpha,\tau)$ in~\eqref{eq:fn1tau} we have that
\begin{eqnarray}
h_{n+1,t}(w,\alpha,\tau) & = & \frac{n(1-\lambda)}{2}(\overline{\alpha} - \tau)^2 + \frac{\lambda}{2} \tau^2 \nonumber \\
  \nabla _{\tau}h_{n+1,t}(w,\alpha,\tau) & = & (1-\lambda)n( \tau-\overline{\alpha} ) + \lambda \tau. \nonumber \\
    \nabla _{\alpha}h_{n+1,t}(w,\alpha,\tau) & = & (1-\lambda){\bf 1}(\overline{\alpha} - \tau)  \label{eq:fn1tauapp} 
\end{eqnarray} 
Consequently
\begin{eqnarray*}
 \norm{ \nabla h_{n+1,t}(w,\alpha,\tau)}^2 &= & ((1-\lambda)n(\tau - \overline{\alpha} ) + \lambda \tau)^2 + (1-\lambda)^2\norm{\bf 1}^2 (\overline{\alpha} - \tau)^2 \\
 &\leq &  2(1-\lambda)^2n^2(\tau - \overline{\alpha} ) ^2 + 2 \lambda^2\tau^2+ (1-\lambda)^2n(\overline{\alpha} - \tau)^2\\
 & =& 2(1-\lambda)(2n+1)\frac{(1-\lambda)n(\tau - \overline{\alpha} ) ^2}{2}+4\lambda \frac{\lambda \tau^2}{2}\\
 & \leq &  2\max\{(1-\lambda)(2n+1), \;2\lambda  \}h_{n+1,t}(w,\alpha,\tau) . 
\end{eqnarray*}
Due to~\eqref{eq:lambdarestapp} we have that 
\[\max\{(1-\lambda)(2n+1), \;2\lambda  \} \; =\; (1-\lambda)(2n+1) . \]
This proves~\eqref{eq:zdimziizdap}. 
As a consequence
from~\eqref{eq:zdimziizdap} and~\eqref{eq:zdimziizd2ap} we have that
\begin{align*}
\frac{1}{n+1} \sum_{i=1}^{n+1}\norm{\nabla h_{i,t}(w^t,\alpha,\tau)}^2 
& \leq \frac{2\max\left\{1,(1-\lambda)(2n+1) \right\}}{n+1}  \sum_{i=1}^{n+1} h_{i,t}(w^t,\alpha,\tau)   & \mbox{(Using~\eqref{eq:zdimziizdap} and~\eqref{eq:zdimziizd2ap})}  \\
&\leq\;  2(1-\lambda)(2n+1) h_{t}(w^t,\alpha,\tau).  & \mbox{(Using~\eqref{eq:lambdarestapp} and~\eqref{eq:proxyobjalphatau}) }
\end{align*}

%
%
%
%

\end{proof}

\subsection{Proof of Theorem~\ref{theo:onlinesgdstar} }

Here we give the proof of Theorem~\ref{theo:onlinesgdstar}. We prove a slightly more general version of Theorem~\ref{theo:onlinesgdstar} by not requiring that the auxiliary function is zero at the optimal. That is  $h_t(z^*)$ may be non-zero. The exact result in Theorem~\ref{theo:onlinesgdstar} follows from applying the following Theorem~\ref{theo:onlinesgdstarapp} with $h_t(z^*)=0.$
\begin{theorem} [Star-convexity]\label{theo:onlinesgdstarapp} 
Suppose Assumption~\ref{lem:growthgen} holds with $G>0.$
  Let $\gamma < 1/G$ 
and  suppose there exists  $z^* $ such that
 $h_t$  is star-convex at $z^t$ and around $z^*$, that is
\begin{equation}\label{eq:hstar}
h_t(z^*) \; \geq \; h_t(z^t) + \dotprod{\nabla h_t(z^t), z^* -z^t}, 
\end{equation}
 then we have that
\begin{align}
\min_{t=1,\ldots, k} \E{h_t(z^t) -h_t(z^*)}  & \leq \;\frac{1}{k} \sum_{t=0}^k \E{ h_t(z^t)-h_t(z^*)} \nonumber \\
 & \leq \; \frac{1}{k} \frac{1}{2\gamma(1-G\gamma) }\E{\norm{z^{0} -z^*}^2} + \frac{G\gamma}{1-G\gamma}\frac{1}{k} \sum_{t=1}^k h_t(z^*).\label{eq:hconv}
\end{align}

\end{theorem}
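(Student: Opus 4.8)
The plan is to run the standard one-step SGD descent analysis, but using the growth Assumption~\ref{lem:growthgen} in place of the usual bounded-variance or Lipschitz-gradient hypothesis, and star-convexity to turn the inner-product term into a function-value gap. First I would expand the squared distance to $z^*$ after one step of~\eqref{eq:ztupdateht}:
\[
\norm{z^{t+1}-z^*}^2 \;=\; \norm{z^t-z^*}^2 \;-\; 2\gamma\dotprod{\nabla h_{i_t,t}(z^t),\, z^t-z^*} \;+\; \gamma^2\norm{\nabla h_{i_t,t}(z^t)}^2.
\]
Taking expectation over $i_t$ conditioned on $z^t$, the cross term becomes $-2\gamma\dotprod{\nabla h_t(z^t),\,z^t-z^*}$ because $i_t$ is sampled uniformly and $\frac1n\sum_i \nabla h_{i,t} = \nabla h_t$, and the last term is bounded by $2G\gamma^2 h_t(z^t)$ via Assumption~\ref{lem:growthgen}. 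Then I would apply star-convexity~\eqref{eq:hstar} in the form $\dotprod{\nabla h_t(z^t),\,z^t-z^*}\ge h_t(z^t)-h_t(z^*)$ to get
\[
\E{\norm{z^{t+1}-z^*}^2} \;\le\; \E{\norm{z^t-z^*}^2} \;-\; 2\gamma\,\E{h_t(z^t)-h_t(z^*)} \;+\; 2G\gamma^2\,\E{h_t(z^t)}.
\]

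Next I would rewrite the last term as $2G\gamma^2\E{h_t(z^t)-h_t(z^*)} + 2G\gamma^2 h_t(z^*)$ and collect the $\E{h_t(z^t)-h_t(z^*)}$ terms, giving
\[
2\gamma(1-G\gamma)\,\E{h_t(z^t)-h_t(z^*)} \;\le\; \E{\norm{z^t-z^*}^2} - \E{\norm{z^{t+1}-z^*}^2} + 2G\gamma^2 h_t(z^*).
\]
Since $\gamma < 1/G$, the coefficient $2\gamma(1-G\gamma)$ is strictly positive, so I can divide through. Summing from $t=0$ to $t=k$, the distance terms telescope to $\norm{z^0-z^*}^2 - \E{\norm{z^{k+1}-z^*}^2} \le \norm{z^0-z^*}^2$, and I am left with
\[
\sum_{t=0}^{k}\E{h_t(z^t)-h_t(z^*)} \;\le\; \frac{1}{2\gamma(1-G\gamma)}\E{\norm{z^0-z^*}^2} + \frac{G\gamma}{1-G\gamma}\sum_{t=0}^k h_t(z^*).
\]
Dividing by $k$ and noting that the minimum over $t=1,\dots,k$ of a nonnegative-difference sequence is at most its average gives~\eqref{eq:hconv}; the star-convexity hypothesis guarantees $h_t(z^t)-h_t(z^*)\ge -\dotprod{\nabla h_t(z^t),z^t-z^*}$, which is exactly what I used, and in the applications $h_t(z^*)=0$ so the statement in the main text follows.

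There is no single hard obstacle here; the proof is a clean two-line-per-step energy argument. The one point to be careful about is the bookkeeping in the step that converts $2G\gamma^2\E{h_t(z^t)}$ into $2G\gamma^2\E{h_t(z^t)-h_t(z^*)} + 2G\gamma^2 h_t(z^*)$ and then moves the first piece to the left-hand side — this is where the factor $(1-G\gamma)$ appears and where the constraint $\gamma<1/G$ becomes essential for a valid division. A secondary subtlety is that Assumption~\ref{lem:growthgen} is stated as a bound on $\E{\norm{\nabla h_{i_t,t}(z^t)}^2}$ (expectation already taken over the sampled index), so I should take the conditional expectation over $i_t$ at the right moment and then a tower expectation over the history; everything else is routine telescoping.
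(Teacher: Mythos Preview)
Your proposal is correct and follows essentially the same approach as the paper's own proof: expand the squared distance, apply the growth assumption to the gradient-norm term, use star-convexity on the cross term, rewrite $2G\gamma^2 h_t(z^t)$ as $2G\gamma^2(h_t(z^t)-h_t(z^*)) + 2G\gamma^2 h_t(z^*)$, then telescope and average. The only minor remark is that the inequality $\min_t a_t \le \tfrac{1}{k}\sum_t a_t$ holds for any real sequence, so your ``nonnegative-difference'' qualifier is unnecessary (and indeed $h_t(z^t)-h_t(z^*)$ need not be nonnegative in the MOTAPS application).
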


\begin{proof}[]

This proof is partially based on  Theorems 4.3 ~\cite{SNR}.
Let $\EE{t}{\cdot} \eqdef \E{\cdot \; | \; z^t}$ denote the expectation conditioned on $z^t.$
   
Expanding the squares we have
\begin{eqnarray}
\EE{t}{\norm{z^{t+1} -z^*}^2} & \leq & \norm{z^{t} -z^*}^2 -2\gamma \dotprod{\nabla_w h_t(z^t), z^{t} -z^*} + \gamma^2 \EE{t}{\norm{\nabla_w h_{t,i_t}(z^t)}^2}\nonumber\\
& \overset{\eqref{eq:hweakgrow}}{ \leq }& \norm{z^{t} -z^*}^2 -2\gamma \dotprod{\nabla_w h_t(z^t), z^{t} -z^*} +2G \gamma^2 h_t(z^t)  \nonumber \\
& \overset{\eqref{eq:hstar}}{\leq}& 
 \norm{z^{t} -z^*}^2 -2\gamma (h_t(z^t)-h_t(z^*)) +2 G\gamma^2 h_t(z^t) \nonumber\\
 & =&  \norm{z^{t} -z^*}^2 -2\gamma(1-G \gamma)  (h_t(z^t)-h_t(z^*))+2 G\gamma^2 h_t(z^*)
\end{eqnarray}

Taking expectation, re-arranging and summing both sides from $t=0 , \ldots, k$ we have  that
\begin{eqnarray}
 \sum_{t=0}^k \E{ h_t(z^t) -h_t(z^*)} & \leq & \frac{1}{2\gamma(1-G \gamma) }\sum_{t=0}^k\left(\E{\norm{z^{t} -z^*}^2}  -\E{\norm{z^{t+1} -z^*}^2} \right) +   \frac{G\gamma}{1-G\gamma}  \sum_{t=0}^k h_t(z^*) \nonumber \\
 & \leq & \frac{1}{2\gamma(1-G\gamma) }\E{\norm{z^{0} -z^*}^2} +\frac{G\gamma}{1-G\gamma}  \sum_{t=0}^k h_t(z^*)   .
\end{eqnarray}
Now dividing through by $k$ gives~\eqref{eq:hconv}.
\end{proof}

\subsection{Proof of Theorem~\ref{theo:onlinesgdstrongstar} }

\begin{theorem}
Suppose Assumption~\ref{lem:growthgen} holds with $G>0.$
  Let $\gamma \leq 1/G$.
If there exists  $\mu>0$ and $z^*$ such that $h_t$ is $\mu$--strongly star--convex along $z^t$ and around $z^*$, that is
\begin{equation}\label{eq:hquasistrong}
h_t(z^*) \; \geq \; h_t(z^t) + \dotprod{\nabla h_t(z^t), z^* -z^t} + \frac{\mu}{2}\norm{z^*-z^t}, 
\end{equation}
then 
\begin{eqnarray}\label{eq:hconvsstrong}
\E{\norm{z^{t+1} -z^*}^2} & \leq &   (1-\gamma \mu)^{t+1} \norm{z^{0} -z^*}^2 +2 G\gamma^2 \sum_{i=0}^{t}(1-\gamma \mu)^i \E{ h_i(z^*)}.
\end{eqnarray}
Finally, if $h_t(z^*) =0$ for all $t$ then we have that~\eqref{eq:hconvsstrong} and~\eqref{eq:hweakgrow} together imply that  $\mu \leq G$ and thus~\eqref{eq:hconvsstrong} gives linear convergence.

\end{theorem}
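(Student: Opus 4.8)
The plan is to run the one-step SGD argument used in the proof of Theorem~\ref{theo:onlinesgdstar}, but to feed the extra $\tfrac{\mu}{2}\norm{z^t-z^*}^2$ coming from strong star-convexity into a geometric recursion rather than a telescoping sum. Let $\EE{t}{\cdot}=\E{\cdot\mid z^t}$, and note that, since $i_t$ is uniform and $h_t=\tfrac1n\sum_i h_{i,t}$, we have $\EE{t}{\nabla h_{i_t,t}(z^t)}=\nabla h_t(z^t)$. Expanding the square in the update~\eqref{eq:ztupdateht},
\[\EE{t}{\norm{z^{t+1}-z^*}^2}=\norm{z^t-z^*}^2-2\gamma\dotprod{\nabla h_t(z^t),z^t-z^*}+\gamma^2\EE{t}{\norm{\nabla h_{i_t,t}(z^t)}^2},\]
I would bound the last term by Assumption~\ref{lem:growthgen}, namely $\gamma^2\EE{t}{\norm{\nabla h_{i_t,t}(z^t)}^2}\le 2G\gamma^2 h_t(z^t)$, and lower-bound the inner product using the strong star-convexity hypothesis~\eqref{eq:hquasistrong}, i.e. $\dotprod{\nabla h_t(z^t),z^t-z^*}\ge h_t(z^t)-h_t(z^*)+\tfrac{\mu}{2}\norm{z^t-z^*}^2$.

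Substituting and regrouping exactly as in Theorem~\ref{theo:onlinesgdstar} — splitting $2G\gamma^2 h_t(z^t)=2G\gamma^2\bigl(h_t(z^t)-h_t(z^*)\bigr)+2G\gamma^2 h_t(z^*)$ — gives
\[\EE{t}{\norm{z^{t+1}-z^*}^2}\le(1-\gamma\mu)\norm{z^t-z^*}^2-2\gamma(1-G\gamma)\bigl(h_t(z^t)-h_t(z^*)\bigr)+2G\gamma^2 h_t(z^*).\]
The one step needing care is discarding the middle term: since $\gamma\le 1/G$ we have $1-G\gamma\ge 0$, and since $z^*$ minimizes the proxy along the iterates ($h_t\ge 0=h_t(z^*)$ for \SP and \TAPS, so $h_t(z^t)\ge h_t(z^*)$) that term is nonpositive and can be dropped, leaving the clean recursion $\EE{t}{\norm{z^{t+1}-z^*}^2}\le(1-\gamma\mu)\norm{z^t-z^*}^2+2G\gamma^2 h_t(z^*)$. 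Taking total expectations and unrolling from $t$ down to $0$ yields $\E{\norm{z^{t+1}-z^*}^2}\le(1-\gamma\mu)^{t+1}\norm{z^0-z^*}^2+2G\gamma^2\sum_{i=0}^t(1-\gamma\mu)^i\E{h_i(z^*)}$, which is~\eqref{eq:hconvsstrong} (here one uses that $h_t(z^*)$ is constant in $t$ in each of the three applications, so the ordering of the geometric weights in the sum is immaterial).

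For the final assertion, assume $h_t(z^*)=0$ for all $t$, so the recursion collapses to $\E{\norm{z^{t+1}-z^*}^2}\le(1-\gamma\mu)^{t+1}\norm{z^0-z^*}^2$. Since the hypotheses permit any stepsize $\gamma\le 1/G$, I would instantiate $\gamma=1/G$ and take $t=0$: the left side is nonnegative, so $(1-\mu/G)\norm{z^0-z^*}^2\ge 0$ for every $z^0$, forcing $\mu\le G$. (Equivalently, combining the Jensen-reduced form $\norm{\nabla h_t(z^t)}^2\le 2G\,h_t(z^t)$ of~\eqref{eq:hweakgrow} with~\eqref{eq:hquasistrong} at $h_t(z^*)=0$, via Cauchy--Schwarz and AM--GM, gives $\mu\le G$ directly.) Consequently $0\le\gamma\mu\le\gamma G\le 1<2$, so $(1-\gamma\mu)^{t+1}\to 0$ and~\eqref{eq:hconvsstrong} is a genuine linear rate. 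I expect the only real obstacle to be the justification for dropping the $h_t(z^t)-h_t(z^*)$ term — the rest is routine bookkeeping — so I would make the inequality $h_t(z^t)\ge h_t(z^*)$ along the iterates explicit (transparent for \SP and \TAPS, and to be argued, or the term simply carried through, for \MOTAPS).
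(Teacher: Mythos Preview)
Your argument for the main recursion~\eqref{eq:hconvsstrong} is exactly the paper's: expand the square, apply the growth bound, apply strong star-convexity, split off $2G\gamma^2 h_t(z^*)$, drop the $-2\gamma(1-G\gamma)(h_t(z^t)-h_t(z^*))$ term using $\gamma\le 1/G$, and unroll. You are in fact more careful than the paper in flagging that dropping this term also requires $h_t(z^t)\ge h_t(z^*)$, which the paper silently assumes.

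The only real divergence is the $\mu\le G$ claim. The paper does not argue via ``instantiate $\gamma=1/G$ in the one-step bound''; instead it combines~\eqref{eq:hquasistrong} with the lower bound $h_t(z^t)\ge \tfrac{1}{2G}\E{\norm{\nabla h_{i_t,t}(z^t)}^2}$ from~\eqref{eq:hweakgrow} and completes the square, obtaining
\[
0=h_t(z^*)\ \ge\ \tfrac{G}{2}\,\E{\norm{z^*-z^t-\tfrac{1}{G}\nabla h_{i_t,t}(z^t)}^2}-\tfrac{G-\mu}{2}\norm{z^*-z^t}^2,
\]
hence $G\ge\mu$. Your parenthetical Cauchy--Schwarz/AM--GM route is essentially this same identity in disguise. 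Your primary route (set $\gamma=1/G$ at $t=0$) also works, because the hypotheses used in the one-step bound at $t=0$ involve only $z^0$ and are therefore independent of the stepsize; just be aware that ``for every $z^0$'' is stronger than what is assumed --- a single $z^0\neq z^*$ along the given trajectory suffices.
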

\begin{proof}
This proof is partially based on  4.10 in~\cite{SNR}, which in turn is based on Theorem 6 in~\cite{vaswani2018fast}, Thereom 4.1 in~\cite{SGDstruct} and Theorem 3.1 in~\cite{gower2019sgd}.

Expanding the squares  we have that
\begin{eqnarray}
\EE{t}{\norm{z^{t+1} -z^*}^2} & \leq & \norm{z^{t} -z^*}^2 -2\gamma \dotprod{\nabla_w h_t(z^t), z^{t} -z^*} + \gamma^2 \EE{t}{\norm{\nabla_w h_{t,i_t}(z^t)}^2}\nonumber\\
& \overset{\eqref{eq:hweakgrow}}{ \leq }& \norm{z^{t} -z^*}^2 -2\gamma \dotprod{\nabla_w h_t(z^t), z^{t} -z^*} +2G \gamma^2 h_t(z^t)  \nonumber \\
& \overset{\eqref{eq:hquasistrong}}{\leq}& 
(1-\gamma \mu) \norm{z^{t} -z^*}^2 -2\underbrace{\gamma(1-G\gamma) ( h_t(z^t) -h_t(z^*))}_{\geq 0}  +2 G\gamma^2 h_t(z^*)\nonumber \\
& \leq & (1-\gamma \mu) \norm{z^{t} -z^*}^2 +2 G\gamma^2 h_t(z^*), \label{eq:theolinearonestep}
\end{eqnarray}
where to get to the last line we used that $(1-G \gamma) ( h_t(z^t) -h_t(z^*)) \geq 0$ which holds because $\gamma \leq \frac{1}{G}$. Taking the expectation and applying the above recursively gives
\begin{eqnarray}
\EE{t}{\norm{z^{t+1} -z^*}^2} 
& \leq & (1-\gamma \mu)^{t+1} \norm{z^{0} -z^*}^2 +2 G\gamma^2 \sum_{i=0}^{t}(1-\gamma \mu)^i  h_i(z^*)
\end{eqnarray}

which is the result~\eqref{eq:hconvsstrong}.

Furthermore,  if $h_t(z^*) =0$ we have that $\mu \leq G$ follows from a small modification of Theorem 4.10 in~\cite{SNR}. Indeed taking expectation over~\eqref{eq:hquasistrong} and using~\eqref{eq:hweakgrow} we have that 
\begin{eqnarray}
h_t(z^*) & \geq &  \frac{1}{2G}\E{\norm{\nabla h_{t,i_t}(z^t)}^2 }+ \dotprod{\nabla h_t(z^t), z^* -z^t} + \frac{\mu}{2}\norm{z^*-z^t} \nonumber \\
& =&  \frac{G}{2}\E{\norm{z^*-z^t - \frac{1}{L}\nabla h_{t,i_t}(z^t)}^2 }- \frac{G-\mu}{2}\norm{z^*-z^t} .
\end{eqnarray}
Rearranging and using that $h_t(z^*) =0$ gives
\[\frac{G-\mu}{2}\norm{z^*-z^t}  \geq  \frac{L}{2}\E{\norm{z^*-z^t - \frac{1}{G}\nabla h_{t,i_t}(z^t)}^2 } \geq 0.  \]
Thus $\mu \leq G.$
\end{proof}

\section{Convergence of The Stochastic Polyak Method}
\label{sec:SPStheory} 

Here we explore sufficient conditions  for the assumptions in Theorems~\ref{theo:onlinesgdstar} and~\ref{theo:onlinesgdstrongstar} to hold for the  \SP method~\eqref{eq:SPS}. To this end,  let 
 \begin{eqnarray}\label{eq:proxyfun}
   h_t(w) &\eqdef&   \frac{1}{n}\sum_{i=1}^n \frac{1}{2}\frac{(f_i(w) -f_i(w^*))^2}{\norm{\nabla f_i(w^t)}^2}, \\
 h_{i,t}(w) &\eqdef&  \frac{1}{2}\frac{(f_i(w) -f_i(w^*))^2}{\norm{\nabla f_i(w^t)}^2} .   \label{eq:proxyfuni}
\end{eqnarray}
We will also explore the consequences of these theorems.
In these section we say that $f_i$ is $L_i$ smooth if
\begin{equation}\label{eq:Lismooth}
 f_i(z) \; \leq \; f_i(w) + \dotprod{\nabla f_i(w), z-w} +\frac{L_i}{2}\norm{z-w}^2, \quad \forall z,w \in \R^{d}.
\end{equation}

We will also use the interpolation Assumption~\ref{ass:interpolate} throughout this section. Thus 
\[f_i(w^*) \; = \; \min_{w\in\R^d} f_i(w) \; \leq \; f(z), \quad \mbox{for all }i \in\{1,\ldots, n\}, \; z\in \R^d. \]

 Using smoothness and interpolation, we first establish the following descent lemma.
\begin{lemma}\label{lem:specialsmooth}
If  the interpolation Assumption~\ref{ass:interpolate} holds and  each $f_i(w)$ is $L_i$--smooth~\eqref{eq:Lismooth}
then 
\begin{equation} \label{eq:descentfi}
f_i(w) -f_i(w^*) \; \geq \;
\frac{1}{2 L_i}\norm{\nabla f_i(w)}^2, \quad \forall w\in\R^d, \; i=1,\ldots, n.
\end{equation}
\end{lemma}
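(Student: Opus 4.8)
The plan is to use the standard ``one gradient step decreases a smooth function'' estimate, and then let the interpolation assumption pin $f_i(w^*)$ to the global infimum of the individual loss $f_i$.

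First I would fix an index $i \in \{1,\ldots,n\}$ and an arbitrary $w \in \R^d$, and apply the smoothness inequality~\eqref{eq:Lismooth} with the particular choice $z = w - \tfrac{1}{L_i}\nabla f_i(w)$, i.e.\ the point obtained from one gradient step on $f_i$ with stepsize $1/L_i$. Plugging this $z$ into~\eqref{eq:Lismooth} and simplifying the inner product and the quadratic term gives
\[
f_i\!\left(w - \tfrac{1}{L_i}\nabla f_i(w)\right) \;\leq\; f_i(w) - \tfrac{1}{L_i}\norm{\nabla f_i(w)}^2 + \tfrac{1}{2L_i}\norm{\nabla f_i(w)}^2 \;=\; f_i(w) - \tfrac{1}{2L_i}\norm{\nabla f_i(w)}^2 .
\]

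Next I would invoke the interpolation Assumption~\ref{ass:interpolate}, which gives $f_i(w^*) = \min_{w\in\R^d} f_i(w)$; in particular $f_i(w^*) \leq f_i(z)$ for \emph{every} $z$, hence for $z = w - \tfrac{1}{L_i}\nabla f_i(w)$ in the display above. Chaining the two inequalities yields $f_i(w^*) \leq f_i(w) - \tfrac{1}{2L_i}\norm{\nabla f_i(w)}^2$, and rearranging is exactly~\eqref{eq:descentfi}. Since $w$ and $i$ were arbitrary, this proves the lemma.

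I do not expect a genuine obstacle here; the only point that needs care is that the argument relies on $f_i(w^*)$ being the \emph{global} minimum of the per-datum loss $f_i$ (not merely a value at some minimizer of the average $f$), which is precisely what Assumption~\ref{ass:interpolate} supplies and is the reason the lemma is stated under interpolation. The degenerate case $L_i = 0$ can be dispatched in one line (then $f_i$ is affine and bounded below, hence constant, so $\nabla f_i \equiv 0$ and both sides of~\eqref{eq:descentfi} vanish), or simply excluded without loss of generality by assuming $L_i > 0$.
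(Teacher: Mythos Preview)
Your proof is correct and follows essentially the same approach as the paper: apply the $L_i$--smoothness inequality at the point $z = w - \tfrac{1}{L_i}\nabla f_i(w)$ to get the standard descent estimate, then use interpolation to replace $f_i(z)$ by the smaller value $f_i(w^*)$. The paper arrives at this $z$ by explicitly minimizing the smoothness upper bound over $z$, whereas you plug it in directly, but the content is identical.
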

\begin{proof}
Let $w^*$ be a minimizer of $f(w)$. Consequently by the interpolation assumption for every  $z \in \R^d$ we have that  $f_i(w^*) -f_i(z) \leq 0 $ and  for every $w\in\R^d$ we have that
\begin{eqnarray*}
f_i(w^*) -f_i(w) & \leq &  f_i(w^*) -f_i(z) + f_i(z) -f_i(w) \\
& \leq & f_i(z) -f_i(w) \\
& \overset{\eqref{eq:Lismooth}}{\leq} &
\dotprod{\nabla f_i(w), z-w} +\frac{L_i}{2}\norm{z-w}^2
\end{eqnarray*}
Minimizing the right hand side in $z$ gives $z = w -\frac{1}{L_i} \nabla f_i(w)$ which when plugged in the above gives
\begin{eqnarray*}
f_i(w^*) -f_i(w) & \leq & 
-\frac{1}{2 L_i}\norm{\nabla f_i(w)}^2.
\end{eqnarray*}
Re-arranging gives~\eqref{eq:descentfi}.

\end{proof}

\subsection{Proof of Lemma~\ref{lem:SPShsconvex} }

First we show that, under interpolation, if $f_i$ is star-convex, then the auxiliary functions in~\eqref{eq:proxyfun} and~\eqref{eq:proxyfuni}  are also star convex....

\begin{lemma}\label{lem:SPShsconvexapp}
Let the interpolation Assumption~\ref{ass:interpolate} hold.
If every $f_i$ is star convex   along the iterates $(w^t)$ given by~\eqref{eq:SPS}, that is, 
\begin{equation}\label{eq:starcvxfi}
f_i(w^*) \geq f_i(w) +
 \dotprod{\nabla f_{i}(w), w^* -w }
\end{equation} 
then $h_{i,t}(w)$ is star convex along the iterates $(w^t)$ with
\begin{equation} \label{eq:fiwwstarconvex}
h_{i,t}(w^*) \geq h_{i,t}(w^t) +
 \dotprod{\nabla_w  h_{i,t}(w^t), w^* -w },
\end{equation}
so long as $w^t \neq w^*$. Consequently we have that $h_t$ is star convex around $w^*$. 

Furthermore if $f_i$ is $\mu_i$-strongly convex and $L_i$--smooth  then $h_{i,t}$ is $\frac{1}{2}\frac{\mu_i}{L_i}$--strongly star-convex. Consequently $h_t(w)$ is $\frac{1}{2n}\sum_{i=1} \frac{\mu_i}{L_i}$--strongly star-convex 
\begin{equation}\label{eq:spshtstrongconvex}
 h_{t}(w^*) \geq h_{t}(w^t) +
 \dotprod{\nabla_w  h_{t}(w^t), w^* -w } +\frac{1}{4n}\sum_{i=1}^n\frac{\mu}{L_i} \norm{w^t-w^*}^2.
\end{equation}
\end{lemma}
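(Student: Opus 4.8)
The plan is to prove the (strong) star-convexity claim term by term, i.e.\ for each $h_{i,t}$ separately, and then average using $h_t=\tfrac1n\sum_i h_{i,t}$ and linearity of the gradient. Three preliminary observations organize everything. First, by the interpolation Assumption~\ref{ass:interpolate}, $a_i \eqdef f_i(w^t)-f_i(w^*)\ge 0$. Second, $h_{i,t}(w^*)=0$ since $f_i(w^*)-f_i(w^*)=0$, so both inequalities \eqref{eq:fiwwstarconvex} and \eqref{eq:spshtstrongconvex} have left-hand side $0$. Third, from \eqref{eq:proxyfuni} and the formula $\nabla_w h_{i,t}(w^t)=\tfrac{a_i}{\norm{\nabla f_i(w^t)}^2}\nabla f_i(w^t)$,
\[
h_{i,t}(w^t)+\dotprod{\nabla_w h_{i,t}(w^t),\,w^*-w^t} \;=\; \frac{1}{\norm{\nabla f_i(w^t)}^2}\Big(\tfrac12 a_i^2 + a_i\,\dotprod{\nabla f_i(w^t),\,w^*-w^t}\Big).
\]
Hence proving \eqref{eq:fiwwstarconvex} reduces to showing the bracket is $\le 0$, and the strongly convex refinement amounts to making that estimate quantitative in $\norm{w^*-w^t}^2$. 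The degenerate case $\nabla f_i(w^t)=0$ is handled first and separately: by the pseudo-inverse convention $h_{i,t}\equiv 0$ (and $\nabla_w h_{i,t}\equiv 0$), while star-convexity of $f_i$ together with interpolation forces $a_i=0$; in the strongly convex case $\nabla f_i(w^t)=0$ even forces $w^t=w^*$, so both inequalities are trivial there. This also explains the mild caveat ``$w^t\neq w^*$'' in the statement: at $w^t=w^*$ one would be dividing by $\norm{\nabla f_i(w^t)}^2=0$, but then the inequalities hold trivially with both sides zero.

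For the star-convex case, I would rewrite star-convexity of $f_i$, \eqref{eq:starcvxfi}, as $\dotprod{\nabla f_i(w^t),\,w^*-w^t}\le f_i(w^*)-f_i(w^t)=-a_i$. Multiplying this by $a_i\ge 0$ and adding $\tfrac12 a_i^2$ gives $\tfrac12 a_i^2 + a_i\,\dotprod{\nabla f_i(w^t),\,w^*-w^t}\le \tfrac12 a_i^2 - a_i^2 = -\tfrac12 a_i^2\le 0$, which is exactly the bracket estimate needed. Dividing by $\norm{\nabla f_i(w^t)}^2$ and using $h_{i,t}(w^*)=0$ yields \eqref{eq:fiwwstarconvex}; summing over $i$ and dividing by $n$ gives star-convexity of $h_t$ around $w^*$.

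For the strongly convex refinement, $\mu_i$-strong convexity sharpens the rearrangement to $\dotprod{\nabla f_i(w^t),\,w^*-w^t}\le -a_i-\tfrac{\mu_i}{2}\norm{w^*-w^t}^2$, so the same multiplication by $a_i$ and adding $\tfrac12 a_i^2$ gives $\tfrac12 a_i^2 + a_i\,\dotprod{\nabla f_i(w^t),\,w^*-w^t}\le -\tfrac12 a_i^2-\tfrac{\mu_i a_i}{2}\norm{w^*-w^t}^2\le -\tfrac{\mu_i a_i}{2}\norm{w^*-w^t}^2$. This is the step where the descent Lemma~\ref{lem:specialsmooth} enters: $L_i$-smoothness plus interpolation give $a_i=f_i(w^t)-f_i(w^*)\ge \tfrac1{2L_i}\norm{\nabla f_i(w^t)}^2$, i.e.\ $a_i/\norm{\nabla f_i(w^t)}^2\ge \tfrac1{2L_i}$. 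Dividing the previous display by $\norm{\nabla f_i(w^t)}^2$ therefore leaves $h_{i,t}(w^t)+\dotprod{\nabla_w h_{i,t}(w^t),\,w^*-w^t}\le -\tfrac{\mu_i}{2}\cdot\tfrac{1}{2L_i}\norm{w^*-w^t}^2 = -\tfrac{\mu_i}{4L_i}\norm{w^*-w^t}^2$, i.e.\ $h_{i,t}$ is $\tfrac12\tfrac{\mu_i}{L_i}$-strongly star-convex around $w^*$ along $w^t$ (recall the paper's convention: ``$\mu$-strongly star-convex'' means coefficient $\mu/2$ on $\norm{\cdot}^2$). Averaging over $i$ then produces \eqref{eq:spshtstrongconvex} with constant $\tfrac{1}{4n}\sum_i\tfrac{\mu_i}{L_i}$ (specializing to a common $\mu_i=\mu$ recovers the displayed form). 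I do not anticipate a genuine obstacle here; the only care needed is sign bookkeeping --- in particular using $a_i/\norm{\nabla f_i(w^t)}^2\ge 1/(2L_i)$ in the correct direction and dispatching the $\nabla f_i(w^t)=0$ case before dividing.
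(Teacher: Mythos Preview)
Your proof is correct and follows essentially the same route as the paper: expand $h_{i,t}(w^t)+\dotprod{\nabla_w h_{i,t}(w^t),w^*-w^t}$, use interpolation to get $a_i\ge 0$, apply (strong) star-convexity of $f_i$ to bound $\dotprod{\nabla f_i(w^t),w^*-w^t}$, and in the strong case invoke Lemma~\ref{lem:specialsmooth} to convert $a_i/\norm{\nabla f_i(w^t)}^2\ge 1/(2L_i)$. Your presentation is arguably cleaner---you compute the bracket directly rather than running the paper's chain of $\Leftrightarrow/\Leftarrow$, and you handle the degenerate case $\nabla f_i(w^t)=0$ more explicitly---but the underlying argument is the same.
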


\begin{proof}
Using that $h_{i,t}(w^*)=0$ and that $h_{i,t}(w^t) >0 $ since $w^t \neq w^*$ we have that
\[h_{i,t}(w^*) \geq h_{i,t}(w^t) +
 \dotprod{\nabla_w  h_{i,t}(w^t), w^* -w }\]
 \[\hspace{1cm} \Updownarrow \mbox{ (By definition~\eqref{eq:proxyfuni} })\]
\[0 \geq \frac{1}{2}\left(\frac{f_i(w^t)-f_{i}(w^*)}{\norm{\nabla f_i(w^t)}}\right)^2 +
 \dotprod{\frac{f_{i}(w^t)-f_{i}(w^*)}{\norm{\nabla f_{i}(w^t)}^2}\nabla f_{i}(w^t), w^* -w^t }\]
\[\hspace{4.5cm} \Updownarrow \left(\mbox{Multipling by }\norm{\nabla f_i(w^t)}^2/(f_i(w^t)-f_{i}(w^*)) \geq 0.\right)\]
\[0 \geq \frac{1}{2}(f_i(w^t)-f_i(w^*)) +
 \dotprod{\nabla f_{i}(w^t), w^* -w^t }\]
\[\hspace{2.5cm} \Uparrow \mbox{ (Using that $f_i(w^t)-f_i(w^*) \geq 0$ )}\]
 \[f_i(w^*) \geq f_i(w^t) +
 \dotprod{\nabla f_{i}(w^t), w^* -w^t },\]
 where we used $f_i(w^t)-f_i(w^*) \geq 0$ which is a consequence of interpolation. This proves~\eqref{eq:fiwwstarconvex}
 
 Now if we assume that $f_i$ is $\mu$-strongly star-convex and $L_i$--smooth then we have that by
 
 \begin{equation}\label{eq:SPShitstrongconvex}
 h_{i,t}(w^*) \geq h_{i,t}(w^t) +
 \dotprod{\nabla_w  h_{i,t}(w^t), w^* -w } +\frac{1}{4}\frac{\mu}{L_i} \norm{w^t-w^*}^2
\end{equation}  
 \[\hspace{1cm} \Updownarrow \mbox{ (By definition~\eqref{eq:proxyfuni} })\]
\[0 \geq \frac{1}{2}\left(\frac{f_i(w^t)-f_{i}(w^*)}{\norm{\nabla f_i(w^t)}}\right)^2 +
 \dotprod{\frac{f_{i}(w^t)-f_{i}(w^*)}{\norm{\nabla f_{i}(w^t)}^2}\nabla f_{i}(w^t), w^* -w^t } +\frac{1}{4}\frac{\mu}{L_i} \norm{w^t-w^*}^2\]
\[\hspace{4.5cm} \Updownarrow \left(\mbox{Multipling by }\norm{\nabla f_i(w^t)}^2/(f_i(w^t)-f_{i}(w^*)) \geq 0.\right)\]
\[0 \geq \frac{1}{2}(f_i(w^t)-f_i(w^*)) +
 \dotprod{\nabla f_{i}(w^t), w^* -w^t }+\frac{\norm{\nabla f_i(w^t)}^2}{f_i(w^t)-f_{i}(w^*)}\frac{1}{4}\frac{\mu}{L_i} \norm{w^t-w^*}^2\]
\[\hspace{2.5cm} \Uparrow \mbox{ (Using that $f_i(w^t)-f_i(w^*) \geq 0$ )}\]
 \[f_i(w^*) \geq f_i(w^t) +
 \dotprod{\nabla f_{i}(w^t), w^* -w^t }+\frac{\norm{\nabla f_i(w^t)}^2}{f_i(w^t)-f_{i}(w^*)}\frac{1}{4}\frac{\mu}{L_i} \norm{w^t-w^*}^2.\]
 Finally,
from smoothness and Lemma~\ref{lem:specialsmooth} we have that $1 \geq  \frac{1}{2L_i}\frac{\norm{\nabla f_i(w^t)}^2}{f_i(w^t)-f_{i}(w^*)}$
consequently
 \begin{align}
   f_i(w^*) &  \geq f_i(w^t) + \dotprod{\nabla f_{i}(w^t), w^* -w^t }+\frac{\mu}{2}\norm{w^t-w^*}^2 \nonumber\\
&  \geq f_i(w^t) + \dotprod{\nabla f_{i}(w^t), w^* -w^t }+ \frac{1}{4}\frac{\mu}{L_i}\frac{\norm{\nabla f_i(w^t)}^2}{f_i(w^t)-f_{i}(w^*)}\norm{w^t-w^*}^2.
\end{align}
Consequently the above implications hold, and thus $h_{i,t}$ is $\frac{1}{4}\frac{\mu}{L_i}$-strongly star convex. Taking the average of~\eqref{eq:SPShitstrongconvex} over $i$ gives~\eqref{eq:spshtstrongconvex}, which concludes the proof.
\end{proof}

 \subsection{Proof of Corollary~\ref{cor:SPSconvexconv} and~\ref{cor:SPSstrongconvexconv}} 
\label{sec:SPStheorystar}

Having established when $h_t$  is star convex and strongly star convex, we can now apply Theorems~\ref{theo:onlinesgdstar}  and Theorem~\ref{theo:onlinesgdstrongstar}, which when specialized to \SP gives the following corollaries. This result has already been established in Theorem 4.4 and Theorem D.3 in~\cite{SGDstruct}. Thus here we have showed that the results in ~\cite{SGDstruct}  follow as a direct consequence of the interpretation of \SP as a variant of the online SGD method.
We also extend the following theorem to allow for $\gamma =1$ in Theorem~\ref{theo:polyakmasterstoch}. 

\begin{corollary}\label{cor:SPSconvexconvapp}
If  $\gamma <1$ and every $f_i(w)$ is star-convex along the iterates $(w^t)$ given by~\eqref{eq:SPS} then
\begin{equation} \label{eq:interstarcvxsgdview}
 \frac{1}{k}\sum_{t=0}^k \frac{1}{2n}\sum_{i=1}^n \E{\left(\frac{f_i(w^t)-f_i(w^*)}{\norm{\nabla f_i(w^t)}}\right)^2} \; \leq \; \frac{1}{k}\frac{1}{2\gamma(1-\gamma) }\E{\norm{w^{0} -w^*}^2}.
\end{equation} 
Furthermore if  the interpolation Assumption~\ref{ass:interpolate} holds and if each $f_i(w)$ is $L_i$--smooth 
then 
\begin{equation}\label{eq:spsstarconvfinalresult}
 \min_{t=0 , \ldots, k} \E{f(w^t)-f^*}  \; \leq \; \frac{1}{k}\frac{L_{\max}}{2\gamma(1-\gamma) }\E{\norm{w^{0} -w^*}^2},
\end{equation} 
where  $L_{\max} \; \eqdef\; \max_{i=1,\ldots, n} L_i$.
\end{corollary}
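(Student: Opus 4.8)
The plan is to derive Corollary~\ref{cor:SPSconvexconvapp} from the general Theorem~\ref{theo:onlinesgdstarapp} by specialising it to the online-SGD reformulation of \SP. Recall from Section~\ref{sec:SPSonlineSGD}, equation~\eqref{eq:SPS-SGDview}, that running~\eqref{eq:SPS} with stepsize $\gamma$ and uniform i.i.d.\ sampling of $i_t$ is exactly online SGD applied to $h_t(w)=\frac1n\sum_{i=1}^n h_{i,t}(w)$ with $h_{i,t}$ as in~\eqref{eq:proxyfuni}. So before invoking Theorem~\ref{theo:onlinesgdstarapp} I need to supply three things: (i) the growth Assumption~\ref{lem:growthgen} with an explicit $G$; (ii) star-convexity of $h_t$ at $w^t$ and around $w^*$; and (iii) the value of $h_t(w^*)$.

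For (i), Lemma~\ref{lem:weakgrowthsps} gives the deterministic identity $\frac1n\sum_{i=1}^n\norm{\nabla h_{i,t}(w^t)}^2=2h_t(w^t)$, and since $i_t$ is uniform, $\E{\norm{\nabla h_{i_t,t}(w^t)}^2\mid w^t}=\frac1n\sum_{i=1}^n\norm{\nabla h_{i,t}(w^t)}^2=2h_t(w^t)$; thus Assumption~\ref{lem:growthgen} holds with $G=1$, and the hypothesis $\gamma<1/G$ becomes exactly $\gamma<1$. For (ii), Lemma~\ref{lem:SPShsconvexapp} shows that if every $f_i$ is star-convex along $(w^t)$ then each $h_{i,t}$, hence $h_t$, is star-convex around $w^*$ (the case $w^t=w^*$ is trivial, since then the corresponding term in the left side of~\eqref{eq:interstarcvxsgdview} already vanishes). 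For (iii), by~\eqref{eq:proxyfuni}, $h_{i,t}(w^*)=\tfrac12(f_i(w^*)-f_i(w^*))^2/\norm{\nabla f_i(w^t)}^2=0$, so $h_t(w^*)=0$ and the residual term in the conclusion of Theorem~\ref{theo:onlinesgdstarapp} disappears. Plugging $z=w$, $z^*=w^*$, $G=1$ into that theorem gives $\frac1k\sum_{t=0}^k\E{h_t(w^t)}\le\frac1k\frac{1}{2\gamma(1-\gamma)}\E{\norm{w^0-w^*}^2}$, which, on unfolding $h_t(w^t)=\frac{1}{2n}\sum_{i=1}^n\big(\tfrac{f_i(w^t)-f_i(w^*)}{\norm{\nabla f_i(w^t)}}\big)^2$, is precisely~\eqref{eq:interstarcvxsgdview}.

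For the second statement I would additionally bring in the descent Lemma~\ref{lem:specialsmooth}: under the interpolation Assumption~\ref{ass:interpolate} and $L_i$-smoothness it yields $\norm{\nabla f_i(w^t)}^2\le 2L_i(f_i(w^t)-f_i(w^*))$ with $f_i(w^t)-f_i(w^*)\ge0$. Dividing, each term of $h_t(w^t)$ is bounded below by a multiple of $f_i(w^t)-f_i(w^*)$, so $h_t(w^t)$ dominates a multiple of $f(w^t)-f^*$ with constant of order $1/L_{\max}$. Combining this with the averaged bound from the first part and passing to the minimum over $t$ (using $\sum_{t=0}^k\E{h_t(w^t)}\ge k\min_{t}\E{h_t(w^t)}$) then delivers~\eqref{eq:spsstarconvfinalresult}. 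I do not anticipate a real obstacle: all the substance sits in Lemmas~\ref{lem:weakgrowthsps}, \ref{lem:SPShsconvexapp}, and~\ref{lem:specialsmooth}, which are already established, so the corollary is essentially their bookkeeping; the only care-points are the pseudoinverse convention when $\nabla f_i(w^t)=0$ (already built into~\eqref{eq:proxyfuni}) and tracking the $L_{\max}$ factor when converting the $h_t$-suboptimality into a gap in $f$.
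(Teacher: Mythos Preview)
Your proposal is correct and follows essentially the same route as the paper: invoke Theorem~\ref{theo:onlinesgdstarapp} with $G=1$ from Lemma~\ref{lem:weakgrowthsps}, star-convexity of $h_t$ from Lemma~\ref{lem:SPShsconvexapp}, and $h_t(w^*)=0$ for~\eqref{eq:interstarcvxsgdview}; then for~\eqref{eq:spsstarconvfinalresult} use Lemma~\ref{lem:specialsmooth} to lower-bound each term $(f_i(w^t)-f_i(w^*))^2/\norm{\nabla f_i(w^t)}^2$ by $(f_i(w^t)-f_i(w^*))/(2L_i)$, average, and pass to the minimum over $t$. The paper's proof is organized identically, phrasing the second step as the inequality~\eqref{eq:smoothlwlw}.
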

\begin{proof}
The proof of~\eqref{eq:interstarcvxsgdview} follows as a special case of Theorem~\ref{theo:onlinesgdstar} by identifying $h_t$ with~\eqref{eq:proxyfun} and $h_{i_t,t}$ with~\eqref{eq:proxyfuni}. Indeed, according to~\eqref{eq:subsmooth1} we have that $h_t$ satisfies the growth condition~\eqref{eq:hweakgrow} with $G=1$ and according to~\eqref{eq:fiwwstarconvex}  $h_t$ is star-convex~\eqref{eq:hstar} around $w^*$. Finally since $h_t(w^*) =0$ the result~\eqref{eq:interstarcvxsgdview} follows by Theorem~\ref{theo:onlinesgdstar}.

The result~\eqref{eq:spsstarconvfinalresult}  would follow from~\eqref{eq:interstarcvxsgdview} if
 \begin{equation}\label{eq:smoothlwlw}
L_{\max}\,   \frac{1}{n} \sum_{i=1}^n \frac{( f_i(w) -f_i(w^*))^2}{\norm{\nabla f_i(w)}^2} \geq 2 (f(w)-f^*).
\end{equation}
This Assumption has appeared recently in~\cite{SGDstruct} where it was proven that~\eqref{eq:smoothlwlw} is a consequence of each $f_i(w)$ being $L_i$--smooth. We give  a simpler proof next  for completeness. 
That is, assuming that there exists $w$ such that $f_i(w) \neq f_i(w^*)$ and thus $\nabla f_i(w) \neq 0$ (otherwise~\eqref{eq:smoothlwlw} holds trivially) we have from~\eqref{eq:descentfi} that
\[\frac{1}{ \norm{\nabla f_i(w)}^2} \; \geq \; \frac{1}{2L_i (f_i(w)-f_i(w^*)) }.\]
Multiplying both sides by $(f_i(w) -f_i(w^*))^2$ and averaging over  $i = 1,\ldots, n$ gives
\begin{align*}
 \frac{1}{n} \sum_{i=1}^n\frac{(f_i(w) -f_i(w^*))^2}{ \norm{\nabla f_i(w)}^2} & \geq  2 \frac{1}{n} \sum_{i=1}^n \frac{f_i(w) -f_i(w^*)}{L_i} \; \geq \;  \frac{1}{n} \sum_{i=1}^n\frac{ 2  f_i(w) -f_i(w^*) }{\max_{i=1,\ldots, n} L_i}  \\
  & =  \frac{ 2(f(w)-f^*)}{L_{\max}} .
\end{align*}
Using~\eqref{eq:smoothlwlw} and~\eqref{eq:interstarcvxsgdview} we have
\begin{align*}
 \min_{t=0 , \ldots, k} \E{f(w^t)-f^*} & \leq \frac{1}{k} \sum_{t=0}^k   \E{f(w^t)-f^*}\\
&\leq  \frac{1}{k}\sum_{t=0}^k \frac{L_{\max}}{2n}\sum_{i=1}^n \E{\left(\frac{f_i(w^t)-f_i(w^*)}{\norm{\nabla f_i(w^t)}}\right)^2} \\ & \leq  \frac{1}{k}\frac{L_{\max}}{2\gamma(1-\gamma) }\E{\norm{w^{0} -w^*}^2}
\end{align*}
which concludes the proof of
\end{proof}

\begin{corollary}\label{cor:SPSstrongconvexconvapp}
If $\gamma \leq 1$, the interpolation Assumption~\ref{ass:interpolate} holds, and every $f_i$ is $L_i$--smooth and $\mu$--strongly star-convex then the iterates $w^t$ given by~\eqref{eq:SPS} converge linearly according to
 \begin{eqnarray}\label{eq:SPSlincomnverge}
\E{\norm{w^{t+1} -w^*}^2} & \leq &   \left(1-\gamma \frac{1}{2n}\sum_{i=1}^n \frac{\mu_i}{L_i}\right)^{t+1} \norm{w^{0} -w^*}^2 
\end{eqnarray}

\end{corollary}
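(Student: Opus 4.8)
The plan is to obtain this corollary as an immediate specialization of Theorem~\ref{theo:onlinesgdstrongstar} to the \SP method, exactly parallel to the proof of Corollary~\ref{cor:SPSconvexconvapp}. Recall from Section~\ref{sec:SPSonlineSGD} (see~\eqref{eq:SPS-SGDview}) that \SP in~\eqref{eq:SPS} is nothing but online SGD~\eqref{eq:ztupdateht} applied to the auxiliary function $h_t$ in~\eqref{eq:proxyfun}, with $z=w$ and constant step size $\gamma$. So to apply Theorem~\ref{theo:onlinesgdstrongstar} I need to verify three ingredients: the growth condition (Assumption~\ref{lem:growthgen}), strong star-convexity of $h_t$, and the step-size restriction.

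First, the growth condition. By Lemma~\ref{lem:weakgrowthsps}, equation~\eqref{eq:subsumsmooth1}, one has $\frac1n\sum_{i=1}^n\norm{\nabla h_{i,t}(w^t)}^2 = 2h_t(w^t)$, and since $i_t$ is drawn uniformly this is exactly $\E{\norm{\nabla h_{i_t,t}(w^t)}^2} = 2h_t(w^t)$. Hence Assumption~\ref{lem:growthgen} holds with $G=1$. Since the hypothesis is $\gamma\le 1 = 1/G$, the requirement $\gamma\le 1/G$ of Theorem~\ref{theo:onlinesgdstrongstar} is satisfied.

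Second, strong star-convexity of the proxy. Here I would simply invoke Lemma~\ref{lem:SPShsconvexapp}: under interpolation, if each $f_i$ is $L_i$-smooth and $\mu_i$-strongly (star-)convex, then $h_t$ satisfies~\eqref{eq:spshtstrongconvex}, i.e. it is $\mu'$-strongly star-convex around $w^*$ along the iterates $w^t$, with $\mu' = \frac{1}{2n}\sum_{i=1}^n \frac{\mu_i}{L_i}$ (the factor of two reflecting the $\tfrac{\mu'}{2}$ convention in~\eqref{eq:hquasistrong}). Thus hypothesis~\eqref{eq:hquasistrong} of Theorem~\ref{theo:onlinesgdstrongstar} holds with this value of $\mu'$.

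Assembling, Theorem~\ref{theo:onlinesgdstrongstar} yields
\[
\E{\norm{w^{t+1} -w^*}^2} \;\le\; (1-\gamma\mu')^{t+1}\norm{w^{0}-w^*}^2 + 2G\gamma^2\sum_{i=0}^{t}(1-\gamma\mu')^i\,\E{h_i(w^*)}.
\]
It remains only to observe that $h_t(w^*) = \frac1n\sum_{i=1}^n \frac{(f_i(w^*)-f_i(w^*))^2}{2\norm{\nabla f_i(w^t)}^2} = 0$ for every $t$, so the additive sum vanishes and the bound collapses to~\eqref{eq:SPSlincomnverge} with $\mu' = \frac{1}{2n}\sum_{i=1}^n\frac{\mu_i}{L_i}$; moreover the last clause of Theorem~\ref{theo:onlinesgdstrongstar} (which forces $\mu'\le G=1$ when $h_t(w^*)=0$) ensures the contraction factor $1-\gamma\mu'$ genuinely lies in $[0,1)$. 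The only non-routine step in this chain is the transfer of strong star-convexity from the $f_i$'s to $h_t$ with the correct constant, but that is precisely Lemma~\ref{lem:SPShsconvexapp}, which we may take as given; everything else here is bookkeeping.
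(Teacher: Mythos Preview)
Your proposal is correct and follows essentially the same route as the paper: verify the growth condition via Lemma~\ref{lem:weakgrowthsps} (giving $G=1$, so $\gamma\le1$ suffices), invoke Lemma~\ref{lem:SPShsconvexapp} to obtain $\frac{1}{2n}\sum_i\mu_i/L_i$-strong star-convexity of $h_t$, apply Theorem~\ref{theo:onlinesgdstrongstar}, and use $h_t(w^*)=0$ to kill the additive term. Your write-up is slightly more explicit than the paper's, but the argument is identical.
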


\begin{proof}
The proof of~\eqref{eq:SPSlincomnverge} follows as a special case of Theorem~\ref{theo:onlinesgdstrongstar} by identifying $h_t$ with~\eqref{eq:proxyfun} and $h_{i_t,t}$ with~\eqref{eq:proxyfuni}. Indeed, according to~\eqref{eq:subsmooth1} we have that $h_t$ satisfies the growth condition~\eqref{eq:hweakgrow} with $G=1$. Furthermore $f_i$ is $\mu_i$--strongly star convex and $L_i$--smooth, then from Lemma~\ref{lem:SPShsconvexapp} we have that $h_{t}$ is $\frac{1}{2n}\sum_{i=1}^n \frac{\mu_i}{L_i}$--strongly star convex. Finally since $h_t(w^*) =0$ the result~\eqref{eq:interstarcvxsgdview} follows by Theorem~\ref{theo:onlinesgdstrongstar}.
\end{proof}

\section{Convergence of the Targeted Stochastic Polyak Stepsize}
\label{sec:convTAPS}

Here we explore  the consequences  and conditions 
of Theorem~\ref{theo:onlinesgdstar}  for the  TAPS method given in Algorithm~\ref{alg:TAPS}.

\subsection{Proof of Corollary~\ref{theo:TAPSconvsum} and more } \label{sec:starconvexTAPS}

First we re-state Theorem~\ref{theo:onlinesgdstar} specialized to Algorithm~\ref{alg:TAPS}.
\begin{corollary}\label{theo:TAPSconvsum}
Let $h_t(z) $ be defined in~\eqref{eq:proxyobjalpha} and
suppose that $h_t(z)$ is star convex~\eqref{eq:hstar} around $z^* = (w^*, \alpha^*)$ and along the  iterates $z^t = (w^t, \alpha^t)$ of Algorithm~\ref{alg:TAPS}.

If $\gamma < 1$ and in addition $f_i(w)$ is $L_{\max}$--Lipschitz then
\begin{equation} \label{eq:TAPSconvL}
\min_{t=1,\ldots, k} \frac{1}{n+1}  \left(\sum_{i=1}^n\frac{\E{f_i(w^t) -\alpha_i^t}^2}{L_{\max}+1}  + \E{\overline{\alpha}^t - \tau}^2 \right)\; \leq \; \frac{1}{k}\frac{1}{\gamma(1-\gamma) }\E{\norm{w^{0} -w^*}^2}.
\end{equation} 

Alternatively, if  $h_t(z)$ is  $\mu$--strongly star--convex~\eqref{eq:hquasistrong} then 
\begin{equation}\label{eq:convTAPSstrconv}
\E{\norm{w^t-w^*}^2 + \sum_{i=1}^n\norm{\alpha^t_i -f_i(w^*)}^2 } \leq (1- \gamma \mu)^t \left(\norm{w^0-w^*}^2 + \sum_{i=1}^n\norm{\alpha^0_i -f_i(w^0)}^2\right).
\end{equation}

\end{corollary}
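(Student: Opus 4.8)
The point is that Algorithm~\ref{alg:TAPS} \emph{is} the online SGD iteration~\eqref{eq:ztupdateht} applied to the auxiliary objective~\eqref{eq:proxyobjalpha} with $z=(w,\alpha)$ and $n+1$ summands (this is the content of Section~\ref{sec:SGDviewpoint}), so both conclusions follow from Theorems~\ref{theo:onlinesgdstar} and~\ref{theo:onlinesgdstrongstar} once their three hypotheses are checked. First, the growth Assumption~\ref{lem:growthgen}: by~\eqref{eq:smooth1}, $\norm{\nabla h_{i,t}(z^t)}^2 = 2h_{i,t}(z^t)$ for every $i\in\{1,\dots,n+1\}$, and since $i_t$ is drawn uniformly i.i.d.\ over $\{1,\dots,n+1\}$,
\[
\E{\norm{\nabla h_{i_t,t}(z^t)}^2} \;=\; \frac{1}{n+1}\sum_{i=1}^{n+1} 2h_{i,t}(z^t) \;=\; 2h_t(z^t),
\]
so Assumption~\ref{lem:growthgen} holds with $G=1$. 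Second, star-convexity (resp.\ $\mu$--strong star-convexity) of $h_t$ around $z^*=(w^*,\alpha^*)$ along the iterates is exactly what is assumed in the statement. Third, by Lemma~\ref{lem:reformsgdtasps}, $\alpha_i^*=f_i(w^*)$ and $h_t(w^*,\alpha^*)=0$ for all $t$, so the residual term $\frac1k\sum_t h_t(z^*)$ in~\eqref{eq:hconv} and the additive term in~\eqref{eq:hconvsstrongmain} both vanish.

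For the sublinear bound~\eqref{eq:TAPSconvL}, I would apply Theorem~\ref{theo:onlinesgdstar} with $G=1$ and $\gamma<1$, which after dropping the (zero) residual term reads
\[
\min_{t=1,\dots,k}\E{h_t(z^t)} \;\le\; \frac{1}{k}\,\frac{1}{2\gamma(1-\gamma)}\,\E{\norm{z^0-z^*}^2}.
\]
Then I would lower-bound $h_t(z^t)$: from~\eqref{eq:proxyobjalpha}, $2h_t(z^t)=\frac{1}{n+1}\big(\sum_{i=1}^n \frac{(f_i(w^t)-\alpha_i^t)^2}{\norm{\nabla f_i(w^t)}^2+1}+n(\overline{\alpha}^t-\tau)^2\big)$, and the $L_{\max}$--Lipschitz hypothesis (so that $\norm{\nabla f_i(w^t)}^2\le L_{\max}$) together with $n\ge1$ gives $2h_t(z^t)\ge \frac{1}{n+1}\big(\sum_{i=1}^n \frac{(f_i(w^t)-\alpha_i^t)^2}{L_{\max}+1}+(\overline{\alpha}^t-\tau)^2\big)$. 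Taking expectations, minimizing over $t$, and combining with the displayed bound (the factor $2$ turns $\frac{1}{2\gamma(1-\gamma)}$ into $\frac{1}{\gamma(1-\gamma)}$) yields~\eqref{eq:TAPSconvL}, where the right-hand side is $\tfrac1k\tfrac{1}{\gamma(1-\gamma)}\E{\norm{z^0-z^*}^2}$ and $\norm{z^0-z^*}^2=\norm{w^0-w^*}^2+\sum_i(\alpha_i^0-f_i(w^*))^2$.

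For the linear bound~\eqref{eq:convTAPSstrconv}, I would instead invoke Theorem~\ref{theo:onlinesgdstrongstar} with $G=1$, $\gamma\le1$ and $h_t(z^*)=0$, which gives $\E{\norm{z^{t+1}-z^*}^2}\le(1-\gamma\mu)^{t+1}\norm{z^0-z^*}^2$; after shifting the index and expanding coordinatewise, $\norm{z^t-z^*}^2=\norm{w^t-w^*}^2+\sum_{i=1}^n(\alpha_i^t-f_i(w^*))^2$ (using $\alpha_i^*=f_i(w^*)$ from Lemma~\ref{lem:reformsgdtasps}), this is~\eqref{eq:convTAPSstrconv}.

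The genuinely delicate step is the passage from $\E{h_t(z^t)}$ to the explicit quantity on the left of~\eqref{eq:TAPSconvL}: one must control the data- and $t$-dependent denominators $\norm{\nabla f_i(w^t)}^2+1$ by a single constant $L_{\max}+1$, which is precisely where the Lipschitz hypothesis is consumed, and one must be careful about which norm appears on the right --- $\norm{z^0-z^*}^2$, not $\norm{w^0-w^*}^2$, since the SGD iterate $z^t$ carries the auxiliary $\alpha$-coordinates and hence their initialization error. Everything else (the constant bookkeeping, the index shift, the coordinatewise expansion of $\norm{z^t-z^*}^2$) is routine given Theorems~\ref{theo:onlinesgdstar}--\ref{theo:onlinesgdstrongstar} and Lemmas~\ref{lem:TAPSweakgrowth},~\ref{lem:reformsgdtasps}.
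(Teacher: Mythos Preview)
Your proposal is correct and takes essentially the same route as the paper: cast \TAPS as online SGD on~\eqref{eq:proxyobjalpha}, invoke Lemma~\ref{lem:TAPSweakgrowth} to get the growth constant $G=1$, use Lemma~\ref{lem:reformsgdtasps} for $h_t(z^*)=0$, apply Theorems~\ref{theo:onlinesgdstar} and~\ref{theo:onlinesgdstrongstar}, and then translate the bound on $h_t(z^t)$ into the explicit quantity via the Lipschitz hypothesis. Your remark that the right-hand side should carry $\norm{z^0-z^*}^2$ rather than $\norm{w^0-w^*}^2$ is well taken; the paper's own proof writes $\norm{w^0-w^*}^2$ as in the statement, so you are in fact being more careful than the original on this point.
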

Theorem~\ref{theo:TAPSconvsum} provides us with a $\cO(1/k)$ convergence in expectation when $h_t(z)$ is star convex.
Indeed,  the bound in~\eqref{eq:TAPSconvL} shows that $\overline{\alpha}$ converges to $\tau$ at a rate of $\cO(1/k).$ Finally from the target assumption~\eqref{eq:tau} we  have that $h_t(z^*) = 0$, thus $f_i(w^t)$ and $\alpha_i^t$ converge to $f_i(w^*)$ at a rate of  $\cO(1/k).$
\begin{proof}
The proof follows by applying Theorem~\ref{theo:onlinesgdstar}. Indeed, by letting $h_{i,t}(z)  =  \frac{1}{2}\frac{(f_i(w) -\alpha_i)^2}{\norm{\nabla f_i(w^t)}^2+1}$ for $i=1,\ldots, n$ and $h_{n+1,t}(z) =  \frac{n}{2}(\overline{\alpha} - \tau)^2.$ Thus  $h_t(z) = \frac{1}{n+1} \sum_{i=1}^{n+1} h_{i,t}(z).$ By Lemma~\ref{lem:TAPSweakgrowth} we have that 
\begin{eqnarray*}
 \EE{i \sim \frac{1}{n+1}}{ \| \nabla h_{i,t}(z^t) \|^2} 
  & = & \frac{1}{n+1} \sum_{i=1}^{n+1} \| \nabla h_{i,t}(z^t) \|^2 \\
 & =&   \frac{1}{n+1} \left(\sum_{i=1}^{n}  \norm{\nabla f_{i,w^t}(w^t,\alpha^t) }^2  +\norm{\nabla h_{n+1}(\alpha^t) }^2 \right)\\
 & \overset{\eqref{eq:smooth1}}{=} & \frac{1}{n+1} \left(\sum_{i=1}^{n}   2 f_{i,w^t}(w^t,\alpha^t) +   2 f_{n_1}(\alpha^t) \right) \\
 & \overset{\eqref{eq:hz}+\eqref{eq:fi}  }{=} &
  2 h_t(z^t).
\end{eqnarray*} 
Consequently
$h_t$ satisfies the growth condition~\eqref{eq:hweakgrow} with $G =1$. By assumption $h_t$ is star convex along the iterates $z^t$, thus the two condition required for Theorem~\ref{theo:onlinesgdstar} to hold are satisfied, and as a consequence, we have that~\eqref{eq:hconv} holds.
Substituting out $h_{t}(z^t)$  we have that
\begin{equation}\label{eq:zojapjo9sj41}
 \frac{1}{k}\sum_{t=0}^k \frac{1}{n+1} \left(\sum_{i=1}^n \frac{1}{2}\frac{(f_i(w^t) -\alpha_i^t)^2}{\norm{\nabla f_i(w^t)}^2+1}  + \frac{n}{2}(\overline{\alpha}^t - \tau)^2 \right)\; \leq \; \frac{1}{k}\frac{1}{2\gamma(1-\gamma) }\E{\norm{w^{0} -w^*}^2}.
\end{equation} 

Furthermore, if $f_i$ is $L_{\max}$--Lipschitz, that is if $\| \nabla f_i(w^t)\| \leq L_{\max}$ then from~\eqref{eq:zojapjo9sj41} we have that
\begin{equation}\label{eq:zojapjo9sj4}
 \frac{1}{k}\sum_{t=0}^k \frac{1}{n+1} \left(\sum_{i=1}^n \frac{1}{2}\frac{(f_i(w^t) -\alpha_i^t)^2}{L_{\max}+1}  + \frac{n}{2}(\overline{\alpha}^t - \tau)^2 \right)\; \leq \; \frac{1}{k}\frac{1}{2\gamma(1-\gamma) }\E{\norm{w^{0} -w^*}^2},
\end{equation} 
from which~\eqref{eq:TAPSconvL} follows by lower bounding the average over $k$ by the minimum.

Finally,  if  there exists $\mu >0$ such that $h_t(z)$ is strongly star-convex~\eqref{eq:hquasistrong}, then by noting that \[ \norm{z - z^*}^2 = \norm{w^t-w^*}^2 + \norm{\alpha^t -\alpha^*}^2  = \norm{w^t-w^*}^2 + \sum_{i=1}^n\norm{\alpha^t_i -f_i(w^*)}^2 \] we have that~\eqref{eq:hconvsstrong}
gives~\eqref{eq:convTAPSstrconv}.

\end{proof}

\subsection{Proof of Lemmas~\ref{lem:convexTAPS} and Corollary~\ref{cor:localconvexTAPS} }
\label{asec:suffcondconvTAPS}

For ease of reference, we first re-state the lemmas.
\begin{lemma} [Locally Convex]\label{lem:convexTAPSapp}
Consider the iterates of Algorithm~\ref{alg:MOTAPS}.
Let $(w,\alpha) \in \R^{d+n}$ and consider $h_t(w,\alpha)$ defined in~\eqref{eq:hz} .
 Assume that the  gradients at $w$ spans the entire space, that is
\begin{equation}\label{eq:gradspant}
\mbox{span}\left\{\nabla f_1(w), \ldots, \nabla f_n(w)\right\} \; = \; \R^{d}, \quad \forall w.
\end{equation}
If Assumption~\ref{ass:target} holds,  every $f_i(w)$ for $i=1,\ldots, n$ is twice continuously differentiable  and
\begin{equation}\label{eq:hessifiminusalphapp}
\frac{1}{n+1}\sum_{i=1}^n\nabla^2 f_i(w^t) \frac{f_i(w^t)-\alpha_i^t}{\norm{\nabla f_i(w^t)}^2 +1}  \succeq 0, \quad \forall t,
\end{equation}
then $h_t$ is strictly convex with at $(w^t,\alpha^t)$ that is
\[ \nabla^2  h_t(w^t,\alpha^t) \;\succ \; 0, \quad \forall t.\]
\end{lemma}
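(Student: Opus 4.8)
The plan is to compute the Hessian $\nabla^2 h_t(w^t,\alpha^t)$ of the auxiliary function~\eqref{eq:hz} explicitly and to read off positive definiteness from its structure. Fix the current iterate and abbreviate $g_i\eqdef\nabla f_i(w^t)$, $d_i\eqdef\norm{g_i}^2+1>0$, and $r_i\eqdef f_i(w^t)-\alpha_i^t$; note $\tau$ is a fixed constant here, so only the variables $(w,\alpha)$ play a role. First I would expand each summand of $h_t$ to second order along a perturbation $w\mapsto w^t+\epsilon u$, $\alpha\mapsto\alpha^t+\epsilon s$, using $f_i(w^t+\epsilon u)-\alpha_i^t-\epsilon s_i = r_i+\epsilon(g_i^\top u-s_i)+\tfrac{\epsilon^2}{2}u^\top\nabla^2 f_i(w^t)u+o(\epsilon^2)$ and $\overline{\alpha^t+\epsilon s}-\tau = (\overline{\alpha}^t-\tau)+\tfrac{\epsilon}{n}\sum_i s_i$. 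Collecting the $\epsilon^2$ coefficient and multiplying by $2$ gives
\begin{equation}\label{eq:hessianquadform}
(n+1)\,(u,s)^\top\nabla^2 h_t(w^t,\alpha^t)(u,s) \;=\; \sum_{i=1}^n\frac{(g_i^\top u-s_i)^2}{d_i} \;+\; u^\top\!\Big(\sum_{i=1}^n\frac{r_i}{d_i}\nabla^2 f_i(w^t)\Big)u \;+\; \frac1n\Big(\sum_{i=1}^n s_i\Big)^2 .
\end{equation}

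Once~\eqref{eq:hessianquadform} is established, positive semidefiniteness is immediate: the first and third terms are sums of squares, and the middle term equals $(n+1)\,u^\top\big(\tfrac{1}{n+1}\sum_i\tfrac{r_i}{d_i}\nabla^2 f_i(w^t)\big)u\ge 0$, which is exactly hypothesis~\eqref{eq:hessifiminusalphapp}. For strict definiteness I would argue by contradiction. Suppose the right-hand side of~\eqref{eq:hessianquadform} vanishes for some $(u,s)\ne 0$. Then all three nonnegative summands vanish, giving: (i) $s_i = g_i^\top u$ for all $i$, i.e. $s = G^\top u$ where $G\eqdef[g_1,\dots,g_n]$; (ii) $u^\top Hu = 0$ with $H\eqdef\sum_i\tfrac{r_i}{d_i}\nabla^2 f_i(w^t)\succeq 0$, hence $Hu = 0$; and (iii) $\sum_i s_i = 0$, which with (i) yields $\big(\sum_i g_i\big)^\top u = 0$. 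Since $G$ has rank $d$ by the span assumption~\eqref{eq:gradspant}, the map $u\mapsto G^\top u$ is injective, so it suffices to prove $u = 0$; then $s = G^\top u = 0$, contradicting $(u,s)\ne 0$, and we conclude $\nabla^2 h_t(w^t,\alpha^t)\succ 0$.

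The main obstacle is precisely this last implication: that $Hu = 0$ together with $\big(\sum_i g_i\big)^\top u = 0$ forces $u = 0$. Because $H$ is only assumed positive semidefinite, $Hu = 0$ on its own is not enough, so one has to exploit the interaction between $H$ and the full-rank gradient matrix $G$. A convenient reformulation is to eliminate the $\alpha$-block by a Schur complement: in block form, $\nabla^2 h_t(w^t,\alpha^t)$ has $(w,w)$-block $GD^{-1}G^\top+H$, $(w,\alpha)$-block $-GD^{-1}$, and invertible $(\alpha,\alpha)$-block $D^{-1}+\tfrac1n\ones\ones^\top$ with $D\eqdef\diag{d_1,\dots,d_n}$; a Sherman--Morrison computation then reduces $\nabla^2 h_t\succ 0$ to $H+\tfrac{1}{\,n+\sum_i d_i\,}\big(\sum_i g_i\big)\big(\sum_i g_i\big)^\top\succ 0$, which is the form in which I would invoke~\eqref{eq:gradspant} and~\eqref{eq:hessifiminusalphapp} together.

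This Schur-complement route is also the cleanest way to see that Corollary~\ref{cor:localconvexTAPS} follows by specializing to $(w,\alpha)=(w^*,\alpha^*)$, where $r_i = f_i(w^*)-\alpha_i^* = 0$, so that~\eqref{eq:hessifiminusalphapp} holds trivially (with equality), and to see that the whole argument carries over essentially verbatim to the \MOTAPS proxy~\eqref{eq:hztaumain}, the only change being the extra $\tfrac{\lambda}{2}\tau^2$ term, which contributes a strictly positive $(\tau,\tau)$-block and hence only helps.
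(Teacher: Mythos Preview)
Your quadratic form~\eqref{eq:hessianquadform} is correct, and so is your Schur-complement reduction to
\[
H\;+\;\frac{1}{\,n+\sum_i d_i\,}\Big(\sum_i g_i\Big)\Big(\sum_i g_i\Big)^{\!\top}\;\succ\;0.
\]
The gap is precisely where you stop: this last inequality does \emph{not} follow from $H\succeq 0$ together with the span condition~\eqref{eq:gradspant}. The rank-one summand has rank at most one, so for $d\ge 2$ you would need $H$ to supply the missing $d-1$ positive directions, and $H\succeq 0$ gives no such guarantee. In fact the very specialisation in Corollary~\ref{cor:localconvexTAPS} already fails: at $(w^*,\alpha^*)$ one has $r_i=0$, hence $H=0$, and $\sum_i g_i=n\nabla f(w^*)=0$, so the Schur complement is the zero matrix. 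Equivalently, in your three-term form, pick any $u\neq 0$ and set $s_i=g_i^\top u$; then all three terms vanish while $(u,s)\neq 0$. A one-line instance: $d=1$, $n=2$, $f_1(w)=w^2$, $f_2(w)=(w-1)^2$, $w^*=\tfrac12$, $(u,s_1,s_2)=(1,1,-1)$.

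The reason the paper's argument \emph{appears} to close is a computational slip in the $(\alpha,\alpha)$-block: it records $\nabla_\alpha^2\!\big(\tfrac{n}{2}(\overline\alpha-\tau)^2\big)=\tfrac1n\mI_n$ and writes the block as $(1+\tfrac1n)\mI_n$, whereas the correct Hessian is the rank-one matrix $\tfrac1n\ones\ones^\top$ and the correct block is $\mD_t+\tfrac1n\ones\ones^\top$. With the (incorrect) full-rank block, completing the square produces an extra term $\norm{a}^2_{(1+1/n)\mI_n-\mD_t}$ that is positive definite in $a$, and the paper's ``$\mM_t\succ 0$'' step goes through; with the correct block, one lands exactly on your decomposition and on the obstacle you identified. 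So the difficulty you flagged is not a missing trick---the stated conclusion is too strong. A workable repair is to add a small $\tfrac{\epsilon}{2}\norm{\alpha}^2$ term to the auxiliary objective~\eqref{eq:hz}: then the quadratic form picks up an extra $\epsilon\norm{s}^2$, forcing $s=0$ whenever the form vanishes, after which $g_i^\top u=0$ for all $i$ and~\eqref{eq:gradspant} gives $u=0$.
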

\begin{proof}
We have $ (f_i(w)-\alpha_i)^2$ is locally convex, and thus star convex,  iff its Hessian is positive definite around $(w^*,\alpha^*)$. 
Computing the
gradient of $ (f_i(w)-\alpha_i)^2$  we have that
\[  \nabla (f_i(w)-\alpha_i)^2 = 2\begin{bmatrix}
\nabla f_i(w) \\
-1
\end{bmatrix}(f_i(w)-\alpha_i)\]
Computing the Hessian gives
\begin{align}
 \nabla^2 (f_i(w)-\alpha_i)^2 &=
  2\begin{bmatrix}
\nabla f_i(w) \\
-1
\end{bmatrix}\begin{bmatrix}
\nabla f_i(w) ^\top & - 1
\end{bmatrix}
+
2\begin{bmatrix}
\nabla^2 f_i(w) & 0\\
0 & 0
\end{bmatrix}(f_i(w)-\alpha_i) \nonumber \\
& =  
2\begin{bmatrix}
\nabla f_i(w)\nabla f_i(w) ^\top & -\nabla f_i(w) \\
-\nabla f_i(w) ^\top & 1
\end{bmatrix}
+
2\begin{bmatrix}
\nabla^2 f_i(w) & 0\\
0 & 0
\end{bmatrix}(f_i(w)-\alpha_i) \label{eq:fialphai2hess}
\end{align}

Now let $\mI_n \in \R^{n\times n}$ be the identity matrix in $\R^{n\times n}$, let
\begin{align}
  \mD_t & \eqdef \; 
\diag{\frac{1}{\norm{\nabla f_1(w^t)}^2+1}, \ldots, \frac{1}{\norm{\nabla f_n(w^t)}^2+1}} \in \R^{n\times n} \nonumber \\
  \mH_t(w, \alpha) & \eqdef \;  \sum_{i=1}^{n+1} \nabla^2 f_i(w)  \frac{f_i(w)-\alpha_i}{\norm{\nabla f_i(w^t)}^2+1} \label{eq:zloen49zj9jzz4}
\end{align}
and let
\[D F(w) \; \eqdef
 \; \begin{bmatrix}\nabla f_1(w), \ldots, \nabla f_n(w)\end{bmatrix} \in \R^{d\times n}.\]
Using~\eqref{eq:fialphai2hess} and by the definition of $h_t$ in~\eqref{eq:hz} we have that
\begin{equation}\label{eq:ht2hessstar}
 \nabla^2 h_t(w,
 \alpha) \; = \; 
 \frac{1}{n+1} \underbrace{ \begin{bmatrix}
 DF(w) \mD_t DF(w)^\top & -DF(w)\mD_t  \\
-(DF(w)\mD_t)^\top & \mI_n (1+\frac{1}{n}),
\end{bmatrix} }_{\eqdef \mM_t(w)}
+
\begin{bmatrix}
\mH_t(w,\alpha)  & 0  \\
0 & 0
\end{bmatrix} 
\end{equation}
where we used the $\nabla^2 \frac{n}{2}(\overline{\alpha} - \tau)^2  = \frac{1}{n} \mI_n$.
Thus the matrix~\eqref{eq:ht2hessstar} is a sum of two terms. By the assumption~\eqref{eq:hessifiminusalphapp} 
we have that the second part that contains $\mH_t(w, \alpha)$ is positive semi-definite. Next we will show that the first matrix $\mM_t(w)$
is  symmetric positive definite. Indeed, 
left and right multiplying the above by $[x,\,a]  \in \R^{d+n}$ gives
\begin{eqnarray*}
 \begin{bmatrix}
x & a
\end{bmatrix}^\top 
\mM_t(w)
\begin{bmatrix}
x \\ a
\end{bmatrix} 
&\overset{\eqref{eq:ht2hessstar}}{ =} &
 \begin{bmatrix}
x & a
\end{bmatrix}^\top 
\begin{bmatrix}
 D F(w)  \mD_t D F(w) ^\top x  -D F(w) \mD_t a \\
-(D F(w) \mD_t)^\top x + a (1+\frac{1}{n}).
\end{bmatrix}  \\
&= &
\norm{\mD_t^{1/2}D F(w) ^\top x }^2 -2a (D F(w) \mD_t)^\top x+ (1+\frac{1}{n}) \norm{a}^2\\
& = & \norm{\mD_t^{1/2}(D F(w) ^\top x -a) }^2-\norm{\mD_t^{1/2} a}^2 +  (1+\frac{1}{n}) \norm{a}^2,
\end{eqnarray*}
or in  short
\begin{equation} \label{eq:aoa3oau3nua}
\begin{bmatrix}
x & a
\end{bmatrix}^\top 
\mM_t(w)
\begin{bmatrix}
x \\ a
\end{bmatrix}  \;= \;\norm{D F(w) ^\top x -a }_{\mD_t}^2 +  \norm{a}_{(1+\frac{1}{n}) \mI_n-\mD_t}^2
\end{equation}
Next we show that~\eqref{eq:aoa3oau3nua} is strictly positive for every $(x,a)\neq 0$. To this end, first note that the matrix $(1+\frac{1}{n}) \mI_n-\mD_t $ is positive definite, which follows since the $i$th diagonal element is positive with
\[ \begin{bmatrix}
(1+\frac{1}{n}) \mI_n-\mD_t
\end{bmatrix}_{ii} \; = \; 1 + \frac{1}{n} - \frac{1}{\norm{\nabla f_i(w^t)}^2 +1} \; >\; 0.
\]
Consequently if $a \neq 0$ we have that~\eqref{eq:aoa3oau3nua} is strictly positive. On the other hand, if $a =0$
let us prove by contradiction that~\eqref{eq:aoa3oau3nua} is still positive for $x\neq 0$. Indeed  suppose that $x\neq 0$ and
\[  \norm{D F(w) ^\top x }_{\mD_t}^2 =0 \; \overset{\mD_t \succ 0 }{\implies} \; \sum_{i=1}^n \nabla f_i(w)^\top x =0.\]
 But due to our assumption~\eqref{eq:gradspan},  we have that $D F(w) ^\top$ has full column rank, and thus $x =0$, which is a contradiction. Thus~\eqref{eq:aoa3oau3nua} is positive for every $(x,a)\neq 0$ from which we conclude that the Hessian $\nabla^2  h_t(w,\alpha)  $ in~\eqref{eq:ht2hessstar} is positive definite. 

 \end{proof}

%

The proof of Corollary~\ref{cor:localconvexTAPS} then follows from Lemma~\ref{lem:convexTAPSapp} by plugging in $\alpha^*_i = f_i(w^*)$ into~\eqref{eq:zloen49zj9jzz4}.
%
 
\section{Convergence of the Moving Target Stochastic Polyak Stepsize}
\label{sec:convMOTAPS}

Here we explore the consequences of Theorems~\ref{theo:onlinesgdstar} and ~\ref{theo:onlinesgdstrongstar} specialized to Algorithm~\ref{alg:MOTAPS}.  Throughout this section let 
 \begin{equation} \label{eq:lambdaboundapp}
\lambda \leq \frac{2n+1}{2n+3} < 1
\end{equation}
 and let  $z^t \eqdef (w^t, \alpha^t, \tau_t)$ be the iterates of Algorithm~\ref{alg:MOTAPS} when using a stepsize $\gamma = \gamma_{\tau}.$
 Let
\begin{equation}\label{eq:hztau}
 h_t(z) \; \eqdef \; \frac{1}{n+1}\left(\sum_{i=1}^n \frac{1-\lambda}{2}\frac{(f_i(w) -\alpha_i)^2}{\norm{\nabla f_i(w^t)}^2+1}  + \frac{n(1-\lambda)}{2}(\overline{\alpha} - \tau)^2+\frac{\lambda}{2}\tau^2 \right). 
\end{equation}
and let $w^*$ be a minimizer of~\eqref{eq:main} and let
\begin{equation} \label{eq:alphastarandtaustarapp}
\alpha_i^* \eqdef f_i(w^*) \quad \mbox{and} \quad \tau^* = f(w^*) , \quad \mbox{for }i=1,\ldots, n.
\end{equation}

 \subsection{Proof of Corollary~\ref{cor:MTAPSconvsum}}
\begin{corollary}\label{cor:MTAPSconvsumapp}
If  $\gamma = \gamma_{\tau} = \frac{1}{2(1-\lambda)(2n+1)}$
and if  $h_t(z)$ is star convex along the iterates $z^t $ and around $z^* \eqdef (w^*, \alpha^*, \tau^*) $ then
\begin{equation}\label{eq:MTAPSconvh}
\min_{t=0,\ldots, k} \E{h_{t}(z^t)-h_t(z^*)} \; \leq \; \frac{2(1-\lambda)(2n+1)}{k} \norm{z^{0} -z^*}^2+\frac{\lambda f(w^*)^2}{2(n+1)}.
\end{equation} 
Furthermore, if $f_i$ is $L_{\max}$--Lipschitz then
\begin{align}
 \frac{1}{n+1}\E{\sum_{i=1}^n \frac{1}{2}\frac{(f_i(w^t) -\alpha_i^t)^2}{L_{\max}+1}  + \frac{n}{2}(\overline{\alpha}^t - \tau^t)^2+\frac{\lambda}{2}\left( (\tau^t)^2  - f(w^*)^2\right)} \nonumber \\
 \quad \quad  \; \leq \;\frac{2(1-\lambda)(2n+1)}{k} \norm{z^{0} -z^*}^2+\frac{\lambda f(w^*)^2}{2(n+1)}. \label{eq:hztaufilipschitz}
 \end{align}
\end{corollary}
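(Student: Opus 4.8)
The plan is to obtain both displays as a direct instantiation of Theorem~\ref{theo:onlinesgdstarapp} (the version of Theorem~\ref{theo:onlinesgdstar} that permits $h_t(z^*)\neq 0$, which is exactly what we need here). Recall from Section~\ref{sec:theory} that Algorithm~\ref{alg:MOTAPS} is online SGD \eqref{eq:ztupdateht} applied to the auxiliary objective $h_t(z)$ of \eqref{eq:hztaumain}, partitioned into the $n+1$ blocks $h_{i,t}$ from Section~\ref{sec:MOTAPS}; the only wrinkle is that the update of the scalar $\tau$ uses the separate rate $\gamma_\tau$, and choosing $\gamma_\tau=\gamma(\lambda+(1-\lambda)n)$ is precisely what turns line~\ref{ln:alphaggregatet} back into the honest gradient step $\tau^{t+1}=\tau^t-\gamma\nabla_\tau h_{n+1,t}(z^t)$, cf.\ \eqref{eq:tausgdupdate1}. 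With this identification, Assumption~\ref{lem:growthgen} holds with $G=(1-\lambda)(2n+1)$ by Lemma~\ref{lem:MOSTAPSweakgrowthn1} (which is why the standing bound $\lambda\le\frac{2n+1}{2n+3}$ is imposed), and Lemma~\ref{lem:movtargequiv} supplies the value at the optimum, $h_t(z^*)=\tfrac{\lambda f(w^*)^2}{2(n+1)}$, for $z^*=(w^*,\alpha^*,\tau^*)$ as in \eqref{eq:alphastarandtaustarapp}.

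Next I would check the constants. The prescribed stepsize $\gamma=\tfrac{1}{2(1-\lambda)(2n+1)}=\tfrac{1}{2G}$ satisfies $\gamma<1/G$ and gives $G\gamma=\tfrac12$, hence $\tfrac{1}{2\gamma(1-G\gamma)}=2G=2(1-\lambda)(2n+1)$ and $\tfrac{G\gamma}{1-G\gamma}=1$. Since $h_t$ is assumed star-convex along the iterates $z^t$ and around $z^*$ (its sufficient conditions being those discussed around Lemma~\ref{lem:convexTAPS}, which carry over to \eqref{eq:hztaumain} up to the extra strongly convex term $\tfrac{\lambda}{2}\tau^2$), Theorem~\ref{theo:onlinesgdstarapp} applies, and \eqref{eq:hconv} becomes
\[
\min_{t=1,\dots,k}\E{h_t(z^t)-h_t(z^*)}\;\le\;\frac{2(1-\lambda)(2n+1)}{k}\norm{z^0-z^*}^2+\frac1k\sum_{t=1}^k h_t(z^*);
\]
because $h_t(z^*)$ is independent of $t$, the last average collapses to $\tfrac{\lambda f(w^*)^2}{2(n+1)}$, which is \eqref{eq:MTAPSconvh} (no expectation is needed in front of $\norm{z^0-z^*}^2$ since $z^0$ is deterministic).

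Finally, for \eqref{eq:hztaufilipschitz} I would expand $h_t(z^t)-h_t(z^*)$ using \eqref{eq:hztaumain} and \eqref{eq:htstarmain}, and then, exactly as in the passage \eqref{eq:zojapjo9sj41}--\eqref{eq:zojapjo9sj4} of the proof of Corollary~\ref{theo:TAPSconvsum}, use that $f_i$ is $L_{\max}$-Lipschitz to bound each denominator $\norm{\nabla f_i(w^t)}^2+1$ by $L_{\max}+1$; lower-bounding the minimum over $t$ by the Cesàro average and combining with the display above yields \eqref{eq:hztaufilipschitz}. I expect the only step that needs real care to be the very first identification — confirming that the two learning rates of Algorithm~\ref{alg:MOTAPS} jointly realize a single genuine SGD step on \eqref{eq:hztaumain}, which is precisely the reason the corollary couples $\gamma_\tau$ to $\gamma$ — after which everything is substitution of the constants $G$, $\gamma$, and $h_t(z^*)$ into Theorem~\ref{theo:onlinesgdstarapp}.
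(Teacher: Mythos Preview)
Your proposal is correct and follows essentially the same route as the paper: invoke Lemma~\ref{lem:MOSTAPSweakgrowthn1} to get $G=(1-\lambda)(2n+1)$, invoke Lemma~\ref{lem:movtargequiv} to get $h_t(z^*)=\tfrac{\lambda f(w^*)^2}{2(n+1)}$, then plug $\gamma=\tfrac{1}{2G}$ into Theorem~\ref{theo:onlinesgdstarapp} and simplify; for the second display, lower-bound $h_t(z^t)-h_t(z^*)$ using $\|\nabla f_i(w^t)\|^2+1\le L_{\max}+1$. Your explicit arithmetic for the constants and your remark that the coupling $\gamma_\tau=\gamma(\lambda+(1-\lambda)n)$ is what makes the $\tau$-update an honest SGD step (cf.\ \eqref{eq:tausgdupdate1}) are in fact more careful than the paper's own proof, which glosses over both points; note that the hypothesis $\gamma=\gamma_\tau$ stated in this appendix version is at odds with the main-text Corollary~\ref{cor:MTAPSconvsum}, and your reading is the one that actually supports the SGD identification.
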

\begin{proof}
The proof follows by applying Theorem~\ref{theo:onlinesgdstar} and Lemmas~\ref{lem:MOSTAPSweakgrowthn1} and~\ref{lem:movtargequiv}.
Indeed $h_t$ satisfies the growth condition~\eqref{eq:hweakgrow} with $G = (1-\lambda)(2n+1)$. By assuming that $h_t$ is star convex along the iterates $z^t$ we have satisfied the two condition required for Theorem~\ref{theo:onlinesgdstar} to hold, 
which when substituting in $G$ and  $\gamma = \frac{1}{2(1-\lambda)(2n+1)}$ gives
\begin{align}
\min_{t=1,\ldots, k} \E{h_t(z^t) -h_t(z^*)} 
 & \leq \; \frac{2(1-\lambda)(2n+1)}{k} \E{\norm{z^{0} -z^*}^2} +\frac{1}{k} \sum_{t=1}^k h_t(z^*).\label{eq:mottapstarapptemp}
\end{align}

Furthermore using the bound~\eqref{eq:htstarmain}  in Lemma~\ref{lem:movtargequiv} 
we have that
\begin{align*}
\frac{1}{k} \sum_{t=1}^k h_t(z^*)
& =   \frac{\lambda f(w^*)^2}{2(n+1)} 
\end{align*}
and thus~\eqref{eq:MTAPSconvh} holds.  Finally, if $f_i$ is $L_{\max}$--Lipschitz, that is if $\| \nabla f_i(w^t)\| \leq L_{\max}$, then 
using the definition of $h_t(z)$ in ~\eqref{eq:hztau}  we can lower bound $h_t(z^t)-h_t(z^*)$ by the left-hand side of~\eqref{eq:hztaufilipschitz}.
\end{proof}

 \subsection{Proof of Corollary~\ref{cor:MTAPSconvhstrong}}
 
\begin{corollary}\label{cor:MTAPSconvhstrong}
If  $\gamma = \gamma_{\tau} = \frac{1}{(1-\lambda)(2n+1)}$
and if  $h_t(z)$ is $\mu$--strongly star--convex along the iterates $z^t $ and around $z^* \eqdef (w^*, \alpha^*, \tau^*) $ then
\begin{eqnarray}\label{eq:MTAPSconvhstrong}
\E{\norm{z^{t+1} -z^*}^2} & \leq &  \big(1- \frac{\mu}{(1-\lambda)(2n+1)} \big)^{t+1} \norm{z^{0} -z^*}^2  +  \frac{\lambda f(w^*)^2}{\mu(n+1)}.
\end{eqnarray}
\end{corollary}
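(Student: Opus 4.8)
The plan is to obtain this as an immediate specialization of the general linear‑convergence result, Theorem~\ref{theo:onlinesgdstrongstar}, to the online‑SGD reformulation of \MOTAPS. Recall that with the prescribed step size $\gamma = \gamma_\tau = \frac{1}{(1-\lambda)(2n+1)}$, Algorithm~\ref{alg:MOTAPS} is exactly the online SGD iteration~\eqref{eq:ztupdateht} applied to the proxy function $h_t(z)$ in~\eqref{eq:hztau}, with $z = (w,\alpha,\tau)$. So to invoke Theorem~\ref{theo:onlinesgdstrongstar} I only need to verify its hypotheses and then substitute numbers.

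First I would pin down the growth constant $G$ in Assumption~\ref{lem:growthgen}. Since $\lambda \le \frac{2n+1}{2n+3}$ by~\eqref{eq:lambdaboundapp}, Lemma~\ref{lem:MOSTAPSweakgrowthn1} applies and gives $\frac{1}{n+1}\sum_{i=1}^{n+1}\norm{\nabla h_{i,t}(z^t)}^2 \le 2(1-\lambda)(2n+1)\,h_t(z^t)$; taking expectation over the uniform sample $i_t$ this is precisely~\eqref{eq:hweakgrow} with $G = (1-\lambda)(2n+1)$. With $\gamma = \frac{1}{(1-\lambda)(2n+1)}$ we have $\gamma = 1/G$, so the step‑size hypothesis $\gamma \le 1/G$ of Theorem~\ref{theo:onlinesgdstrongstar} holds (with equality), and $\mu$--strong star‑convexity of $h_t$ along $z^t$ and around $z^*$ is assumed in the statement. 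Hence Theorem~\ref{theo:onlinesgdstrongstar} yields
\[
\E{\norm{z^{t+1}-z^*}^2} \;\le\; (1-\gamma\mu)^{t+1}\norm{z^0-z^*}^2 \;+\; 2G\gamma^2\sum_{i=0}^{t}(1-\gamma\mu)^i\,\E{h_i(z^*)}.
\]

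It then remains to evaluate the additive term. By Lemma~\ref{lem:movtargequiv} (equation~\eqref{eq:htstarmain}), with $\alpha_i^* = f_i(w^*)$ and $\tau^* = f(w^*)$ we have $h_i(z^*) = \frac{\lambda f(w^*)^2}{2(n+1)}$ for every index $i$, independently of $i$. Pulling this constant out of the sum and bounding the geometric series by $\sum_{i=0}^{t}(1-\gamma\mu)^i \le \frac{1}{\gamma\mu}$ (valid since $\gamma\mu\in(0,1]$, i.e. $\mu\le G$), the additive term is at most $2G\gamma^2\cdot\frac{1}{\gamma\mu}\cdot\frac{\lambda f(w^*)^2}{2(n+1)} = \frac{2G\gamma}{\mu}\cdot\frac{\lambda f(w^*)^2}{2(n+1)} = \frac{\lambda f(w^*)^2}{\mu(n+1)}$, where in the last step we used $G\gamma = 1$. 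Finally $\gamma\mu = \frac{\mu}{(1-\lambda)(2n+1)}$, so substituting into $(1-\gamma\mu)^{t+1}$ gives exactly~\eqref{eq:MTAPSconvhstrong}. The whole argument is bookkeeping; the only mildly delicate points are confirming that the growth constant is $G = (1-\lambda)(2n+1)$ (so that $\gamma = 1/G$ makes the contraction factor and the coefficient $G\gamma$ come out clean) and that $\gamma\mu\le 1$ so the geometric‑series bound is legitimate. I do not expect any substantive obstacle here.
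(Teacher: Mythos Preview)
Your proposal is correct and follows essentially the same route as the paper: invoke Theorem~\ref{theo:onlinesgdstrongstar} with the growth constant $G=(1-\lambda)(2n+1)$ from Lemma~\ref{lem:MOSTAPSweakgrowthn1}, plug in $h_i(z^*)=\frac{\lambda f(w^*)^2}{2(n+1)}$ from Lemma~\ref{lem:movtargequiv}, bound the geometric series by $1/(\gamma\mu)$, and simplify using $G\gamma=1$. The paper's own proof is the same sequence of substitutions.
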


 \begin{proof}
The proof follows by applying Theorem~\ref{theo:onlinesgdstrongstar} and Lemmas~\ref{lem:MOSTAPSweakgrowthn1} and~\ref{lem:movtargequiv}.
Indeed by Lemma~\ref{lem:MOSTAPSweakgrowthn1}
$h_t$ satisfies the growth condition~\eqref{eq:hweakgrow} with $(1-\lambda)(2n+1)$. By assuming that $h_t$ is $\mu$--strongly star convex along the iterates $z^t$ we have satisfied the two condition required for Theorem~\ref{theo:onlinesgdstrongstar} to hold. Finally
using~\eqref{eq:htstarmain}  in Lemma~\ref{lem:movtargequiv}
we have that
\begin{align*}
h_t(z^*)
& =  \frac{\lambda f(w^*)^2}{2(n+1)} 
\end{align*}
  and as a consequence  Theorem~\ref{theo:onlinesgdstrongstar} gives
  \begin{eqnarray}\label{eq:MTAPSconvhproof}
\E{\norm{z^{t+1} -z^*}^2} & \leq &   (1- \frac{\mu}{(1-\lambda)(2n+1)} )^{t+1} \norm{z^{0} -z^*}^2 \nonumber \\
&&\quad  + \frac{2}{(1-\lambda)(2n+1)} \sum_{i=0}^{t}(1-\gamma \mu)^i  \frac{\lambda f(w^*)^2}{2(n+1)}. \nonumber \\
& \leq &  (1- \frac{\mu}{(1-\lambda)(2n+1)} )^{t+1} \norm{z^{0} -z^*}^2 \nonumber \\
&&\quad  + \frac{2}{(1-\lambda)(2n+1)}\frac{1}{\gamma \mu}   \frac{\lambda f(w^*)^2}{2(n+1)}. \nonumber \\
& =&  (1- \frac{\mu}{(1-\lambda)(2n+1)} )^{t+1} \norm{z^{0} -z^*}^2  +  \frac{\lambda f(w^*)^2}{\mu(n+1)},\label{eq:tempusedlater}
\end{eqnarray}
where in the last equality we used that $\gamma = \frac{1}{(1-\lambda)(2n+1)}.$
\end{proof}

 \subsection{Proof of Theorem~\ref{theo:complexdecreaseMOTAPS}}
 
\begin{theorem} \label{theo:complexdecreaseMOTAPS}
Let $h_t(z)$
be  $\mu$-strongly star--convex along the iterates $z^t $ and around $z^* \eqdef (w^*, \alpha^*, \tau^*).$ Let $\epsilon >0.$ 
If we use an iteration dependent stepsize in Algorithm~\ref{alg:MOTAPS} given by
\begin{align}\label{eq:gammatmotaps}
\gamma_t   =  
\begin{cases} \displaystyle
 \frac{1}{(1-\lambda)(2n+1)} & \mbox{if } t \leq 2  (2n+1)\left \lceil \frac{1-\lambda}{\mu} \right\rceil \\[0.5cm]
  \displaystyle  \frac{(t+1)^2 -t^2}{\mu (t+1)^2} & \mbox{if } t \geq 2  (2n+1)\left \lceil \frac{1-\lambda}{\mu} \right\rceil
\end{cases}
\end{align}
and if
\[\lambda \leq  \min \left\{ 1- \frac{2\mu}{2n+1}, \; \frac{2n+1}{2n+3} \right\} .\]
then
\begin{equation}\label{eq:complexdecreaseMOTAPS}
\EE{}{\norm{z^{t} -z^*}^2}  \leq  \frac{(1-\lambda)\lambda f(w^*)^2}{\mu^2} \frac{16}{t} + \frac{4(2n+1)^2}{e^2 t^2} \left \lceil \frac{1-\lambda}{\mu} \right\rceil^2 \norm{z^{0} -z^*}^2.
\end{equation}
\end{theorem}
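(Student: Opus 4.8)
The plan is to reduce everything to the scalar recursion on $a_t \eqdef \E{\norm{z^t-z^*}^2}$ that is already produced inside the proof of Theorem~\ref{theo:onlinesgdstrongstar}. By Lemma~\ref{lem:MOSTAPSweakgrowthn1} the growth Assumption~\ref{lem:growthgen} holds for the \MOTAPS proxy~\eqref{eq:hztaumain} with $G=(1-\lambda)(2n+1)$, and by Lemma~\ref{lem:movtargequiv} we have $h_t(z^*)=\tfrac{\lambda f(w^*)^2}{2(n+1)}$ for every $t$. Hence, whenever the running stepsize satisfies $\gamma_t\le 1/G$, the one-step inequality~\eqref{eq:theolinearonestep} gives
\[
a_{t+1}\;\le\;(1-\gamma_t\mu)\,a_t+2G\gamma_t^2\,h_t(z^*)\;=\;(1-\gamma_t\mu)\,a_t+c\,\gamma_t^2,\qquad c\eqdef\tfrac{(1-\lambda)(2n+1)\lambda f(w^*)^2}{n+1}.
\]
So the first thing to verify is $\gamma_t\le 1/G$ throughout the schedule~\eqref{eq:gammatmotaps}: on the first phase $\gamma_t=1/G$ exactly; on the second phase $\gamma_t=\tfrac{2t+1}{\mu(t+1)^2}$ is decreasing in $t$, and at the switching time $t_0\eqdef 2(2n+1)\lceil\tfrac{1-\lambda}{\mu}\rceil$ one has $\gamma_{t_0}\le\tfrac{2}{\mu t_0}\le\tfrac{1}{(2n+1)(1-\lambda)}=1/G$ because $\lceil\tfrac{1-\lambda}{\mu}\rceil\ge\tfrac{1-\lambda}{\mu}$.

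For the constant-stepsize phase $t\le t_0$ the iterates coincide with those analysed in Corollary~\ref{cor:MTAPSconvhstrongmain}, so
\[
a_{t_0}\;\le\;\big(1-\tfrac{\mu}{G}\big)^{t_0}\norm{z^0-z^*}^2+\tfrac{\lambda f(w^*)^2}{\mu(n+1)}\;\le\;e^{-2}\norm{z^0-z^*}^2+\tfrac{\lambda f(w^*)^2}{\mu(n+1)},
\]
using $(1-\tfrac{\mu}{G})^{t_0}\le e^{-\mu t_0/G}$ and $\mu t_0/G\ge 2$ (again from $\lceil\tfrac{1-\lambda}{\mu}\rceil\ge\tfrac{1-\lambda}{\mu}$, hence $t_0\ge 2G/\mu$). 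For the decreasing phase $t\ge t_0$, the stepsize $\gamma_t=\tfrac{2t+1}{\mu(t+1)^2}$ is chosen precisely so that $1-\gamma_t\mu=\tfrac{t^2}{(t+1)^2}$; multiplying the recursion by $(t+1)^2$ and setting $b_t\eqdef t^2a_t$ gives $b_{t+1}\le b_t+\tfrac{c(2t+1)^2}{\mu^2(t+1)^2}\le b_t+\tfrac{4c}{\mu^2}$ since $(2t+1)^2\le 4(t+1)^2$. Telescoping from $t_0$ to $t$ yields $b_t\le b_{t_0}+\tfrac{4c}{\mu^2}(t-t_0)$, i.e. $a_t\le\tfrac{t_0^2}{t^2}a_{t_0}+\tfrac{4c}{\mu^2 t}$.

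It then remains to substitute the first-phase bound on $a_{t_0}$ together with $t_0^2=4(2n+1)^2\lceil\tfrac{1-\lambda}{\mu}\rceil^2$: the $e^{-2}\norm{z^0-z^*}^2$ piece reproduces exactly the term $\tfrac{4(2n+1)^2}{e^2t^2}\lceil\tfrac{1-\lambda}{\mu}\rceil^2\norm{z^0-z^*}^2$ of~\eqref{eq:complexdecreaseMOTAPS}, and the remaining contributions — the propagated floor $\tfrac{t_0^2}{t^2}\cdot\tfrac{\lambda f(w^*)^2}{\mu(n+1)}$ and the accumulated noise $\tfrac{4c}{\mu^2 t}$ — have to be shown to sum to at most $\tfrac{16(1-\lambda)\lambda f(w^*)^2}{\mu^2 t}$, using $c\le 2(1-\lambda)\lambda f(w^*)^2$ (since $\tfrac{2n+1}{n+1}\le 2$), the inequality $\tfrac{t_0^2}{t^2}\le\tfrac{t_0}{t}$ valid for $t\ge t_0$, and the hypothesis $\lambda\le 1-\tfrac{2\mu}{2n+1}$ (equivalently $\tfrac{1-\lambda}{\mu}\ge\tfrac{2}{2n+1}$), which controls $t_0$ relative to $\tfrac{(1-\lambda)(n+1)}{\mu}$. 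The main obstacle is exactly this last bookkeeping step: one must juggle the ceiling $\lceil\cdot\rceil$, the constants $G$, $c$, $t_0$, and the interplay of the two branches of~\eqref{eq:gammatmotaps} so that everything collapses into the clean closed form with the stated constants $16$ and $e^{-2}$ — the assumption $\lambda\le 1-\tfrac{2\mu}{2n+1}$ is present precisely to make this go through. Everything else (the one-step recursion, the check $\gamma_t\le 1/G$, and the $b_t=t^2a_t$ telescoping) is routine once the schedule is split at $t_0$.
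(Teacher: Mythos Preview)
Your setup is exactly what the paper does: use Lemma~\ref{lem:MOSTAPSweakgrowthn1} to get $G=(1-\lambda)(2n+1)$, use Lemma~\ref{lem:movtargequiv} to get $h_t(z^*)=\tfrac{\lambda f(w^*)^2}{2(n+1)}$, and feed both into the one--step bound~\eqref{eq:theolinearonestep} from the proof of Theorem~\ref{theo:onlinesgdstrongstar} to obtain the scalar recursion $r_{t+1}\le(1-\gamma_t\mu)r_t+2\gamma_t^2\sigma^2$ with $\sigma^2=2(1-\lambda)\lambda f(w^*)^2$. Where you diverge is in what happens next: the paper does \emph{not} carry out the two--phase analysis by hand. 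Instead, it observes that this recursion with the schedule~\eqref{eq:gammatmotaps} is literally the setting of Theorem~3.2 in~\cite{gower2019sgd} (with $2\mathcal{L}=(1-\lambda)(2n+1)$, $\mathcal{K}=\mathcal{L}/\mu$), invokes that result as a black box to get $r_t\le \tfrac{8\sigma^2}{\mu^2 t}+\tfrac{16\lceil\mathcal{K}\rceil^2}{e^2t^2}r_0$, and then back--substitutes. The hypothesis $\lambda\le 1-\tfrac{2\mu}{2n+1}$ enters only to guarantee $\mathcal{K}\ge 1$, which that external theorem requires.

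Your direct route (constant phase $+$ $b_t=t^2a_t$ telescoping) is essentially what Theorem~3.2 of~\cite{gower2019sgd} does internally, so it is a legitimate alternative and everything up to the telescoped bound $a_t\le \tfrac{t_0^2}{t^2}a_{t_0}+\tfrac{4c}{\mu^2 t}$ is correct. The place to be careful is the last step you flag: your proposed mechanism---bound the propagated floor via $\tfrac{t_0^2}{t^2}\le\tfrac{t_0}{t}$ and then use $\lambda\le 1-\tfrac{2\mu}{2n+1}$ to control $t_0$ against $\tfrac{(1-\lambda)(n+1)}{\mu}$---does not obviously close with the stated constant $16$. Concretely, absorbing $\tfrac{t_0}{t}\cdot\tfrac{\lambda f(w^*)^2}{\mu(n+1)}$ into $\tfrac{8(1-\lambda)\lambda f(w^*)^2}{\mu^2 t}$ requires $\lceil\tfrac{1-\lambda}{\mu}\rceil\le \tfrac{4(n+1)}{2n+1}\cdot\tfrac{1-\lambda}{\mu}$, which in turn needs roughly $\tfrac{1-\lambda}{\mu}\ge\tfrac12$; the hypothesis only gives $\tfrac{1-\lambda}{\mu}\ge\tfrac{2}{2n+1}$. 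So either tighten that combination (e.g.\ run the telescoping one step earlier, or keep the floor term as $O(1/t^2)$ and merge it with the $\|z^0-z^*\|^2$ term instead), or---as the paper does---simply appeal to~\cite[Theorem~3.2]{gower2019sgd} and avoid the constant--chasing altogether.
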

\begin{proof}

Following the proof of Theorem~\ref{theo:onlinesgdstrongstar} upto~\eqref{eq:theolinearonestep}, we have that for $\gamma \leq \frac{1}{G} = \frac{1}{(1-\lambda)(2n+1)} $ and 
$h_t(z^*)
 =  \frac{\lambda f(w^*)^2}{2(n+1)} $
 that
\begin{align}
\EE{t}{\norm{z^{t+1} -z^*}^2} &  \leq  (1-\gamma \mu) \norm{z^{t} -z^*}^2 +2 \gamma^2(1-\lambda)(2n+1) \frac{\lambda f(w^*)^2}{2(n+1)} \nonumber \\
& \leq  (1-\gamma \mu) \norm{z^{t} -z^*}^2 +4 \gamma^2(1-\lambda)\lambda f(w^*)^2.
\label{eq:onestepmidproofMOTAPScomplex}
\end{align}
Taking expectation and using the abbreviations
\begin{equation}\label{eq:Lzloe8ozjrz}
r_t \eqdef \EE{}{\norm{z^{t} -z^*}^2}  \quad \mbox{and} \quad \sigma^2 \eqdef 2(1-\lambda) \lambda f(w^*)^2,
\end{equation}
gives that
\begin{equation}\label{eq:onestepmidproofMOTAPScomplex2}
r^{t+1} \leq  (1-\gamma \mu) r^t +2 \gamma^2\sigma^2.
\end{equation}
With this notation, this is now identical to the setting of Theorem 3.2 in~\cite{gower2019sgd}. Using the notation of  Theorem 3.2 in~\cite{gower2019sgd} we have that  $2\mathcal{L} = (1-\lambda)(2n+1)$ and consequently $\mathcal{K} =\frac{ \mathcal{L}}{\mu}  =\frac{1}{2}(2n+1)\left \lceil \frac{1-\lambda}{\mu} \right\rceil$.  As a result of Theorem 3.2 in~\cite{gower2019sgd} we have that
\begin{equation}\label{eq:sgdgenanaltysisreults}
r^{t} \leq  \frac{\sigma^2}{\mu^2} \frac{8}{t} + \frac{16 \lceil \mathcal{K} \rceil^2}{e^2 t^2} r^0.
\end{equation}
Substituting back the definitions given in~\eqref{eq:Lzloe8ozjrz} gives~\eqref{eq:complexdecreaseMOTAPS}. Though one detail in the proof of Theorem 3.2 in~\cite{gower2019sgd} is that $\mathcal{K} \geq 1$, which in our case holds it
\[\lambda \leq 1- \frac{2\mu}{2n+1}.\]
\end{proof}
 \section{Convergence of \SP Through Star Convexity with $\gamma =1$}

\label{sec:SPSalternativegamma1theory}
For completeness, we present yet another viewpoint of the \SP method that is closely related to Polyak's original motivation. We also prove convergence of \SP with a large stepsize of $\gamma =1.$ This complements both our convergence result for the \SP method in Corollary~\ref{cor:SPSconvexconv} which holds for $\gamma <1$.

Consider the stochastic gradient method given by
\begin{equation}\label{eq:SGD}
 w^{t+1} = w^t - \gamma^t  \nabla f_i(w^t),
\end{equation}
where $\gamma^t>0$ is a step size which we will now  choose.
Expanding the squares we have that
\begin{align}\label{eq:iterateexpsgd}
\norm{w^{t+1} -w^*}^2  = \norm{w^t -w^*}^2 - 2\dotprod{\gamma^t  \nabla f_i(w^t), w^t-w^* } + \norm{\gamma^t  \nabla f_i(w^t)}^2.
\end{align}

We would like to choose $\gamma^t$ so as to give the best possible upper bound in the above. Unfortunately we cannot directly minimize the above in $\gamma^t$ since we do not know $w^*.$ However, if each loss function is \emph{star convex}, then there is hope.
\begin{assumption}\label{ass:starconvexfi}
We say that $f_i$ is star convex if 
\begin{equation}\label{eq:stari}
f_i(w^*) \geq f_i(w) + \dotprod{\nabla f_{i}(w), w^*-w }, \quad \mbox{for }i=1,\ldots, n.
\end{equation}
\end{assumption}

Using star convexity~\eqref{eq:stari} in the above gives
\begin{align}\label{eq:polyakmot}
\norm{w^{t+1} -w^*}^2  & \overset{\eqref{eq:stari}}{\leq} \norm{w^t -w^*}^2 - 2\gamma^t(f_i(w^t) -f(w^*)) + (\gamma^t)^2 \norm{  \nabla f_i(w^t)}^2.
 \end{align}
We can now minimize the righthand side in $\gamma^t$, which gives exactly the \emph{Polyak stepsize}
\begin{equation}\label{eq:polyak}
\gamma^t \quad =\quad \frac{f_i(w^t) -f_i(w^*)}{ \norm{  \nabla f_i(w^t)}^2} .
\end{equation}
With this stepsize, the resulting update is given by
\begin{equation}\label{eq:polyakupdate}
w^{t+1} = w^t - \frac{f_i(w^t) -f_i(w^*)}{ \norm{  \nabla f_i(w^t)}^2} \nabla f_i(w^t).
\end{equation}
The iterative scheme~\eqref{eq:polyakupdate} is now completely scale invariant. That is, the iterates are invariant to replacing $f_i(w)$ by $c_i\,f_i(w)$ where $c_i>0.$ 

\begin{theorem}[Convergence for $\gamma_t \equiv 1$] \label{theo:polyakmasterstoch}
Let Assumptions~\ref{ass:starconvexfi} and~\ref{ass:interpolate} hold.
The iterates~\eqref{eq:SPS} satisfy
\begin{equation}  \label{eq:aisdninasdi} 
\norm{w^{t+1} -w^*}^2 \; \leq \; \norm{w^t -w^*}^2 -  \frac{f_{i_t}(w^t)^2}{\norm{ \nabla f_{i_t}(w^t)}^2} .
\end{equation}
Furthermore, if we assume that there exists $\cL>0$ such that the following 
\emph{expected smoothness} bound holds
\begin{equation}\label{eq:smoothi}
 \EE{i \sim p_i}{\frac{ f_i(w^t) ^2}{\norm{\nabla f_i(w^t)}^2}} \geq \frac{2}{ \cL} f(w^t),
\end{equation}
then
\begin{equation}\label{eq:convbsmooth}
\boxed{\min_{j=1,\ldots, k-1} \E{f(w^j) -f^*}  \;\; \leq  \;\;\frac{\cL }{2k} \norm{w^0-w^*}^2}.
\end{equation}
\end{theorem}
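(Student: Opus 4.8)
The plan is to establish \eqref{eq:aisdninasdi} first as a \emph{deterministic} one-step inequality — one that holds for every realization of the sampled index $i_t$ — and then to obtain \eqref{eq:convbsmooth} by taking expectations and telescoping.

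For the first part I would start from the expansion \eqref{eq:iterateexpsgd} of $\norm{w^{t+1}-w^*}^2$ written with a generic stepsize $\gamma^t$, apply the star-convexity inequality \eqref{eq:stari} to lower bound $\dotprod{\nabla f_{i_t}(w^t),\,w^t-w^*}$ by $f_{i_t}(w^t)-f_{i_t}(w^*)$, and thereby reach \eqref{eq:polyakmot}. The right-hand side of \eqref{eq:polyakmot} is a scalar quadratic in $\gamma^t$ whose minimizer is exactly the Polyak stepsize \eqref{eq:polyak}, i.e.\ the choice $\gamma=1$ in \eqref{eq:SPS}; substituting \eqref{eq:polyak} back into the quadratic leaves the minimum value $\norm{w^t-w^*}^2 - (f_{i_t}(w^t)-f_{i_t}(w^*))^2/\norm{\nabla f_{i_t}(w^t)}^2$, which under the interpolation Assumption~\ref{ass:interpolate} (with the normalization $f_{i_t}(w^*)=0$) is precisely \eqref{eq:aisdninasdi}. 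No positivity of $f_{i_t}(w^t)-f_{i_t}(w^*)$ is needed here, since the correction is a negative square; and the degenerate case $\nabla f_{i_t}(w^t)=0$ is harmless under the pseudoinverse convention, since star-convexity together with interpolation then forces $f_{i_t}(w^t)=f_{i_t}(w^*)$.

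For the rate I would take the expectation of \eqref{eq:aisdninasdi} conditioned on $w^t$ over $i_t\sim p_i$, which turns the correction term into $\EE{i\sim p_i}{f_i(w^t)^2/\norm{\nabla f_i(w^t)}^2}$; the expected-smoothness hypothesis \eqref{eq:smoothi} then lower bounds this by $\tfrac{2}{\cL}f(w^t)=\tfrac{2}{\cL}(f(w^t)-f^*)$ (using $f^*=0$). Using the tower rule and taking total expectations yields $\E{\norm{w^{t+1}-w^*}^2}\le \E{\norm{w^t-w^*}^2}-\tfrac{2}{\cL}\E{f(w^t)-f^*}$. Summing this telescoping inequality over $t=0,\dots,k-1$, discarding the nonnegative terminal term $\E{\norm{w^k-w^*}^2}$, and bounding the resulting sum below by $k$ times $\min_{j}\E{f(w^j)-f^*}$ gives $\tfrac{2k}{\cL}\min_{j}\E{f(w^j)-f^*}\le \norm{w^0-w^*}^2$ (recall $w^0$ is deterministic), which rearranges to \eqref{eq:convbsmooth}.

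The individual steps are short, so the only genuine difficulty is conceptual rather than computational: recognizing that \eqref{eq:aisdninasdi} is the \emph{exact} minimum of the quadratic majorant produced by star-convexity, so that the $\gamma=1$ Polyak step is already optimal in this bound and — unlike Corollary~\ref{cor:SPSconvexconv}, which needs $\gamma<1$ — no damping of the stepsize is required. A secondary point is the bookkeeping between $f_i(w^*)$ and its normalization to $0$ under interpolation, and observing that the assumed bound \eqref{eq:smoothi} is exactly what converts the sampled ratios $f_i(w^t)^2/\norm{\nabla f_i(w^t)}^2$ into control of the suboptimality $f(w^t)-f^*$; the remainder is the standard "telescope a one-step decrease" argument.
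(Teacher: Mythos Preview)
Your proposal is correct and follows essentially the same approach as the paper: derive \eqref{eq:aisdninasdi} by substituting the Polyak stepsize (the minimizer of the quadratic in \eqref{eq:polyakmot}) back into \eqref{eq:polyakmot}, then telescope and apply \eqref{eq:smoothi} in expectation to obtain \eqref{eq:convbsmooth}. The only cosmetic difference is that the paper telescopes \eqref{eq:aisdninasdi} pathwise before taking expectation, whereas you take conditional expectation first and then telescope; both orderings are valid and yield the same bound.
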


\begin{proof}
Substituting~\eqref{eq:polyak} into~\eqref{eq:polyakmot} gives~\eqref{eq:aisdninasdi}. 
Summing up both sides of~\eqref{eq:aisdninasdi} from $t=0,\ldots, k-1$, using telescopic cancellation and re-arranging gives
\begin{align}
\sum_{t=0}^{k-1}  \frac{ f_{i_t}(w^t)^2}{\norm{\nabla f_{i_t}(w^t)}^2} \leq \norm{w^0-w^*}^2. \label{eq:laststepansui}
\end{align}

Taking expectation and using~\eqref{eq:smoothi} in~\eqref{eq:laststepansui} gives
\begin{align}
\sum_{t=0}^{k-1} \frac{2}{\cL }\E{f(w^t)-f^*}  \overset{\eqref{eq:smoothi}}{\leq}\; \sum_{t=0}^{k-1}\E{ \frac{ f_{i_t}(w^t)^2}{\norm{  \nabla f_{i_t}(w^t)}^2}} \; \leq \;\norm{w^0-w^*}^2. \label{eq:laststepansu}
 \end{align}
Consequently
\begin{align*}
\min_{j=1,\ldots, k-1} \E{f(w^j) -f^*}  & \leq  \;\sum_{t=0}^{k-1} \frac{\E{f(w^t)-f^*}}{k} \; \leq \; \frac{1}{k} \frac{\cL }{2} \norm{w^0-w^*}^2.
\end{align*}
\end{proof}
Note that the Expected Smoothness bound in~\eqref{eq:smoothi} has been proven to be a consequence of standard smoothness and interpolation in Lemma~\ref{lem:specialsmooth}.

\section{Convergence using $t$--Smoothness}
\label{sec:theoryt-smooth}

Here we present an alternative convergence theorem for our variant of online SGD~\eqref{eq:ztupdateht} that is based on a smoothness type assumption.

\begin{theorem}[$t$--Smoothness] \label{theo:onlinesgdsmooth}
If there exists $L>0$ such that 
\begin{equation}\label{eq:hsmooht}
\EE{}{h_{t+1}(z^{t+1})} \; \leq \; \E{h_t(z^t)} +\EE{}{ \dotprod{\nabla h_t(z^t), z^{t+1} -z^t} }+ \frac{L}{2}\EE{}{\norm{z^{t+1}-z^t}^2}, 
\end{equation}
and  if $\gamma \leq \frac{1}{\sqrt{LG T}}$ then
\begin{eqnarray}
\min_{t=0,\ldots, T-1} \E{\norm{\nabla h_t(z^t)}^2}  & \leq &   3\sqrt{\frac{LG}{T} }h_0(z^0). 
\end{eqnarray}
Furthermore, 
\begin{equation}
T > \frac{9 LG}{\epsilon^2} h_0(z^0)^2 \; \implies \;\min_{t=0,\ldots, T-1} \E{\norm{\nabla h_t(z^t)}^2}  < \epsilon.
\end{equation}

If~\eqref{eq:hsmooht} holds and there exists $\mu_{PL}>0$ such that 
\begin{equation}\label{eq:hPL}
\norm{\nabla h_t(z^t)}^2 \geq 2 \mu_{PL} h_t(z^t),
\end{equation}
and if the stepsize satisfies
\begin{equation}\label{eq:hPLgammastep}
\gamma \; \leq \; \frac{\mu_{PL}}{LG},
\end{equation}
then the iterates converge linearly according to
\begin{equation}\label{eq:hconvPL}
\E{h_{t+1}(z^{t+1})} \; \leq \; (1-\mu_{PL}\gamma) )\E{h_t(z^t)}.
\end{equation}
\end{theorem}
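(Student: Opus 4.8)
The idea is to run a standard ``descent lemma'' argument, but using the $t$-dependent smoothness inequality~\eqref{eq:hsmooht} in place of ordinary smoothness, and then combine it with the growth Assumption~\ref{lem:growthgen} to obtain a telescoping inequality. First I would plug the SGD update $z^{t+1} = z^t - \gamma \nabla h_{i_t,t}(z^t)$ into~\eqref{eq:hsmooht}. Taking the conditional expectation over $i_t$ and using that $\E{\nabla h_{i_t,t}(z^t) \mid z^t} = \nabla h_t(z^t)$ turns the cross term into $-\gamma \norm{\nabla h_t(z^t)}^2$, while the quadratic term becomes $\tfrac{L\gamma^2}{2} \E{\norm{\nabla h_{i_t,t}(z^t)}^2}$. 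Applying Assumption~\ref{lem:growthgen} bounds this last expectation by $2G\gamma^2 L \cdot h_t(z^t)$ (up to the factor $\tfrac12$), so overall
\[
\E{h_{t+1}(z^{t+1})} \;\leq\; \E{h_t(z^t)} - \gamma \E{\norm{\nabla h_t(z^t)}^2} + L G \gamma^2 \E{h_t(z^t)}.
\]

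From here the two conclusions split. For the general (non-PL) case I would \emph{not} yet use the last inequality directly; instead observe that $h_t \geq 0$, and that I need control of $\sum_t \E{h_t(z^t)}$ to handle the $LG\gamma^2 h_t(z^t)$ term. The cleanest route is: rearrange to get $\gamma \E{\norm{\nabla h_t(z^t)}^2} \leq \E{h_t(z^t)} - \E{h_{t+1}(z^{t+1})} + LG\gamma^2 \E{h_t(z^t)}$, then prove by induction that $\E{h_t(z^t)} \leq (1+LG\gamma^2)^t h_0(z^0) \leq e^{LG\gamma^2 T} h_0(z^0)$, which with $\gamma \leq 1/\sqrt{LGT}$ gives $\E{h_t(z^t)} \leq e \, h_0(z^0)$ for all $t \leq T$. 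Summing the rearranged inequality over $t = 0,\dots,T-1$ telescopes the first two terms to at most $h_0(z^0)$, and bounds the residual sum by $LG\gamma^2 T \cdot e\, h_0(z^0) \leq e\, h_0(z^0)$ (again using $\gamma^2 \leq 1/(LGT)$). Dividing by $\gamma T$ and using $\gamma \geq$ a lower bound — actually one takes $\gamma = 1/\sqrt{LGT}$ to get $\tfrac{1}{\gamma T} = \sqrt{LG/T}$ — yields $\min_t \E{\norm{\nabla h_t(z^t)}^2} \leq \tfrac{1}{T}\sum_t \E{\norm{\nabla h_t(z^t)}^2} \leq (1+e)\sqrt{LG/T}\, h_0(z^0) \leq 3\sqrt{LG/T}\, h_0(z^0)$. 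The complexity statement $T > 9LG h_0(z^0)^2/\epsilon^2 \implies \min_t \E{\norm{\nabla h_t(z^t)}^2} < \epsilon$ then follows by solving $3\sqrt{LG/T}\, h_0(z^0) < \epsilon$.

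For the PL case, I would go back to the displayed one-step inequality and substitute the PL inequality~\eqref{eq:hPL}, $\norm{\nabla h_t(z^t)}^2 \geq 2\mu_{PL} h_t(z^t)$:
\[
\E{h_{t+1}(z^{t+1})} \;\leq\; \E{h_t(z^t)} - 2\gamma\mu_{PL}\E{h_t(z^t)} + LG\gamma^2 \E{h_t(z^t)} \;=\; \bigl(1 - 2\gamma\mu_{PL} + LG\gamma^2\bigr)\E{h_t(z^t)}.
\]
Then $LG\gamma^2 \leq \gamma\mu_{PL}$ exactly when $\gamma \leq \mu_{PL}/(LG)$, which is~\eqref{eq:hPLgammastep}, so the contraction factor is at most $1 - \gamma\mu_{PL}$, giving~\eqref{eq:hconvPL}.

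\textbf{Main obstacle.} The one genuinely delicate point is the general (non-PL) case: handling the $LG\gamma^2 h_t(z^t)$ term without a PL-type lower bound on $\norm{\nabla h_t(z^t)}^2$ in terms of $h_t$. The growth inequality~\eqref{eq:hweakgrow} only bounds the stochastic gradient norm \emph{from above} by $h_t$, so it cannot be used to absorb that term back into the descent; one really needs the boundedness argument $\E{h_t(z^t)} \leq e\,h_0(z^0)$ via the recursion $r_{t+1} \leq (1+LG\gamma^2) r_t$ on $r_t \eqdef \E{h_t(z^t)}$, which crucially uses $\gamma \leq 1/\sqrt{LGT}$ so that the compounded growth over $T$ steps stays bounded by a constant. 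I would also need to double-check the arithmetic constant: with $\gamma = 1/\sqrt{LGT}$ the telescoped first term contributes $\sqrt{LG/T}\,h_0(z^0)$ and the residual at most $e\sqrt{LG/T}\,h_0(z^0)$, so the constant $1+e < 3$ works, matching the stated bound. The PL case, by contrast, is essentially immediate once the one-step inequality is in hand.
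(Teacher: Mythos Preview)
Your one-step inequality and the PL case are exactly the paper's argument. For the non-PL case the paper takes a slightly different route: it introduces decreasing weights $w_t = w_{-1}/(1+LG\gamma^2)^{t+1}$ (the device of Stich), multiplies the rearranged inequality by $w_t/\gamma$, and telescopes to obtain
\[
\min_{t}\E{\norm{\nabla h_t(z^t)}^2}\;\leq\;\frac{(1+LG\gamma^2)^T}{\gamma T}\,h_0(z^0)\;\leq\;\frac{e}{\gamma T}\,h_0(z^0),
\]
and then $e<3$ gives the stated constant. Your unweighted approach via the a~priori bound $r_t\leq (1+LG\gamma^2)^t r_0$ is more elementary and in fact reaches the \emph{same} final bound, but your arithmetic slips at the last step: $1+e\approx 3.72>3$, not $<3$. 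The fix is to avoid the crude uniform bound and use the geometric series: from $r_{t+1}\leq(1+LG\gamma^2)r_t$ one gets $\sum_{t=0}^{T-1}r_t\leq r_0\frac{(1+LG\gamma^2)^T-1}{LG\gamma^2}$, hence $LG\gamma^2\sum_t r_t\leq r_0\bigl((1+LG\gamma^2)^T-1\bigr)$, and combined with the telescoped term $r_0-r_T\leq r_0$ this yields exactly $r_0(1+LG\gamma^2)^T\leq e\,r_0$. After this correction your argument and the paper's are equivalent rewritings; yours has the advantage of not introducing auxiliary weights, at the cost of needing the geometric-series identity to recover the constant~$3$.
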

 This smoothness assumption~\eqref{eq:hsmooht} is unusual in the literature since on the left hand side we have $h_{t+1}$, the auxiliary function at time $(t+1)$, and on the right we have $h_t.$ In Appendix~\eqref{sec:SPStheory} we show that~\eqref{eq:hsmooht} does holds for the \SP auxiliary functions when the underlying loss functions satisfy a property that is similar to self-concordancy. But first, the proof. 
\begin{proof}
This first part of the proof is adapted from~\cite{Khaled-nonconvex-2020}.  The second part of the proof that uses the PL condition is based on~\cite{SGDstruct}.
%
%
From~\eqref{eq:hsmooht}  and~\eqref{eq:ztupdateht} we have
\begin{eqnarray}
\EE{}{h_{t+1}(z^{t+1})} & \leq &\E{ h_t(z^t)} - \gamma \E{\dotprod{\nabla h_t(z^t),  \nabla h_{i_t,t}(z^t)}} + \frac{L\gamma^2}{2}\EE{}{\norm{ \nabla h_{i_t,t}(z^t)}^2}.
\end{eqnarray}
By the law of total expectation we have that
\[ \E{\dotprod{\nabla h_t(z^t),  \nabla h_{i_t,t}(z^t)}}  = \E{\EE{t}{\dotprod{\nabla h_t(z^t),  \nabla h_{i_t,t}(z^t)}}}
= \E{\dotprod{\nabla h_t(z^t), \EE{t}{ \nabla h_{i_t,t}(z^t)}}} \]
and since $\EE{t}{\nabla h_{i_t,t}(z^t)} = \nabla h_t(z^t)$ we have that
\begin{eqnarray}
\EE{}{h_{t+1}(z^{t+1})} & \leq & \E{h_t(z^t)} - \gamma \E{\norm{\nabla h_t(z^t)}^2 }+ \frac{L\gamma^2}{2}\EE{}{\norm{ \nabla h_{i_t,t}(z^t)}^2} \nonumber \\
& \overset{\eqref{eq:hweakgrow} }{\leq } &\E{h_t(z^t)} - \gamma\E{ \norm{\nabla h_t(z^t)}^2} + LG\gamma^2 \E{h_t(z^t)}. \label{eq:tempsnlo8joze}
\end{eqnarray}
Re-arranging~\eqref{eq:tempsnlo8joze} we have that
\begin{equation}\label{eq:tempsnlo8joze2}
 \gamma \E{\norm{\nabla h_t(z^t)}^2}  \; \leq \;  (1+ LG\gamma^2) \E{h_t(z^t)} - \E{h_{t+1}(z^{t+1})}.
\end{equation}
We now introduce a sequence of weights $w_{-1}, w_0, w_1, \ldots, w_k$ based on a technique developed by~\cite{Stich2019sgd}. Let $w_{-1}>0$ be arbitrary and fixed. We define the  remaining weight recurrently 
\[w_t \;=\; \frac{w_{t-1}}{1+LG\gamma^2} \; = \;  \frac{w_{-1}}{(1+ LG\gamma^2)^{t+1}}.\]
 Multiplying~\eqref{eq:tempsnlo8joze2} by $w_t/\gamma$ gives
\begin{eqnarray}
w_t \E{\norm{\nabla h_t(z^t)}^2} & \leq &  \frac{w_t}{\gamma}(1+ LG\gamma^2) \E{h_t(z^t)} - \frac{w_t}{\gamma}\E{h_{t+1}(z^{t+1})} \nonumber \\
& =& \frac{w_{t-1}}{\gamma} \E{h_t(z^t)} - \frac{w_t}{\gamma}\E{h_{t+1}(z^{t+1})} .\nonumber
\end{eqnarray}
Summing up both sides for $t=0, \ldots, T-1$ gives
\begin{eqnarray}
\sum_{i=0}^{T-1} w_t \E{\norm{\nabla h_t(z^t)}^2} & \leq &  
 \frac{w_{-1}}{\gamma} h_0(z^0) - \frac{w_{T-1}}{\gamma}\E{h_{T}(z^{T})}  \nonumber \\
 & \leq &  \frac{w_{-1}}{\gamma} h_0(z^0).
\label{eq:tempsnlo8joze3}
\end{eqnarray}
Now dividing both sides by $W_T \eqdef \sum_{t=0}^{T-1} w_t$ we have that
\begin{eqnarray}
\min_{t=0,\ldots, T-1} \E{\norm{\nabla h_t(z^t)}^2}  & \leq & \sum_{i=0}^{T-1} \frac{w_t}{W_T} \E{\norm{\nabla h_t(z^t)}^2}  \nonumber \\
& \overset{\eqref{eq:tempsnlo8joze3}}{\leq} & \frac{w_{-1}}{\gamma W_T}h_0(z^0). \label{eq:tempsnlo8joze4}
\end{eqnarray}
To conclude the proof, note that
\begin{equation}
W_T = \sum_{t=0}^{T-1}  w_t \geq T w_{T-1} = \frac{Tw_{-1}}{(1+ LG\gamma^2)^{T}} ,\label{eq:o9zjo9jze}
\end{equation}
where we used a standard  integral test. Inserting~\eqref{eq:o9zjo9jze} into~\eqref{eq:tempsnlo8joze4} gives
\begin{eqnarray}
\min_{t=0,\ldots, T-1} \E{\norm{\nabla h_t(z^t)}^2}  & \leq &   \frac{(1+LG\gamma^2)^{T}}{\gamma T} h_0(z^0). \label{eq:hconvsmoothproof}
\end{eqnarray}
Now let $\gamma \leq 1/\sqrt{ LGT}$. Using that $1+a \leq e^{a}$ we have that
\[(1+LG\gamma^2)^{T} \leq e^{ LGT\gamma^2} \leq e^{1} \leq 3.\]
Using this in~\eqref{eq:hconvsmoothproof} gives
\begin{eqnarray}
\min_{t=0,\ldots, T-1} \E{\norm{\nabla h_t(z^t)}^2}  & \leq &   3\sqrt{\frac{LG}{T} }h_0(z^0). \label{eq:hconvsmoothcmp1}
\end{eqnarray}
 We can now ensure that the left hand side is less than a given $\epsilon>0$ so long as 
 \[T \;> \;\frac{9 LG}{\epsilon^2} h_0(z^0)^2 \; \sim \; \cO\left(\frac{1}{\epsilon^2} \right).\]
%

Finally,  if we assume the PL condition~\eqref{eq:hPL} holds, then from~\eqref{eq:tempsnlo8joze} we have that
\begin{eqnarray}
\EE{}{h_{t+1}(z^{t+1})} &\leq &h_t(z^t) - \gamma \E{ \norm{\nabla h_t(z^t)}^2} + LG\gamma^2 \E{h_t(z^t)} \nonumber \\
& \overset{\eqref{eq:hPL} }{\leq } &(1-\gamma(2\mu_{PL}- LG\gamma) )\E{h_t(z^t)}.
\end{eqnarray}
Restricting $\gamma$ according to~\eqref{eq:hPLgammastep},
taking full expectation and unrolling the recurrence  gives~\eqref{eq:hconvPL}.
\end{proof}

\subsection{Sufficient Conditions for SPS}
\label{sec:SPStheoryusmooth}
Here we give sufficient condition on the $f_i$ functions  for Theorem~\ref{theo:onlinesgdsmooth} to hold where  $h_t$ is given by~\eqref{eq:proxyfun}. In particular, we need to establish when the auxiliary function $h_t$~\eqref{eq:proxyloss} is $t$--smooth.

\begin{lemma}\label{lem:scalesmoothsuff}
If there exists $L_2>0$ and $L_3>0$ such that
\begin{eqnarray}
\label{eq:spsscaledsmooth}
 \frac{|f_i(w) -f_i(w^*)|}{\norm{\nabla f_i(w)}^2} \norm{\nabla^2 f_i(w) }& \leq & L_2, \\
 \frac{(f_i(w) -f_i(w^*))^2}{\norm{\nabla f_i(w)}^3} \norm{ \nabla^3 f_i(w)}& \leq & L_3 \label{eq:spsscaledsmooth3}
\end{eqnarray}
then $h_t$ is $L$-$t$--smooth with $L =(1+ L_2)(1+2L_2) + L_3+\frac{n L_2}{\gamma}.$
\end{lemma}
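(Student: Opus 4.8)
The plan is to verify the $t$--smoothness condition~\eqref{eq:hsmooht} of Theorem~\ref{theo:onlinesgdsmooth} for $h_t$ defined in~\eqref{eq:proxyfun}, working conditionally on $w^t$ and averaging over the sampled index $i_t$ only at the end. Here $z=w$ and $w^{t+1}=w^t-\gamma\nabla h_{i_t,t}(w^t)$ by~\eqref{eq:SPS}, so writing $d\eqdef w^{t+1}-w^t$ the quantity to control is
\[
\Delta \;\eqdef\; h_{t+1}(w^{t+1}) - h_t(w^t) - \dotprod{\nabla h_t(w^t),\,d},
\]
and I would split it, by adding and subtracting $h_{t+1}(w^t)$ and $\dotprod{\nabla h_{t+1}(w^t),d}$, into
\[
\Delta \;=\; \underbrace{\Big(h_{t+1}(w^{t+1}) - h_{t+1}(w^t) - \dotprod{\nabla h_{t+1}(w^t),d}\Big)}_{(\mathrm{I})} \;+\; \underbrace{\Big(h_{t+1}(w^t) - h_t(w^t)\Big)}_{(\mathrm{II})} \;+\; \underbrace{\dotprod{\nabla h_{t+1}(w^t) - \nabla h_t(w^t),\,d}}_{(\mathrm{III})} .
\]
Term $(\mathrm{I})$ is an ordinary ``fixed--reference'' smoothness term for the map $w\mapsto h_{t+1}(w)$, whose reference point is the frozen vector $w^{t+1}$; terms $(\mathrm{II})$ and $(\mathrm{III})$ only reflect the effect of replacing each denominator $\norm{\nabla f_j(w^t)}^2$ by $\norm{\nabla f_j(w^{t+1})}^2$.

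For $(\mathrm{I})$: a second--order Taylor expansion reduces the work to bounding $\norm{\nabla^2 h_{i,t+1}(w)}$ for $w$ on the segment $[w^t,w^{t+1}]$, where
\[
\nabla^2 h_{i,t+1}(w) \;=\; \frac{\nabla f_i(w)\nabla f_i(w)^\top + \big(f_i(w)-f_i(w^*)\big)\,\nabla^2 f_i(w)}{\norm{\nabla f_i(w^{t+1})}^2} .
\]
Hypothesis~\eqref{eq:spsscaledsmooth} gives at once $\norm{\nabla^2 h_{i,t+1}(w)}\le (1+L_2)\,\norm{\nabla f_i(w)}^2\big/\norm{\nabla f_i(w^{t+1})}^2$, so the only remaining point is to compare $\norm{\nabla f_i(w)}$ at an interior point of the segment with $\norm{\nabla f_i(w^{t+1})}$; I would do this by integrating $s\mapsto\norm{\nabla f_i(w^t+sd)}$ and controlling the variation of $\nabla f_i$ and $\nabla^2 f_i$ along the step, which is precisely the role of the third--order scale--invariant bound~\eqref{eq:spsscaledsmooth3} (the ``self-concordance-like'' ingredient; equivalently one may run a third--order Taylor expansion whose remainder is controlled by~\eqref{eq:spsscaledsmooth3}). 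Averaging over $i$, this yields $(\mathrm{I})\le\tfrac12\big((1+L_2)(1+2L_2)+L_3\big)\norm{d}^2$.

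For $(\mathrm{II})$ and $(\mathrm{III})$: I would Taylor--expand $w\mapsto\norm{\nabla f_j(w)}^{-2}$ to first order, obtaining $\big|\,\norm{\nabla f_j(w^{t+1})}^{-2}-\norm{\nabla f_j(w^t)}^{-2}\,\big|\lesssim \norm{\nabla^2 f_j(w^t)}\,\norm{d}\big/\norm{\nabla f_j(w^t)}^3$ (with the remainder along the step handled via~\eqref{eq:spsscaledsmooth}--\eqref{eq:spsscaledsmooth3} as above); multiplying by $\tfrac12\big(f_j(w^t)-f_j(w^*)\big)^2$ and invoking~\eqref{eq:spsscaledsmooth} makes each summand of $(\mathrm{II})$ of order $L_2\,\sqrt{h_{j,t}(w^t)}\,\norm{d}$, and $(\mathrm{III})$ likewise. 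Since $\norm{d}=\gamma\norm{\nabla h_{i_t,t}(w^t)}=\gamma\sqrt{2\,h_{i_t,t}(w^t)}$ by the growth identity~\eqref{eq:subsmooth1}, summing over $j$ (using $\tfrac1n\sum_j\sqrt{h_{j,t}(w^t)}\le\sqrt{h_t(w^t)}$) and applying Jensen's inequality over the uniform draw of $i_t$ gives $\EE{i_t}{(\mathrm{II})+(\mathrm{III})}$ of order $L_2\gamma\,h_t(w^t)$; writing $h_t(w^t)=\tfrac1{2\gamma^2}\EE{i_t}{\norm{d}^2}$, this is absorbed into the $\tfrac{nL_2}{\gamma}$ part of $L$ (the factor $n$ being slack from the summation steps). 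Adding the three estimates and taking expectation over $i_t$ then gives~\eqref{eq:hsmooht} with $L=(1+L_2)(1+2L_2)+L_3+\tfrac{nL_2}{\gamma}$.

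The main obstacle is the gradient--norm comparison inside $(\mathrm{I})$ (and, in the same spirit, inside $(\mathrm{II})$--$(\mathrm{III})$): because $\gamma$ is not assumed small, the step $d$ need not be tiny, so bounding the ratio $\norm{\nabla f_i(w)}\big/\norm{\nabla f_i(w^{t+1})}$ along it is what forces the third--derivative hypothesis~\eqref{eq:spsscaledsmooth3} --- this is the part with no analogue in standard smooth--SGD analyses. A secondary nuisance is that $(\mathrm{II})$ and $(\mathrm{III})$ are naturally bounded in terms of $h_{i_t,t}(w^t)$, which is not comparable pathwise to $h_t(w^t)$; this is why the inequality is obtained only in conditional expectation over $i_t$ and why the constant carries the extra factor $n$.
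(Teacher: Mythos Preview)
Your decomposition is a genuinely different route from the paper's, and the gap is exactly where you say the ``main obstacle'' lies: the gradient--norm comparison in $(\mathrm{I})$. Once you freeze the reference point at $w^{t+1}$, bounding $\norm{\nabla^2 h_{i,t+1}(w)}$ on the segment forces you to control the ratio $\norm{\nabla f_i(w)}^2/\norm{\nabla f_i(w^{t+1})}^2$ for \emph{every} index $i$, while the step $d$ is determined by the single sampled index $i_t$. The hypotheses~\eqref{eq:spsscaledsmooth}--\eqref{eq:spsscaledsmooth3} are scale--invariant and pointwise; they say nothing that relates $\norm{\nabla f_i}$ at two different points unless the displacement is itself measured in the natural local units of $f_i$ --- and for $i\neq i_t$ it is not. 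Your Gr\"onwall sketch $|g'(s)|\le\norm{\nabla^2 f_i}\norm{d}$ combined with~\eqref{eq:spsscaledsmooth} gives $|g'(s)|\le L_2\,g(s)^2\norm{d}/|f_i(w^t+sd)-f_i(w^*)|$, but you have no lower bound on the denominator along the segment, and~\eqref{eq:spsscaledsmooth3} does not supply one. So the claimed constant $(1+L_2)(1+2L_2)+L_3$ for $(\mathrm{I})$ is not reached by this argument. The same ratio issue resurfaces in the Taylor remainders for $(\mathrm{II})$ and $(\mathrm{III})$.

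The paper sidesteps this entirely by working with the \emph{diagonal} function $\phi(w)\eqdef h_w(w)$, i.e.\ it lets the reference point in the denominator move with $w$. Since $\phi_i(w)=\tfrac12(f_i(w)-f_i(w^*))^2/\norm{\nabla f_i(w)}^2$ has numerator and denominator evaluated at the \emph{same} point, the scale--invariant hypotheses apply pointwise and yield a uniform Hessian bound $\norm{\nabla^2\phi}\le(1+L_2)(1+2L_2)+L_3$ with no cross--point comparison. Then by the chain rule $\nabla\phi(w^t)=\nabla_w h_t(w^t)+\nabla_y h_y(w^t)|_{y=w^t}$, smoothness of $\phi$ handles the second--order term, and only the single cross term $\dotprod{\nabla_y h_y(w^t)|_{y=w^t},d}$ remains; this is what produces the $nL_2/\gamma$ contribution (via~\eqref{eq:spsscaledsmooth} and the identity $\EE{t}{\norm{d}}=\tfrac{\gamma}{n}\sum_i|f_i(w^t)-f_i(w^*)|/\norm{\nabla f_i(w^t)}$). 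In short: the paper's decomposition is $\Delta=[\phi(w^{t+1})-\phi(w^t)-\dotprod{\nabla\phi(w^t),d}]+\dotprod{\nabla_y h_y(w^t)|_{y=w^t},d}$, and the first bracket is controlled by ordinary smoothness of a single function, not by tracking ratios along the step.
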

Note that $L_2$ and $L_3$ are independent of the scaling of $f_i$. That is, if we multiply $f_i$ by a  constant, it has no affect on the bounds in~\eqref{eq:spsscaledsmooth3}. 
\begin{proof}
The proof has two step 1) we show that if the auxiliary function  $\phi(w) \eqdef h_w(w)$ is $\cL$--smooth then $h_t(w)$ satisfies~\eqref{eq:hsmooht}  after which 2) we show that  $\phi(w)$ is $\cL$--smooth.

\noindent{\bf Part I.}  If $\phi(w) \eqdef h_w(w)$ is smooth then $h_t(w)$ satisfies~\eqref{eq:hsmooht} .

Note that $\phi(w^t) = h_t(w^t)$, and that if $\phi(w)$ is $\cL$--smooth then
\begin{align}
h_{t+1}(w^{t+1})&=   \phi(w^{t+1}) \nonumber \\
& \leq  \phi(w^{t}) + \dotprod{\nabla \phi(w^t), w^{t+1} -w^t} + \frac{\cL}{2}\norm{w^{t+1}-w^t}^2 \nonumber\\
&=  h_t(w^t) +\dotprod{\nabla_w h_t(w)|_{w=w^t}, w^{t+1}-w^t} + \dotprod{\nabla_y h_y(w^t)|_{y=w^t}, w^{t+1}-w^t} \nonumber \\
& \qquad + \frac{\cL}{2}\norm{w^{t+1}-w^t}^2.\label{eq:o8ahlh8h8la31}
\end{align}
Consequently if we could show that there exists $C>0$ such that
\[\dotprod{\nabla_y h_y(w^t)|_{y=w^t}, w^{t+1}-w^t} \leq \frac{C}{2}\norm{w^{t+1}-w^t}^2,\]
then we could establish that~\eqref{eq:hsmooht} holds with $L = \cL +C$.
Let us show that this $C>0$ exists.
First note that
\begin{equation}
\nabla_y h_y(w^t) = \nabla_y \frac{1}{n}\sum_{i=1}^n \frac{1}{2}\frac{(f_i(w) -f_i(w^*))^2}{\norm{\nabla f_i(y)}^2} 
=- \frac{1}{n}\sum_{i=1}^n \frac{1}{2}\frac{(f_i(w) -f_i(w^*))^2}{\norm{\nabla f_i(y)}^4}  \nabla^2 f_i(y)  \nabla f_i(y).\label{eq:mssoz9jeo9ze}
\end{equation}
From~\eqref{eq:SPS} we have that
\begin{eqnarray}
\EE{t}{\norm{w^{t+1} - w^t}} &= &  \frac{\gamma }{  n} \sum_{i=1}^n \frac{|f_i(w^t)-f_i(w^*)|}{\norm{\nabla f_i(w^t)}} . \label{eq:spsEnormwt}
\end{eqnarray}


Now we can bound the gradient given in~\eqref{eq:mssoz9jeo9ze} as follows
\begin{eqnarray}
 \norm{\nabla_y h_y(w^t)|_{y=w^t}} &\leq& \frac{1}{n}\sum_{i=1}^n \frac{1}{2}\frac{(f_i(w^t) -f_i(w^*))^2}{\norm{\nabla f_i(w^t)}^4} \norm{ \nabla^2 f_i(w^t)  \nabla f_i(w^t)} \nonumber \\
 &\leq  & \frac{1}{n}\sum_{i=1}^n \frac{1}{2}\frac{|f_i(w) -f_i(w^*)|}{\norm{\nabla f_i(w^t)}^2} \norm{ \nabla^2 f_i(w^t)} \frac{|f_i(w^t) -f_i(w^*)|}{\norm{\nabla f_i(w^t)}}  \nonumber\\
 & \leq &  \frac{1}{ \gamma}\sum_{i=1}^n \frac{1}{2}\frac{|f_i(w) -f_i(w^*)|}{\norm{\nabla f_i(w^t)}^2} \norm{ \nabla^2 f_i(w^t)}\sum_{j=1}^n \frac{\gamma}{n} \frac{|f_j(w^t) -f_j(w^*)|}{\norm{\nabla f_j(w^t)}} \nonumber \\
 &\overset{\eqref{eq:spsEnormwt}}{=} &  \frac{1}{ \gamma}\sum_{i=1}^n \frac{1}{2}\frac{|f_i(w) -f_i(w^*)|}{\norm{\nabla f_i(w^t)}^2} \norm{ \nabla^2 f_i(w^t)} \EE{t}{\norm{w^{t+1}-w^t}}.
  \label{eq:tempscaleheaooze1}
\end{eqnarray}
Consequently
\begin{eqnarray*}
\EE{t}{\dotprod{\nabla_y h_y(w^t)|_{y=w^t}, w^{t+1}-w^t}} & \leq& \norm{\nabla_y h_y(w^t)|_{y=w^t}}\EE{t}{\norm{w^{t+1}-w^t}} \\
  &\overset{\eqref{eq:tempscaleheaooze1}}{ \leq } &\frac{1}{ \gamma}\sum_{i=1}^n \frac{1}{2}\frac{|f_i(w) -f_i(w^*)|}{\norm{\nabla f_i(w^t)}^2} \norm{ \nabla^2 f_i(w^t)}\EE{t}{\norm{w^{t+1}-w^t}}^2 \\
& \overset{\eqref{eq:spsscaledsmooth}}{\leq} &\frac{n L_2}{2\gamma} \EE{t}{\norm{w^{t+1}-w^t}}^2 \\
& \overset{\mbox{\small Jensen's Ineq.}}{\leq} &\frac{n L_2}{2\gamma} \EE{t}{\norm{w^{t+1}-w^t}^2}.
\end{eqnarray*}

Thus from the above and~\eqref{eq:o8ahlh8h8la31} we have that
\begin{eqnarray}
h_{t+1}(w^{t+1})&\overset{\eqref{eq:o8ahlh8h8la31} } \leq & h_t(w^t) +\dotprod{\nabla h_t(w^t), w^{t+1}-w^t} +\frac{1}{2}\left(\frac{n L_2}{\gamma} +\cL \right)\norm{w^{t+1}-w^t}^2.\label{eq:o8ahlh8h8la33}
\end{eqnarray}
 
 \noindent{\bf Part II}. Verifying that $\phi(w) = h_w(w)$ is an $\cL$--smooth function.

We will first verify  that $\phi_i(w) \eqdef \frac{1}{2}\frac{(f_i(w) -f_i(w^*))^2}{\norm{\nabla f_i(w)}^2}$  is a smooth function, then use that $\phi(w) = \frac{1}{n} \sum_{i=1}^n \phi_i(w). $  To do this, we will examine the Hessian of $\phi_i(w)$ and determine that it is bounded.
\begin{align}
\nabla \phi_i(w)  & =  \nabla \frac{1}{2}\frac{(f_i(w) -f_i(w^*))^2}{\norm{\nabla f_i(w)}^2} \nonumber \\
& =   \frac{f_i(w) -f_i(w^*)}{\norm{\nabla f_i(w)}^2} \nabla f_i(w)
-  \frac{(f_i(w) -f_i(w^*))}{\norm{\nabla f_i(w)}^4} \nabla^2 f_i(w)  \nabla f_i(w) \nonumber \\
& =   \frac{f_i(w) -f_i(w^*)}{\norm{\nabla f_i(w)}^2}\left(   \nabla f_i(w) - \frac{f_i(w) -f_i(w^*)}{\norm{\nabla f_i(w)}^2} \nabla^2 f_i(w)  \nabla f_i(w)\right).
\end{align}
The second derivative has two terms 
\[\phi''_i(w) =  \nabla^2 \frac{1}{2}\frac{(f_i(w) -f_i(w^*))^2}{\norm{\nabla f_i(w)}^2}  = I + II\]
where
\begin{eqnarray}
I & =&\frac{1}{\norm{\nabla f_i(w)}^2} \left(\nabla f_i(w)- \frac{f_i(w) -f_i(w^*)}{\norm{\nabla f_i(w)}^2} \nabla^2 f_i(w)  \nabla f_i(w) \right)\left(   \nabla f_i(w) - \frac{f_i(w) -f_i(w^*)}{\norm{\nabla f_i(w)}^2} \nabla^2 f_i(w)  \nabla f_i(w)\right)^\top \nonumber \\
&= & \frac{1}{\norm{\nabla f_i(w)}^2} \left(\mI- \frac{f_i(w) -f_i(w^*)}{\norm{\nabla f_i(w)}^2} \nabla^2 f_i(w)  \right)  \nabla f_i(w)\nabla f_i(w)^\top  \left(\mI- \frac{f_i(w) -f_i(w^*)}{\norm{\nabla f_i(w)}^2} \nabla^2 f_i(w)  \right)  \label{eq:Itempso9j4o9sj4}
\end{eqnarray}
and
\begin{eqnarray}
II &= & \frac{f_i(w) -f_i(w^*)}{\norm{\nabla f_i(w)}^2}\left(   \nabla^2 f_i(w) - \frac{1}{\norm{\nabla f_i(w)}^2} \nabla^2 f_i(w)  \nabla f_i(w)\nabla f_i(w)^\top \right. \nonumber \\
& &
\quad + \frac{f_i(w) -f_i(w^*)}{\norm{\nabla f_i(w)}^4} \nabla^2 f_i(w)  \nabla f_i(w) \nabla f_i(w)^\top \nabla^2 f_i(w) \nonumber \\
&&  \quad \left.- \frac{f_i(w) -f_i(w^*)}{\norm{\nabla f_i(w)}^2} (\nabla^2 f_i(w) ^2 +   \nabla^3 f_i(w) \nabla f_i(w) )\right)\nonumber \\
&= &\frac{f_i(w) -f_i(w^*)}{\norm{\nabla f_i(w)}^2}\left(   \nabla^2 f_i(w)\left(\mI - \frac{1}{\norm{\nabla f_i(w)}^2}   \nabla f_i(w)\nabla f_i(w)^\top\right) \right. \nonumber \\
& & 
\quad + \frac{f_i(w) -f_i(w^*)}{\norm{\nabla f_i(w)}^2} \nabla^2 f_i(w) \left(\frac{1}{\norm{\nabla f_i(w)}^2} \nabla f_i(w) \nabla f_i(w)^\top  -\mI \right)\nabla^2 f_i(w)\nonumber \\
&& \quad \left. - \frac{f_i(w) -f_i(w^*)}{\norm{\nabla f_i(w)}^2} \nabla^3 f_i(w) \nabla f_i(w) ) \right)\nonumber \\
&= & \frac{f_i(w) -f_i(w^*)}{\norm{\nabla f_i(w)}^2}   \nabla^2 f_i(w)\left(\mI - \frac{1}{\norm{\nabla f_i(w)}^2}   \nabla f_i(w)\nabla f_i(w)^\top\right)\left( \mI- \frac{f_i(w) -f_i(w^*)}{\norm{\nabla f_i(w)}^2} \nabla^2 f_i(w)\right) \nonumber \\
& & -\frac{(f_i(w) -f_i(w^*))^2}{\norm{\nabla f_i(w)}^4}  \nabla^3 f_i(w) \nabla f_i(w)   \label{eq:IItempsnlj3s4}
\end{eqnarray}

For $I$ we have from~\eqref{eq:spsscaledsmooth} that
\begin{eqnarray}
 \norm{I} &\leq& \norm{\mI- \frac{f_i(w) -f_i(w^*)}{\norm{\nabla f_i(w)}^2} \nabla^2 f_i(w)} \leq 
 1+L_2.\label{eq:twmppozj9oejz}
\end{eqnarray}

Furthermore, note that
\begin{equation}\label{eq:oz9je9zj9z4z4}
\norm{\left(\frac{1}{\norm{\nabla f_i(w)}^2} \nabla f_i(w) \nabla f_i(w)^\top  -\mI \right)} = 1 
\end{equation} 
since  $\frac{1}{\norm{\nabla f_i(w)}^2} \nabla f_i(w) \nabla f_i(w)^\top $ is a projection matrix onto $\Range{\nabla f_i(w)}.$

Thus finally we have that
\begin{eqnarray}
\norm{I + II} & \leq & \norm{I} + \norm{II} \nonumber \\
& \overset{\eqref{eq:Itempso9j4o9sj4}+\eqref{eq:IItempsnlj3s4}}{ \leq} &(1+ L_2)^2 +  L_2(1+ L_2) + L_3
\end{eqnarray}
\end{proof}

Using Lemma~\ref{lem:scalesmoothsuff} and Theorem~\ref{theo:onlinesgdsmooth} we can establish the following convergence theorem for the \SP method.

\begin{theorem}[$t$--Smoothness] \label{theo:onlinesgdsmoothSPS}
Suppose that there exists $L_2, L_3$ such that~\eqref{eq:spsscaledsmooth} and~\eqref{eq:spsscaledsmooth3} holds.
Consider the iterates $w^t$ of SPS~\eqref{eq:SPS} and let $h_t$ be given by~\eqref{eq:proxyfun}.
If
\begin{equation}\label{eq:gammaSPSsmooth}
 \gamma \; \leq\;  \frac{n L_2}{2\ell}\left(\sqrt{ 1 + \frac{4\ell }{T( n L_2)^2}} -1 \right)
\end{equation}
where $ \ell \eqdef (1+ L_2)(1+2L_2) + L_3$
then
\begin{eqnarray}
\min_{t=0,\ldots, T-1} \E{\norm{ \frac{1}{n}\sum_{i=1}^n\frac{f_i(w^t) -f_i(w^*)}{\norm{\nabla f_i(w^t)}^2} \nabla f_i(w^t)}^2}  & \leq &   3\sqrt{\frac{L}{T} }h_0(z^0). 
\end{eqnarray}

\end{theorem}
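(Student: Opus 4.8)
The plan is to obtain Theorem~\ref{theo:onlinesgdsmoothSPS} as a direct application of the abstract $t$-smoothness result Theorem~\ref{theo:onlinesgdsmooth} to the \SP auxiliary objective, feeding it the growth constant from Lemma~\ref{lem:weakgrowthsps} and the $t$-smoothness constant from Lemma~\ref{lem:scalesmoothsuff}. Recall that \SP in the form~\eqref{eq:SPS-SGDview} is exactly the online SGD iteration~\eqref{eq:ztupdateht} applied to $h_t$ of~\eqref{eq:proxyfun}, with $z=w$ and the components $h_{i,t}$ of~\eqref{eq:proxyfuni}, and that $\nabla h_t(w^t)=\frac{1}{n}\sum_{i=1}^n\frac{f_i(w^t)-f_i(w^*)}{\norm{\nabla f_i(w^t)}^2}\nabla f_i(w^t)$ is precisely the vector inside the norm in the statement; so it suffices to bound $\min_t\E{\norm{\nabla h_t(w^t)}^2}$.

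First I would invoke Lemma~\ref{lem:weakgrowthsps}: equation~\eqref{eq:subsumsmooth1} says that the $h_{i,t}$ of~\eqref{eq:proxyfuni} satisfy Assumption~\ref{lem:growthgen} with $G=1$. Next, under the hypotheses~\eqref{eq:spsscaledsmooth}--\eqref{eq:spsscaledsmooth3}, Lemma~\ref{lem:scalesmoothsuff} gives that $h_t$ satisfies the $t$-smoothness inequality~\eqref{eq:hsmooht} with constant $L=\ell+\frac{nL_2}{\gamma}$, where $\ell\eqdef(1+L_2)(1+2L_2)+L_3$. Thus both hypotheses of Theorem~\ref{theo:onlinesgdsmooth} are available, and the only remaining thing to check is the stepsize requirement $\gamma\le 1/\sqrt{LGT}=1/\sqrt{LT}$ of that theorem.

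The one nonroutine point -- and the step I expect to be the main obstacle -- is that $L$ itself depends on $\gamma$, so this stepsize requirement is implicit and must be solved. Squaring and substituting $L=\ell+nL_2/\gamma$ turns $\gamma^2 LT\le 1$ into the quadratic inequality $\ell T\gamma^2+nL_2 T\gamma-1\le 0$. Its left-hand side is strictly increasing for $\gamma\ge 0$ and equals $-1$ at $\gamma=0$, so the inequality holds precisely when $\gamma$ is at most the unique positive root $\frac{-nL_2T+\sqrt{(nL_2T)^2+4\ell T}}{2\ell T}$; factoring $nL_2/(2\ell)$ out of this root (and $(nL_2T)^2$ out of the radicand) rewrites it as the bound~\eqref{eq:gammaSPSsmooth}. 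Hence condition~\eqref{eq:gammaSPSsmooth} is exactly the condition $\gamma\le 1/\sqrt{LGT}$ with $G=1$.

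With both hypotheses verified, Theorem~\ref{theo:onlinesgdsmooth} applies and yields $\min_{t=0,\ldots,T-1}\E{\norm{\nabla h_t(w^t)}^2}\le 3\sqrt{LG/T}\,h_0(z^0)=3\sqrt{L/T}\,h_0(z^0)$; substituting the expression for $\nabla h_t(w^t)$ recorded above gives the claimed inequality. The entire argument is thus bookkeeping that assembles previously proved results, except for the quadratic manipulation that untangles the circular dependence of $L$ on $\gamma$.
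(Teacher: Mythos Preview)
Your proposal is correct and follows essentially the same route as the paper's own proof: invoke Lemma~\ref{lem:weakgrowthsps} for $G=1$, Lemma~\ref{lem:scalesmoothsuff} for $L=\ell+nL_2/\gamma$, then unwind the implicit stepsize condition $\gamma\le 1/\sqrt{LT}$ into the quadratic $\ell T\gamma^2+nL_2T\gamma-1\le 0$ and identify its positive root with~\eqref{eq:gammaSPSsmooth} before applying Theorem~\ref{theo:onlinesgdsmooth}. The paper carries out exactly these steps.
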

\begin{proof}
Consider the statement of Theorem~\ref{theo:onlinesgdsmooth}. First note that $G = 1$ from Lemma~\ref{lem:weakgrowthsps}. According to Lemma~\ref{lem:scalesmoothsuff} we have that $h_t$ 
satisfies~\eqref{eq:hsmooht} with
$$L =(1+ L_2)(1+2L_2) + L_3+\frac{n L_2}{\gamma}.$$
Let $\ell \eqdef (1+ L_2)(1+2L_2) + L_3$
 Furthermore from  from Theorem~\ref{theo:onlinesgdsmooth} we need $\gamma < \frac{1}{\sqrt{LT}}$ in other words
 \begin{align*}
\gamma & < \frac{1}{\sqrt{T}\sqrt{\ell+\frac{n L_2}{\gamma}}} \quad
\Leftrightarrow  \quad
\gamma \; < \frac{\sqrt{\gamma}}{\sqrt{T}\sqrt{\ell \gamma+n L_2}} \quad
\Leftrightarrow  \quad 
\sqrt{\gamma}\; < \frac{1}{\sqrt{T}\sqrt{\ell\gamma +n L_2}}  \\
 \quad
& \Leftrightarrow  \quad
\sqrt{\gamma} \sqrt{T}\sqrt{\ell\gamma +n L_2}  <  1 \quad
\Leftrightarrow  \quad 
 \gamma^2 T \ell +\gamma T n L_2 -1 <  0.
\end{align*} 
The roots of the above quadratic are given by
$$ \gamma = -\frac{T n L_2}{2T \ell } \pm \frac{\sqrt{(T n L_2)^2 +4T \ell }}{2T \ell}  $$
Thus 
$$ \gamma < \frac{\sqrt{(T n L_2)^2 +4T \ell } -T n L_2 }{2T \ell}  \quad
\Leftrightarrow  \quad 
\gamma \leq \frac{\sqrt{(T n L_2)^2 +4T \ell } -T n L_2 }{2T \ell} $$
$$
\quad
\Leftrightarrow  \quad 
\gamma \leq \frac{1}{\sqrt{T}}\frac{\sqrt{ T( n L_2)^2 +4 \ell } -\sqrt{T} n L_2 }{2 \ell}
\quad
\Leftrightarrow  \quad  \gamma \leq \frac{\sqrt{T} n L_2}{\sqrt{T}}\frac{\sqrt{ 1 +4 \frac{\ell }{T( n L_2)^2}} -1 }{2\ell},$$
which after cancellation is equal to~\eqref{eq:gammaSPSsmooth}

\end{proof}

\subsubsection{Examples of scaled smoothness}
Now we provide a class of functions for which our sufficient conditions given in Lemma~\ref{lem:scalesmoothsuff} hold.
\begin{example}[Monomials]\label{exe:spsmon}
Let $\phi_i(t) = a_i (t-b_i)^{2r}$ where $ r,a_i>0$  and $b_i \in \R$ for $i=1,\ldots, n.$ 
It follows that
\begin{eqnarray}
 \frac{|\phi_i(t) )|}{\norm{\phi_i'(t)}^2} \norm{\phi''_i(t) }& \leq & 1, \\
 \frac{\phi_i(t) ^2}{\norm{\phi_i'(t)}^3} \norm{ \phi'''_i(t)}& \leq &1+\frac{1}{2r^2}.
 \end{eqnarray}

\end{example}
\begin{proof}
Verifying the conditions in Lemma~\ref{lem:scalesmoothsuff} we have that
\begin{eqnarray*}
 \frac{|\phi_i(t) |}{\norm{\phi_i'(t)}^2} \norm{\phi_i''(t) }& \leq & \frac{ 2r(2r-1)(t-b_i)^{2r} (t-b_i)^{2r-2}}{ 4r^2 (t-b_i)^{4r-2}  } \\
 &= & \frac{ (4r^2-2r)}{4r^2 } \; =\;  1- \frac{1}{2r} \;\leq \;1.
 \end{eqnarray*}
Furthermore
\begin{eqnarray*}
 \frac{\phi_i(t) ^2}{\norm{\phi_i'(t)}^3} \norm{ \phi'''_i(t)}& \leq & \frac{2r(2r-1)(2r-2) (t-b_i)^{4r}(t-b_i)^{2r-3} }{8r^3(t-b_i)^{6r-3} }  \\
 &= & \frac{2r(2r-1)(2r-2)}{8r^3} \;= \;\frac{(2r-1)(2r-2) }{4r^2}  \leq 1+\frac{1}{2r^2}.
 \end{eqnarray*}
\end{proof}

Note that for $r<1$ we have that $\phi_i(t) = a_i (t-b_i)^{2r}$ is a non-convex function. In the following example we generalize the above example to a non-convex generalized linear model.

\begin{example}[Generalized Linear model]
Let $x_1,\ldots, x_n \in \R^d$ be $n$ given data points.
Let $f_i(w) = \phi_i(x_i^\top w)$ where $\phi_i: \R \mapsto \R$ is a $C^3$ real valued loss function. Furthermore, suppose that there exists 
a hyperplane that separates the datapoints. In other words, the 
problem is over-parametrized so that the solution $w^*\in \R^d$ is such that $\phi_i(x_i^\top w^*) =0.$ It follows that
\begin{eqnarray}
 \frac{|f_i(w) -f_i(w^*)|}{\norm{\nabla f_i(w)}^2} \norm{\nabla^2 f_i(w) }& \leq & \max_{t}\frac{\phi_i(t) |\phi_i''(t)|}{|\phi_i'(t)|^2  } , \\
 \frac{(f_i(w) -f_i(w^*))^2}{\norm{\nabla f_i(w)}^3} \norm{ \nabla^3 f_i(w)}& \leq & \max_{t}\frac{\phi_i(t)^2|\phi_i'''(t)| }{|\phi_i'(t)|^3 } 
 \end{eqnarray}
 Thus if $\phi_i(t) = a_i(t-b_i)^{2r}$ where $r>0$   then according to Example~\ref{exe:spsmon} we have that
 \begin{eqnarray}
 \frac{|f_i(w) -f_i(w^*)|}{\norm{\nabla f_i(w)}^2} \norm{\nabla^2 f_i(w) }& \leq &1 , \\
 \frac{(f_i(w) -f_i(w^*))^2}{\norm{\nabla f_i(w)}^3} \norm{ \nabla^3 f_i(w)}& \leq &1+\frac{1}{2r^2}.
 \end{eqnarray} 
  Consequently Theorem~\ref{theo:onlinesgdsmooth} holds with $L_2 =1$ and $L_3 = 1+\frac{1}{2r^2}$.
\end{example}
\begin{proof}
Indeed since
\begin{eqnarray*}
\nabla f_i(w) & =& x_i \phi'(x_i^\top w) \\
\nabla^2 f_i(w) & =& x_i x_i^\top \phi''(x_i^\top w)  \\
\nabla^3 f_i(w) & =& x_i \otimes x_i  \otimes x_i \phi'''(x_i^\top w) 
\end{eqnarray*}
Consequently
\begin{eqnarray*}
 \frac{|f_i(w) -f_i(w^*)|}{\norm{\nabla f_i(w)}^2} \norm{\nabla^2 f_i(w) }& \leq & \frac{\phi(x_i^\top w) }{|\phi'(x_i^\top w)|^2  } |\phi''(x_i^\top w)|, \\
 \frac{(f_i(w) -f_i(w^*))^2}{\norm{\nabla f_i(w)}^3} \norm{ \nabla^3 f_i(w)}& \leq & \frac{\phi(x_i^\top w)^2 }{|\phi'(x_i^\top w)|^3 } |\phi'''(x_i^\top w)|
 \end{eqnarray*}
 The result now follows by taking the max over the arguments on the left hand side.
\end{proof}

\subsection{Sufficient conditions on TAPS}

Here we explore sufficient conditions 
 for the smoothness assumption in Theorem~\ref{theo:onlinesgdsmooth}  to hold for the  TAPS method given in Algorithm~\ref{alg:TAPS}.

First we provide a sufficient condition for the $t$--smoothness assumption in Theorem~\ref{theo:onlinesgdsmooth}  to hold.


\begin{lemma}\label{lem:scalesmoothsuffmotasps}
If $f_i$ is $L_2$--scaled smooth, that is
\begin{eqnarray}
\label{eq:spsscaledsmoothtasps}
 \frac{|f_i(w^t) -\alpha_i^t|}{1+\norm{\nabla f_i(w^t)}^2} \norm{\nabla^2 f_i(w^t) }& \leq & L_2, \\
 \frac{(f_i(w^t) -\alpha_i^t)^2}{(\norm{\nabla f_i(w^t)}^2+1)^2} \norm{ \nabla^3 f_i(w^t) \nabla f_i(w) }& \leq & L_3 \label{eq:spsscaledsmooth3tasps}
\end{eqnarray}
then $h_t$ is $L$-$t$--smooth with $L =(3+L_2)(1+2L_2) + 1 + L_3+\frac{n L_2}{\gamma}.$
\end{lemma}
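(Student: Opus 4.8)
The plan is to mimic closely the two-part structure of the proof of Lemma~\ref{lem:scalesmoothsuff}, adapting it to the \TAPS auxiliary function $h_t(w,\alpha)$ in~\eqref{eq:hz}, where now the variable is $z=(w,\alpha)\in\R^{d+n}$ and each term carries the extra $+1$ in its denominator. First I would introduce the ``frozen'' auxiliary function $\phi(w,\alpha)\eqdef h_{(w,\alpha)}(w,\alpha)$, i.e.\ the function obtained from~\eqref{eq:hz} by replacing the $w^t$ appearing inside $\norm{\nabla f_i(w^t)}^2+1$ by the running point $w$. As in Part~I of Lemma~\ref{lem:scalesmoothsuff}, the point is that $\phi(w^t,\alpha^t)=h_t(w^t,\alpha^t)$ and
\[
h_{t+1}(z^{t+1}) = \phi(z^{t+1}) \leq \phi(z^t) + \dotprod{\nabla\phi(z^t), z^{t+1}-z^t} + \tfrac{\cL}{2}\norm{z^{t+1}-z^t}^2
\]
provided $\phi$ is $\cL$--smooth. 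Splitting $\nabla\phi(z^t)$ into the ``$h_t$ part'' (gradient w.r.t.\ the first $w$-slot and the $\alpha$-slot, which matches $\nabla h_t(z^t)$) and the ``frozen-point part'' (gradient w.r.t.\ the $w$ appearing only inside the denominator), I would bound the cross term $\dotprod{\nabla_y h_y(z^t)\big|_{y=w^t},\, w^{t+1}-w^t}$ by $\tfrac{C}{2}\norm{z^{t+1}-z^t}^2$ using the \TAPS update rule (lines~\ref{ln:wstep},~\ref{ln:alphaistep} of Algorithm~\ref{alg:TAPS}): since $\EE{t}{\norm{z^{t+1}-z^t}}= \tfrac{\gamma}{n}\sum_i \tfrac{|f_i(w^t)-\alpha_i^t|}{\sqrt{\norm{\nabla f_i(w^t)}^2+1}}\sqrt{\norm{\nabla f_i(w^t)}^2+1}/\dots$, the scaled-smoothness bound~\eqref{eq:spsscaledsmoothtasps} kicks in and produces $C = nL_2/\gamma$, exactly paralleling the $\frac{nL_2}{\gamma}$ term in Lemma~\ref{lem:scalesmoothsuff}. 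This establishes that $h_t$ satisfies~\eqref{eq:hsmooht} with $L = \cL + \tfrac{nL_2}{\gamma}$.

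For Part~II I would verify that $\phi$ is $\cL$--smooth with $\cL = (3+L_2)(1+2L_2) + 1 + L_3$, by bounding the Hessian of each summand. Writing $\psi_i(w,\alpha)\eqdef \tfrac12\tfrac{(f_i(w)-\alpha_i)^2}{\norm{\nabla f_i(w)}^2+1}$ and differentiating, the Hessian decomposes (just as in~\eqref{eq:Itempso9j4o9sj4}--\eqref{eq:IItempsnlj3s4}) into a ``rank-one-squared'' term $I$ and a ``second/third derivative'' term $II$, now augmented by the blocks coming from the $\alpha_i$ derivatives and the fact that the constant $1$ in the denominator makes $\tfrac{1}{\norm{\nabla f_i(w)}^2+1}\nabla f_i(w)\nabla f_i(w)^\top$ a contraction of norm $<1$ rather than a projection. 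The bookkeeping: $\norm{I}\le (1+L_2)(\text{something}\le 3)$ giving the factor $(3+L_2)$, and $\norm{II}$ contributes $L_2(1+2L_2)$, the stray $+1$, and $L_3$ from~\eqref{eq:spsscaledsmooth3tasps}. Also $\nabla^2_{\alpha}h_{n+1,t} = \tfrac1n \mI_n$ contributes only bounded constants, absorbed into the $+1$. Summing over $i$ and dividing by $n+1$ preserves the bound, so $\cL = (3+L_2)(1+2L_2)+1+L_3$, and combining with Part~I yields $L = (3+L_2)(1+2L_2)+1+L_3+\tfrac{nL_2}{\gamma}$ as claimed.

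The main obstacle is Part~II: carefully redoing the Hessian expansion of $\psi_i(w,\alpha)$ with the $\alpha$-coupling and the $+1$-regularized denominator, and tracking where each of the constants $3+L_2$, $1+2L_2$, $+1$ comes from so that the final bound is exactly $(3+L_2)(1+2L_2)+1+L_3$ rather than some looser constant. In particular the mixed $w$--$\alpha$ second derivatives $\nabla^2_{w\alpha_i}\psi_i$ are new relative to Lemma~\ref{lem:scalesmoothsuff} and must be controlled; the hope (and what I expect to work out) is that because $\nabla_{\alpha_i}\psi_i = -(f_i(w)-\alpha_i)/(\norm{\nabla f_i(w)}^2+1)$ is of the same homogeneity as the $w$-gradient, its $w$-derivative is again controlled by $L_2$ via~\eqref{eq:spsscaledsmoothtasps}, with no new quantity needed. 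A secondary (routine) subtlety is that here we only know $f_i$ is twice/thrice differentiable in the needed sense along the iterates, so ``$\cL$--smooth'' should be read, as in Lemma~\ref{lem:scalesmoothsuff}, as the one-step descent-type inequality rather than a global Lipschitz-gradient statement.
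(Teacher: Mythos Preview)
Your proposal is correct and follows essentially the same two-part strategy as the paper: Part~I introduces $\phi(z)=h_z(z)$, uses its $\cL$--smoothness, and bounds the extra cross term $\dotprod{\nabla_y h_y(z^t)|_{y=w^t},\,w^{t+1}-w^t}$ via~\eqref{eq:spsscaledsmoothtasps} to get the $nL_2/\gamma$ contribution; Part~II bounds $\norm{\nabla^2\phi}$ by computing the $w$--block (identical $I+II$ decomposition as in Lemma~\ref{lem:scalesmoothsuff}, yielding $(1+L_2)(1+2L_2)+L_3$), the mixed $w$--$\alpha_i$ block (bounded by $2L_2+1$), and the $\alpha$--block (bounded by $1$), then combines them via the block-matrix inequality $\norm{\nabla^2\phi}\le \norm{\nabla^2_w\phi}+2\norm{\nabla^2_{w,\alpha}\phi}+\norm{\nabla^2_\alpha\phi}$ of Lemma~\ref{lem:matrixblockbnd}. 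Your accounting of exactly which piece produces the factor $(3+L_2)$ is slightly off---it arises from the sum $(1+L_2)(1+2L_2)+2(2L_2+1)+1$, not from $\norm{I}$ alone---but the plan is the paper's plan.
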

\begin{proof}
The proof has two step 1) we show that if the auxiliary function  $\phi(z) \eqdef h_z(z)$ is $\cL$--smooth then $h_t(z)$ satisfies~\eqref{eq:hsmooht} and then  2)  show that  $\phi(z)$ is smooth.

\noindent{\bf Part I.}  If $\phi(z) \eqdef h_z(z)$ is smooth then $h_t(z)$ satisfies~\eqref{eq:hsmooht} .

Note that $\phi(z^t) = h_t(z^t)$, and that if $\phi(z)$ is $\cL$--smooth then
\begin{align}
h_{t+1}(z^{t+1})&=   \phi(z^{t+1}) \nonumber \\
& \leq  \phi(z^{t}) + \dotprod{\nabla \phi(z^t), z^{t+1} -z^t} + \frac{\cL}{2}\norm{z^{t+1}-z^t}^2 \nonumber\\
&=  h_t(z^t) +\dotprod{\nabla_z h_t(z)|_{z=z^t}, z^{t+1}-z^t} + \frac{\cL}{2}\norm{z^{t+1}-z^t}^2+  \dotprod{\nabla_y h_y(z^t)|_{y=w^t}, w^{t+1}-w^t} .\label{eq:o8ahlh8h8la343000}
\end{align}
Using the SGD interpretation os \texttt{TAPS} given in Section~\ref{sec:SGDviewpoint} we have that $z^{t+1} = z^t - \gamma \nabla h_{i,t}(z^t)$  where $\E{ \nabla h_{i,t}(z^t)}= \nabla h_t(z^t),$
thus the above gives
\begin{align}
\EE{t}{h_{t+1}(z^{t+1})} &\leq   h_t(z^t) -\gamma \norm{\nabla h_t(z^t)}^2 + \frac{\gamma^2 \cL}{2}\EE{t}{\norm{\nabla h_t(z^t)}^2}+  \EE{t}{\dotprod{\nabla_y h_y(z^t)|_{y=w^t}, w^{t+1}-w^t}} .\label{eq:o8ahlh8h8la343}
\end{align}

Consequently if we could show that there exists $C>0$ such that
\[\dotprod{\nabla_y h_y(z^t)|_{y=w^t}, w^{t+1}-w^t} \leq \frac{C}{2}\norm{z^{t+1}-z^t}^2,\]
then we could establish that~\eqref{eq:hsmooht} holds with $L = \cL +C$. For this, we have that

\begin{equation}
\left. \nabla_y h_y(z^t)\right|_{y=w^t} = \nabla_y \frac{1}{n+1}\sum_{i=1}^n \frac{1}{2}\frac{(f_i(w^t) -\alpha_i)^2}{\norm{\nabla f_i(w^t)}^2+1} 
=- \frac{1}{n+1}\sum_{i=1}^n \frac{1}{2}\frac{(f_i(w) -\alpha_i)^2}{(\norm{\nabla f_i(w^t)}^2 + 1)^2}  \nabla^2 f_i(w^t)  \nabla f_i(w^t).\label{eq:mssoz9jeo9ze2}
\end{equation}

Furthermore, 
from the SGD viewpoint in Section~\ref{sec:SGDviewpoint} and~\eqref{eq:updatesgd} 
we have that

\begin{eqnarray}
\EE{t}{\norm{w^{t+1} - w^t}} &= &  \frac{\gamma }{  n} \sum_{i=1}^n \frac{|f_i(w^t)-\alpha_i^t)|}{\norm{\nabla f_i(w^t)}^2+1}\norm{\nabla f_i(w^t)} \label{eq:s4jpsk04ksk4} \\
\EE{t}{\norm{z^{t+1} - z^t} ^2}&= & \EE{t}{\norm{w^{t+1} - w^t}^2} + \EE{t}{\norm{\alpha^{t+1}-\alpha^t}^2 }\nonumber \\
&\overset{\eqref{eq:updatesgd}+\eqref{eq:updatessgdavalpha}}{=} & 
 \frac{\gamma^2 }{ n+1} \sum_{i=1}^n \frac{(f_i(w^t)-\alpha_i)^2}{(\norm{\nabla f_i(w^t)}^2+1)^2}\norm{\nabla f_i(w^t)}^2 \nonumber \\
 & & \quad  +
 \frac{\gamma^2 }{ n+1}\left( \sum_{i=1}^n \frac{(f_i(w^t)-\alpha_i)^2}{(\norm{\nabla f_i(w^t)}^2+1)^2}  + (\overline{\alpha}^t - \tau)^2\right)\nonumber \\
 &= &  \frac{\gamma^2 }{ n+1} \left(\sum_{i=1}^n \frac{(f_i(w^t)-\alpha_i)^2}{\norm{\nabla f_i(w^t)}^2+1}+ (\overline{\alpha}^t - \tau)^2 \right). \label{eq:spsEnormwt22}
\end{eqnarray}


Now we can bound the gradient given in~\eqref{eq:mssoz9jeo9ze} as follows
\begin{eqnarray}
 \norm{\nabla_y h_y(w^t, \alpha^t)|_{y=w^t}} &\leq& \frac{1}{n}\sum_{i=1}^n \frac{1}{2}\frac{(f_i(w^t) -\alpha_i^t)^2}{(\norm{\nabla f_i(w^t)}^2+1)^2} \norm{ \nabla^2 f_i(w^t)  \nabla f_i(w^t)} \nonumber \\
 & \leq & \frac{1}{n}\sum_{i=1}^n \frac{1}{2}\frac{|f_i(w) -\alpha_i^t|}{\norm{\nabla f_i(w^t)}^2+1}  \norm{ \nabla^2 f_i(w^t)}\norm{  \nabla f_i(w^t)} \frac{|f_i(w^t) -\alpha_i^t|}{\norm{\nabla f_i(w^t)}^2 +1}  \nonumber\\
 & \leq &  \frac{1}{ \gamma}\sum_{i=1}^n \frac{1}{2}\frac{|f_i(w) -\alpha_i^t|}{\norm{\nabla f_i(w^t)}^2+1}  \norm{ \nabla^2 f_i(w^t)  } \sum_{j=1}^n \frac{\gamma}{n} \frac{|f_j(w^t) -\alpha_j^t|}{\norm{\nabla f_j(w^t)}^2+1} \norm{  \nabla f_i(w^t)}\nonumber \\
 &\overset{\eqref{eq:s4jpsk04ksk4}}{=} &  \frac{1}{ \gamma}\sum_{i=1}^n \frac{1}{2}\frac{|f_i(w) -\alpha_i^t|}{\norm{\nabla f_i(w^t)}^2+1} \norm{ \nabla^2 f_i(w^t)} \EE{t}{\norm{w^{t+1}-w^t}}.
  \label{eq:tempscaleheaooze}
\end{eqnarray}
Consequently
\begin{eqnarray*}
\EE{t}{\dotprod{\nabla_y h_y(w^t, \alpha^t)|_{y=w^t}, w^{t+1}-w^t}} & \leq& \norm{\nabla_y h_y(w^t, \alpha^t)|_{y=w^t}}\EE{t}{\norm{w^{t+1}-w^t}} \\
  &\overset{\eqref{eq:tempscaleheaooze}}{ \leq } &\frac{1}{ \gamma}\sum_{i=1}^n \frac{1}{2}\frac{|f_i(w) -\alpha_i^t|}{\norm{\nabla f_i(w^t)}^2+1} \norm{ \nabla^2 f_i(w^t)}\EE{t}{\norm{w^{t+1}-w^t}}^2 \\
& \overset{\eqref{eq:spsscaledsmoothtasps}}{\leq} &\frac{n L_2}{2\gamma} \EE{t}{\norm{w^{t+1}-w^t}}^2 \\
& \overset{\mbox{\small Jensen's Ineq.}}{\leq} &\frac{n L_2}{2\gamma} \EE{t}{\norm{w^{t+1}-w^t}^2} \\
& \leq & \frac{n L_2}{2\gamma} \EE{t}{\norm{z^{t+1}-z^t}^2}.
\end{eqnarray*}

Thus from the above and~\eqref{eq:o8ahlh8h8la343000} we have that
\begin{eqnarray}
h_{t+1}(z^{t+1})&\overset{\eqref{eq:o8ahlh8h8la343000} } \leq & h_t(z^t) +\dotprod{\nabla h_t(z^t), z^{t+1}-z^t} +\frac{1}{2}\left(\frac{n L_2}{\gamma} +\cL \right)\norm{z^{t+1}-z^t}^2.\label{eq:o8ahlh8h8la33}
\end{eqnarray}

 \noindent{\bf Part II}. Verifying that $\phi(z) = h_z(z)$ is an $\cL$--smooth function.
To this end, note  that
\begin{equation}
\nabla^2 \phi(w,\alpha) \; = \; 
\begin{bmatrix}
\nabla_w^2 \phi(w,\alpha)  & \nabla_{w,\alpha}^2 \phi(w,\alpha)  \\
\nabla_{w,\alpha}^2 \phi(w,\alpha)^\top  & \nabla_{\alpha}^2 \phi(w,\alpha) .
\end{bmatrix}
\end{equation}
To show that $\norm{\nabla^2 \phi(w,\alpha)}$ is bounded we will use that
\begin{eqnarray}
\norm{\nabla^2 \phi(w,\alpha)} & \leq &  \norm{\nabla_w^2 \phi(w,\alpha)} +2\norm{ \nabla_{w,\alpha}^2 \phi(w,\alpha)}  + \norm{\nabla_{\alpha}^2 \phi(w,\alpha)  },  \label{eq:matboundrtempmso9j}
\end{eqnarray}
which relies on Lemma~\ref{lem:matrixblockbnd} proven in the appendix.

We will first verify  that $\phi_i(w,\alpha) \eqdef \frac{1}{2}\frac{(f_i(w) -\alpha_i)^2}{\norm{\nabla f_i(w)}^2+1}$  is a smooth function, then use that $\phi(w,\alpha) = \frac{1}{n} \sum_{i=1}^n \phi_i(w,\alpha_i). $  To do this, we will examine the Hessian of $\phi_i(w,\alpha_i)$ and determine that it is bounded.

\begin{eqnarray}
\nabla_w \phi_i(w,\alpha_i) &=& \nabla_w \frac{1}{2}\frac{(f_i(w) -\alpha_i)^2}{\norm{\nabla f_i(w)}^2+1} \nonumber \\
& =&   \frac{f_i(w) -\alpha_i}{\norm{\nabla f_i(w)}^2+1} \nabla f_i(w)
-  \frac{(f_i(w) -\alpha_i)}{(\norm{\nabla f_i(w)}^2+1)^2} \nabla^2 f_i(w)  \nabla f_i(w) \nonumber \\
& =&   \frac{f_i(w) -\alpha_i}{\norm{\nabla f_i(w)}^2+1}\left(   \nabla f_i(w) - \frac{f_i(w) -\alpha_i}{\norm{\nabla f_i(w)}^2+1} \nabla^2 f_i(w)  \nabla f_i(w)\right). \label{eq:tempmseos9jpa}
\end{eqnarray}
The second derivative has two terms 
\[\nabla_w^2 \phi_i(w,\alpha_i)  \; =\; \nabla^2_w \frac{1}{2}\frac{(f_i(w) -\alpha_i)^2}{\norm{\nabla f_i(w)}^2+1}  = I + II\]
where

\begin{eqnarray}
I & =&\frac{1}{\norm{\nabla f_i(w)}^2+1} \left(\nabla f_i(w)- \frac{f_i(w) -\alpha_i}{\norm{\nabla f_i(w)}^2+1} \nabla^2 f_i(w)  \nabla f_i(w) \right)\left(   \nabla f_i(w) - \frac{f_i(w) -\alpha_i}{\norm{\nabla f_i(w)}^2+1} \nabla^2 f_i(w)  \nabla f_i(w)\right)^\top \nonumber \\
&= & \frac{1}{\norm{\nabla f_i(w)}^2+1}  \left(\mI- \frac{f_i(w) -\alpha_i}{\norm{\nabla f_i(w)}^2+1} \nabla^2 f_i(w)  \right) \nabla f_i(w)\nabla f_i(w)^\top   \left(\mI- \frac{f_i(w) -\alpha_i}{\norm{\nabla f_i(w)}^2+1} \nabla^2 f_i(w)  \right) \label{eq:Itempso9j4o9sj422}
\end{eqnarray}

and
\begin{eqnarray}
II &= & \frac{f_i(w) -\alpha_i}{\norm{\nabla f_i(w)}^2+1}\left(   \nabla^2 f_i(w) - \frac{1}{\norm{\nabla f_i(w)}^2+1} \nabla^2 f_i(w)  \nabla f_i(w)\nabla f_i(w)^\top \right. \nonumber \\
& & \quad \left.
+ \frac{f_i(w) -\alpha_i}{(\norm{\nabla f_i(w)}^2+1)^2} \nabla^2 f_i(w)  \nabla f_i(w) \nabla f_i(w)^\top \nabla^2 f_i(w)
- \frac{f_i(w) -\alpha_i}{\norm{\nabla f_i(w)}^2+1} (\nabla^2 f_i(w) ^2 +   \nabla^3 f_i(w) \nabla f_i(w) )\right)\nonumber \\
&= &\frac{f_i(w) -\alpha_i}{\norm{\nabla f_i(w)}^2+1}\left(   \nabla^2 f_i(w)\left(\mI - \frac{1}{\norm{\nabla f_i(w)}^2+1}   \nabla f_i(w)\nabla f_i(w)^\top\right) \right. \nonumber \\
& &\quad  \left.
+ \frac{f_i(w) -\alpha_i}{\norm{\nabla f_i(w)}^2+1} \nabla^2 f_i(w) \left(\frac{1}{\norm{\nabla f_i(w)}^2+1} \nabla f_i(w) \nabla f_i(w)^\top  -\mI \right)\nabla^2 f_i(w)  \right.\nonumber\\
&& \quad \left.- \frac{f_i(w) -\alpha_i}{\norm{\nabla f_i(w)}^2+1} \nabla^3 f_i(w) \nabla f_i(w) ) \right)\nonumber \\
&= & \frac{f_i(w) -\alpha_i}{\norm{\nabla f_i(w)}^2+1}   \nabla^2 f_i(w)\left(\mI - \frac{1}{\norm{\nabla f_i(w)}^2+1}   \nabla f_i(w)\nabla f_i(w)^\top\right)\left( \mI- \frac{f_i(w) -\alpha_i}{\norm{\nabla f_i(w)}^2+1} \nabla^2 f_i(w)\right) \nonumber \\
& & \quad -\frac{(f_i(w) -\alpha_i)^2}{(\norm{\nabla f_i(w)}^2+1)^2}  \nabla^3 f_i(w) \nabla f_i(w)   \label{eq:IItempsnlj3s422}
\end{eqnarray}

For $I$ we have that
\begin{eqnarray}
 \norm{I} &\overset{s}{ \leq}& \norm{\mI- \frac{f_i(w) -\alpha_i}{\norm{\nabla f_i(w)}^2+1} \nabla^2 f_i(w)} \leq 
 1+L_2.\label{eq:twmppozj9oejz2}
\end{eqnarray}

Furthermore, note that
\begin{equation}\label{eq:oz9je9zj9z4z42}
\norm{\left(\frac{1}{\norm{\nabla f_i(w)}^2+1} \nabla f_i(w) \nabla f_i(w)^\top  -\mI \right)} = 1 
\end{equation} 
since  $\frac{1}{\norm{\nabla f_i(w)}^2+1} \nabla f_i(w) \nabla f_i(w)^\top $ is a projection matrix onto $\Range{\nabla f_i(w)}.$

Thus  we have that
\begin{eqnarray}
\norm{\nabla_w^2 \phi_i(w,\alpha_i) } = \norm{I + II} & \leq & \norm{I} + \norm{II} \nonumber \\
& \overset{\eqref{eq:oz9je9zj9z4z42}+\eqref{eq:twmppozj9oejz2}}{ \leq} &(1+ L_2)^2 +  L_2(1+ L_2) + L_3 \label{eq:IplusIItasps}
\end{eqnarray}

\begin{eqnarray}
\nabla_{\alpha_i}^2 \phi(w,\alpha_i) & =& \frac{1}{2}\nabla_{\alpha_i}^2 \frac{(f_i(w) -\alpha_i)^2}{\norm{\nabla f_i(w)}^2+1} \nonumber \\
& =& -\nabla_{\alpha_i} \frac{(f_i(w) -\alpha_i)}{\norm{\nabla f_i(w)}^2+1} 
\;= \;  \frac{1 }{\norm{\nabla f_i(w)}^2+1} \leq 1. \label{eq:ahlpateiswdeibnsai}
\end{eqnarray}

\begin{eqnarray}
\nabla_{\alpha_i,w}^2 \phi(w,\alpha_i) &\overset{\eqref{eq:tempmseos9jpa}}{ =}&  \nabla_{\alpha_i}\frac{f_i(w) -\alpha_i}{\norm{\nabla f_i(w)}^2+1}\left(   \nabla f_i(w) - \frac{f_i(w) -\alpha_i}{\norm{\nabla f_i(w)}^2+1} \nabla^2 f_i(w)  \nabla f_i(w)\right) \nonumber \\
& =& -\frac{1}{\norm{\nabla f_i(w)}^2+1}\left(   \nabla f_i(w) - \frac{f_i(w) -\alpha_i}{\norm{\nabla f_i(w)}^2+1} \nabla^2 f_i(w)  \nabla f_i(w)\right) \nonumber\\
& &\frac{f_i(w) -\alpha_i}{\norm{\nabla f_i(w)}^2+1}\frac{1}{\norm{\nabla f_i(w)}^2+1} \nabla^2 f_i(w)  \nabla f_i(w) \nonumber \\
& = & \frac{1}{\norm{\nabla f_i(w)}^2+1}\left(   2\frac{f_i(w) -\alpha_i}{\norm{\nabla f_i(w)}^2+1} \nabla^2 f_i(w)  \nabla f_i(w) -  \nabla f_i(w)\right)\label{eq:crossderivninasd}
\end{eqnarray}
Thus 
\begin{equation}
\norm{\nabla_{\alpha_i,w}^2 \phi(w,\alpha_i) } \; \overset{\eqref{eq:spsscaledsmoothtasps}}{ \leq} \; 2L_2 +1
\end{equation}
Finally,
 by~\eqref{eq:matboundrtempmso9j} we have that
 \begin{eqnarray}
 \norm{\nabla^2 \phi(w,\alpha)} & \leq &  \norm{\nabla_w^2 \phi(w,\alpha)} +2\norm{ \nabla_{w,\alpha}^2 \phi(w,\alpha)}  + \norm{\nabla_{\alpha}^2 \phi(w,\alpha)  }  \nonumber \\
 & \leq & \frac{1}{n} \sum_{i=1}^n \left( \norm{\nabla_w^2 \phi_i(w,\alpha_i)} +2\norm{ \nabla_{w,\alpha_i}^2 \phi(w,\alpha_i)}  + \norm{\nabla_{\alpha_i}^2 \phi(w,\alpha_i)  } \right) \nonumber \\
 & \overset{\eqref{eq:IplusIItasps},~\eqref{eq:ahlpateiswdeibnsai},~\eqref{eq:crossderivninasd}}{\leq} & (1+ L_2)^2 +  L_2(1+ L_2) + L_3+  2( 1+2L_2)+1  \nonumber \\
 &= & (3+L_2)(1+2L_2) + 1 + L_3.
 \end{eqnarray}
\end{proof}

\section{Convex Classification: Additional Experiments}
\label{asec:convex}

\subsection{Grid search and Parameter Sensitivity}
\label{asec:parameterset}

To investigate how sensitve \MOTAPS is to setting its two parameters $\gamma\in [0,\,1]$ and $\gamma_{\tau} \in [0,\,1]$ we did a parameter sweep.
We searched over the grid given by $$ \gamma \in \{0.01, 0.1, 0.4, 0.7, 0.9, 1.0, 1.1\}$$ and $$\gamma_{\tau} \in \{ 0.00001, 0.0001, 0.001, 0.01, 0.1, 0.5, 0.9\}$$ and ran \MOTAPS for 50 epochs over the data, and recorded the resulting norm of the gradient. 
See Figures ~\ref{fig:lrmushrooms},
 ~\ref{fig:lrphishing},~\ref{fig:lrcolon} and  ~\ref{fig:grid}~\ref{fig:lrduke}  for the results 
of the grid search on the datasets \texttt{mushrooms}, \texttt{phishing},~ \texttt{colon-cancer}  and~\texttt{duke}  respectively.
 In Table~\ref{tab:datasetsconv} we resume the results of the parameter search, together with the details of each data set.


\begin{figure}
\centering
       \includegraphics[width=0.35\textwidth]{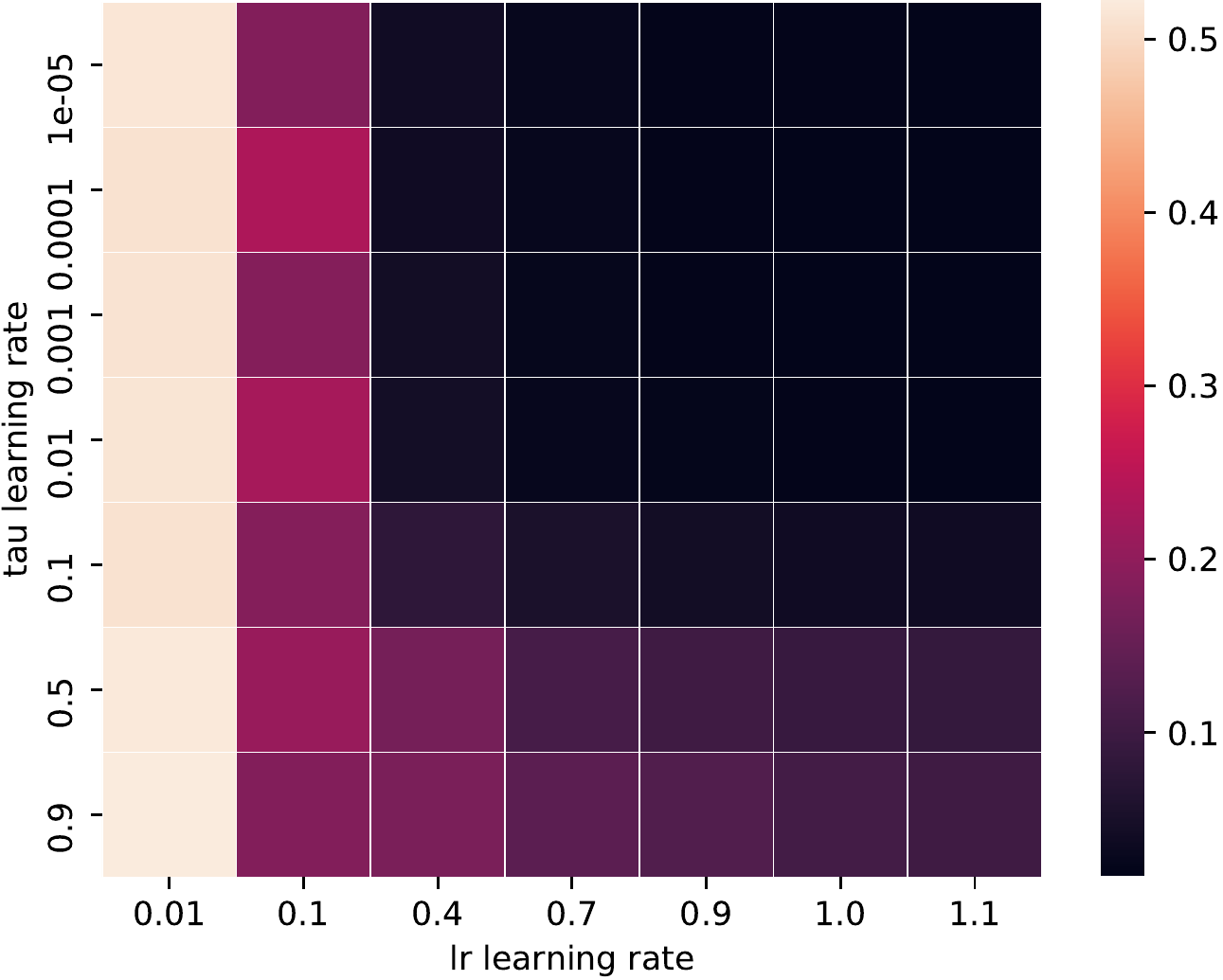}\hspace{1cm}
       \includegraphics[width=0.35\textwidth]{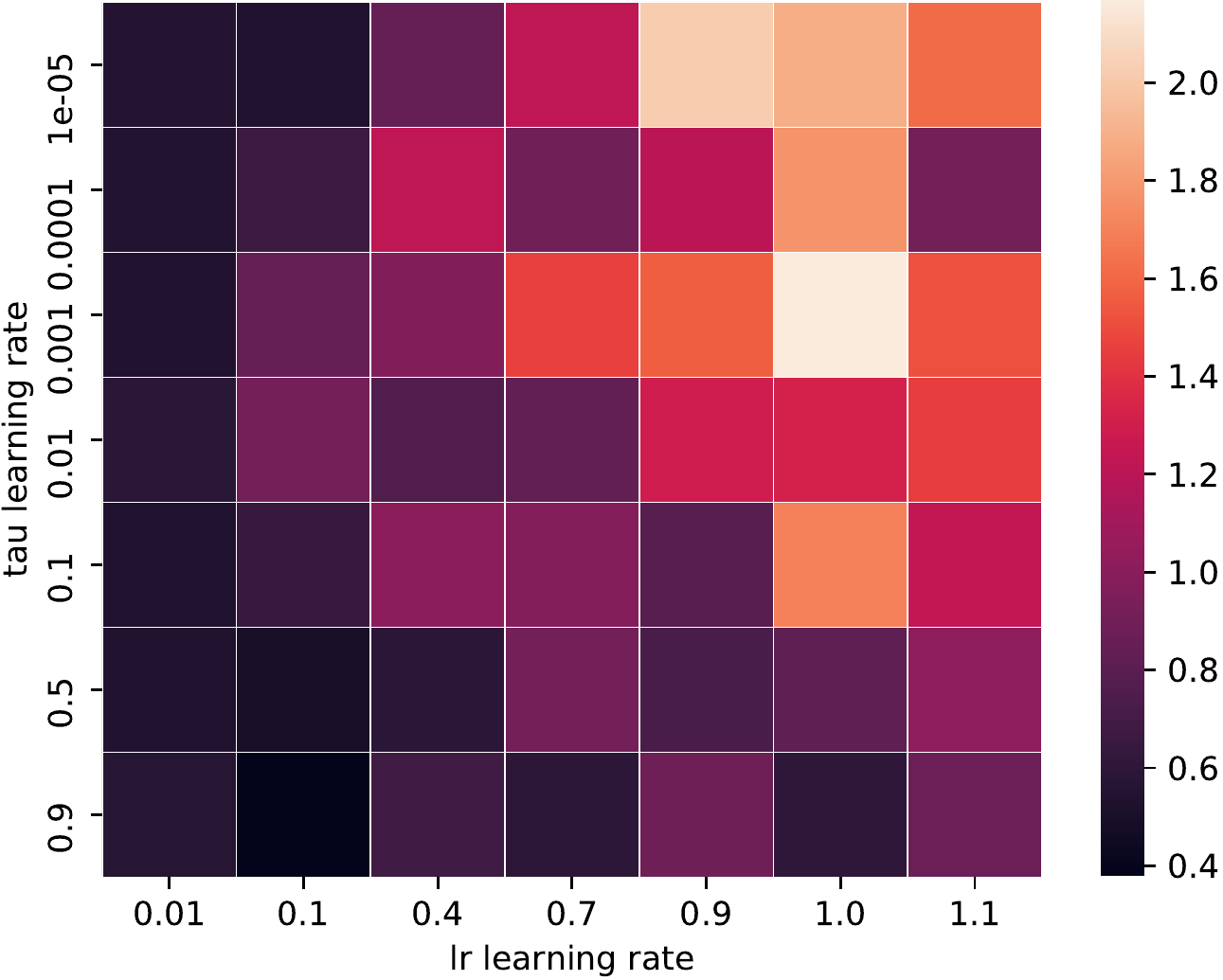}
         \caption{\texttt{colon-cancer} $(n,d) = (62, 2001)$ Left:  $\sigma =0.0$. Right: $\sigma =\min_{i=1,\ldots, n} \norm{x_i}^2/n = 2.66$.}
         \label{fig:lrcolon}
 \end{figure}        
         
\begin{figure}
\centering
\includegraphics[width=0.35\textwidth]{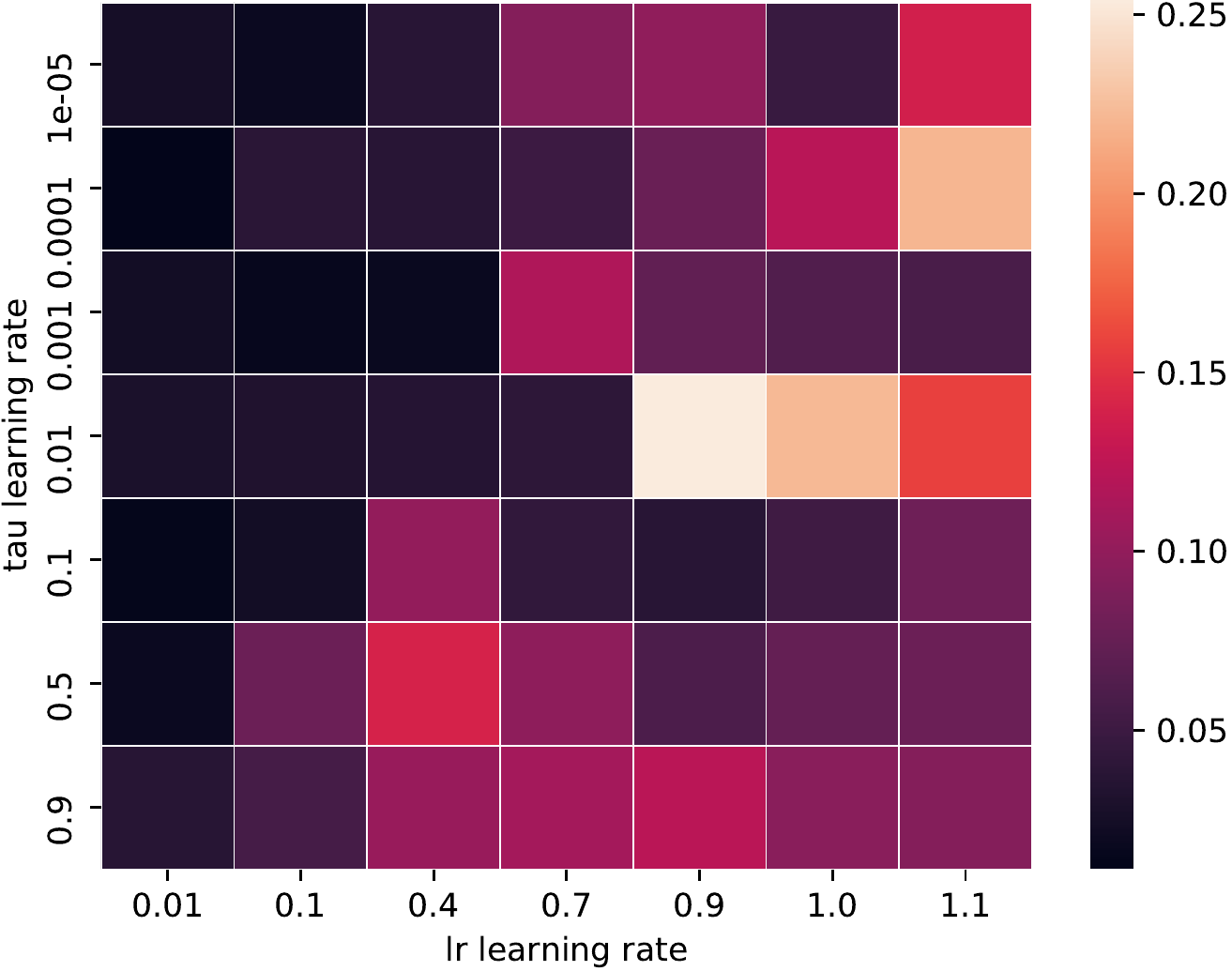}\hspace{1cm}
\includegraphics[width=0.35\textwidth]{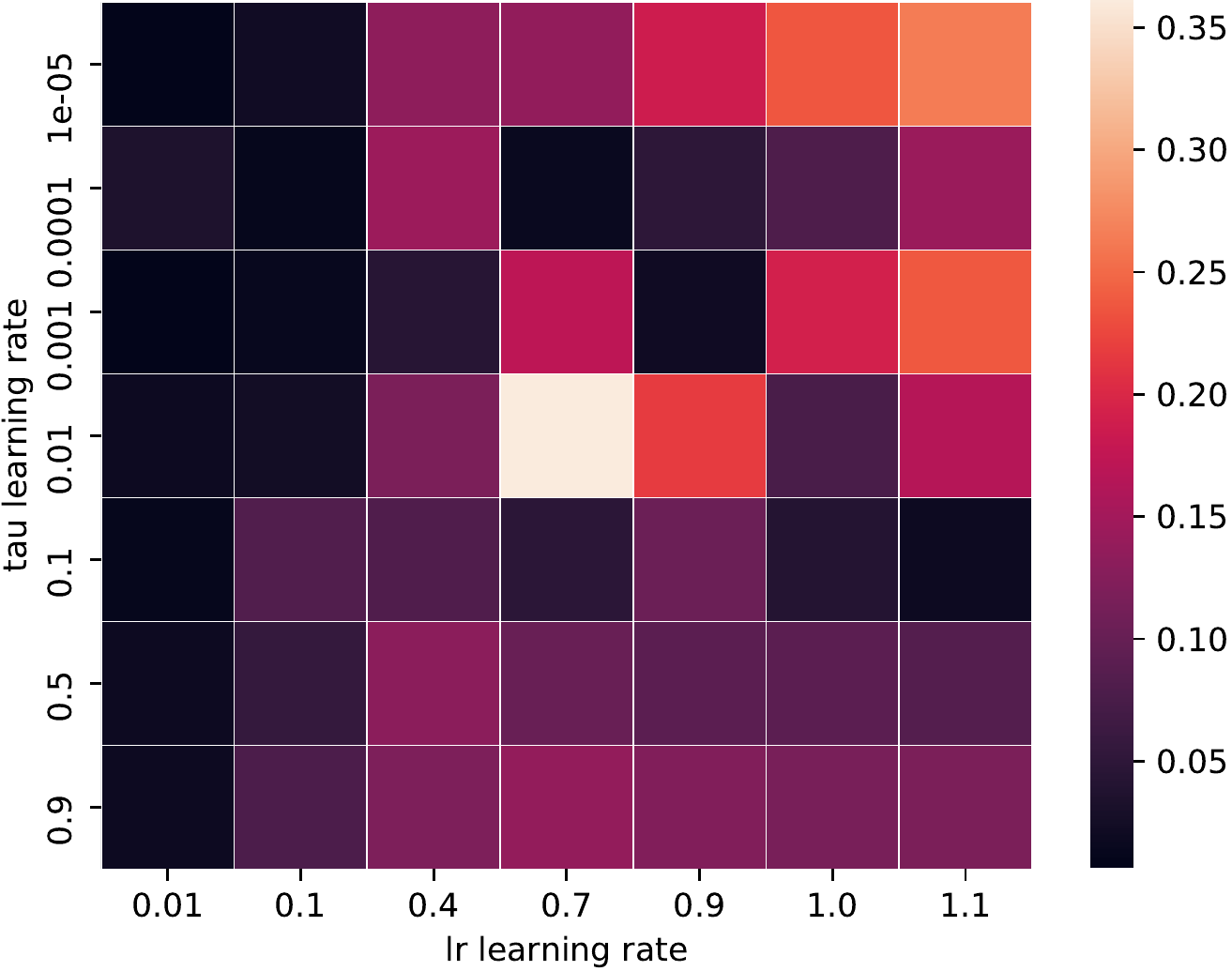}
\caption{Logistic Regression with data set  phishing $ (n,d) = ( 11055,  68)$  and regularization. Left:  $\sigma =0.0$ and Right: $\sigma =\min_{i=1,\ldots, n} \norm{x}_i^2/n$ . 
}
\label{fig:lrphishing}
\end{figure}


Ultimately the determining factor for finding the best parameter was the magnitude of the optimal value $f(w^*).$ Since this quantity is unknown to us a priori, we used the size of the regularization parameter as an proxy.
Based on these parameter results we devised the following rule-of-thumb for setting $\gamma$ and $\gamma_{\tau}$ with
\begin{equation}\label{eq:gammatau}
\gamma  = 1.0/(1+ 0.25 \sigma e^{\sigma}) \quad \mbox{and} \quad \gamma_{\tau} \; = \;  1 - \gamma
\end{equation}
where $\sigma$ is regularization parameter. 

\subsection{Comparing to Variance Reduced Gradient Methods}
\label{asec:exp-convex}

In Figures~\ref{fig:mushrooms},~\ref{fig:colon}, \ref{fig:duke} and~\ref{fig:phishing} we present further comparisons between \SP, \TAPS and \MOTAPS against SGD, SAG and SVRG.
We found that the variance reduced gradients methods were able to better exploit strong convexity, in particular for problems with a large regularization, and problems that were under-parameterized, with the phishing problem in Figure~\ref{fig:phishing} being the most striking example.
For problems with moderate regularization, and that were over-parametrized, the \MOTAPS performed the best. See for example the left of Figure~\ref{fig:colon} and Figure~\ref{fig:duke}. When the regularization is very small, and the problem is over-parameterized, thus making interpolation much more likely, the \SP converged the fastest. See for example the right of Figure~\ref{fig:colon} and~\ref{fig:mushrooms}.

%
%

\begin{figure}
\centering
\includegraphics[width=0.23\textwidth]{./convex/mushrooms-r-1-b-0-grad-iter}
\includegraphics[width =0.23\textwidth]{./convex/mushrooms-r-1-b-0-loss-iter}
\includegraphics[width=0.23\textwidth]{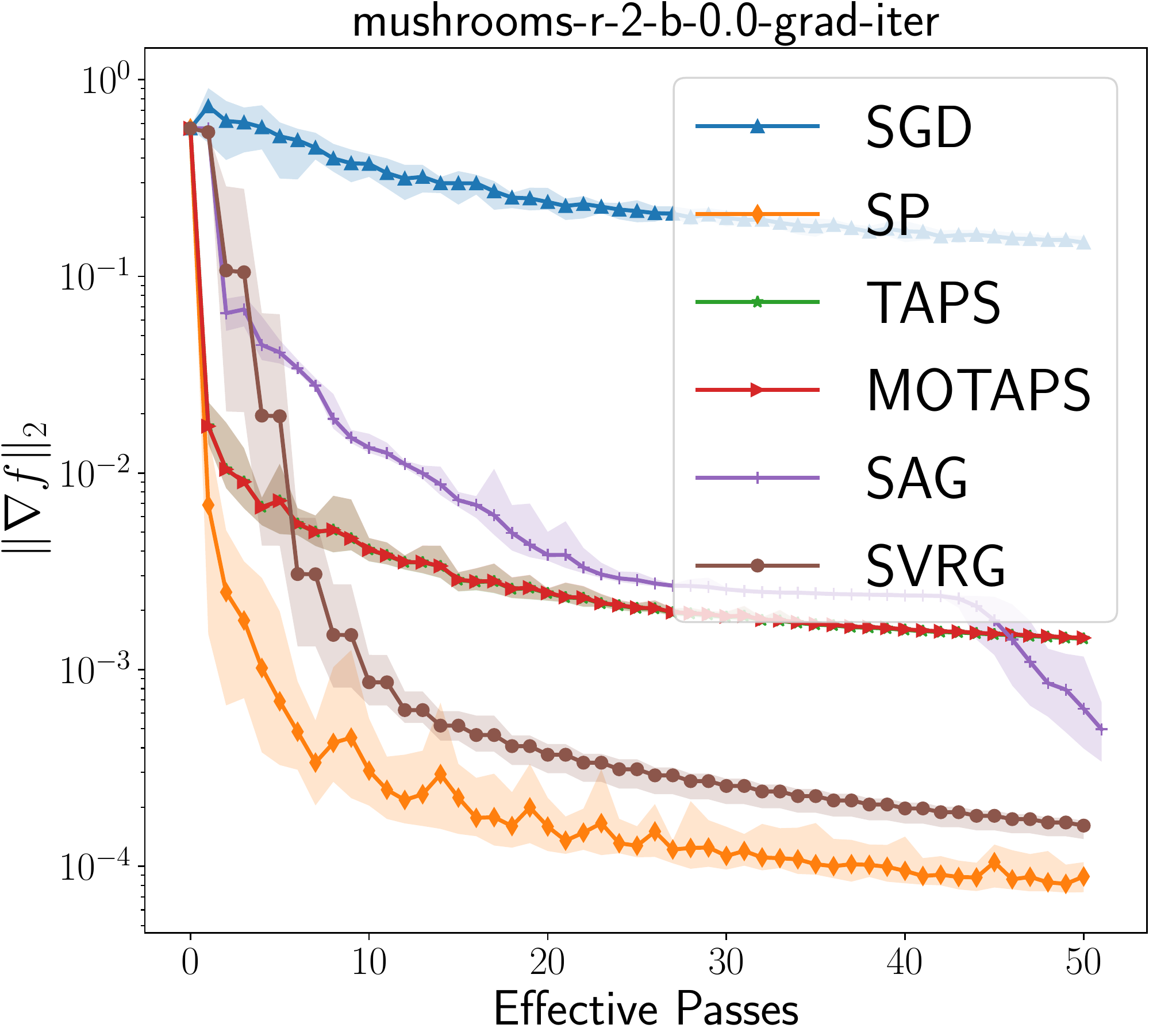}
\includegraphics[width =0.23\textwidth]{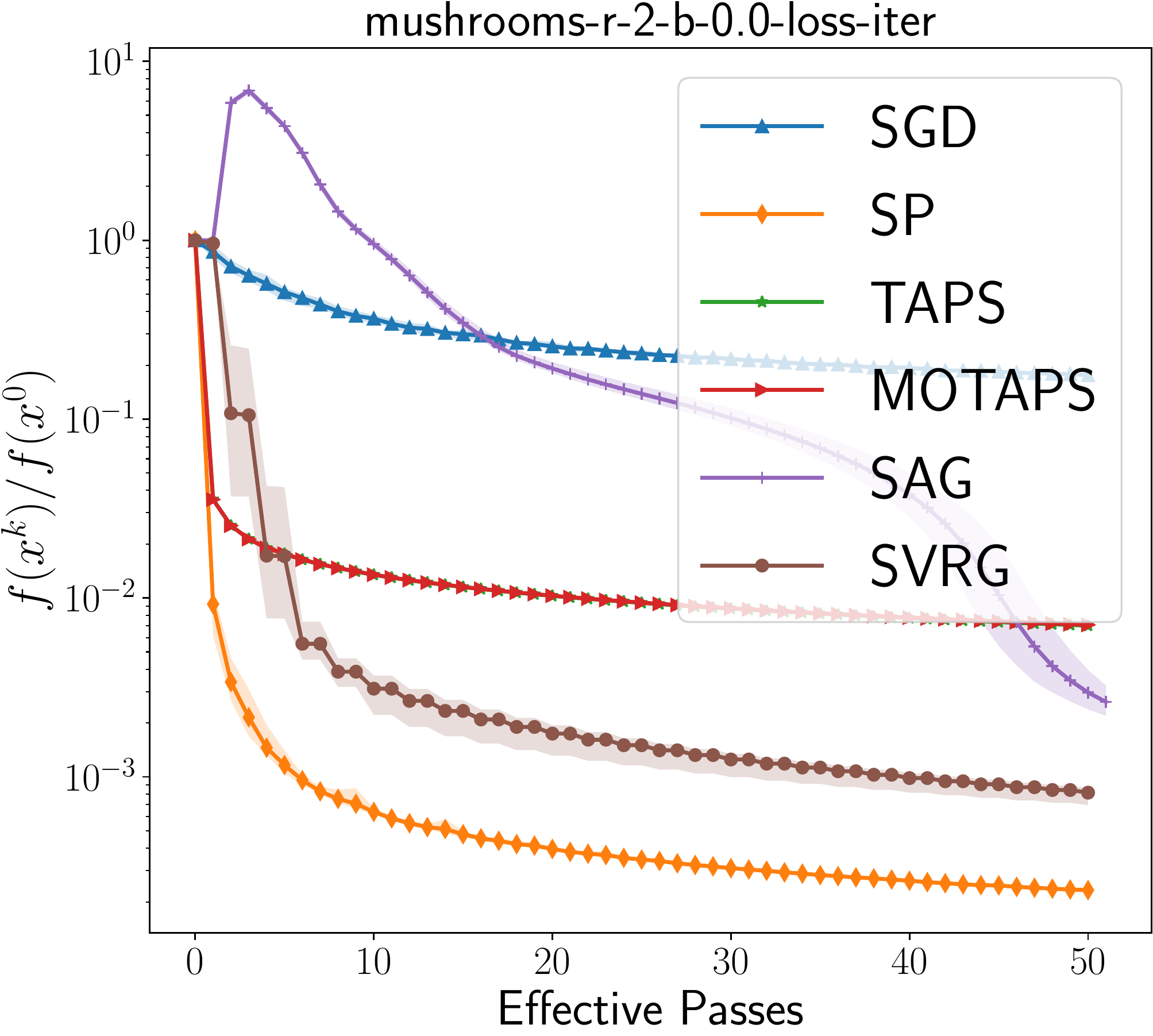}
\caption{Logistic Regression with data set  mushrooms $(n,d) = (8124, 112)   $. Left:  $\sigma =1/n$ and Right: $\sigma =1/n^2$ . See \cite{chang2011libsvm} 
}
\label{fig:mushrooms}
\end{figure}

\begin{figure}
\centering 
\includegraphics[width=0.23\textwidth]{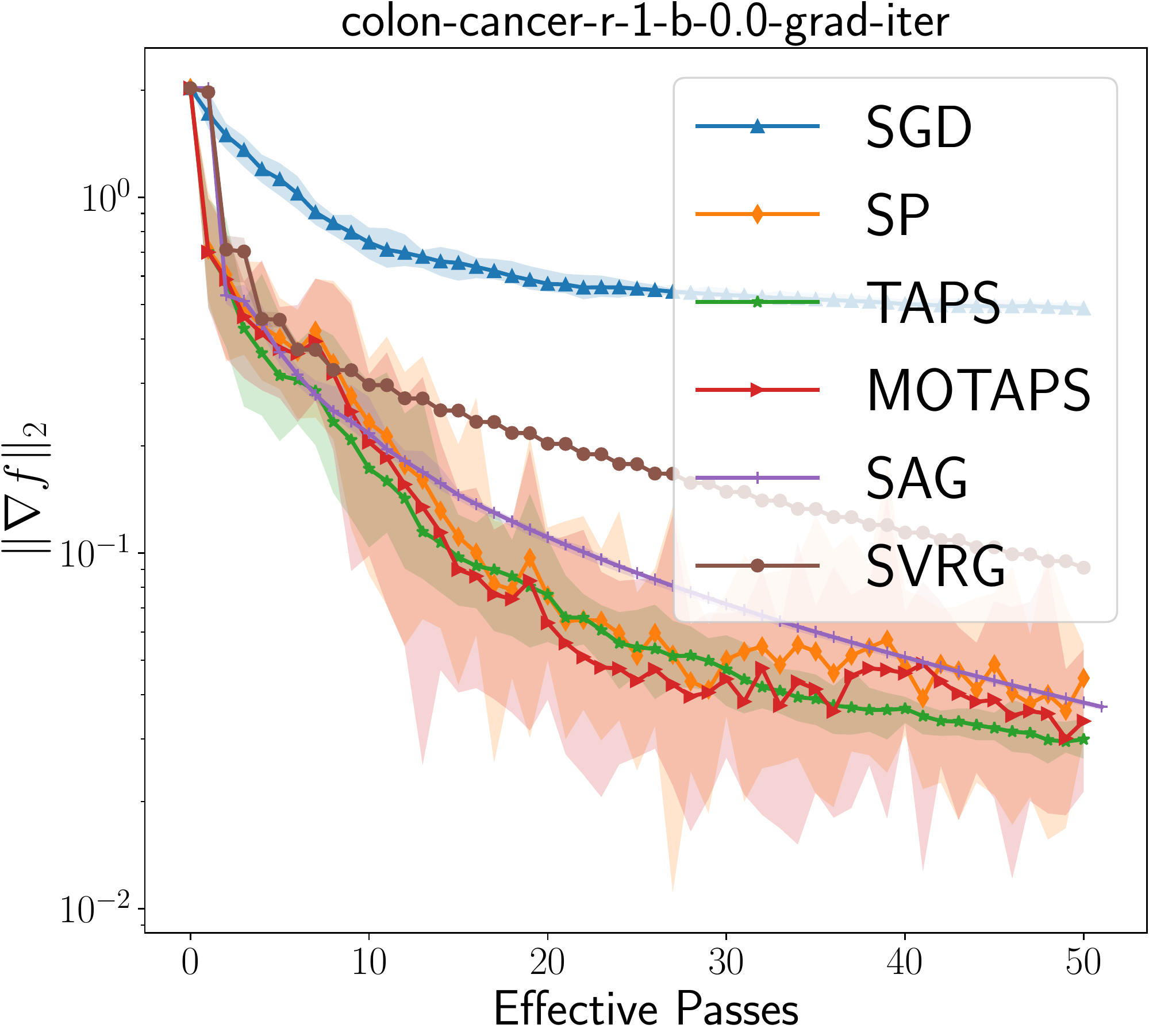}
\includegraphics[width =0.23\textwidth]{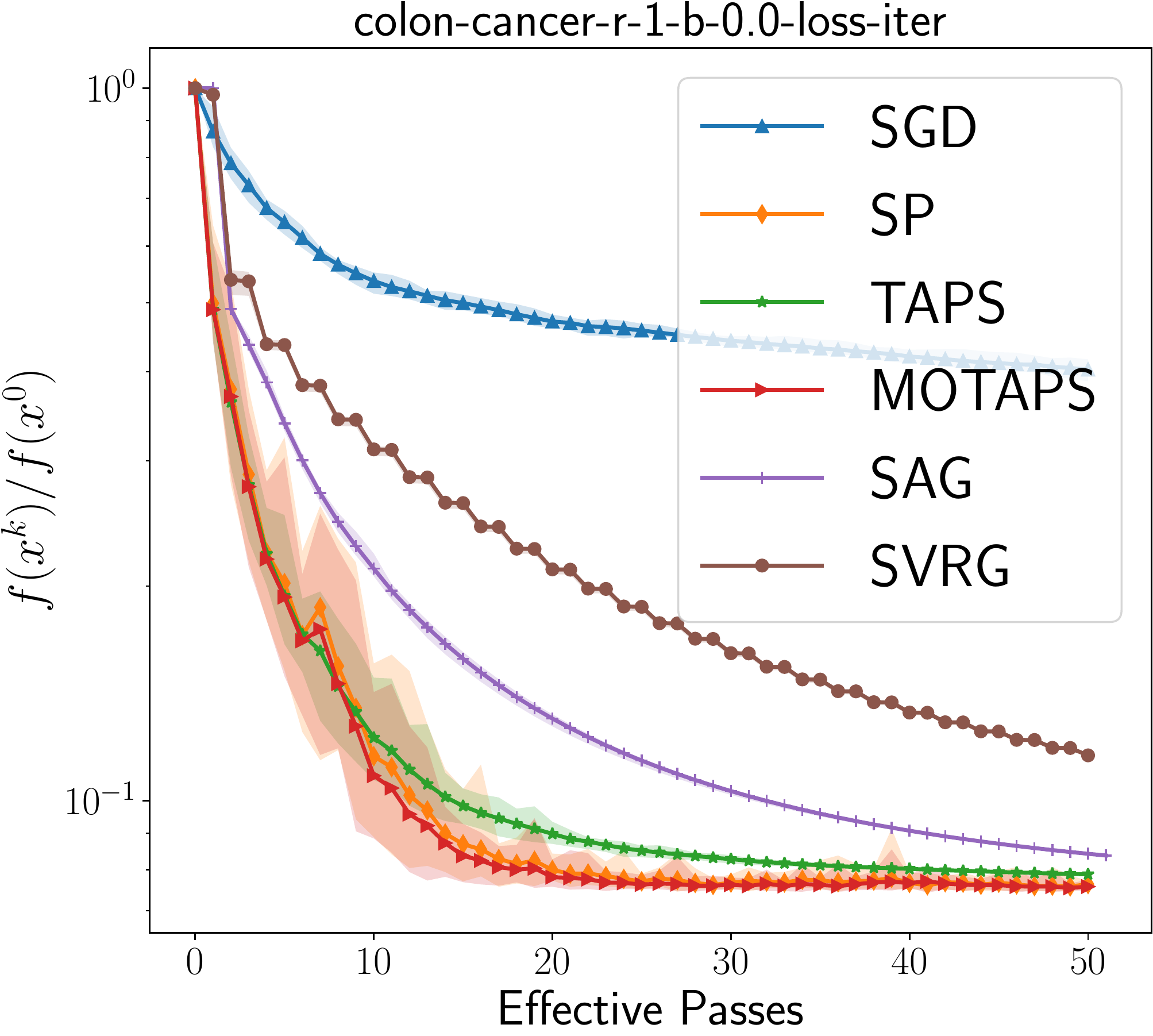}
\includegraphics[width=0.23\textwidth]{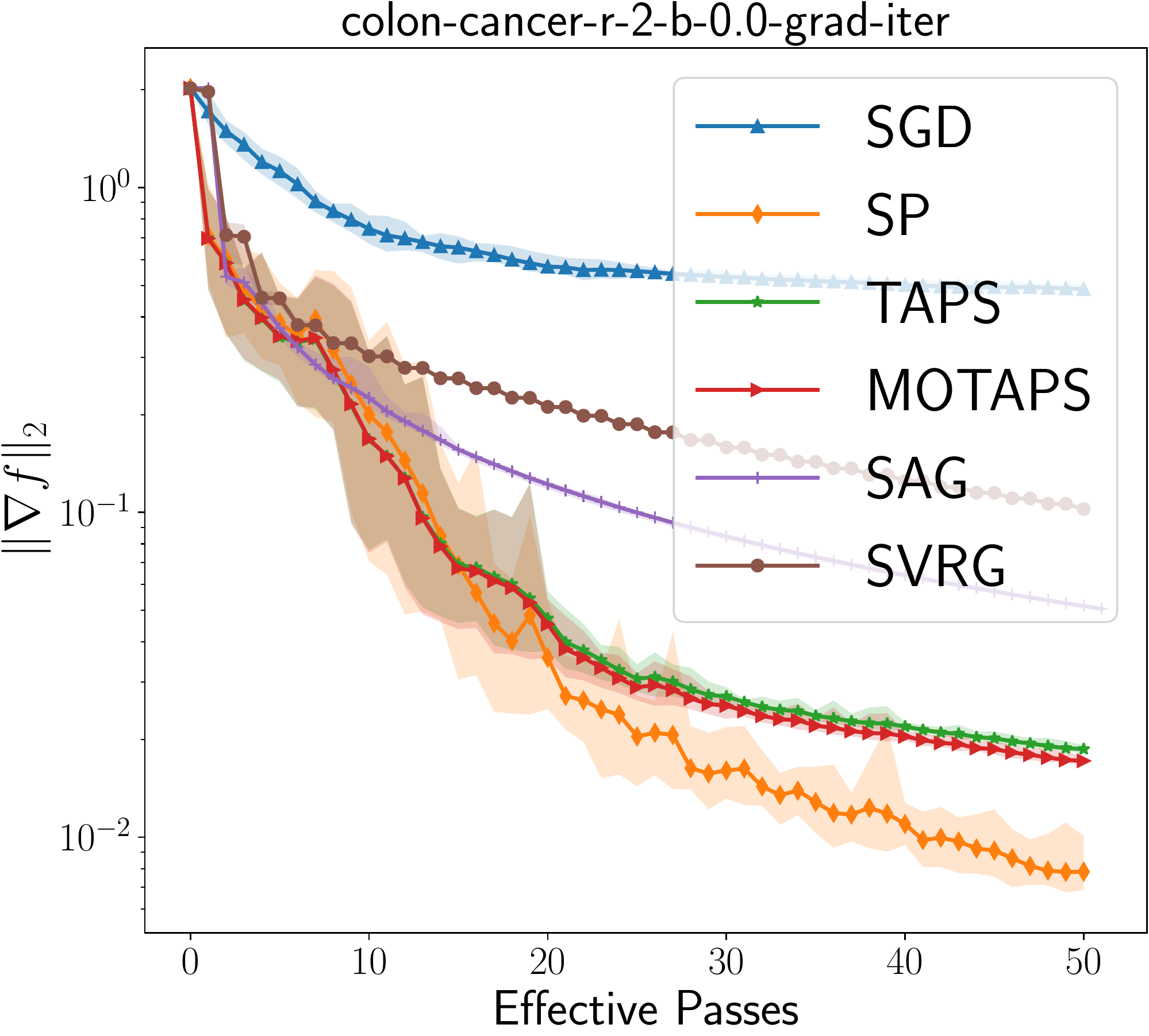}
\includegraphics[width =0.23\textwidth]{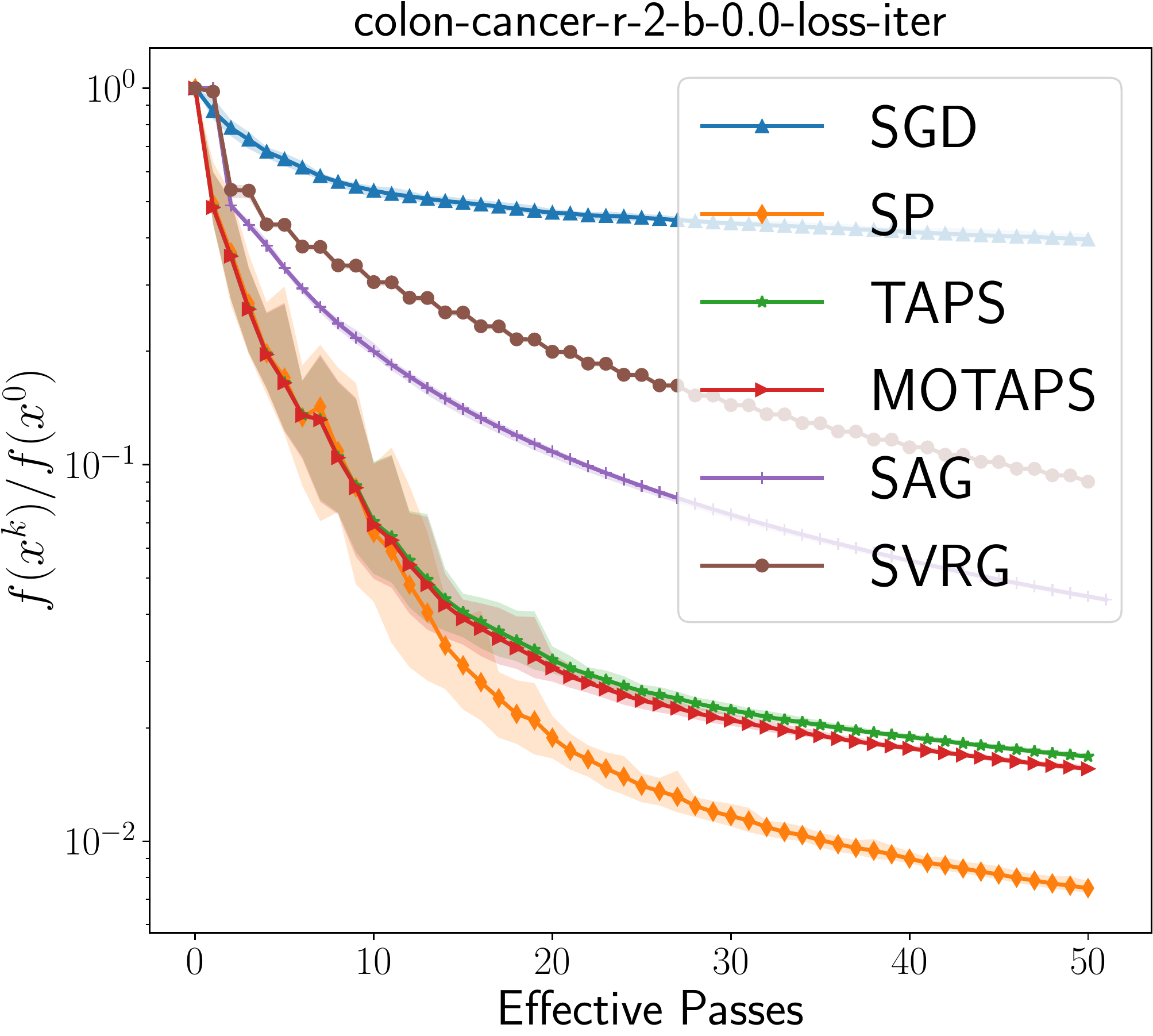}
\caption{Logistic Regression with data set  colon-cancer $(n,d) = (62, 2001)$ and regularization. Left:  $\sigma =1/n$ and Right: $\sigma =1/n^2$ . See \cite{chang2011libsvm} 
}
\label{fig:colon}
\end{figure}

\begin{figure}
\centering
\includegraphics[width=0.23\textwidth]{./convex/duke-r-1-b-0-grad-iter}
\includegraphics[width =0.23\textwidth]{./convex/duke-r-1-b-0-loss-iter}
\includegraphics[width=0.23\textwidth]{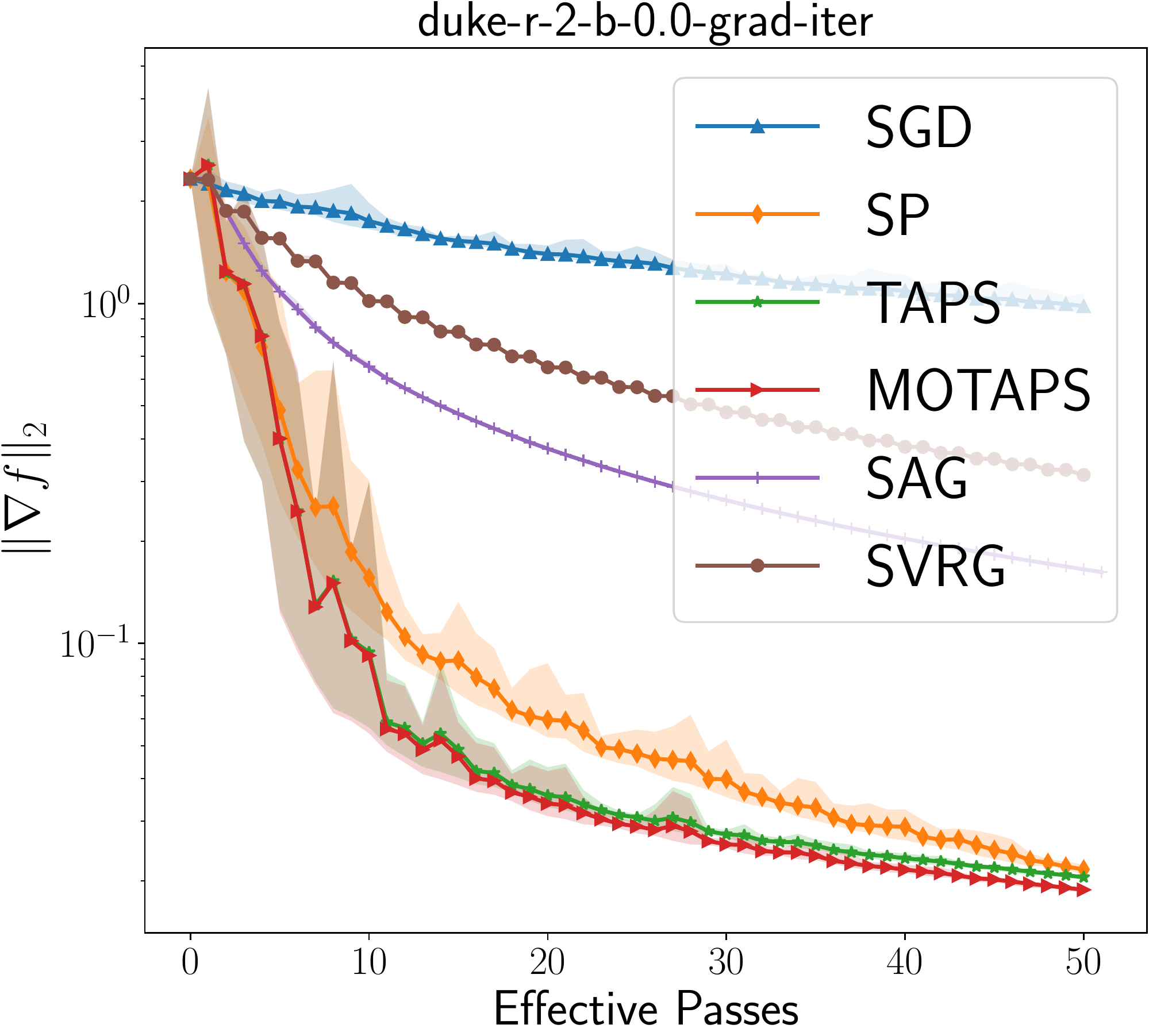}
\includegraphics[width =0.23\textwidth]{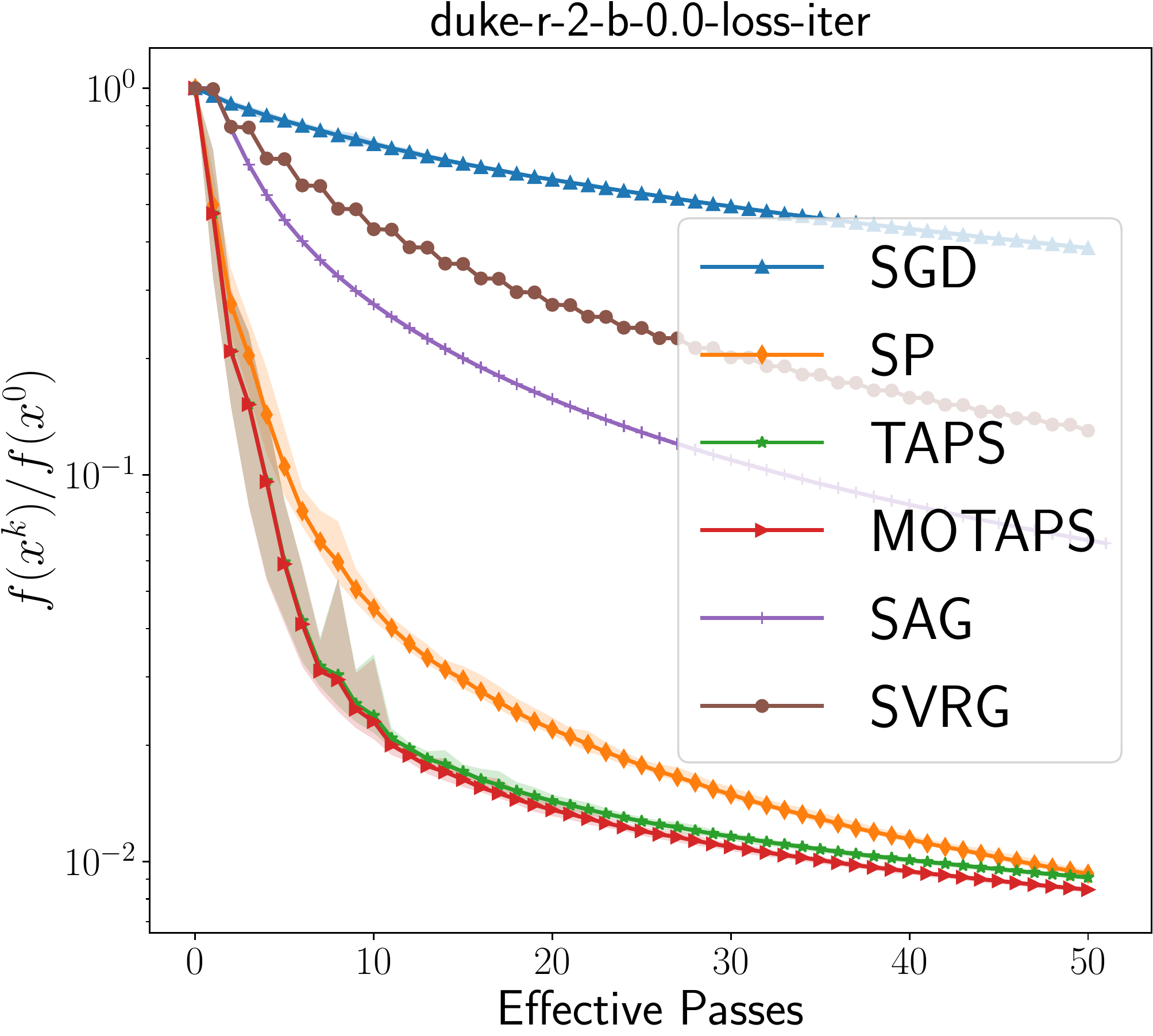}
\caption{Logistic Regression with data set  duke $(n,d) = (44, 7130)   $. Left:  $\sigma =1/n$ and Right: $\sigma =1/n^2$ . 
}
\label{fig:duke}
\end{figure}

\begin{figure}
\centering
\includegraphics[width=0.23\textwidth]{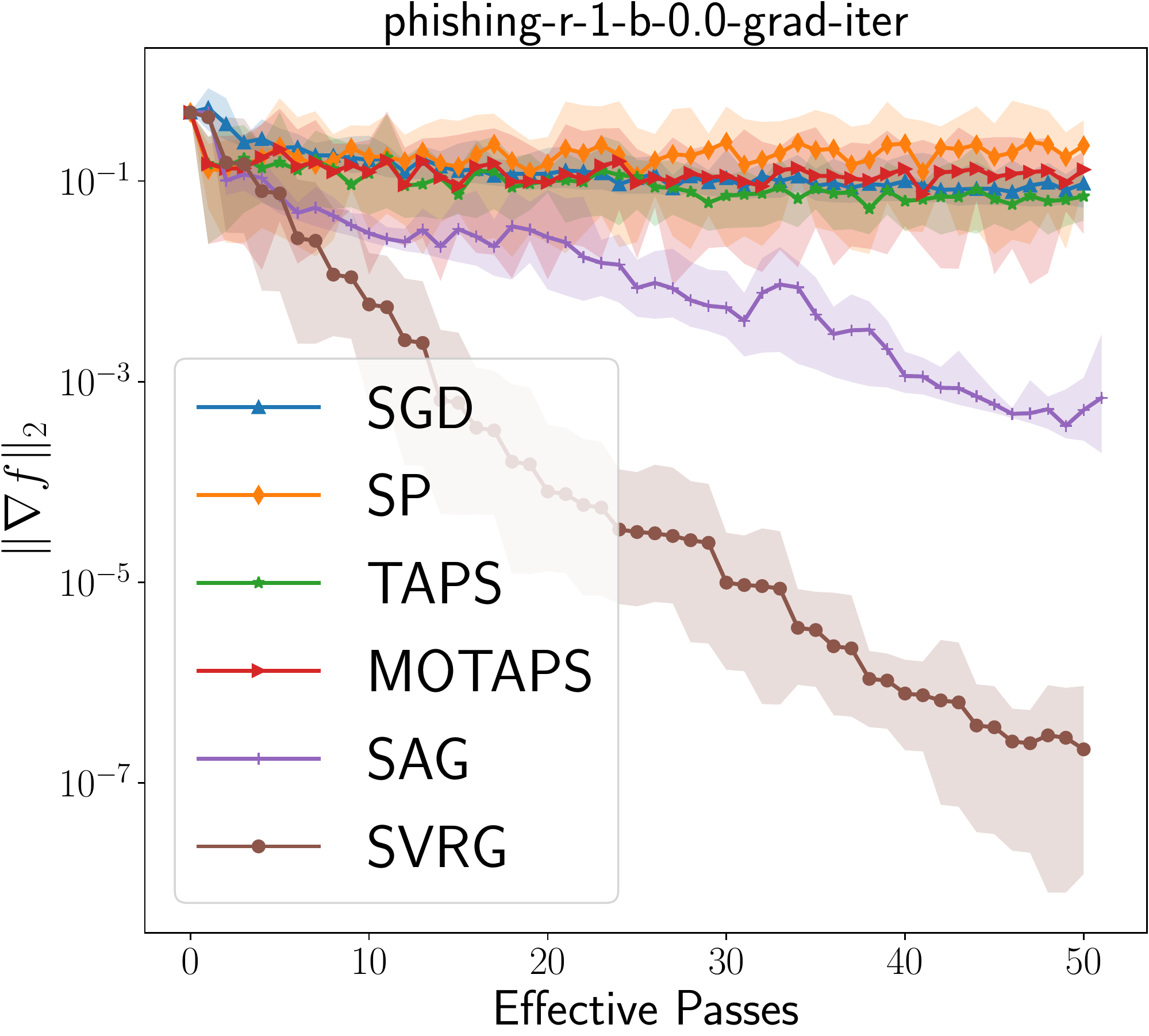}
\includegraphics[width =0.23\textwidth]{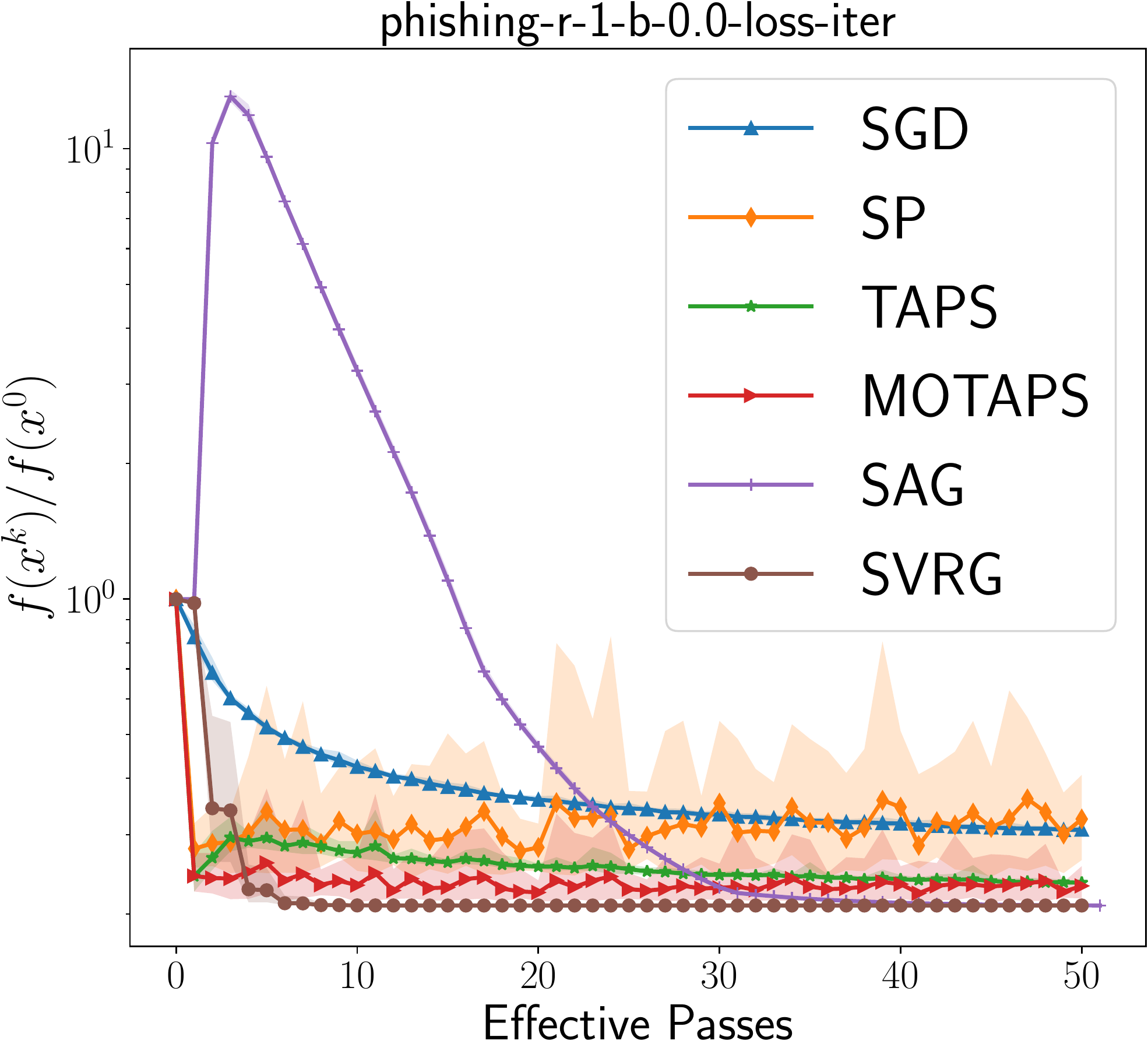}
\includegraphics[width=0.23\textwidth]{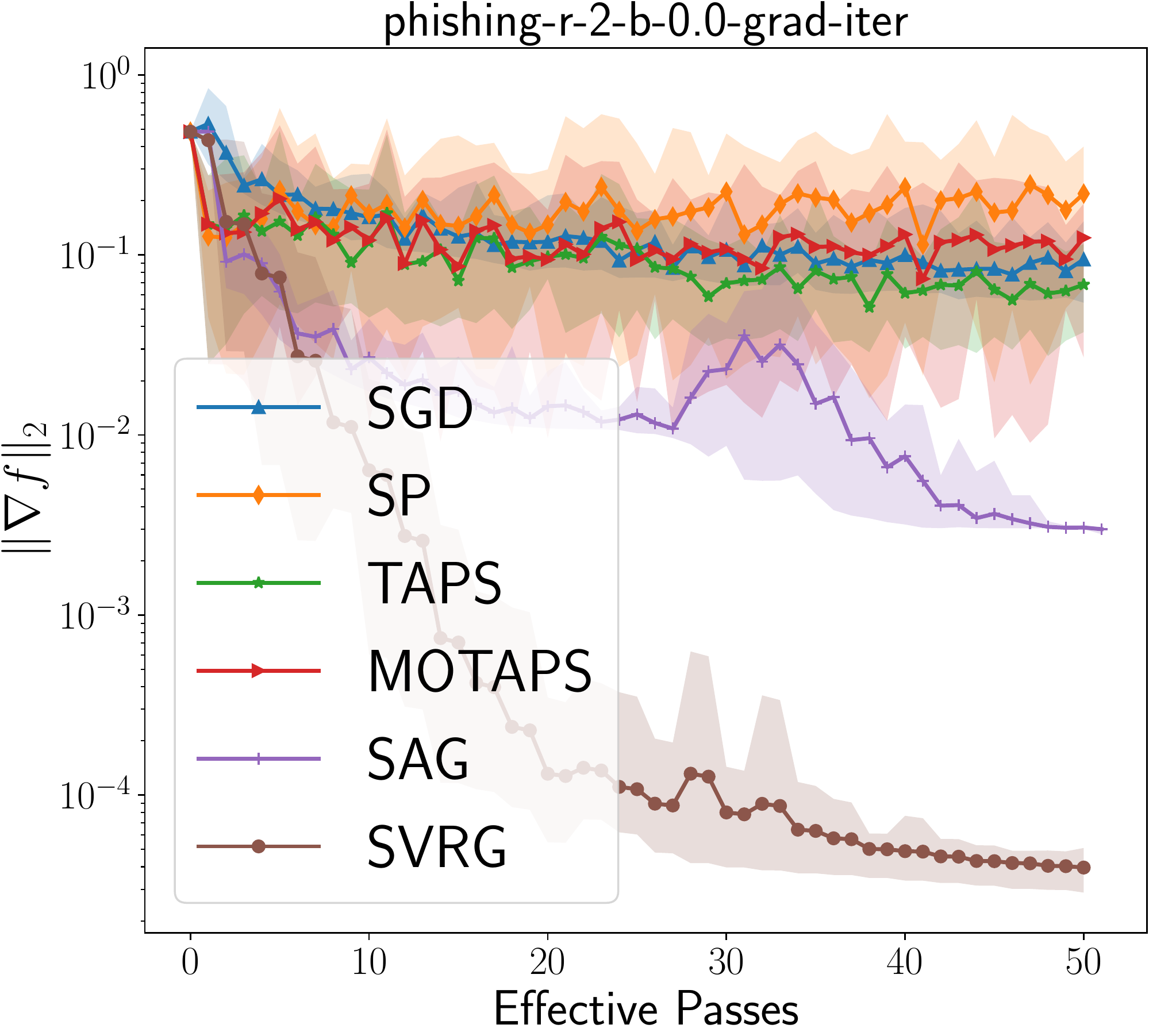}
\includegraphics[width =0.23\textwidth]{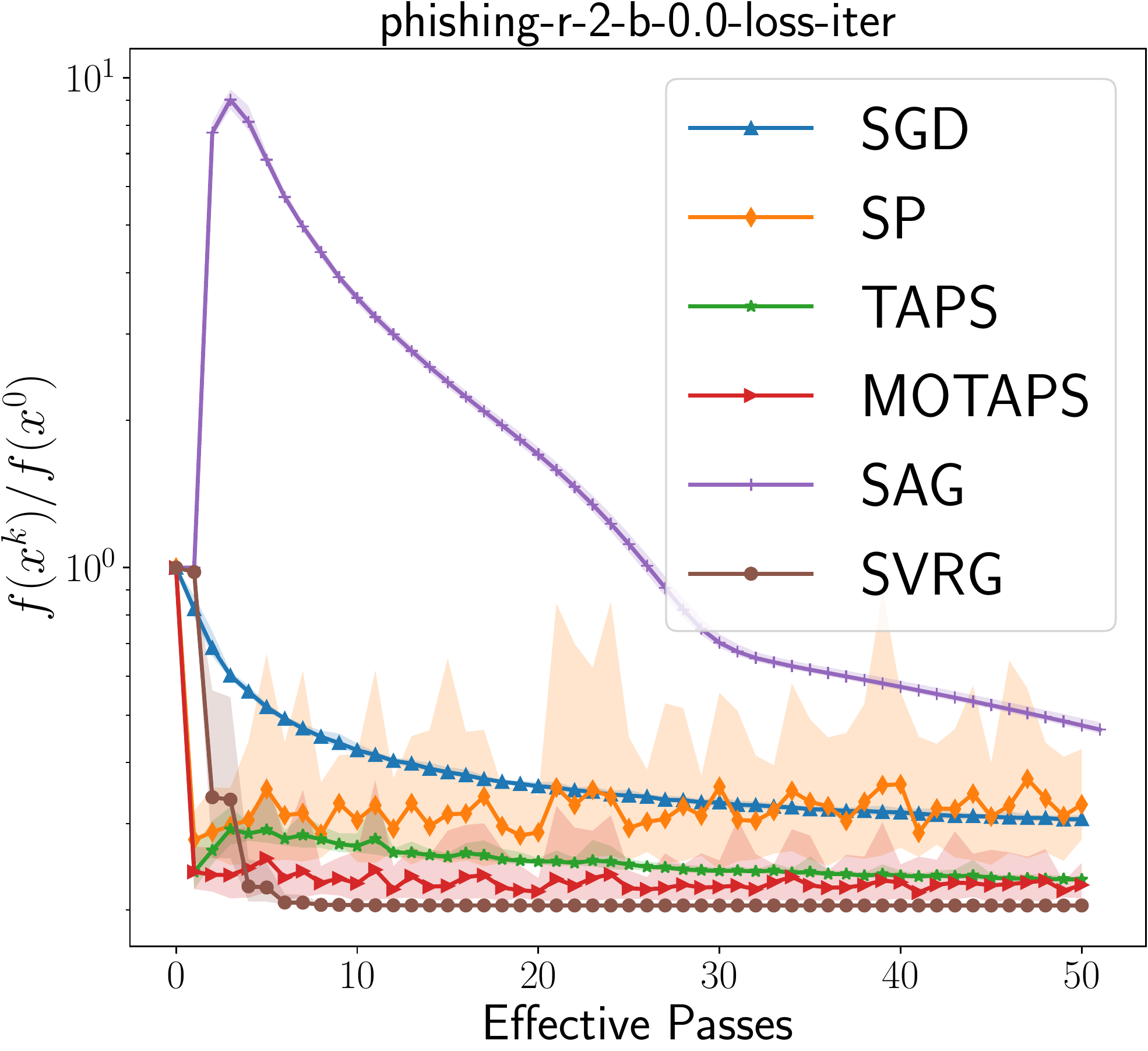}
\caption{Logistic Regression with data set  phishing    $(n,d) = (11055, 68) $. Left:  $\sigma =1/n$ and Right: $\sigma =1/n^2$ . 
}
\label{fig:phishing}
\end{figure}

\subsection{Momentum variants}
\label{asec:mom}
We also found that adding  momentum to  \SP and \MOTAPS could speed up the methods. To add momentum we used the  iterate averaging viewpoint of momentum~\cite{Sebbouh2020} where by we replace the updates in $w^t$ by a weighted average over past iterates. For \TAPS and \MOTAPS this is equivalent to introducing a sequences of $z^t$ variables and updating according to
\begin{align*}
z^{t+1}  &= z^t -\eta \frac{f_i(w^t) -\alpha^t_i}{\norm{\nabla f_i(w^t)}^2+1} \nabla f_i(w^t) \\
w^{t+1} & =\beta w^t + (1-\beta) z^{t+1} 
\end{align*} 
where $\eta = \gamma\left( 1+ \frac{\beta}{1-\beta}\right)$ is the adjusted stepsize~\footnote{See Proposition 1.6 in~\cite{Sebbouh2020} for the details of form of momentum and parameter settings. }. See Figures~\ref{fig:mom-mush}, ~\ref{fig:mom-cancer} and~\ref{fig:mom-duke} for the results of our experiments with momentum as compared to ADAM~\cite{ADAM}. We found that in regimes of moderate regularization ($\sigma = 1/n$) the \MOTAPS method was the fastest among all method, even faster than \TAPS despite not having access to $f^*,$ see the left side of  Figures~\ref{fig:mom-mush}, ~\ref{fig:mom-cancer} and~\ref{fig:mom-duke}.  Yep when using moderate regularization, adding on momentum gave no benefit to \SP, \TAPS, and \MOTAPS. Quite the opposite, for momentum $\beta =0.5$, we see that \texttt{MOTAPS}M-0.5, which is the \MOTAPS method with momentum and $\beta =0.5$, hurt the convergence rate of the method. 

In the regime of small regularization $\sigma =\frac{1}{n^2}$, we found that 
momentum sped up the convergence of our methods, see the right of Figures~\ref{fig:mom-mush}, ~\ref{fig:mom-cancer} and~\ref{fig:mom-duke}.  On the under-parameterized problem mushrooms, the gains from momentum were marginal, and the ADAM method was the fastest overall, see the right of Figure~\ref{fig:mom-mush}. On the over-parametrized problem colon-cancer, adding momentum to \SP gave a significant boost in convergence speed,  see the right of Figure~\ref{fig:mom-cancer}. Finally on the most over-parametrized problem duke, adding momentum offered a significant speed-up for \MOTAPS, but still the ADAM method was the fastest, see the right of Figure~\ref{fig:mom-duke}.


%
%

 \begin{figure}
\centering
\includegraphics[width=0.23\textwidth]{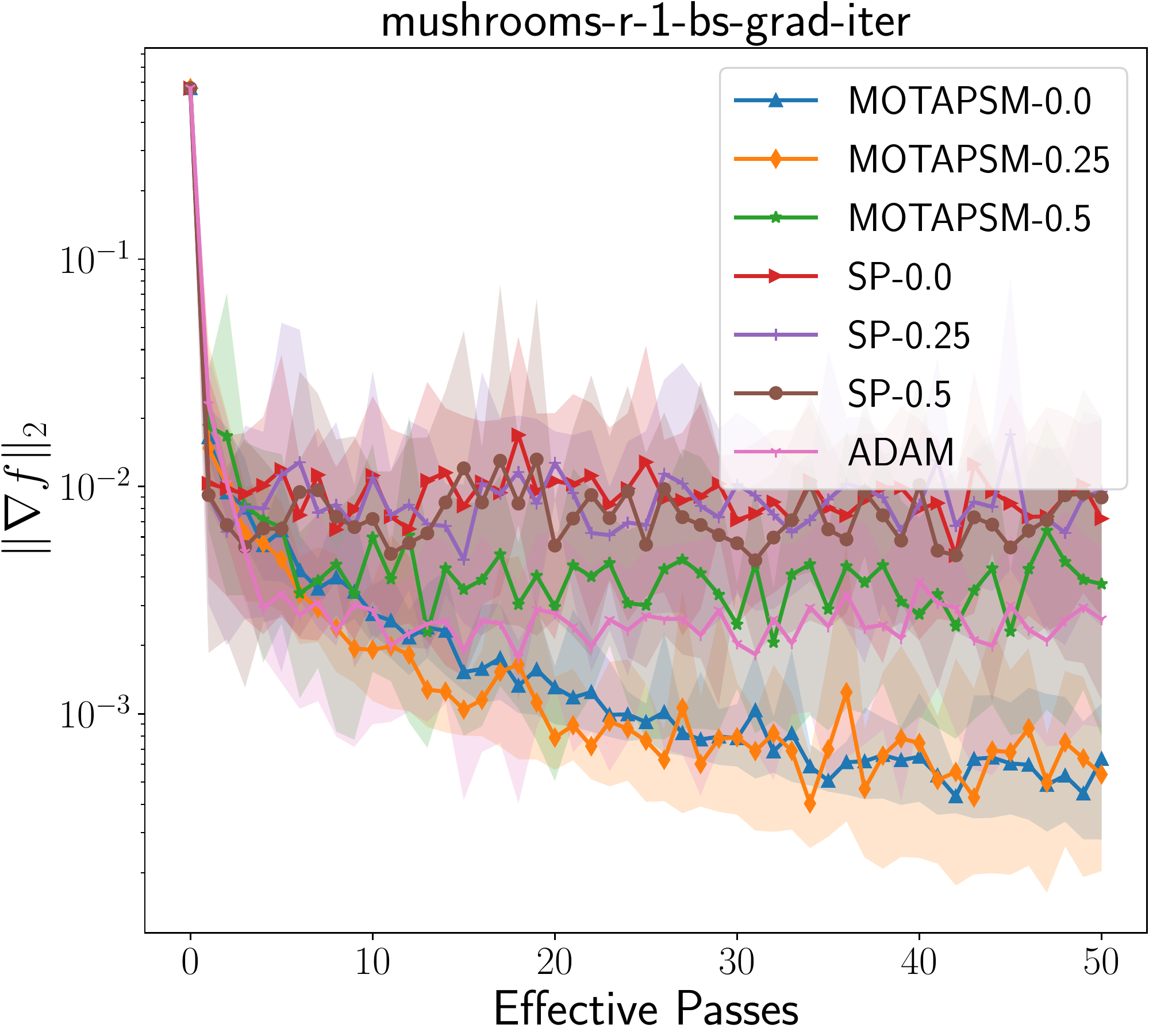}
\includegraphics[width =0.23\textwidth]{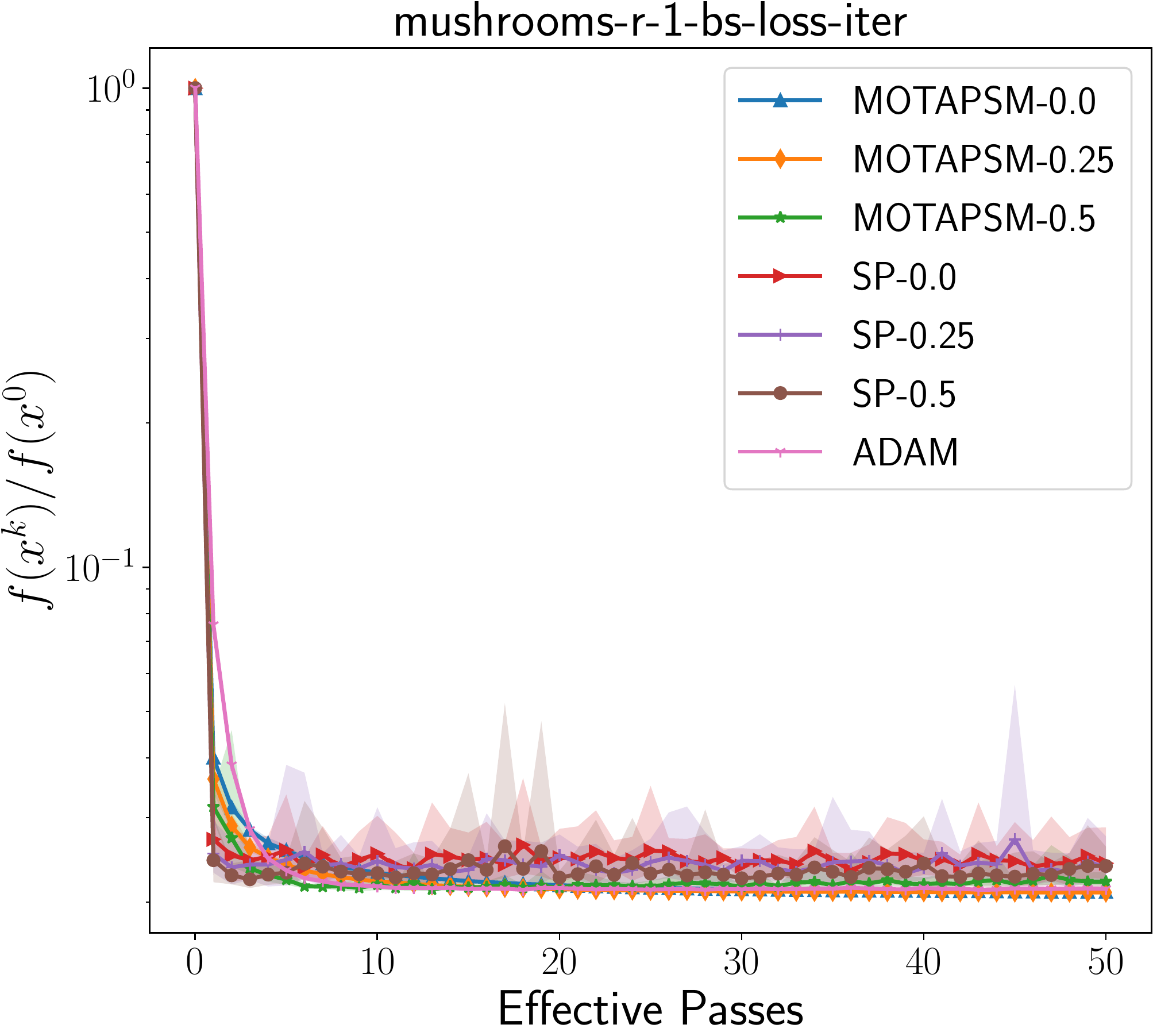}
\includegraphics[width=0.23\textwidth]{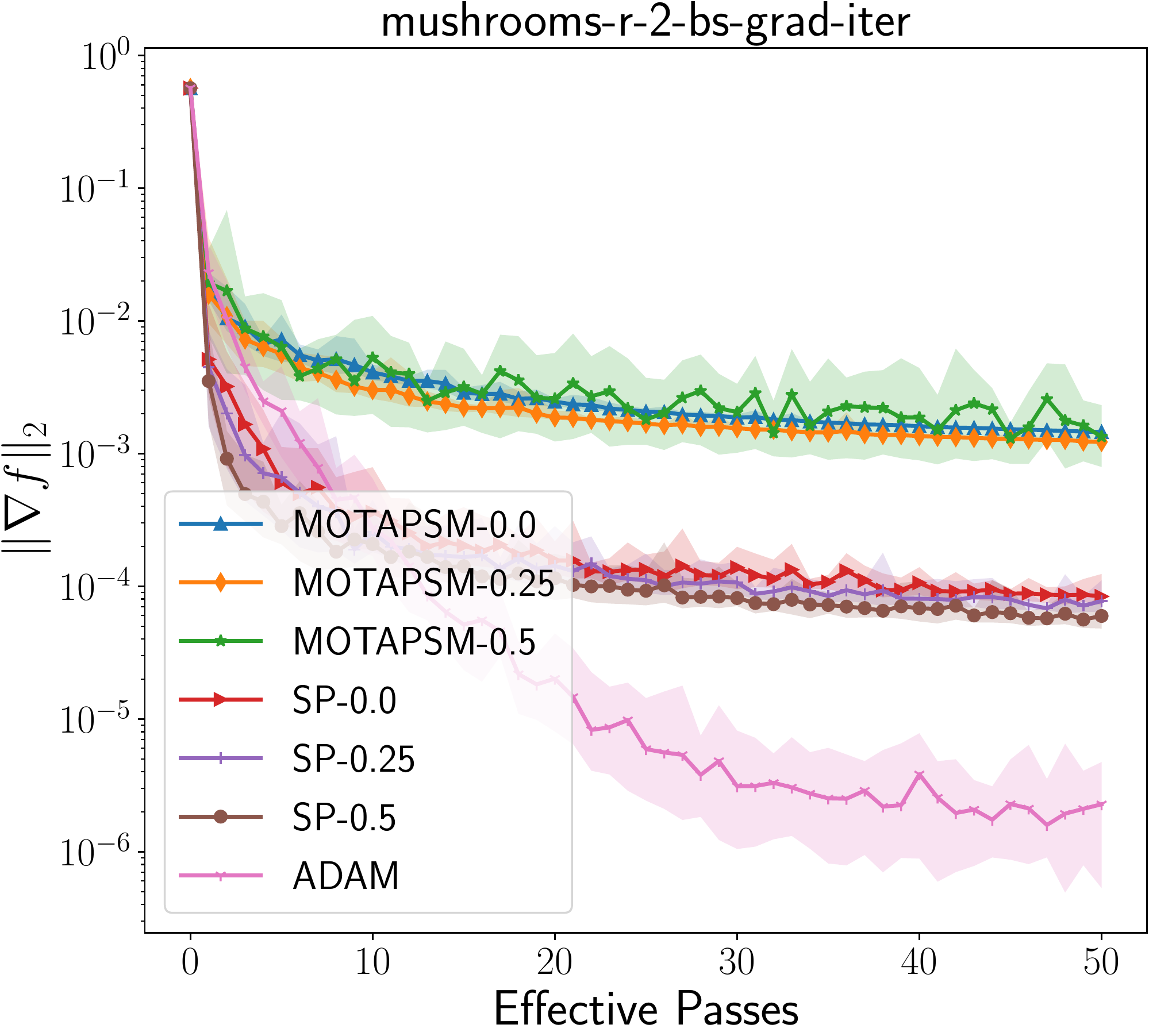}
\includegraphics[width =0.23\textwidth]{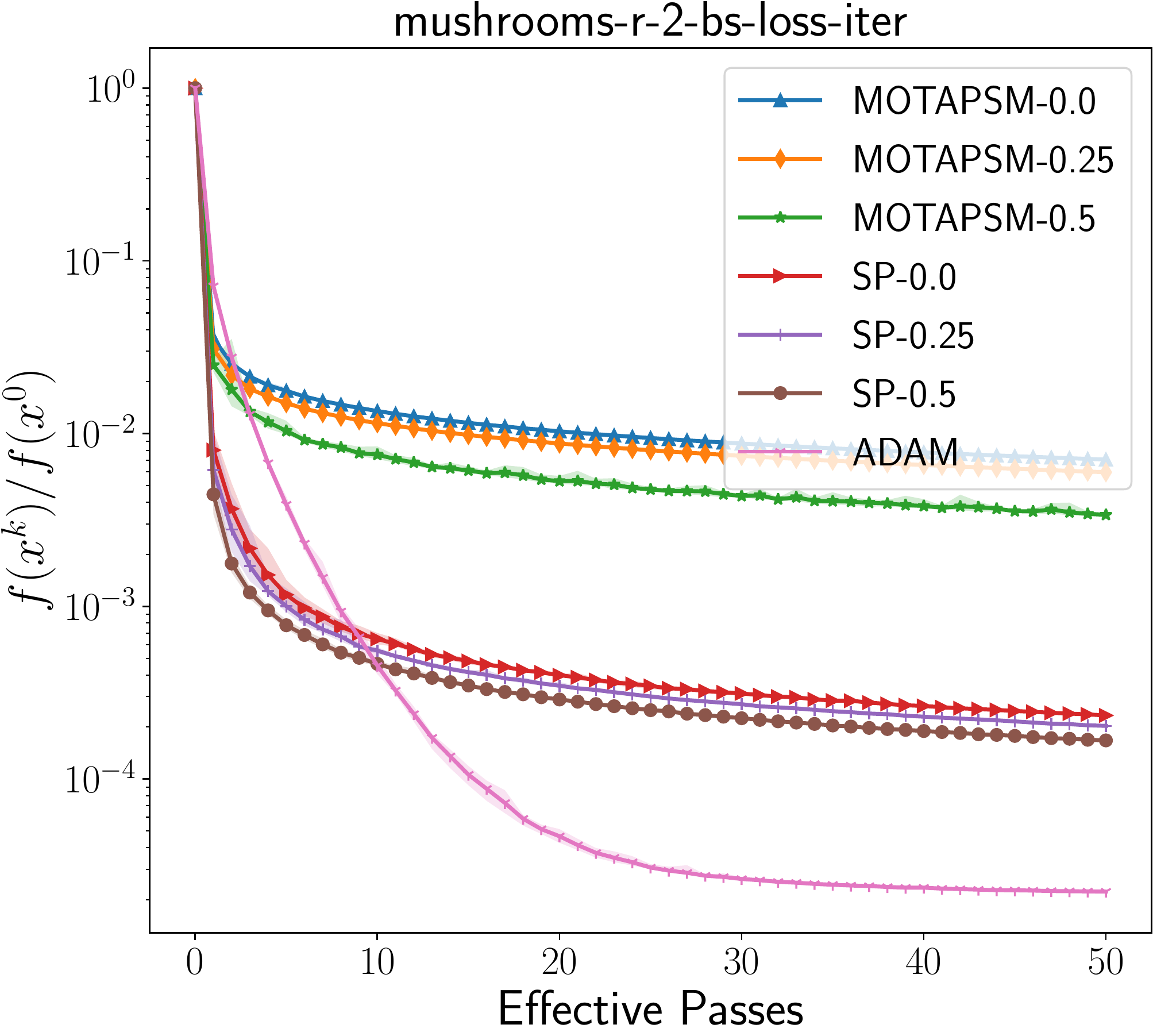}
\caption{Experiments on momentum with  mushrooms $(n,d) = (8124, 112)   $. Left:  $\sigma =1/n$ and Right: $\sigma =1/n^2$ .
}
\label{fig:mom-mush}
\end{figure}

\begin{figure}
\centering 
\includegraphics[width=0.23\textwidth]{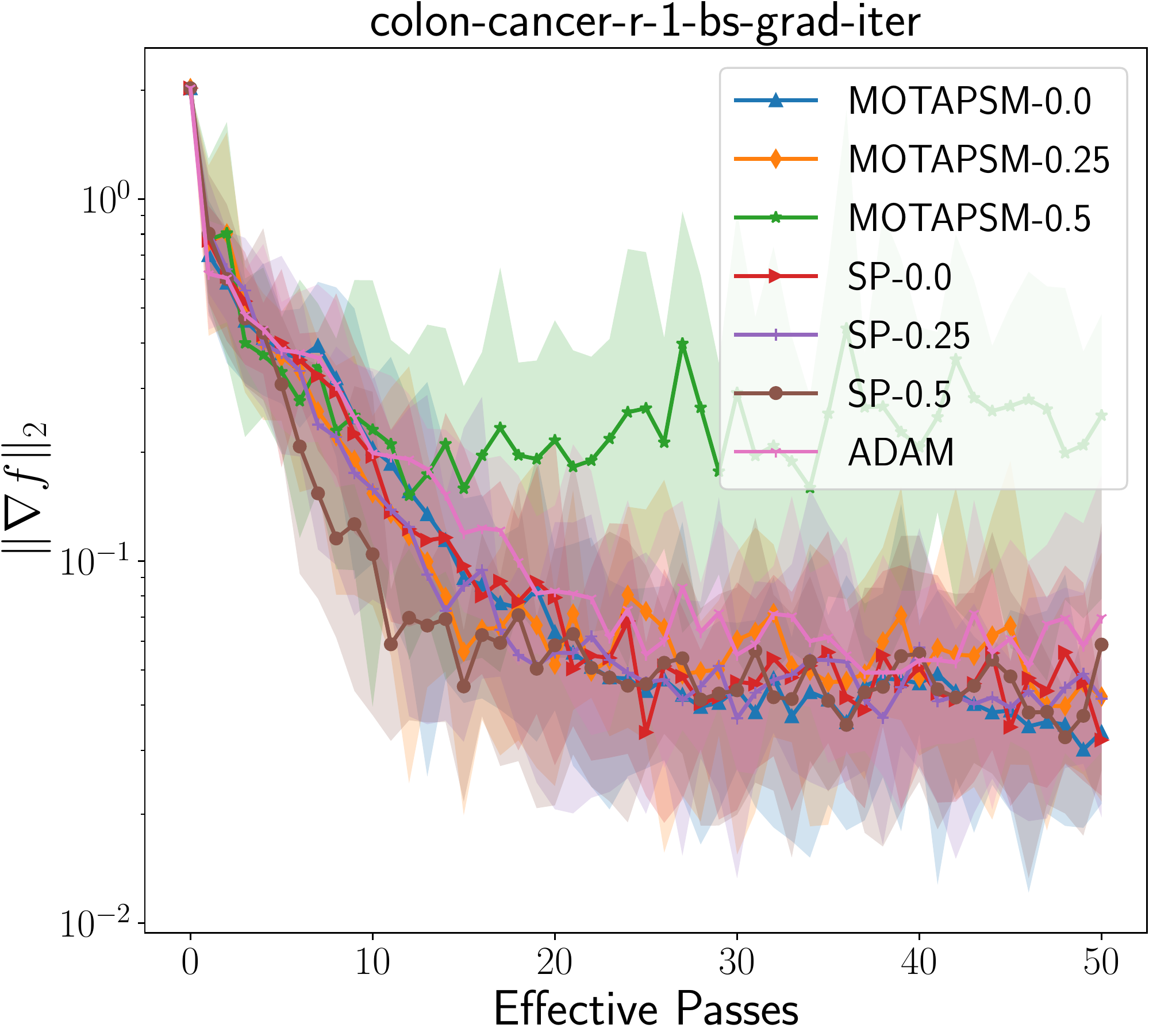}
\includegraphics[width =0.23\textwidth]{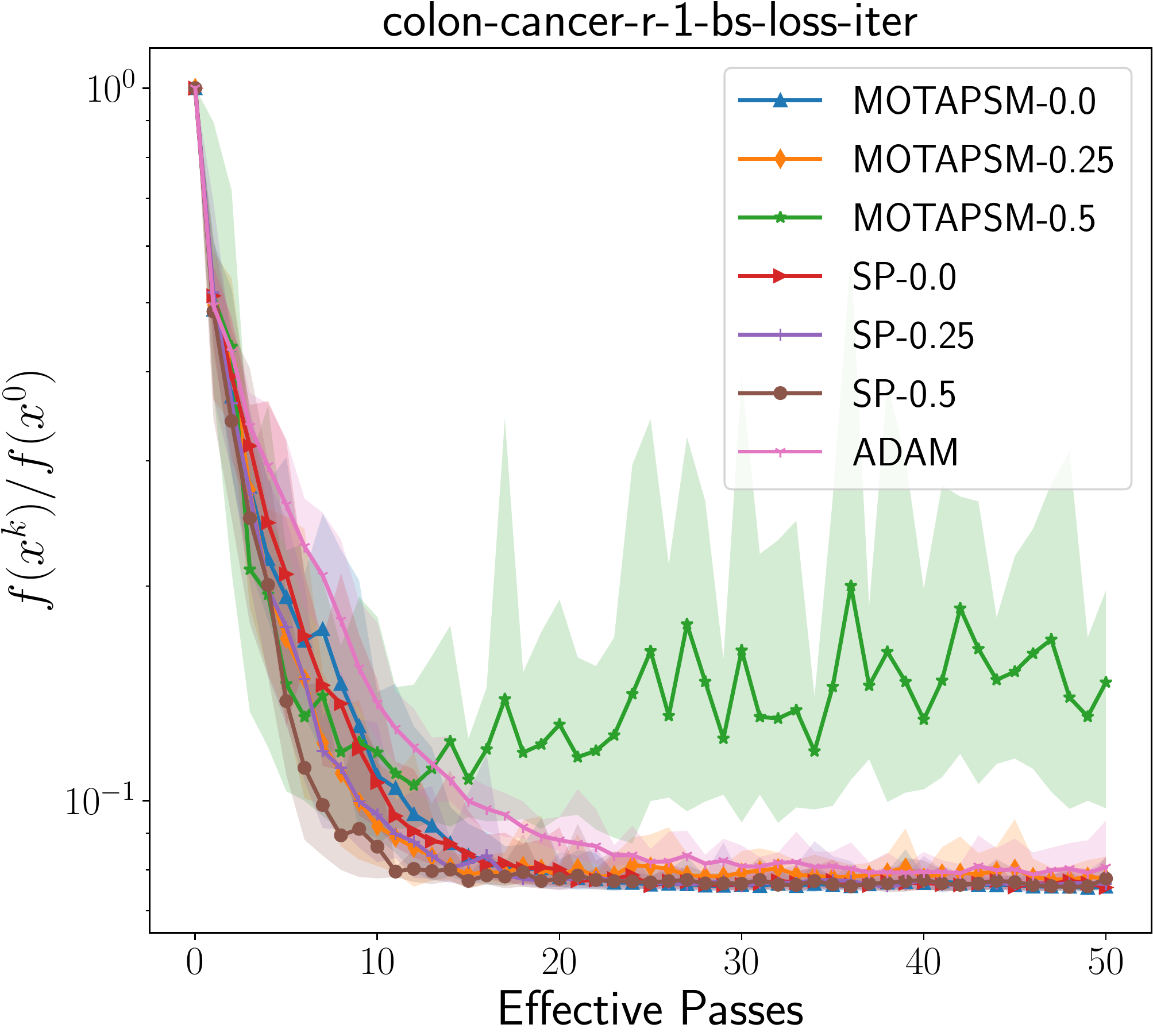}
\includegraphics[width=0.23\textwidth]{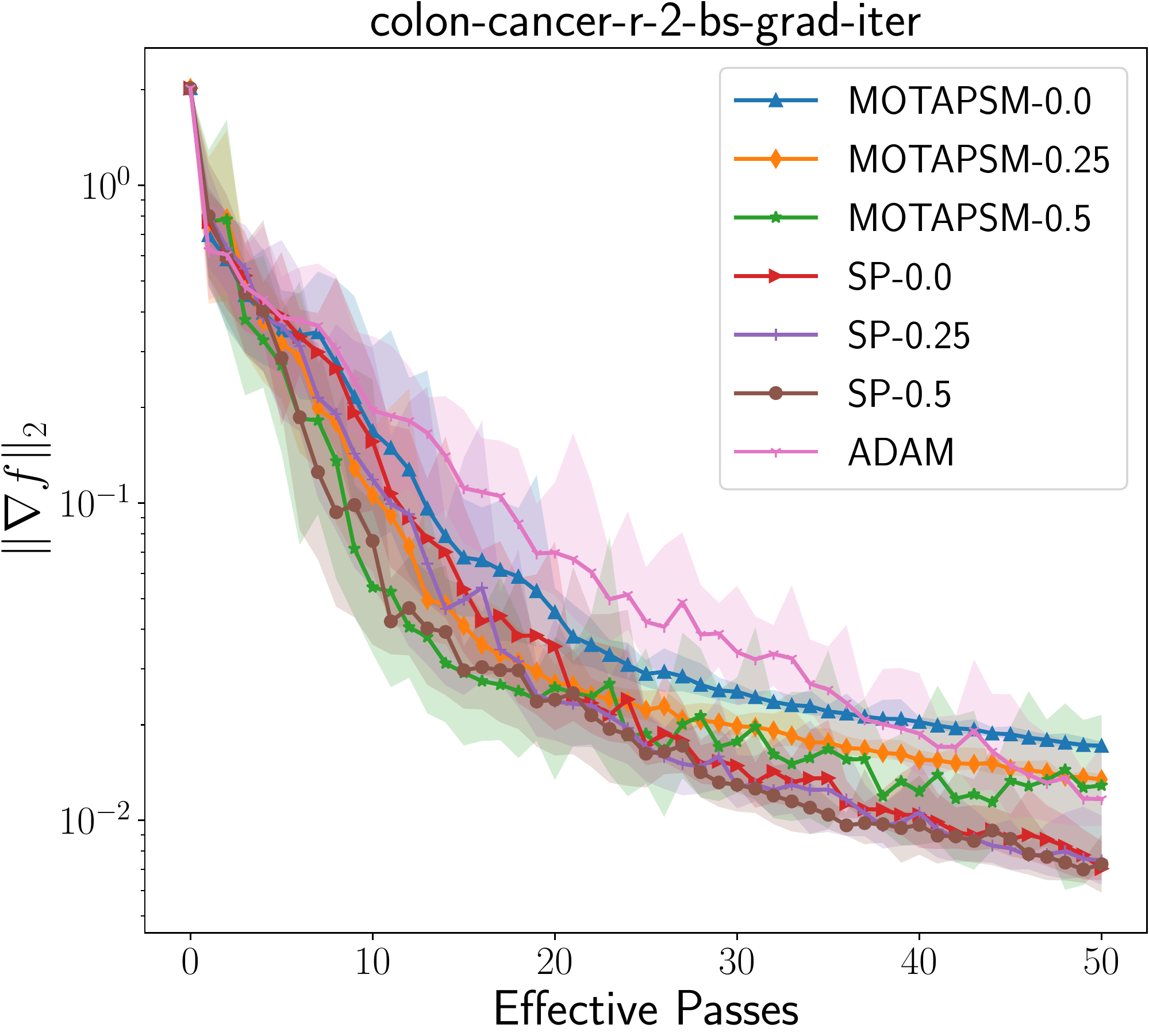}
\includegraphics[width =0.23\textwidth]{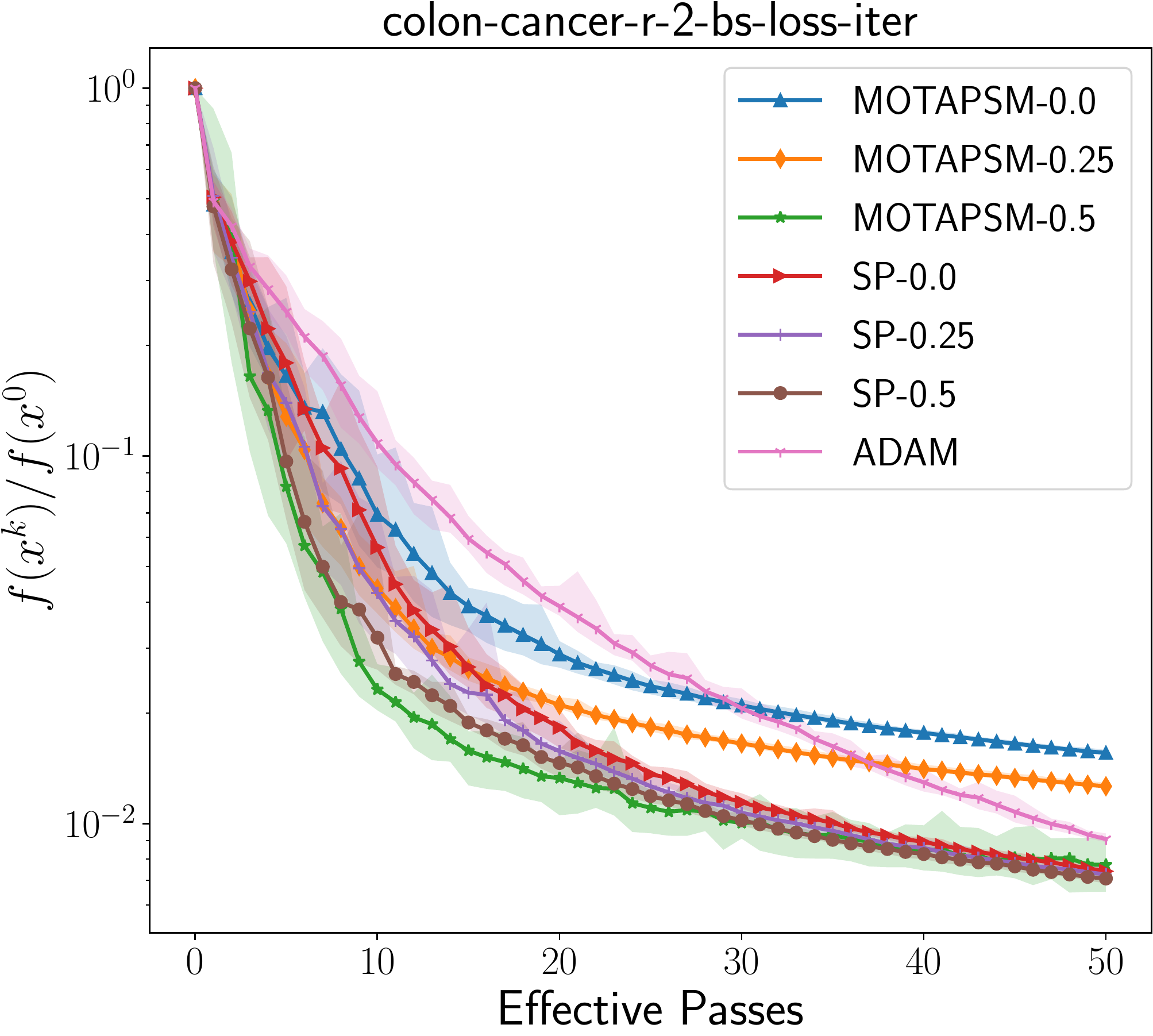}
\caption{Experiments on momentum with  colon-cancer $(n,d) = (62, 2001)$ and regularization. Left:  $\sigma =1/n$ and Right: $\sigma =1/n^2$ .
}
\label{fig:mom-cancer}
\end{figure}

\begin{figure}
\centering
\includegraphics[width=0.23\textwidth]{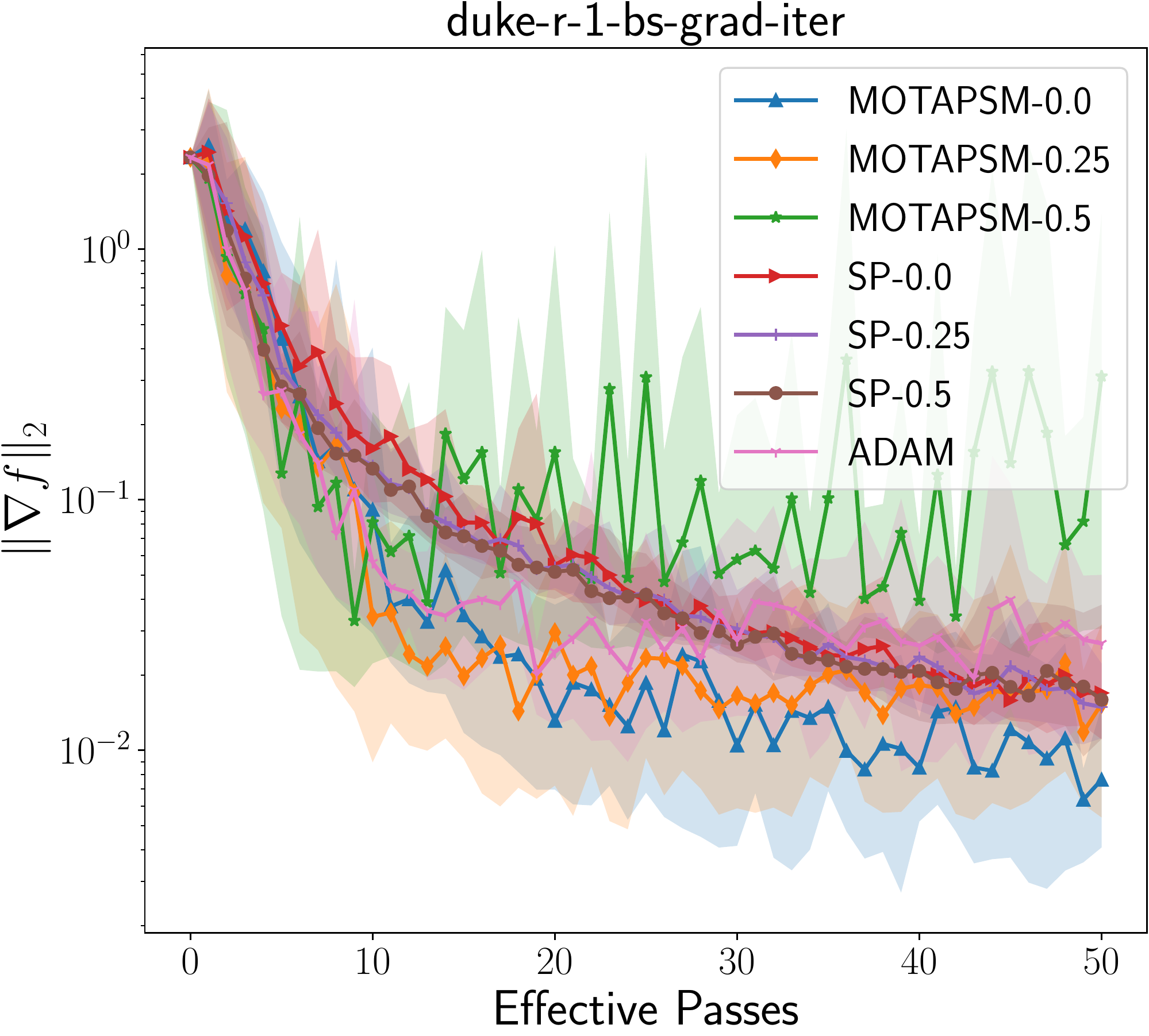}
\includegraphics[width =0.23\textwidth]{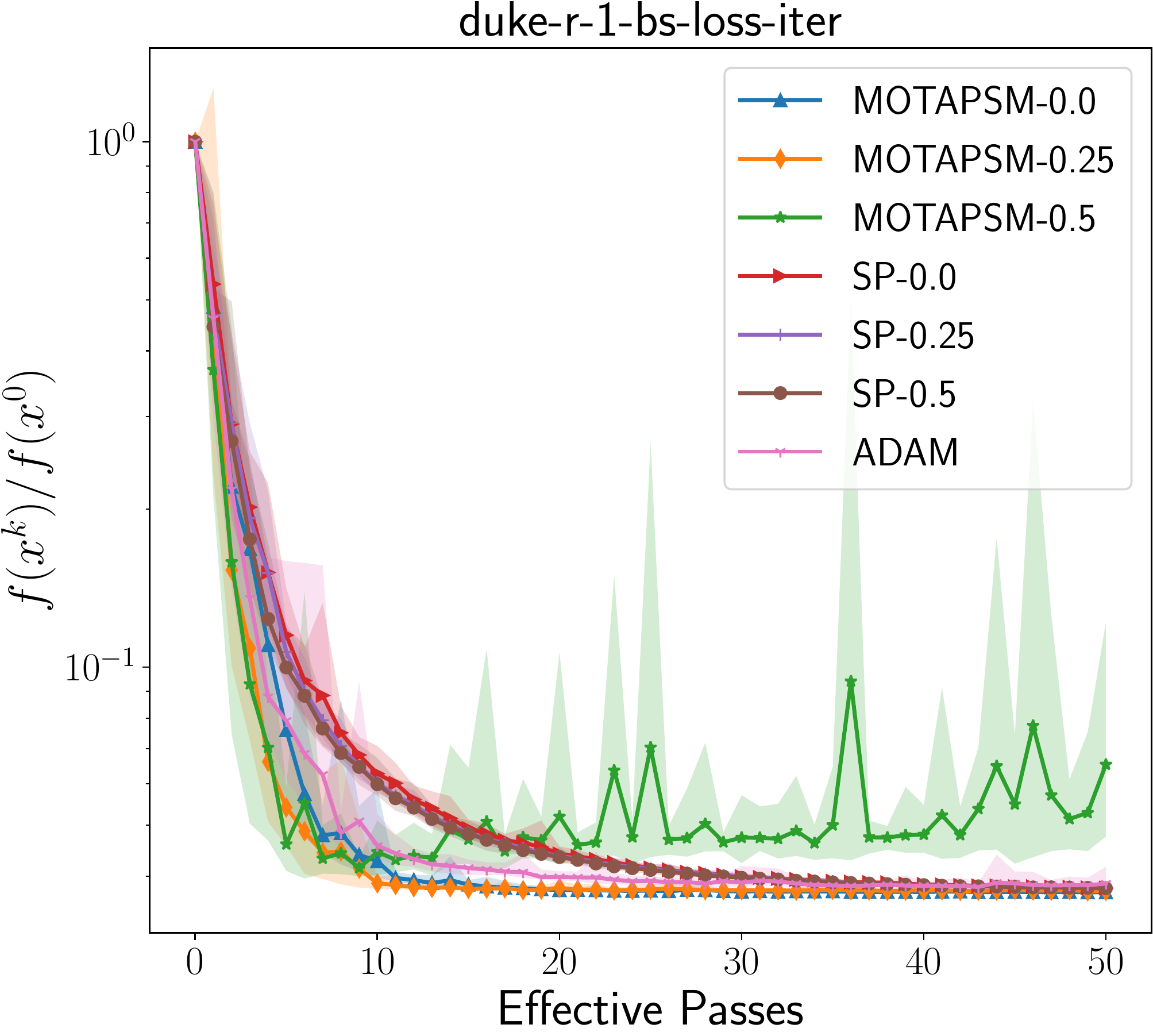}
\includegraphics[width=0.23\textwidth]{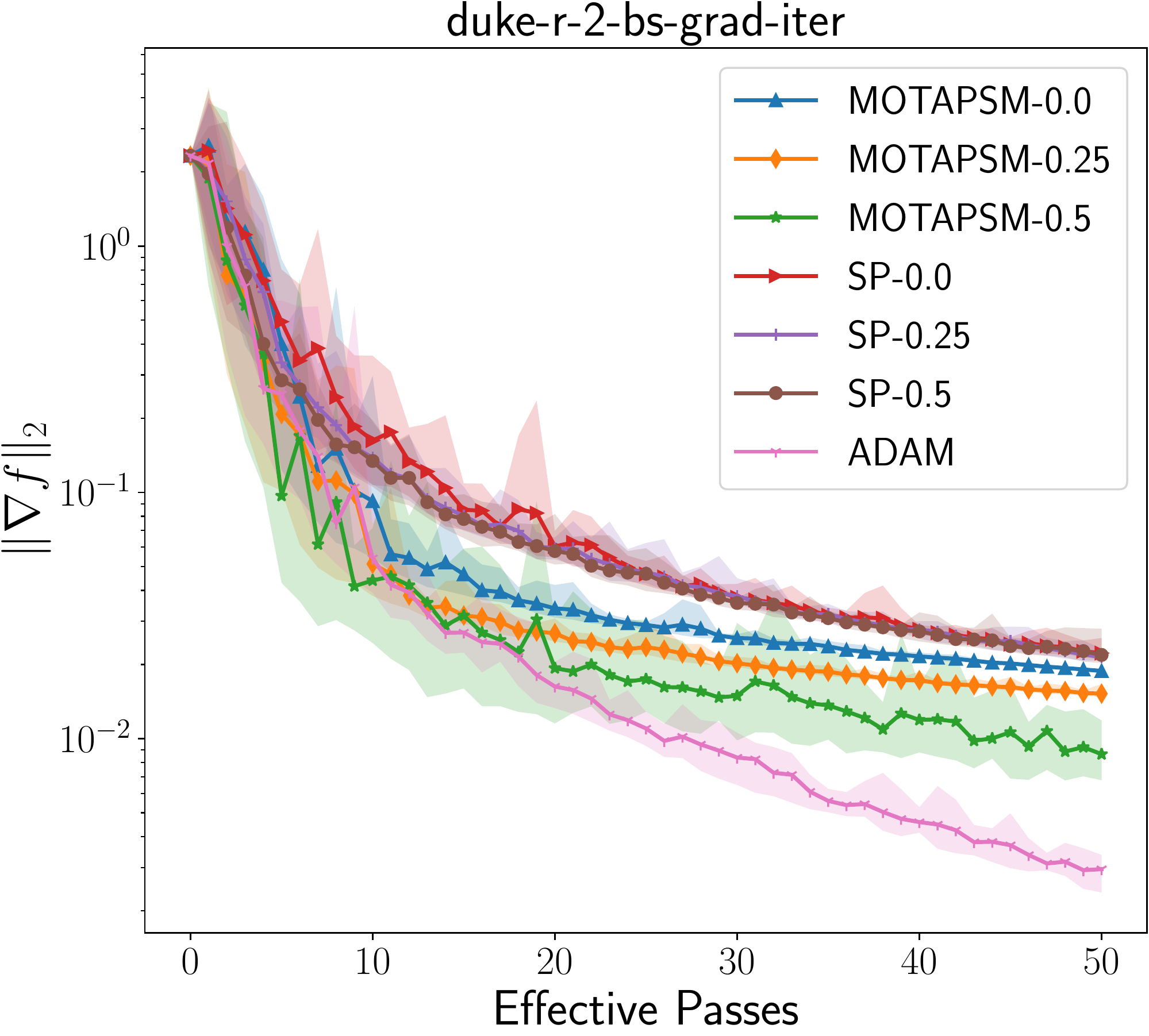}
\includegraphics[width =0.23\textwidth]{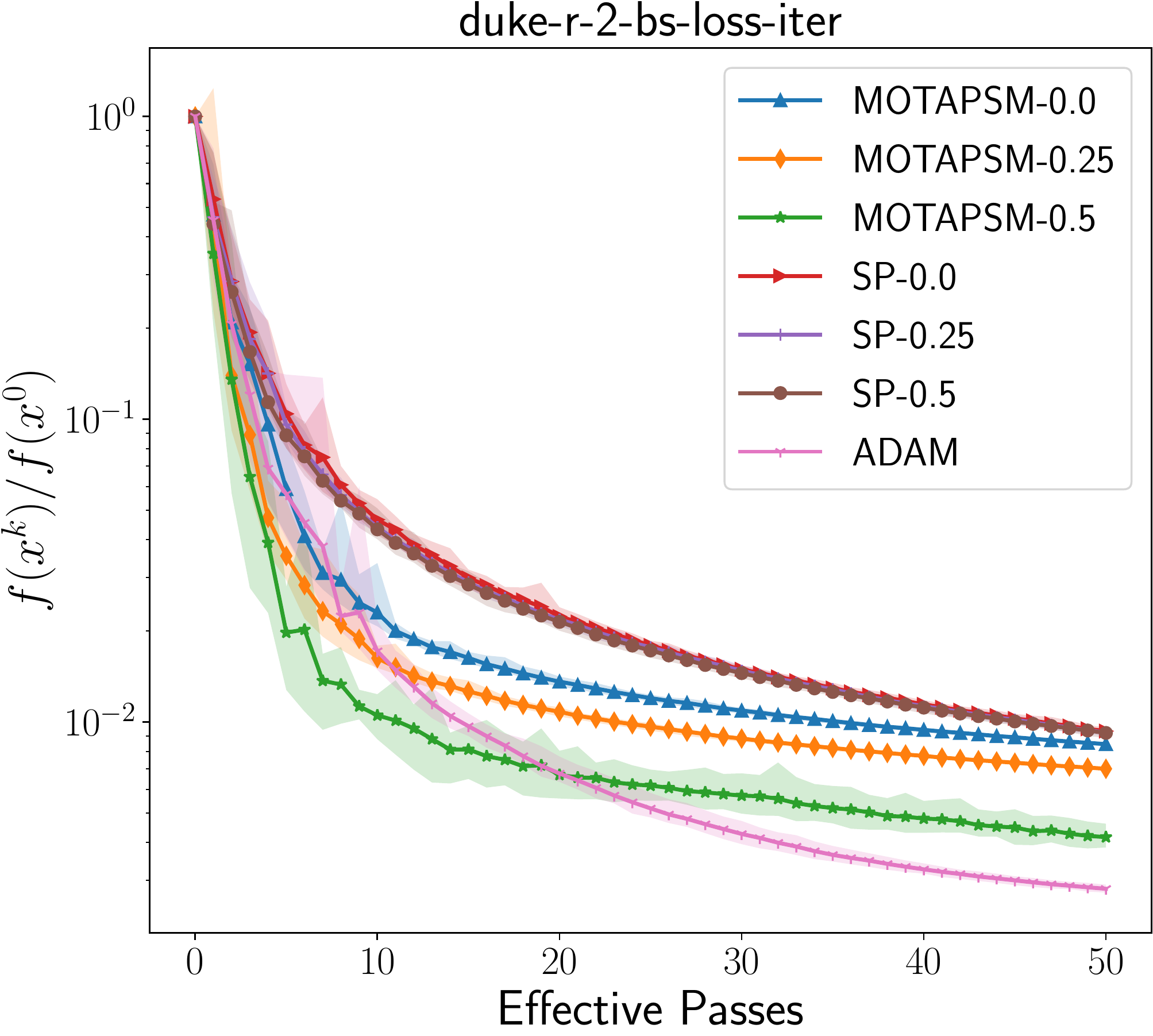}
\caption{Experiments on momentum with   duke $(n,d) = (44, 7130)   $. Left:  $\sigma =1/n$ and Right: $\sigma =1/n^2$ . 
}
\label{fig:mom-duke}
\end{figure}


%
%
%
\section{Deep learning experimental setup details}
 
 \label{asec:deep}

 In this section we detail the specific implementation choices for each environment. Across all environments, minibatching was accomplished by treating each minibatch as a single data-point. Since per-datapoint values are tracked across epochs, our training setup used minibatches which contain the same set of points each epoch.
 
 \subsection{CIFAR10}
 We trained for 300 epochs using batch size 256 on 1 GPU. Momentum 0.9 was used for all methods. The pre-activation ResNet used has 58,144,842 parameters.  Following standard practice we apply data augmentation of the training data; horizontal flipping, 4 pixel padding followed by random cropping to 32x32 square images. 
 
 \subsection{SVHN}
 
 We trained for 150 epochs on a single GPU, using a batch size of 128. Momentum 0.9 was used for each method. Data augmentations were the same as for our CIFAR10 experiments. The ResNet-1-16 network has a total of 78,042 parameters, and uses the classical, non-preactivation structure.
 
\subsection{IWSLT14}
We used a very simple preprocessing pipeline, consisting of the Spacy de\_core\_news\_sm/en\_core\_web\_sm tokenizers and filtering out of sentences longer than 100 tokens to fit without our GPU memory constraints. Training used batch-size 32, across 1 GPU for 25 epochs. Other hyper-parameters include momentum of 0.9, weight decay of 5e-6, and a linear learning rate warmup over the first 5 epochs

\begin{figure}
\centering
\includegraphics[width=4.6cm]{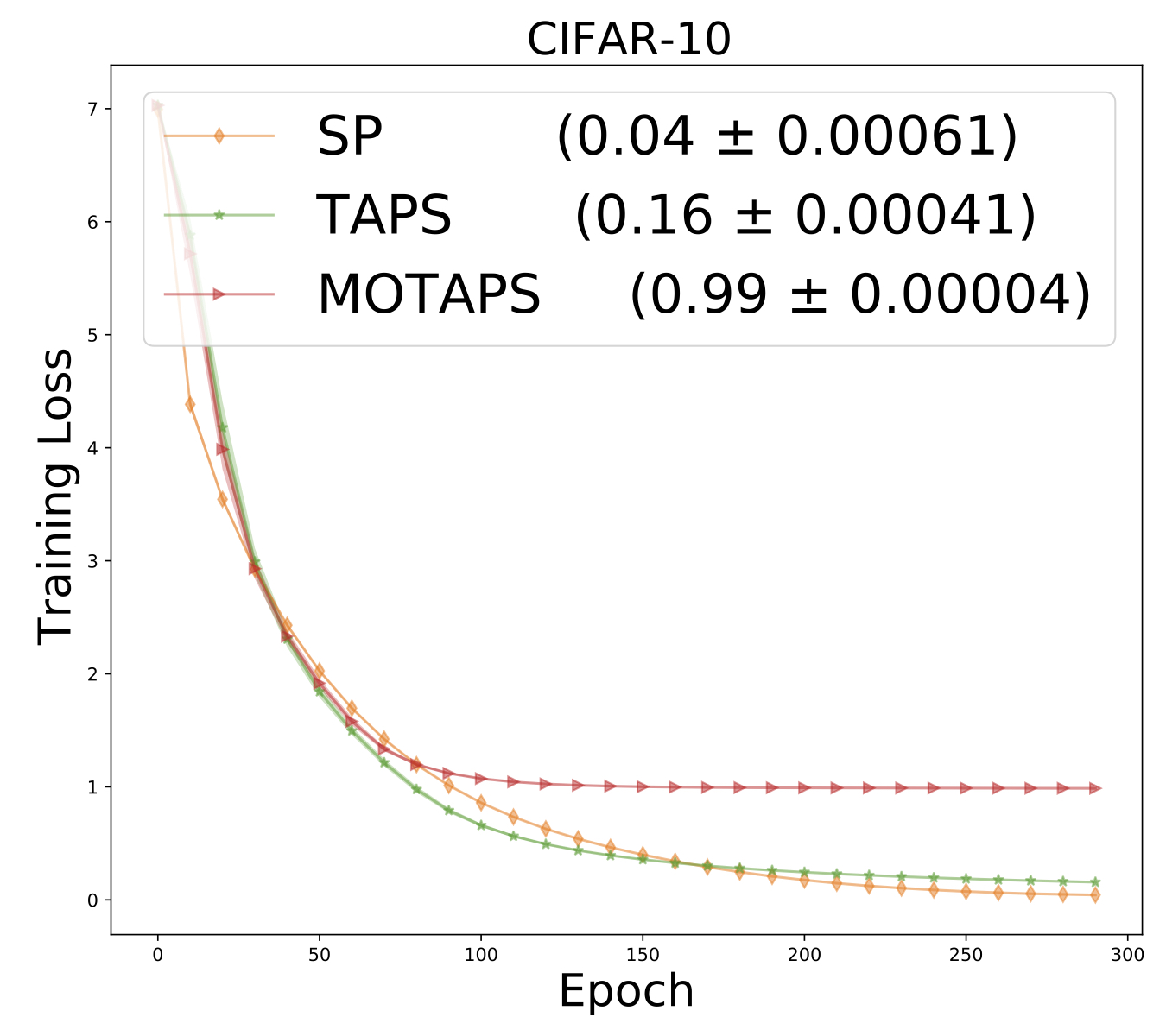}
\includegraphics[width=4.6cm]{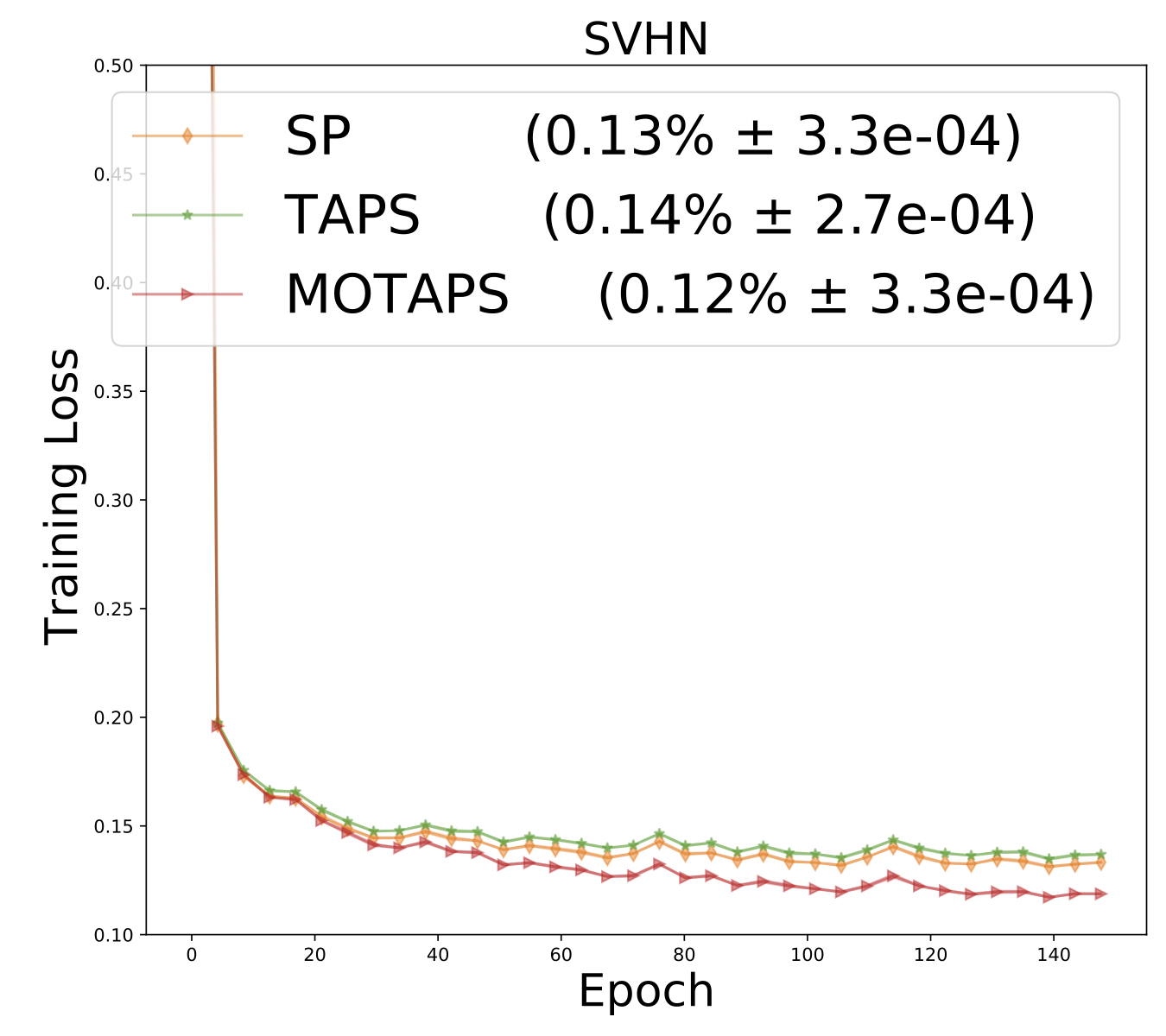}
\includegraphics[width=4.6cm]{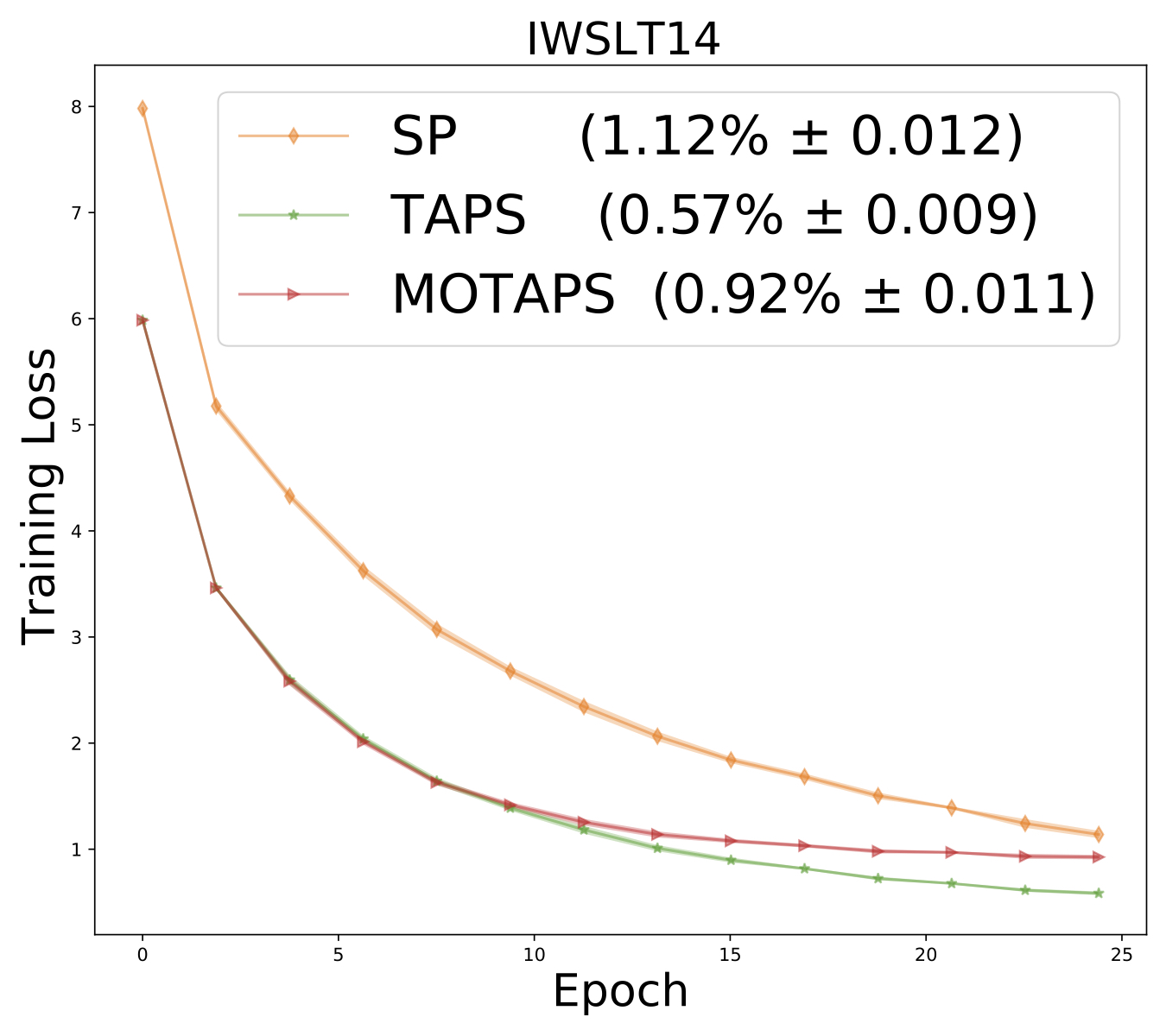}
\caption{Deep learning experiments training loss}
\label{fig:deep_learning_train}
\end{figure}

\end{document}